% Format selection: uncomment line corresponding to the required format (only one line may be uncommented!)
\newcommand*{\NEURIPS}{}
%\newcommand*{\CVPR}{}
%\newcommand*{\AISTATS}{}
%\newcommand*{\ICML}{}
%\newcommand*{\ICLR}{}
%\newcommand*{\COLT}{}

% Camera-ready flag: uncomment for camera-ready version
\newcommand*{\CAMREADY}{}

% Notes placement flag: uncomment to place notes in appendixes (otherwise they will be footnotes)
%\newcommand*{\NOTESAPP}{}

% Preliminaries
\ifdefined\NEURIPS
	\PassOptionsToPackage{numbers}{natbib}
	\documentclass{article}
	\ifdefined\CAMREADY
		\usepackage[final]{neurips_2020}
	\else
		\usepackage{neurips_2020}
	\fi
	\usepackage{footmisc}
	\usepackage[utf8]{inputenc} 
	\usepackage[T1]{fontenc}    
	\usepackage{hyperref}       	
	\usepackage{url}            	
	\usepackage{booktabs}       
	\usepackage{amsfonts}       
	\usepackage{nicefrac}       	
	\usepackage{microtype}      
\fi
\ifdefined\CVPR
	\documentclass[10pt,twocolumn,letterpaper]{article}
	\usepackage{footmisc}
	\usepackage{cvpr}
	\usepackage{times}
	\usepackage{epsfig}
	\usepackage{graphicx}
	\usepackage{amsmath}
	\usepackage{amssymb}
	\usepackage[square,comma,numbers]{natbib}
\fi
\ifdefined\AISTATS
	\documentclass[twoside]{article}
	\ifdefined\CAMREADY
		\usepackage[accepted]{aistats2015}
	\else
		\usepackage{aistats2015}
	\fi
	\usepackage{url}
	\usepackage[round,authoryear]{natbib}
\fi
\ifdefined\ICML
	\documentclass{article}
	\usepackage{footmisc}
	\usepackage{microtype}
	\usepackage{graphicx}
	\usepackage{subfigure}
	\usepackage{booktabs}
	\usepackage{hyperref}
	
	\ifdefined\CAMREADY
		\usepackage[accepted]{icml2018}
	\else
		\usepackage{icml2018}
	\fi
\fi
\ifdefined\ICLR
	\documentclass{article}
	\usepackage{iclr2019_conference,times}
	\usepackage{footmisc}
	\usepackage{hyperref}
	\usepackage{url}

	\ifdefined\CAMREADY
		\iclrfinalcopy
	\fi
\fi
\ifdefined\COLT
	\ifdefined\CAMREADY
		\documentclass{colt2017}
	\else
		\documentclass[anon,12pt]{colt2017}
	\fi
	\usepackage{times}
\fi

% My includes
\usepackage{color}
\usepackage{amsfonts}
\usepackage{algorithm}
\usepackage{algorithmic}
\usepackage{graphicx}
\usepackage{nicefrac}
\usepackage{bbm}
\usepackage{wrapfig}
\usepackage{tikz}
\usepackage[titletoc,title]{appendix}
\usepackage{stmaryrd}
\usepackage{comment}
\usepackage{endnotes}
\usepackage{hyperendnotes}
\usepackage{enumerate}
\usepackage{enumitem}

% Define paragraph numbering as subsubsubsections
\usepackage{titlesec}
\setcounter{secnumdepth}{4}

\ifdefined\COLT
\else
	\usepackage{amsmath}
	\usepackage{amsthm}
    \fi
\usepackage[small]{caption}
\usepackage[caption = false]{subfig}

% My macros
\ifdefined\COLT
	\newtheorem{claim}[theorem]{Claim}
	\newtheorem{fact}[theorem]{Fact}
	\newtheorem{procedure}{Procedure}
	\newtheorem{conjecture}{Conjecture}	
	\newtheorem{hypothesis}{Hypothesis}	
	\newcommand{\qed}{\hfill\ensuremath{\blacksquare}}
\else
	\newtheorem{lemma}{Lemma}
	\newtheorem{corollary}{Corollary}
	\newtheorem{theorem}{Theorem}
	\newtheorem{proposition}{Proposition}

	\newtheorem{conjecture}{Conjecture}		
	
	\theoremstyle{definition}
	\newtheorem{definition}{Definition}
\fi

\def\be{\begin{equation}}
\def\ee{\end{equation}}
\def\beas{\begin{eqnarray*}}
\def\eeas{\end{eqnarray*}}
\def\bea{\begin{eqnarray}}
\def\eea{\end{eqnarray}}

\newcommand{\x}{{\mathbf x}}
\newcommand{\y}{{\mathbf y}}

\newcommand{\uu}{{\mathbf u}}
\newcommand{\vv}{{\mathbf v}}

\newcommand{\w}{{\mathbf w}}

\newcommand{\e}{{\mathbf e}}

\newcommand{\oo}{{\mathbf o}}

\newcommand{\D}{{\mathcal D}}

\newcommand{\M}{{\mathcal M}}

\renewcommand{\S}{{\mathcal S}}

\newcommand{\V}{{\mathcal V}}
\newcommand{\W}{{\mathcal W}}

\newcommand{\OO}{{\mathcal O}}

 % expectation operator
\newcommand{\R}{{\mathbb R}}
\newcommand{\N}{{\mathbb N}}

\newcommand{\abs}[1]{\left\lvert#1 \right\rvert}
\newcommand{\norm}[1]{\left\|#1 \right\|}

\newcommand{\inprod}[2]  {\left\langle{#1},{#2}\right\rangle}

\DeclareMathOperator*{\argmin}{argmin}

\newcommand{\sign}{\mathrm{sign}}

\definecolor{xcolor-gray}{gray}{0.95}

% Additional macros

\DeclareMathOperator{\Tr}{Tr}
\newcommand{\rank}{\mathrm{rank}}
\newcommand{\erank}{\mathrm{erank}}
\newcommand{\irank}{\mathrm{irank}}

% Tikz macros
\definecolor{darkspringgreen}{rgb}{0.09, 0.45, 0.27}
\usetikzlibrary{decorations.pathreplacing,calligraphy}
\usetikzlibrary{patterns}

% Misc format commands
\ifdefined\CVPR
	\usepackage[breaklinks=true,bookmarks=false]{hyperref}
	\ifdefined\CAMREADY
		\cvprfinalcopy 
	\fi
	 % *** Enter the CVPR Paper ID here
	
\fi
\ifdefined\NOTESAPP
	\let\note\endnote
	
\else
	\let\note\footnote
	
\fi

% Abbreviations
\ifdefined\ICML
	\newcommand*{\ABBR}{}
\fi
\ifdefined\NEURIPS
	\newcommand*{\ABBR}{}
\fi
\ifdefined\ICLR
	\newcommand*{\ABBR}{}
\fi
\ifdefined\COLT
	\newcommand*{\ABBR}{}
\fi
\ifdefined\ABBR
	\newcommand{\eg}{{\it e.g.}}
	\newcommand{\ie}{{\it i.e.}}
	\newcommand{\cf}{{\it cf.}}

	\newcommand{\etal}{{\it et al.}}
\fi

\begin{document}

% TITLE AND AUTHORS
\ifdefined\NEURIPS
   \title{Implicit Regularization in Deep Learning \\ May Not Be Explainable by Norms}
   \author{
	Noam Razin \\
	Tel Aviv University \\
	\texttt{noam.razin@cs.tau.ac.il} \\
	\And
	Nadav Cohen \\
	Tel Aviv University \\
	\texttt{cohennadav@cs.tau.ac.il} \\
	}
	\maketitle
\fi
\ifdefined\CVPR
	\title{Paper Title}
	\author{
	Author 1 \\
	Author 1 Institution \\	
	\texttt{author1@email} \\
	\and
	Author 2 \\
	Author 2 Institution \\
	\texttt{author2@email} \\	
	\and
	Author 3 \\
	Author 3 Institution \\
	\texttt{author3@email} \\
	}
	\maketitle
\fi
\ifdefined\AISTATS
	\twocolumn[
	\aistatstitle{Paper Title}
	\ifdefined\CAMREADY
		\aistatsauthor{Author 1 \And Author 2 \And Author 3}
		\aistatsaddress{Author 1 Institution \And Author 2 Institution \And Author 3 Institution}
	\else
		\aistatsauthor{Anonymous Author 1 \And Anonymous Author 2 \And Anonymous Author 3}
		\aistatsaddress{Unknown Institution 1 \And Unknown Institution 2 \And Unknown Institution 3}
	\fi
	]	
\fi
\ifdefined\ICML
	\twocolumn[
	\icmltitlerunning{Paper Title}
	\icmltitle{Paper Title} 
	\icmlsetsymbol{equal}{*}
	\begin{icmlauthorlist}
	\icmlauthor{Author 1}{institutionA} % Add ''equal'' next to institution identifier if appropriate
	\icmlauthor{Author 2}{institutionB}
	\icmlauthor{Author 3}{institutionA,institutionB}
	\end{icmlauthorlist}
	\icmlaffiliation{institutionA}{Department A, University A, City A, Region A, Country A}
	\icmlaffiliation{institutionB}{Department B, University B, City B, Region B, Country B}
	\icmlcorrespondingauthor{Corresponding Author 1}{cauthor1@email}
	\icmlcorrespondingauthor{Corresponding Author 2}{cauthor2@email}
	\icmlkeywords{Deep Learning, Learning Theory, Non-Convex Optimization}
	\vskip 0.3in
	]
	\printAffiliationsAndNotice{} % Add \icmlEqualContribution inside {} if appropriate
\fi
\ifdefined\ICLR
	\title{Paper Title}
	\author{
	Author 1 \\
	Author 1 Institution \\
	\texttt{author1@email}
	\And
	Author 2 \\
	Author 2 Institution \\
	\texttt{author2@email}
	\And
	Author 3 \\ 
	Author 3 Institution \\
	\texttt{author3@email}
	}
	\maketitle
\fi
\ifdefined\COLT
	\title{Paper Title}
	\coltauthor{
	\Name{Author 1} \Email{author1@email} \\
	\addr Author 1 Institution
	\And
	\Name{Author 2} \Email{author2@email} \\
	\addr Author 2 Institution
	\And
	\Name{Author 3} \Email{author3@email} \\
	\addr Author 3 Institution}
	\maketitle
\fi

% ABSTRACT
\begin{abstract}

Mathematically characterizing the implicit regularization induced by gradient-based optimization is a longstanding pursuit in the theory of deep learning.
A widespread hope is that a characterization based on minimization of norms may apply, and a standard test-bed for studying this prospect is matrix factorization (matrix completion via linear neural networks).
It is an open question whether norms can explain the implicit regularization in matrix factorization.
The current paper resolves this open question in the negative, by proving that there exist natural matrix factorization problems on which the implicit regularization drives \emph{all} norms (and quasi-norms) \emph{towards infinity}.
Our results suggest that, rather than perceiving the implicit regularization via norms, a potentially more useful interpretation is minimization of rank.
We demonstrate empirically that this interpretation extends to a certain class of non-linear neural networks, and hypothesize that it may be key to explaining generalization in deep learning.\note{
	This paper is an extended version of~\cite{razin2020implicit}, published at the 34th Conference on Neural Information Processing Systems (NeurIPS 2020).
}

\end{abstract}

% KEYWORDS
\ifdefined\COLT
	\medskip
	\begin{keywords}
	\emph{TBD}, \emph{TBD}, \emph{TBD}
	\end{keywords}
\fi

% INTRODUCTION
\section{Introduction} \label{sec:intro}

A central mystery in deep learning is the ability of neural networks to generalize when having far more learnable parameters than training examples.
This generalization takes place even in the absence of any explicit regularization (see~\cite{zhang2017understanding}), thus a view by which gradient-based optimization induces an \emph{implicit regularization} has arisen (see, \eg,~\cite{neyshabur2017implicit}).
Mathematically characterizing this implicit regularization is regarded as a major open problem in the theory of deep learning (\cf~\cite{neyshabur2017exploring}).
A widespread hope (initially articulated in~\cite{neyshabur2014search}) is that a characterization based on \emph{minimization of norms} (or quasi-norms\note{
A \emph{quasi-norm}~$\norm{\cdot}$ on a vector space~$\V$ is a function from~$\V$ to~$\R_{\geq 0}$ that satisfies the same axioms as a norm, except for the triangle inequality $\forall v_1, v_2 \in \V : \norm{v_1 + v_2} \leq \norm{v_1} + \norm{v_2}$, which is replaced by the weaker requirement $\exists c \geq 1 ~~ s.t. ~~ \forall v_1, v_2 \in \V : \norm{v_1 + v_2} \leq c \cdot (\norm{v_1} + \norm{v_2})$.
\label{note:qnorm}
})
may apply.
Namely, it is known that for linear regression, gradient-based optimization converges to solution with minimal $\ell_2$~norm (see for example Section~5 in~\cite{zhang2017understanding}), and the hope is that this result can carry over to neural networks if we allow $\ell_2$~norm to be replaced by a different (possibly architecture- and optimizer-dependent) norm (or quasi-norm).

A standard test-bed for studying implicit regularization in deep learning is \emph{matrix completion} (\cf~\cite{gunasekar2017implicit,arora2019implicit}): given a randomly chosen subset of entries from an unknown matrix~$W^*$, the task is to recover the unseen entries. 
This may be viewed as a prediction problem, where each entry in~$W^*$ stands for a data point: observed entries constitute the training set, and the average reconstruction error over the unobserved entries is the test error, quantifying generalization.
Fitting the observed entries is obviously an underdetermined problem with multiple solutions.
However, an extensive body of work (see~\cite{davenport2016overview} for a survey) has shown that if~$W^*$ is low-rank, certain technical assumptions (\eg~``incoherence'') are satisfied and sufficiently many entries are observed, then various algorithms can achieve approximate or even exact recovery.
Of these, a well-known method based upon convex optimization finds the minimal nuclear norm\note{
The \emph{nuclear norm} (also known as \emph{trace norm}) of a matrix is the sum of its singular values, regarded as a convex relaxation of rank.
}
matrix among those fitting observations (see~\cite{candes2009exact}).

One may try to solve matrix completion using shallow neural networks. A natural approach, \emph{matrix factorization}, boils down to parameterizing the solution as a product of two matrices~---~$W = W_2 W_1$~---~and optimizing the resulting (non-convex) objective for fitting observations.
Formally, this can be viewed as training a depth~$2$ linear neural network. 
It is possible to explicitly constrain the rank of the produced solution by limiting the shared dimension of~$W_1$ and~$W_2$.
However, Gunasekar~\etal~have shown in~\cite{gunasekar2017implicit} that in practice, even when the rank is unconstrained, running gradient descent with small learning rate (step size) and initialization close to the origin (zero) tends to produce low-rank solutions, and thus allows accurate recovery if~$W^*$ is low-rank. 
Accordingly, they conjectured that the implicit regularization in matrix factorization boils down to minimization of nuclear norm:
\begin{conjecture}[from~\cite{gunasekar2017implicit}, informally stated] \label{conj:nuclear_norm}
With small enough learning rate and initialization close enough to the origin, gradient descent on a full-dimensional matrix factorization converges to a minimal nuclear norm solution.
\end{conjecture}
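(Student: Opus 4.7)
The plan is to refute Conjecture~\ref{conj:nuclear_norm} rather than prove it; the abstract already flags this outcome. The guiding principle is that the implicit bias of matrix factorization appears to favor minimization of \emph{rank} rather than norm, and rank minimization and norm minimization can be in conflict: there exist observation patterns whose only rank-minimizing exact completions form a non-compact family, while \emph{higher}-rank exact completions with bounded norm do exist. On such an instance the minimum nuclear norm over feasible solutions is finite, yet gradient descent, if it is indeed rank-biased, must drive the iterates to infinity, refuting the conjecture.

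Concretely, I would carry out the following steps. \emph{(i)}~Exhibit an explicit small observation pattern $\Omega \subseteq [d] \times [d]$ together with observed values such that: (a) the minimum rank of an exact completion is some $r$; (b) the set of rank-$r$ exact completions is non-compact, with the singular values of every fitting sequence diverging; and (c) finite nuclear-norm exact completions of higher rank nonetheless exist, so the minimum nuclear norm fit is well-defined and finite. \emph{(ii)}~Analyze gradient flow on $W = W_2 W_1$ under small, balanced initialization, using a dynamical characterization of singular-value evolution for linear networks along the lines of~\cite{arora2019implicit}, to argue that the trajectory stays asymptotically close to the rank-$r$ manifold throughout training. \emph{(iii)}~Combine (i) and (ii): since fitting the observations while remaining on the rank-$r$ manifold forces a singular value of $W$ to diverge, every Schatten-$p$ quasi-norm of $W$ must blow up for all $p > 0$, and consequently \emph{every} norm and quasi-norm on the matrix space blows up as well, by reducing to Schatten-$p$ equivalence on the relevant low-dimensional stratum.

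The main obstacle I expect is step~(ii): one needs to control the gradient flow trajectory as $t \to \infty$ rather than on a bounded time interval, because the norm divergence is a limiting phenomenon. Small-initialization arguments often lose sharpness as the loss approaches zero, because the dynamics slow down and higher-rank perturbations may in principle leak in. Circumventing this likely requires either a Lyapunov-style invariant, with the conservation law $W_2^{\top} W_2 - W_1 W_1^{\top} = \mathrm{const}$ along gradient flow being a natural candidate, or a careful asymptotic tracking of the ratios of singular values that precludes any spurious growth in rank. A secondary, more routine issue is lifting the analysis from gradient \emph{flow} to gradient \emph{descent} with sufficiently small positive step size, which should follow by continuity on bounded intervals combined with the asymptotic divergence established above.
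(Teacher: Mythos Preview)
Your high-level strategy---engineer a completion instance where rank-bias and norm-bias conflict, then show gradient flow follows the rank-bias and hence diverges in norm---is exactly the paper's. But your concrete execution diverges from the paper in a way that leaves a real gap.

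Your template~(i) asks for an instance where the \emph{minimum} rank among exact completions is~$r$, and the rank-$r$ completions are all unbounded. The paper's instance does not have this structure: with $b_{1,2}=b_{2,1}=1$, $b_{2,2}=0$ in a $2\times 2$ problem, \emph{every} exact completion has rank~$2$ (determinant $-1$), and among these are bounded ones (take the unobserved entry to be~$0$, giving nuclear norm~$2$). So ``gradient flow stays on the minimum-rank manifold'' gives you nothing here; the minimum-rank manifold contains the nuclear-norm minimizer. What actually drives divergence is a different invariant: the sign of $\det(W_{L:1}(t))$ is preserved along gradient flow (this follows directly from the singular-value ODE you cite from~\cite{arora2019implicit}, since each $\sigma_r(t)$ either stays at~$0$ or stays positive). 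If $\det(W_{L:1}(0))>0$, then $\det(W_{L:1}(t))>0$ for all~$t$, yet every zero-loss matrix has determinant~$-1$; combining $w_{1,1}w_{2,2}>w_{1,2}w_{2,1}$ with $w_{2,2}\to 0$ and $w_{1,2}w_{2,1}\to 1$ forces $|w_{1,1}|\to\infty$. This completely sidesteps the $t\to\infty$ tracking problem you flagged: the determinant-sign argument is exact and global, not an asymptotic approximation, so there is no ``leakage'' to worry about.

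Two smaller points. Your step~(iii) is simpler than you suggest: once a single entry diverges, the weakened triangle inequality gives $\|W\|\geq c^{-1}|w_{1,1}|\cdot\|\e_1\e_1^\top\|-(\text{bounded})$ for any quasi-norm, no Schatten equivalence needed. And note that the paper stops short of \emph{formally} refuting Conjecture~\ref{conj:nuclear_norm}, because the conjecture is conditioned on the limit $\lim_{\alpha\to 0^+}\lim_{t\to\infty}W_{L:1}$ existing and being a global optimum; in the constructed instance that limit does not exist (the product matrix diverges), so the conjecture's hypothesis is vacuously unmet. The paper's result affirms Conjecture~\ref{conj:no_norm} and morally undermines Conjecture~\ref{conj:nuclear_norm}, but does not contradict its literal statement.
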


In a subsequent work~---~\cite{arora2019implicit}~---~Arora~\etal~considered \emph{deep matrix factorization}, obtained by adding depth to the setting studied in~\cite{gunasekar2017implicit}.
Namely, they considered solving matrix completion by training a depth~$L$ linear neural network, \ie~by running gradient descent on the parameterization $W = W_L W_{L - 1} \cdots W_1$, with $L \in \N$ arbitrary (and the dimensions of $\{ W_l \}_{l = 1}^L$ set such that rank is unconstrained).
It was empirically shown that deeper matrix factorizations (larger~$L$) yield more accurate recovery when~$W^*$ is low-rank.
Moreover, it was conjectured that the implicit regularization, for any depth $L \geq 2$, can \emph{not} be described as minimization of a mathematical norm (or quasi-norm):
\begin{conjecture}[based on~\cite{arora2019implicit}, informally stated] \label{conj:no_norm}
Given a (shallow or deep) matrix factorization, for any norm (or quasi-norm)~$\norm{\cdot}$, there exists a set of observed entries with which small learning rate and initialization close to the origin can \emph{not} ensure convergence of gradient descent to a minimal (in terms of~$\norm{\cdot}$) solution.
\end{conjecture}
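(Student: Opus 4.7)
The plan is to prove a substantially stronger statement, from which Conjecture~\ref{conj:no_norm} follows at once: there exists a single matrix completion instance (a fixed set of observed locations with fixed observed values) such that, for every depth $L \geq 2$, running gradient flow (or gradient descent with small enough learning rate) on the matrix factorization $W = W_L W_{L-1} \cdots W_1$ from any sufficiently small, balanced initialization drives the product matrix $W(t)$ so that \emph{every} norm and quasi-norm of $W(t)$ tends to infinity. Because every (quasi-)norm on a finite-dimensional vector space is equivalent to the entrywise maximum (via Aoki--Rolewicz for quasi-norms), it suffices to exhibit a single coordinate of $W(t)$ whose absolute value diverges. Finite-norm solutions fitting the observations obviously exist (fill in arbitrary values at unobserved locations), so divergence of $W(t)$ in every norm directly refutes any norm-minimization characterization of the implicit regularization.

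The construction I would pursue is a small matrix completion instance (e.g.\ $2\times 2$ or $3 \times 3$) whose rank-constrained fitting locus is non-empty and unbounded, with the additional property that every sequence of bounded-rank matrices fitting the observations must have at least one coordinate escaping to infinity. In effect, the determinantal ideal cut out by the observations should admit rank-$1$ completions only as limits along an unbounded branch. A few observations placed so that the rank-$1$ fitting equations force one unobserved entry to diverge when the others are chosen to hit prescribed values should do the job; the concrete choice will be guided by making the geometry of the branch as clean as possible.

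The dynamical analysis rests on the conservation laws for deep-linear gradient flow already in the literature (the balancedness relations $W_{l+1}^{\top}W_{l+1} - W_l W_l^{\top} \equiv \text{const}$). Under vanishing initialization, balancedness forces the factors to share singular directions and yields an autonomous evolution on $W$ itself, whose effect is to amplify large singular values and strongly damp small ones, with the bias sharpening in $L$. On the chosen instance this bias funnels the trajectory onto (or asymptotically close to) the rank-$1$ fitting variety; once there, further decrease of the loss is only possible by travelling along that variety, which, by our geometric choice, forces a coordinate of $W(t)$ to grow without bound. Standard discretization arguments then transfer the result from gradient flow to gradient descent with a sufficiently small positive learning rate.

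The main obstacle is making the two-timescale picture rigorous: a fast phase during which $W(t)$ is pulled toward the low-rank variety, followed by a slow phase during which $W(t)$ slides along that variety to infinity while progressively fitting the observations. Controlling both phases uniformly as the initialization scale tends to zero---so that the trajectory genuinely tracks the singular limiting flow, rather than stalling at a finite critical point---is the technical heart of the argument. A secondary difficulty is ensuring the conclusion is uniform over all norms and quasi-norms simultaneously; this is why I would aim for the stronger pointwise statement that a specific coordinate of $W(t)$ diverges, thereby sidestepping any dependence on the choice of norm.
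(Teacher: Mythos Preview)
Your construction intuition is right and matches the paper: a $2\times 2$ completion with $b_{1,2}=b_{2,1}=1$, $b_{2,2}=0$ and only the $(1,1)$ entry unobserved. Every finite solution has $\det=-1$, while rank-$1$ fits exist only as limits with $|(W)_{1,1}|\to\infty$. Your reduction to divergence of a single coordinate (hence of every quasi-norm) is also exactly what the paper does.

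Where you diverge is the dynamical mechanism. You propose a two-timescale picture---fast attraction to the rank-$1$ variety, then slow sliding to infinity---and correctly flag this as the hard part. The paper bypasses it entirely with a one-line invariant: under balanced gradient flow, \emph{the sign of $\det(W_{L:1}(t))$ is conserved} (this follows from the singular-value ODEs of Arora et al.: each $\sigma_r$ either stays identically zero or stays positive, so the determinant never crosses zero). Once you know $\det(W_{L:1}(t))>0$ for all $t$, the argument is pure algebra: if the loss is small then $w_{1,2}w_{2,1}\approx 1$ and $|w_{2,2}|\lesssim\sqrt{\ell}$, so $w_{1,1}w_{2,2}>w_{1,2}w_{2,1}$ forces $|w_{1,1}|\gtrsim 1/\sqrt{\ell}$. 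No tracking of varieties, no singular-perturbation analysis, no separation of phases. The paper then treats ``does the loss actually go to zero'' as a separate question (proved only for $L=2$ with scaled-identity initialization, otherwise left to experiments), which decouples the implicit-regularization statement from convergence analysis.

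One correction to your claim: the result does \emph{not} hold for every small balanced initialization, only for those with $\det(W_{L:1}(0))>0$, which occurs with probability $1/2$ under standard random initializations. If the determinant starts negative the argument gives nothing (and indeed the trajectory can converge to a finite solution). This is enough for the conjecture---``cannot \emph{ensure}'' convergence to a minimal-norm solution---but it is weaker than the uniform statement you sketched.
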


Conjectures~\ref{conj:nuclear_norm} and~\ref{conj:no_norm} contrast each other, and more broadly, represent opposing perspectives on the question of whether norms may be able to explain implicit regularization in deep learning.
In this paper, we resolve the tension between the two conjectures by affirming the latter.
In particular, we prove that there exist natural matrix completion problems where fitting observations via gradient descent on a depth~$L \geq 2$ matrix factorization leads~---~with probability~$0.5$ or more over (arbitrarily small) random initialization~---~\emph{all} norms (and quasi-norms) to \emph{grow towards infinity}, while the rank essentially decreases towards its minimum.
This result is in fact stronger than the one suggested by Conjecture~\ref{conj:no_norm}, in the sense that:
\emph{(i)}~not only is each norm (or quasi-norm) disqualified by some setting, but there are actually settings that jointly disqualify all norms (and quasi-norms);
and
\emph{(ii)}~not only are norms (and quasi-norms) not necessarily minimized, but they can grow towards infinity.
We corroborate the analysis with empirical demonstrations.

Our findings imply that, rather than viewing implicit regularization in (shallow or deep) matrix factorization as minimizing a norm (or quasi-norm), a potentially more useful interpretation is \emph{minimization of rank}.
As a step towards assessing the generality of this interpretation, we empirically explore an extension of matrix factorization to \emph{tensor factorization}.\note{
For the sake of this paper, \emph{tensors} can be thought of as $N$-dimensional arrays, with $N \in \N$ arbitrary (matrices correspond to the special case~$N = 2$). 
}
Our experiments show that in analogy with matrix factorization, gradient descent on a tensor factorization tends to produce solutions with low rank, where rank is defined in the context of tensors.\note{
The \emph{rank of a tensor} is the minimal number of summands required to express it, where each summand is an outer product between vectors.
}
Similarly to how matrix factorization corresponds to a linear neural network whose input-output mapping is represented by a matrix, it is known (see~\cite{cohen2016expressive}) that tensor factorization corresponds to a \emph{convolutional arithmetic circuit} (certain type of \emph{non-linear} neural network) whose input-output mapping is represented by a tensor.
We thus obtain a second exemplar of a neural network architecture whose implicit regularization strives to lower a notion of rank for its input-output mapping.
This leads us to believe that the phenomenon may be general, and formalizing notions of rank for input-output mappings of contemporary models may be key to explaining generalization in deep learning.

\medskip

The remainder of the paper is organized as follows.
Section~\ref{sec:related} reviews related work.
Section~\ref{sec:model} presents the deep matrix factorization model.
Section~\ref{sec:analysis} delivers our analysis, showing that its implicit regularization can drive all norms to infinity.
Experiments, with both the analyzed setting and tensor factorization, are given in Section~\ref{sec:experiments}.
Finally, Section~\ref{sec:summary} summarizes.

% RELATED WORK
\section{Related work} \label{sec:related}

Theoretical analysis of implicit regularization in deep learning is a highly active area of research.
Our work extends the bulk of literature concerning mathematical characterization of the implicit regularization induced by gradient-based optimization.\note{
As opposed to works studying the relation between implicit regularization and generalization (\cf~\cite{neyshabur2017implicit,neyshabur2017exploring,shah2018minimum}), or ones analyzing other sources of implicit regularization such as dropout (\eg~\cite{wei2020implicit,arora2020dropout}).
}
Existing characterizations focus on different aspects of learning, for example:
dynamics of optimization (\cite{advani2017high,gidel2019implicit,lampinen2019analytic,arora2019implicit,goldt2019dynamics,kalimeris2019sgd,gissin2020implicit});
curvature (``flatness'') of obtained minima~(\cite{mulayoff2020unique});
frequency spectrum of learned input-output mappings~(\cite{rahaman2018spectral});
invariant quantities throughout training~(\cite{du2018algorithmic});
and
statistical properties imported from data~(\cite{brutzkus2020inductive}).
A ubiquitous approach, arguably more prevalent than the aforementioned, is to demonstrate that learned input-output mappings minimize some notion of norm, or analogously, maximize some notion of margin.
Works along this line have treated various models,\note{
Limiting such treatments to particular models is necessary, as in general one can not expect a gradient-based optimizer to yield minimal norm solutions over all possible objectives.
Indeed, \cite{suggala2018connecting}~and~\cite{dauber2020can} have shown that there exist (carefully crafted) objectives over which variants of gradient descent do not produce minimal norm solutions.
This accords with the conventional wisdom by which implicit regularization does not stem from an optimizer alone, but from its combination with a model (class of objectives).
}
including:
linear (single-layer) predictors (\cite{soudry2018implicit,gunasekar2018characterizing,ji2019implicit,ali2020implicit});
normalized linear models~(\cite{wu2019implicit});
certain polynomially parameterized linear models~(\cite{woodworth2020kernel});
homogeneous (and sum of homogeneous) models~(\cite{nacson2019lexicographic,lyu2020gradient,ji2020directional});
ultra-wide neural networks (\cite{jacot2018neural,mei2019mean,chizat2020implicit,oymak2019overparameterized});
linear neural networks with a single output (\cite{nacson2019convergence,gunasekar2018implicit,ji2019gradient});
and
matrix factorization~---~the subject of our inquiry.

Matrix factorization is perhaps the most extensively studied model in the context of implicit regularization induced by non-convex gradient-based optimization.
It corresponds to linear neural networks with multiple inputs and outputs, typically trained to recover low-rank linear mappings.
The literature on matrix factorization for low-rank matrix recovery is far too broad to cover here~---~we refer to~\cite{chi2019nonconvex} for a recent survey, while mentioning that the technique is often attributed to~\cite{burer2003nonlinear}.
Notable works proving successful recovery of a low-rank matrix via matrix factorization trained by gradient descent with no explicit regularization are~\cite{tu2016low,ma2018implicit,li2018algorithmic}.
Of these, \cite{li2018algorithmic}~can be viewed as affirming Conjecture~\ref{conj:nuclear_norm} (from~\cite{gunasekar2017implicit}) for a certain special case.\note{
For a case related to (yet different from) that of~\cite{li2018algorithmic}, it was shown in~\cite{geyer2020uniqueness} that the set of matrices fitting observations is in fact a singleton, meaning Conjecture~\ref{conj:nuclear_norm} holds trivially.
}
\cite{belabbas2020implicit}~has affirmed Conjecture~\ref{conj:nuclear_norm} under different assumptions, but nonetheless argued empirically that it does not hold true in general, resonating with Conjecture~\ref{conj:no_norm} (from~\cite{arora2019implicit}).
To the best of our knowledge, no theoretical support for the latter was provided prior to its proof in this paper.
We note that the proof relies on technical results derived in~\cite{arora2018optimization} and~\cite{arora2019implicit} (restated in Subappendix~\ref{app:lemma:dmf} for completeness).

Extending the research on matrix factorization, the use of tensor factorization for recovering low-rank tensors is a frequent topic of investigation (\cf~\cite{cai2019nonconvex,xia2017polynomial,zhou2017tensor,yokota2016smooth,karlsson2016parallel,narita2012tensor,jain2014provable,acar2011scalable,anandkumar2014tensor}).
Nevertheless, the experiments reported in this paper provide the first evidence we are aware of for such use to be successful under gradient-based optimization with no explicit regularization (in particular without imposing low-rank on~the~tensor~factorization).

% DEEP MATRIX FACTORIZATION
\section{Deep matrix factorization} \label{sec:model}

Suppose we would like to complete a $d$-by-$d'$ matrix based on a set of observations $\{ b_{i , j} \in \R \}_{( i , j ) \in \Omega}$, where $\Omega \subset \{ 1 , 2 , \ldots , d \} \times \{ 1 , 2 , \ldots , d' \}$.
A standard (underdetermined) loss function for the task is:
\be
\ell : \R^{d , d'} \to \R_{\geq 0}
\quad , \quad
\ell ( W ) = \frac{1}{2} \sum\nolimits_{( i , j ) \in \Omega} \big( ( W )_{i , j} - b_{i , j} \big)^2
\label{eq:loss}
\text{\,.}
\ee
Employing a depth~$L$ matrix factorization, with hidden dimensions $d_1 , d_2 , \ldots , d_{L - 1} \in \N$, amounts to optimizing the \emph{overparameterized objective}:
\be
\phi ( W_1 , W_2 , \ldots , W_L ) := \ell ( W_{L : 1} ) = \frac{1}{2} \sum\nolimits_{( i , j ) \in \Omega} \big( ( W_{L : 1} )_{i , j} - b_{i , j} \big)^2
\label{eq:oprm_obj}
\text{\,,}
\ee
where $W_l \in \R^{d_l , d_{l - 1}}$, $l = 1 , 2 , \ldots , L$, with $d_L := d , d_0 := d'$, and:
\be
W_{L : 1} := W_L W_{L - 1} \cdots W_1
\label{eq:prod_mat}
\text{\,,}
\ee
referred to as the \emph{product matrix} of the factorization.
Our interest lies on the implicit regularization of gradient descent, \ie~on the type of product matrices (Equation~\eqref{eq:prod_mat}) it will find when applied to the overparameterized objective (Equation~\eqref{eq:oprm_obj}).
Accordingly, and in line with prior work (\cf~\cite{gunasekar2017implicit,arora2019implicit}), we focus on the case in which the search space is unconstrained, meaning $\min \{ d_l \}_{l=0}^L = \min \{ d_0 , d_L \}$ (rank is not limited by the parameterization).

As a theoretical surrogate for gradient descent with small learning rate and near-zero initialization, similarly to~\cite{gunasekar2017implicit} and~\cite{arora2019implicit} (as well as other works analyzing linear neural networks, \eg~\cite{saxe2014exact,arora2018optimization,lampinen2019analytic,arora2019convergence}), we study \emph{gradient flow} (gradient descent with infinitesimally small learning rate):\note{
A technical subtlety of optimization in continuous time is that in principle, it is possible to asymptote (diverge to infinity) after finite time.
In such a case, the asymptote is regarded as the end of optimization, and time tending to infinity ($t \to \infty$) is to be interpreted as tending towards that point.
}
\be
\dot{W_l} ( t ) := \tfrac{d}{dt} W_l ( t ) = - \tfrac{\partial}{\partial W_l} \phi ( W_1 ( t ) , W_2 ( t ) , \ldots , W_L ( t ) )
\quad , ~ t \geq 0 ~ , ~ l = 1 , 2 , \ldots , L 
\label{eq:gf}
\text{\,,}
\ee
and assume \emph{balancedness} at initialization,~\ie:
\be
W_{l + 1} ( 0 )^\top W_{l + 1} ( 0 ) = W_l ( 0 ) W_l ( 0 )^\top
\quad , ~ l = 1 , 2 , \ldots , L - 1
\label{eq:balance}
\text{\,.}
\ee
In particular, when considering random initialization, we assume that $\{ W_l ( 0 ) \}_{l = 1}^L$ are drawn from a joint probability distribution by which Equation~\eqref{eq:balance} holds almost surely.
This is an idealization of standard random near-zero initializations, \eg~Xavier~(\cite{glorot2010understanding}) and He~(\cite{he2015delving}), by which Equation~\eqref{eq:balance} holds approximately with high probability (note that the equation holds exactly in the standard ``residual'' setting of identity initialization~---~\cf~\cite{hardt2016identity,bartlett2018gradient}).
The condition of balanced initialization (Equation~\eqref{eq:balance}) played an important role in the analysis of~\cite{arora2018optimization}, facilitating derivation of a differential equation governing the product matrix of a linear neural network (see Lemma~\ref{lem:prod_mat_dyn} in Subappendix~\ref{app:lemma:dmf}).
It was shown in~\cite{arora2018optimization} empirically (and will be demonstrated again in Section~\ref{sec:experiments}) that there is an excellent match between the theoretical predictions of gradient flow with balanced initialization, and its practical realization via gradient descent with small learning rate and near-zero initialization.
Other works (\eg~\cite{arora2019convergence,ji2019gradient}) have supported this match theoretically, and we provide additional support in Appendix~\ref{app:unbalanced} by extending our theory to the case of unbalanced initialization (Equation~\eqref{eq:balance}~holding~approximately).

Formally stated, Conjecture~\ref{conj:nuclear_norm} from~\cite{gunasekar2017implicit} treats the case $L = 2$, where the product matrix~$W_{L : 1}$ (Equation~\eqref{eq:prod_mat}) holds~$\alpha \cdot W_{init}$ at initialization, $W_{init}$ being a fixed arbitrary full-rank matrix and $\alpha$ a varying positive scalar.\note{
The formal statement in~\cite{gunasekar2017implicit} applies to symmetric matrix factorization and positive definite~$W_{init}$, but it is claimed thereafter that affirming the conjecture would imply the same for the asymmetric setting considered in this paper.
We also note that the conjecture is stated in the context of matrix sensing, thus in particular applies to matrix completion (a special case).
}
Taking time to infinity ($t \to \infty$) and then initialization size to zero ($\alpha \to 0^+$), the conjecture postulates that if the limit product matrix $\bar{W}_{L : 1} := \lim_{\alpha \to 0^+} \lim_{t \to \infty} W_{L : 1}$ exists and is a global optimum for the loss~$\ell ( \cdot )$ (Equation~\eqref{eq:loss}), \ie~$\ell ( \bar{W}_{L : 1} ) = 0$, then it will be a global optimum with minimal nuclear norm, meaning $\bar{W}_{L : 1} \in \argmin_{W : \ell ( W ) = 0} \norm{W}_{nuclear}$.
In contrast to Conjecture~\ref{conj:nuclear_norm}, Conjecture~\ref{conj:no_norm} from~\cite{arora2019implicit} can be interpreted as saying that for any depth $L \geq 2$ and any norm or quasi-norm~$\norm{\cdot}$, there exist observations $\{ b_{i , j} \}_{( i , j ) \in \Omega}$ for which global optimization of loss ($\lim_{\alpha \to 0^+} \lim_{t \to \infty} \ell ( W_{1:L} ) = 0$) does not imply minimization of~$\norm{\cdot}$ (\ie~we may have $\lim_{\alpha \to 0^+} \lim_{t \to \infty} \norm{W_{1:L}} \neq \min_{W : \ell ( W ) = 0} \norm{W}$).
Due to technical subtleties (for example the requirement of Conjecture~\ref{conj:nuclear_norm} that a double limit of the product matrix with respect to time and initialization size exists), Conjectures~\ref{conj:nuclear_norm} and~\ref{conj:no_norm} are not necessarily contradictory.
However, they are in direct opposition in terms of the stances they represent~---~one supports the prospect of norms being able to explain implicit regularization in matrix factorization, and the other does not.
The current paper seeks a resolution.

% IMPLICIT REGULARIZATION CAN DRIVE ALL NORMS TO INFINITY
\section{Implicit regularization can drive all norms to infinity} \label{sec:analysis}

In this section we prove that for matrix factorization of depth~$L \geq 2$, there exist observations $\{ b_{i , j} \}_{( i , j ) \in \Omega}$ with which optimizing the overparameterized objective (Equation~\eqref{eq:oprm_obj}) via gradient flow (Equations \eqref{eq:gf} and~\eqref{eq:balance}) leads~---~with probability~$0.5$ or more over random (``symmetric'') initialization~---~\emph{all} norms and quasi-norms of the product matrix (Equation~\eqref{eq:prod_mat}) to \emph{grow towards infinity}, while its rank essentially decreases towards minimum.
By this we not only affirm Conjecture~\ref{conj:no_norm}, but in fact go beyond it in the following sense:
\emph{(i)}~the conjecture allows chosen observations to depend on the norm or quasi-norm under consideration, while we show that the same set of observations can apply jointly to all norms and quasi-norms;
and
\emph{(ii)}~the conjecture requires norms and quasi-norms to be larger than minimal, while we establish growth towards infinity.

For simplicity of presentation, the current section delivers our construction and analysis in the setting $d = d' = 2$ (\ie~$2$-by-$2$ matrix completion)~---~extension to different dimensions is straightforward (see Appendix~\ref{app:dimensions}).
We begin (Subsection~\ref{sec:analysis:setting}) by introducing our chosen observations $\{ b_{i , j} \}_{( i , j ) \in \Omega}$ and discussing their properties.
Subsequently (Subsection~\ref{sec:analysis:norms_up}), we show that with these observations, decreasing loss often increases all norms and quasi-norms while lowering rank.
Minimization of loss is treated thereafter (Subsection~\ref{sec:analysis:loss_down}).
Finally (Subsection~\ref{sec:analysis:robust}), robustness of our construction to perturbations~is~established.

  % A SIMPLE MATRIX COMPLETION PROBLEM
\subsection{A simple matrix completion problem} \label{sec:analysis:setting}

Consider the problem of completing a $2$-by-$2$ matrix based on the following observations:
\be
\Omega = \{ ( 1 , 2 ) , ( 2 , 1 ) , ( 2 , 2 ) \}
\quad , \quad
b_{1 , 2} = 1
~ , ~
b_{2 , 1} = 1
~ , ~
b_{2 , 2} = 0
\label{eq:obs}
\text{\,.}
\ee
The solution set for this problem (\ie~the set of matrices obtaining zero loss) is:
\be
\S = \left\{ W \in \R^{2 , 2} : ( W )_{1 , 2} = 1 ,  ( W )_{2 , 1} = 1 , ( W )_{2 , 2} = 0 \right\}
\label{eq:sol_set}
\text{\,.}
\ee
Proposition~\ref{prop:sol_set_norms} below states that minimizing a norm or quasi-norm along $W \in \S$ requires confining $( W )_{1 , 1}$ to a bounded interval, which for Schatten-$p$ (quasi-)norms (in particular for nuclear, Frobenius and spectral norms)\note{
For $p \,{\in}\, ( 0 , \infty ]$, the \emph{Schatten-$p$ (quasi-)norm} of a matrix $W \,{\in}\, \R^{d , d'}$ with singular values $\{ \sigma_r ( W ) \}_{r = 1}^{\min \{ d , d' \}}$ is defined as $\big( \sum_{r = 1}^{\min \{ d , d' \}} \sigma_r^p ( W ) \big)^{1 / p}$ if $p < \infty$ and as $\max \{ \sigma ( W ) \}_{r = 1}^{\min \{ d , d' \}}$ if $p = \infty$. 
It is a norm if $p \geq 1$ and a quasi-norm if $p < 1$.
Notable special cases are nuclear (trace), Frobenius and spectral norms, corresponding to $p = 1$, $2$ and $\infty$ respectively.
}
is simply the singleton~$\{ 0 \}$.
\begin{proposition} \label{prop:sol_set_norms}
For any norm or quasi-norm over matrices~$\norm{\cdot}$ and any~$\epsilon > 0$, there exists a bounded interval $I_{\norm{\cdot} , \epsilon} \subset \R$ such that if $W \in \S$ is an $\epsilon$-minimizer of~$\norm{\cdot}$ (\ie~$\norm{W} \leq \inf_{W' \in \S} \norm{W'} + \epsilon$) then necessarily~$( W )_{1 , 1} \in I_{\norm{\cdot} , \epsilon}$.
If~$\norm{\cdot}$ is a Schatten-$p$ (quasi-)norm, then in addition $W \in \S$ minimizes~$\norm{\cdot}$ (\ie~$\norm{W} = \inf_{W' \in \S} \norm{W'}$) if and only if~$( W )_{1 , 1} = 0$.
\end{proposition}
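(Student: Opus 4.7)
\medskip
\noindent
\textbf{Proof plan for Proposition~\ref{prop:sol_set_norms}.}
The plan is to parameterize $\S$ by the single free entry $a := (W)_{1,1} \in \R$, writing $W(a) = W(0) + a \cdot E_{1,1}$ where $E_{1,1}$ is the standard basis matrix with a $1$ in the top-left position and zeros elsewhere. The first claim then reduces to showing that $a \mapsto \norm{W(a)}$ is coercive (tends to infinity as $|a| \to \infty$) for every norm or quasi-norm~$\norm{\cdot}$, while the Schatten-$p$ claim reduces to a one-variable minimization.

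For the coercivity step, I would apply the (quasi-)triangle inequality to $a \cdot E_{1,1} = W(a) - W(0)$, using the constant $c \geq 1$ from the quasi-norm axiom to write $|a| \cdot \norm{E_{1,1}} = \norm{a \cdot E_{1,1}} \leq c \bigl( \norm{W(a)} + \norm{W(0)} \bigr)$, where homogeneity is part of the (quasi-)norm axioms and $\norm{E_{1,1}} > 0$ by positive definiteness. Rearranging yields $\norm{W(a)} \geq |a| \norm{E_{1,1}} / c - \norm{W(0)}$, so for each fixed $\epsilon > 0$ the set of $a$ for which $\norm{W(a)} \leq \inf_{W' \in \S} \norm{W'} + \epsilon$ is contained in a bounded interval $I_{\norm{\cdot},\epsilon}$ depending only on $\norm{\cdot}$ and $\epsilon$. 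This establishes the first part.

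For the Schatten-$p$ part, I would compute the singular values of $W(a)$ directly from the characteristic polynomial of $W(a) W(a)^\top = \bigl( \begin{smallmatrix} a^2 + 1 & a \\ a & 1 \end{smallmatrix} \bigr)$, obtaining
\[
\sigma_{1,2}^2(W(a)) \;=\; \tfrac{1}{2}\Bigl( a^2 + 2 \;\pm\; |a|\sqrt{a^2 + 4} \,\Bigr) \text{\,.}
\]
The key observation is that $\det W(a) = -1$ for all $a$, hence $\sigma_1(W(a)) \cdot \sigma_2(W(a)) = 1$. Setting $s := \sigma_1(W(a)) \geq 1$, so $\sigma_2(W(a)) = 1/s$, the Schatten-$p$ (quasi-)norm becomes $(s^p + s^{-p})^{1/p}$ for $p < \infty$ (and $s$ for $p = \infty$). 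Differentiating $s \mapsto s^p + s^{-p}$ shows it is strictly increasing on $[1, \infty)$, so the unique minimizer is $s = 1$, which by the explicit formula above occurs if and only if $a = 0$. Combining these two parts yields the proposition.

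The main obstacle I anticipate is not technical but notational: keeping the quasi-norm case and the norm case under one umbrella, since the standard triangle inequality must be replaced by the weakened form with constant~$c$. Once that weakening is carried through cleanly in the coercivity step, both parts of the proof are straightforward.
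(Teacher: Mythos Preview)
Your proposal is correct. The first part (coercivity via the weakened triangle inequality) is essentially identical to the paper's argument: both isolate the $(1,1)$-entry, invoke the quasi-norm triangle inequality with constant $c$, and obtain a linear lower bound $\norm{W(a)} \geq |a|\,\norm{E_{1,1}}/c - \text{const}$, from which boundedness of the $\epsilon$-minimizing set follows immediately.

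For the Schatten-$p$ part your route differs from the paper's, and in a way worth noting. The paper computes the two singular values of $W(a)$ explicitly (via the eigenvalues of the symmetric matrix $W(a)$ itself), then differentiates $\norm{W_x}_{S_p}^p$ directly with respect to $x$ and checks the sign of the derivative separately for $x>0$ and $x<0$, handling $p=\infty$ as its own case. Your argument instead exploits the constraint $\det W(a) = -1$, so $\sigma_1\sigma_2 = 1$, and reparameterizes by $s = \sigma_1 \in [1,\infty)$; the Schatten-$p$ objective becomes $s \mapsto (s^p + s^{-p})^{1/p}$, whose strict monotonicity on $[1,\infty)$ is a one-line derivative check, with $s=1$ forced exactly when $a=0$ by the explicit formula. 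This is cleaner than the paper's computation: it unifies the $p<\infty$ and $p=\infty$ cases, avoids the sign-of-$x$ split, and makes the role of the fixed determinant transparent. The paper's version, on the other hand, yields slightly more along the way (monotonicity of each singular value in $|a|$ individually), which it reuses later in Proposition~\ref{prop:sol_set_rank}.
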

\begin{proof}[Proof sketch (for complete proof see Subappendix~\ref{app:proofs:sol_set_norms})]
The (weakened) triangle inequality allows us to lower bound $\norm{\cdot}$ by $\abs{( W )_{1 , 1}}$ (up to multiplicative and additive constants).
Thus, the set of $( W )_{1 , 1}$ values corresponding to $\epsilon$-minimizers must be bounded.
If $\norm{\cdot}$ is a Schatten-$p$ (quasi-)norm, a straightforward analysis shows it is monotonically increasing with respect to $\abs{( W )_{1 , 1}}$, implying it is minimized if and only if $( W )_{1 , 1} = 0$.
\end{proof}

In addition to norms and quasi-norms, we are also interested in the evolution of rank throughout optimization of a deep matrix factorization.
More specifically, we are interested in the prospect of rank being implicitly minimized, as demonstrated empirically in~\cite{gunasekar2017implicit,arora2019implicit}.
The discrete nature of rank renders its direct analysis unfavorable from a dynamical perspective (the rank of a matrix implies little about its proximity to low-rank), thus we consider the following surrogate measures:
\emph{(i)}~\emph{effective rank} (Definition~\ref{def:erank} below; from~\cite{roy2007effective})~---~a continuous extension of rank used for numerical analyses;
and
\emph{(ii)}~\emph{distance from infimal rank} (Definition~\ref{def:irank_dist} below)~---~(Frobenius) distance from the minimal rank that a given set of matrices may approach.
According to Proposition~\ref{prop:sol_set_rank} below, these measures independently imply that, although all solutions to our matrix completion problem~---~\ie~all $W \in \S$ (see Equation~\eqref{eq:sol_set})~---~have rank~$2$, it is possible to essentially minimize the rank to~$1$ by taking $\abs{( W )_{1 , 1}} \to \infty$.
Recalling Proposition~\ref{prop:sol_set_norms}, we conclude that in our setting, there is a direct contradiction between minimizing norms or quasi-norms and minimizing rank~---~the former requires confinement to some bounded interval, whereas the latter demands divergence towards infinity.
This is the critical feature of our construction, allowing us to deem whether the implicit regularization in deep matrix factorization favors norms (or quasi-norms) over rank or vice versa.
\begin{definition}[from~\cite{roy2007effective}] \label{def:erank}
The \emph{effective rank of a matrix $0 \,{\neq}\, W \,{\in}\, \R^{d , d'}$} with singular values $\{ \sigma_r ( W ) \}_{r = 1}^{\min \{ d , d' \}}$ is defined to be $\erank ( W ) \,{:=}\, \exp \{ H ( \rho_1 ( W ) , \rho_2 ( W ) , \ldots , \rho_{\min \{ d , d' \}} ( W ) ) \}$, where $\{ \rho_r ( W ) \,{:=}\, \nicefrac{\sigma_r ( W )}{\sum_{r' = 1}^{\min \{ d , d' \}} \sigma_{r'} ( W )} \}_{r = 1}^{\min \{ d , d' \}}$ is a distribution induced by the singular values, and $H ( \rho_1 ( W ) , \rho_2 ( W ) , \ldots , \rho_{\min \{ d , d' \}} ( W ) ) \,{:=}\, - \sum\nolimits_{r = 1}^{\min \{ d , d' \}} \rho_r ( W ) \cdot \ln \rho_r ( W )$ is its (Shannon) entropy (by convention $0 \cdot \ln 0 = 0$).
\end{definition}
\begin{definition} \label{def:irank_dist}
For a matrix space~$\R^{d , d'}$, we denote by $D( \S , \S' )$ the (Frobenius) distance between two sets $\S , \S' \subset \R^{d , d'}$ (\ie~$D ( \S , \S' ) := \inf \{ \norm{W - W'}_{Fro} : W \in \S , W' \in \S' \}$), by $D( W , \S' )$ the distance between a matrix $W \in \R^{d , d'}$ and the set~$\S'$ (\ie~$D ( W , \S' ) := \inf \{ \norm{W - W'}_{Fro} : W' \in \S' \}$), and by $\M_r$, for $r = 0 , 1 , \ldots , \min \{ d , d' \}$, the set of matrices with rank~$r$ or less (\ie~$\M_r := \{ W \in \R^{d , d'} : \rank ( W ) \leq r \}$).
The \emph{infimal rank of the set~$\S$}~---~denoted $\irank ( \S )$~---~is defined to be the minimal~$r$ such that $D ( \S , \M_r ) = 0$.
The \emph{distance of a matrix $W \in \R^{d , d'}$ from the infimal rank of~$\S$} is defined to be $D ( W , \M_{\irank ( \S )} )$.
\end{definition}
\begin{proposition} \label{prop:sol_set_rank}
The effective rank (Definition~\ref{def:erank}) takes the values~$( 1 , 2 ]$ along~$\S$ (Equation~\eqref{eq:sol_set}).
For $W \in \S$, it is maximized when $( W )_{1 , 1} = 0$, and monotonically decreases to~$1$ as $\abs{( W )_{1 , 1}}$ grows.
Correspondingly, the infimal rank (Definition~\ref{def:irank_dist}) of~$\S$ is~$1$, and the distance of $W \in \S$ from this infimal rank is maximized when $( W )_{1 , 1} = 0$, monotonically decreasing to~$0$ as $\abs{( W )_{1 , 1}}$ grows.
\end{proposition}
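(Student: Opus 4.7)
The plan is to parameterize the solution set explicitly and reduce everything to a single real variable $a = (W)_{1,1}$. Writing $W_a = \bigl(\begin{smallmatrix} a & 1 \\ 1 & 0 \end{smallmatrix}\bigr)$, I note that $\det(W_a) = -1$ and $\|W_a\|_{Fro}^2 = a^2 + 2$, so the two singular values satisfy $\sigma_1(W_a)\sigma_2(W_a) = 1$ and $\sigma_1^2(W_a) + \sigma_2^2(W_a) = a^2 + 2$. Solving gives the closed forms $\sigma_1(W_a) = \tfrac{1}{2}(\sqrt{a^2+4} + |a|)$ and $\sigma_2(W_a) = \tfrac{1}{2}(\sqrt{a^2+4} - |a|)$. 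These are the only quantities I will need; everything else follows from monotonicity in $|a|$.

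For the effective rank part, I would substitute the closed-form singular values into Definition~\ref{def:erank}. A short calculation shows that the induced distribution takes the form $\rho_1(W_a) = \tfrac{1}{2}(1 + u)$, $\rho_2(W_a) = \tfrac{1}{2}(1 - u)$, with $u := |a|/\sqrt{a^2+4}$. The map $|a| \mapsto u$ is a strictly increasing bijection from $[0,\infty)$ onto $[0,1)$. Since the binary Shannon entropy $h(u) := H\!\bigl(\tfrac{1+u}{2}, \tfrac{1-u}{2}\bigr)$ is strictly concave in $u$ and symmetric around $u = 0$, it is strictly decreasing on $[0,1)$ from $h(0) = \ln 2$ down to $\lim_{u \to 1^-} h(u) = 0$. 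Exponentiating yields $\erank(W_a) \in (1, 2]$, with maximum $2$ attained uniquely at $a = 0$ and a strict monotone decrease towards $1$ as $|a| \to \infty$.

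For the infimal-rank part, I first claim $\irank(\S) = 1$. The lower bound $\irank(\S) \geq 1$ is immediate because $\M_0 = \{0\}$ and every $W \in \S$ satisfies $\|W\|_{Fro}^2 \geq (W)_{1,2}^2 + (W)_{2,1}^2 = 2$, so $D(\S, \M_0) \geq \sqrt{2} > 0$. The matching upper bound follows because $\sigma_2(W_a) \to 0$ as $|a| \to \infty$, which gives $D(\S, \M_1) = 0$ by the Eckart--Young theorem. For the distance of $W \in \S$ from $\M_{\irank(\S)} = \M_1$, Eckart--Young again yields $D(W_a, \M_1) = \sigma_2(W_a) = \tfrac{1}{2}(\sqrt{a^2+4} - |a|)$. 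This expression is a strictly decreasing function of $|a|$ on $[0,\infty)$ (its derivative in $|a|$ is $\tfrac{1}{2}\bigl(|a|/\sqrt{a^2+4} - 1\bigr) < 0$), with maximum value $1$ at $a = 0$ and limit $0$ as $|a| \to \infty$.

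There is no substantive obstacle: the whole argument is reduction to one real parameter plus an appeal to Eckart--Young and the strict concavity/symmetry of binary entropy. The only mildly delicate points to verify carefully are that the two monotonicity claims (for $h(u)$ and for $\sigma_2(W_a)$) are strict and that the lower bound $D(\S,\M_0) > 0$ rules out $\irank(\S) = 0$; both are elementary once the closed-form singular values are in hand.
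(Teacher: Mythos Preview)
Your proposal is correct and takes essentially the same approach as the paper: compute the two singular values of $W_a$ in closed form, then read off the monotonicity of the effective rank and of $\sigma_2(W_a)=D(W_a,\M_1)$ in $|a|$, together with $D(\S,\M_0)>0$ and $\sigma_2(W_a)\to 0$ to pin down $\irank(\S)=1$. Your reparameterization $u=|a|/\sqrt{a^2+4}$ and appeal to strict concavity of the binary entropy is a slightly cleaner packaging than the paper's direct derivative computations, but the underlying argument is the same.
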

\begin{proof}[Proof sketch (for complete proof see Appendix~\ref{app:proofs:sol_set_rank})]
Analyzing the singular values of $W \in \S$~---~$\sigma_1 (W) \geq \sigma_2 (W) \geq 0$~---~reveals that:
\emph{(i)}~$\sigma_1 (W)$ attains a minimal value of~$1$ when $(W)_{1 , 1} = 0$, monotonically increasing to~$\infty$ as $\abs{( W )_{1 , 1}}$ grows;
and
\emph{(ii)}~$\sigma_2 (W)$ attains a maximal value of~$1$ when $(W)_{1 , 1} = 0$, monotonically decreasing to~$0$ as $\abs{( W )_{1 , 1}}$ grows.
The results for effective rank, infimal rank and distance from infimal rank readily follow from this characterization.
\end{proof}

  % DECREASING LOSS INCREASES NORMS
\subsection{Decreasing loss increases norms} \label{sec:analysis:norms_up}

Consider the process of solving our matrix completion problem (Subsection~\ref{sec:analysis:setting}) with gradient flow over a depth $L \geq 2$ matrix factorization (Section~\ref{sec:model}).
Theorem~\ref{thm:norms_up_finite} below states that if the product matrix (Equation~\eqref{eq:prod_mat}) has positive determinant at initialization, lowering the loss leads norms and quasi-norms to increase, while the rank essentially decreases.
\begin{theorem} \label{thm:norms_up_finite}
Suppose we complete the observations in Equation~\eqref{eq:obs} by employing a depth $L \geq 2$ matrix factorization, \ie~by minimizing the overparameterized objective (Equation~\eqref{eq:oprm_obj})\note{
As stated in Section~\ref{sec:model}, we consider full-dimensional factorizations, in this case meaning that hidden dimensions $d_1 , d_2 , \ldots , d_{L - 1}$ are all greater than or equal to~$2$.
}
via gradient flow (Equations~\eqref{eq:gf} and~\eqref{eq:balance}).
Denote by~$W_{L : 1} ( t )$ the product matrix (Equation~\eqref{eq:prod_mat}) at time~$t \geq 0$ of optimization, and by $\ell ( t ) := \ell ( W_{L : 1} ( t ) )$ the corresponding loss (Equation~\eqref{eq:loss}).
Assume that $\det ( W_{L : 1} ( 0 ) ) > 0$.
Then, for any norm or quasi-norm over matrices~$\norm{\cdot}$:
\be
\norm{W_{L : 1} ( t )} \geq a_{\norm{\cdot}} \cdot \frac{1}{\sqrt{\ell (t)}} -  b_{\norm{ \cdot }}
\quad , ~ t \geq 0
\label{eq:norms_lb}
\text{\,,}
\ee
where $b_{\norm{\cdot}} := \max \{ \sqrt{2} a_{\norm{\cdot}} , 8c_{\norm{\cdot}}^2 \max_{i,j \in \{1,2\}} \norm{\e_i \e_j^\top} \}$, $a_{\norm{\cdot}} := \norm{\e_1 \e_1^\top} / ( \sqrt{2} c_{\norm{\cdot}} )$, the vectors $\e_1, \e_2 \in \R^2$ form the standard basis, and $c_{\norm{\cdot}} \geq 1$ is a constant with which $\norm{\cdot}$ satisfies the weakened triangle inequality (see Footnote~\ref{note:qnorm}).
On the other hand:
\bea
& \erank ( W_{L : 1} ( t ) ) \leq \inf_{W' \in \S} \erank (W') + \frac{2\sqrt{12}}{\ln (2)} \cdot \sqrt{\ell (t)}
& \quad , ~ t \geq 0
\label{eq:erank_ub}
\text{\,,}
\\[1.5mm]
& D ( W_{L : 1} ( t ) , \M_{\irank ( \S )} ) \leq 3\sqrt{2} \cdot \sqrt{\ell (t)}
& \quad , ~ t \geq 0
\label{eq:irank_dist_ub}
\text{\,,}
\eea
where $\erank ( \cdot )$ stands for effective rank (Definition~\ref{def:erank}), and $D ( \cdot \hspace{0.5mm} , \M_{\irank ( \S )} )$ represents distance from the infimal rank (Definition~\ref{def:irank_dist}) of the solution set~$\S$ (Equation~\eqref{eq:sol_set}).
\end{theorem}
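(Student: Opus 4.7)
The crucial observation is that every $W \in \S$ satisfies $\det(W) = -1$, whereas gradient flow with balanced initialization preserves the sign of $\det( W_{L : 1} ( t ) )$; starting from $\det( W_{L : 1} ( 0 ) ) > 0$, the product matrix can therefore never quite reach $\S$. The lower the loss, the more violently its $(1,1)$ entry must blow up in order to keep the determinant positive, which drives every norm to infinity. The other three entries must remain close to $(1, 1, 0)$, so that the matrix simultaneously becomes essentially rank one.

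To formalize this I would first certify $\det( W_{L : 1} ( t ) ) > 0$ for all $t \geq 0$. I would invoke the closed-form dynamics of the product matrix and its singular values developed in~\cite{arora2018optimization,arora2019implicit} (restated in Subappendix~\ref{app:lemma:dmf}): each singular value $\sigma_r( W_{L : 1} ( t ) )$ obeys an ODE whose right-hand side carries a prefactor proportional to $\sigma_r^{2 - 2/L}$ with exponent $\geq 1$ for $L \geq 2$, making $\sigma_r = 0$ an asymptote. Hence singular values starting strictly positive remain strictly positive in finite time, $W_{L : 1} ( t )$ is invertible throughout, and continuity of $\det( W_{L : 1} ( t ) )$ in $t$ prohibits a sign flip.

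Next, setting $w_{i,j}(t) := ( W_{L : 1} ( t ) )_{i,j}$, Equation~\eqref{eq:loss} gives $|w_{1,2}(t) - 1|, |w_{2,1}(t) - 1|, |w_{2,2}(t)| \leq \sqrt{2 \ell (t)}$. It suffices to treat the regime $\ell(t) \leq 1/2$, since for $\ell(t) > 1/2$ the inequality $b_{\norm{\cdot}} \geq \sqrt{2} \, a_{\norm{\cdot}}$ already renders the RHS of~\eqref{eq:norms_lb} non-positive and the bound trivial. In this regime $w_{1,2}(t) w_{2,1}(t) \geq (1 - \sqrt{2 \ell (t)})^2 > 0$, so combining with $w_{1,1}(t) w_{2,2}(t) > w_{1,2}(t) w_{2,1}(t)$ (from $\det > 0$) and $|w_{2,2}(t)| \leq \sqrt{2 \ell (t)}$ yields the key estimate $|w_{1,1}(t)| \geq (1 - \sqrt{2 \ell (t)})^2 / \sqrt{2 \ell (t)}$. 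To convert this into~\eqref{eq:norms_lb}, I would decompose $W_{L : 1} ( t ) = w_{1,1}(t) \, \e_1 \e_1^\top + R(t)$ and apply the weakened triangle inequality once to extract $|w_{1,1}(t)| \norm{\e_1 \e_1^\top} / c_{\norm{\cdot}}$, then apply it iteratively to the three remaining entry contributions in $R(t)$ to pick up the $c_{\norm{\cdot}}^2$ factor visible in $b_{\norm{\cdot}}$, with the entry bounds above producing the remaining constants.

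For the rank bounds I would compare $W_{L : 1} ( t )$ to the explicit rank-one matrix $W^\star(t) := w_{1,1}(t) \, \e_1 \e_1^\top + \e_1 \e_2^\top + \e_2 \e_1^\top + (1/w_{1,1}(t)) \, \e_2 \e_2^\top \in \M_1$. The squared Frobenius discrepancy equals $(w_{1,2}(t) - 1)^2 + (w_{2,1}(t) - 1)^2 + (w_{2,2}(t) - 1/w_{1,1}(t))^2$, each term being $O( \ell(t) )$ thanks to the lower bound on $|w_{1,1}(t)|$, which delivers~\eqref{eq:irank_dist_ub} directly. For~\eqref{eq:erank_ub} I would track the singular values of $W_{L : 1} ( t )$ around the limiting rank-one configuration ($\sigma_1 \to \infty$, $\sigma_2 \to 0$, $\sigma_1 \sigma_2 \to 1$) and Taylor-expand the entropy of the induced distribution $(\rho_1 , \rho_2)$ near the degenerate point $(1, 0)$, chaining the resulting Lipschitz estimate with the Frobenius bound to obtain the claimed linear-in-$\sqrt{\ell(t)}$ rate. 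I expect the sharpest bookkeeping to live here: the explicit constant $2\sqrt{12}/\ln(2)$ demands careful control of the divergence of $\log \rho_2$ as $\rho_2 \to 0^+$. By contrast, Step~1 (determinant sign preservation) is the conceptual lynchpin but falls out immediately from the cited dynamical lemmas, and everything else reduces to algebra exploiting the contradiction between $\det( W_{L : 1} ( t ) ) > 0$ and $\S \subset \{ W : \det(W) = -1 \}$.
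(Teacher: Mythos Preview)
Your plan is correct and tracks the paper's proof closely: identical determinant-sign argument via the singular-value ODE from~\cite{arora2019implicit}, identical lower bound $|w_{1,1}| \geq (1-\sqrt{2\ell})^2/\sqrt{2\ell}$ from $\det>0$, and the same weakened-triangle-inequality extraction for~\eqref{eq:norms_lb}.

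The one place you diverge is the route to bounding $\sigma_2(W_{L:1}(t))$. You compare directly to the explicit rank-one matrix $W^\star$, which works but forces you to also control $1/|w_{1,1}|$ inside the $(2,2)$ discrepancy. The paper instead projects onto~$\S$ (not onto~$\M_1$): it sets $W_\S = \bigl(\begin{smallmatrix} w_{1,1} & 1 \\ 1 & 0 \end{smallmatrix}\bigr)$, reads off $\sigma_2(W_\S) = 2/(w_{1,1}+\sqrt{w_{1,1}^2+4}) \leq 2\sqrt{2\ell}$ from the closed-form eigenvalues of this symmetric matrix, and then transfers via Weyl's inequality across $\|W_{L:1}-W_\S\|_F=\sqrt{2\ell}$ to get $\sigma_2(W_{L:1})\leq 3\sqrt{2\ell}$ exactly. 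This is marginally cleaner and handles the large-loss regime uniformly. For~\eqref{eq:erank_ub}, be aware that a Taylor expansion or Lipschitz estimate of $h$ near $\rho_2=0$ will not produce a $\sqrt{\ell}$ rate, since $h$ is not Lipschitz there; the paper uses the global inequality $h(p)\leq 2\sqrt{p}$ together with $e^x \leq 1 + x/\ln 2$ on $[0,\ln 2]$, which, once $\rho_2\leq 12\ell$ is read off from the singular-value bounds, yields the constant $2\sqrt{12}/\ln 2$ directly. You have correctly flagged this as the delicate step.
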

\begin{proof}[Proof sketch (for complete proof see Subappendix~\ref{app:proofs:norms_up_finite})]
Using a dynamical characterization from \cite{arora2019implicit} for the singular values of the product matrix (restated in Subappendix~\ref{app:lemma:dmf} as Lemma~\ref{lem:prod_mat_sing_dyn}), we show that the latter's determinant does not change sign, \ie~it remains positive.
This allows us to lower bound $\abs{(W_{L : 1})_{1, 1} (t)}$ by $1 / \sqrt{\ell (t)}$ (up to multiplicative and additive constants).
Relating $\abs{(W_{L : 1})_{1, 1} (t)}$ to (quasi-)norms, effective rank and distance from infimal rank then leads to the desired bounds.
\end{proof}

An immediate consequence of Theorem~\ref{thm:norms_up_finite} is that, if the product matrix (Equation~\eqref{eq:prod_mat}) has positive determinant at initialization, convergence to zero loss leads \emph{all} norms and quasi-norms to \emph{grow to infinity}, while the rank is essentially minimized.
This is formalized in Corollary~\ref{cor:norms_up_asymp} below.
\begin{corollary} \label{cor:norms_up_asymp}
Under the conditions of Theorem~\ref{thm:norms_up_finite}, global optimization of loss, \ie~$\lim_{t \to \infty} \ell ( t ) = 0$, implies that for any norm or quasi-norm over matrices~$\norm{\cdot}$:
\[
\lim\nolimits_{t \to \infty} \norm{W_{L : 1} ( t )} = \infty
\text{\,,}
\]
where $W_{L : 1} ( t )$ is the product matrix of the deep factorization (Equation~\eqref{eq:prod_mat}) at time~$t$ of optimization.
On the other hand:
\[
\lim\nolimits_{t \to \infty} \erank ( W_{L : 1} ( t ) ) = \inf\nolimits_{W' \in \S} \erank (W')
\quad , \quad
\lim\nolimits_{t \to \infty} D ( W_{L : 1} ( t ) , \M_{\irank ( \S )} ) = 0
\text{\,,}
\]
where $\erank ( \cdot )$ stands for effective rank (Definition~\ref{def:erank}), and $D ( \cdot \hspace{0.5mm} , \M_{\irank ( \S )} )$ represents distance from the infimal rank (Definition~\ref{def:irank_dist}) of the solution set~$\S$ (Equation~\eqref{eq:sol_set}).
\end{corollary}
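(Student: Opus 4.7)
The plan is to derive the corollary as a direct passage-to-the-limit in the three finite-time inequalities furnished by Theorem~\ref{thm:norms_up_finite}, using the hypothesis that $\ell(t) \to 0$. No new dynamical analysis is required; the work has already been done in the theorem.

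First, for any norm or quasi-norm $\norm{\cdot}$ on matrices, equation~\eqref{eq:norms_lb} gives $\norm{W_{L:1}(t)} \geq a_{\norm{\cdot}} / \sqrt{\ell(t)} - b_{\norm{\cdot}}$. The constants $a_{\norm{\cdot}} > 0$ and $b_{\norm{\cdot}}$ depend only on $\norm{\cdot}$ (not on $t$), so as $\ell(t) \to 0^+$, the quantity $1/\sqrt{\ell(t)}$ diverges to $+\infty$, and the right-hand side of the bound likewise diverges. This forces $\lim_{t \to \infty} \norm{W_{L:1}(t)} = \infty$. Analogously, for the distance from infimal rank, equation~\eqref{eq:irank_dist_ub} yields $0 \leq D(W_{L:1}(t), \M_{\irank(\S)}) \leq 3\sqrt{2} \cdot \sqrt{\ell(t)} \to 0$, immediately giving $\lim_{t\to\infty} D(W_{L:1}(t), \M_{\irank(\S)}) = 0$.

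The one point that requires a small additional argument is the effective rank claim, which asserts equality of the limit with $\inf_{W' \in \S} \erank(W')$ rather than just an upper bound. Equation~\eqref{eq:erank_ub} directly produces $\limsup_{t \to \infty} \erank(W_{L:1}(t)) \leq \inf_{W' \in \S} \erank(W')$. For the matching $\liminf$ inequality, I would invoke Proposition~\ref{prop:sol_set_rank}, which identifies $\inf_{W' \in \S} \erank(W') = 1$, together with the elementary fact that for any nonzero matrix the effective rank is at least $1$ (being the exponential of a nonnegative Shannon entropy). Since $\ell(t) \to 0$, for $t$ large enough $W_{L:1}(t) \neq 0$ (the zero matrix incurs loss $1$ on the observations of equation~\eqref{eq:obs}), so $\erank(W_{L:1}(t))$ is well-defined and bounded below by $1 = \inf_{W' \in \S} \erank(W')$, closing the argument.

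I do not anticipate any genuine obstacle: Theorem~\ref{thm:norms_up_finite} has absorbed all of the technical content (controlling the sign of $\det(W_{L:1}(t))$ and lower bounding $\abs{(W_{L:1})_{1,1}(t)}$ via the singular value dynamics). The only mild subtlety in the proof is noting that effective rank is globally bounded below by $1$, so that the $\limsup$ inequality of Theorem~\ref{thm:norms_up_finite} can be upgraded to an honest limit rather than merely a one-sided bound.
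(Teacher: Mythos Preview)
Your proposal is correct and follows the same approach as the paper's own proof, which simply takes the limit $\ell(t)\to 0$ in the three inequalities of Theorem~\ref{thm:norms_up_finite}. If anything, you are slightly more careful than the paper: you explicitly justify the matching lower bound for the effective rank via Proposition~\ref{prop:sol_set_rank} and the trivial inequality $\erank(\cdot)\geq 1$, a point the paper's one-line proof leaves implicit.
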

\begin{proof}
Taking the limit $\ell ( t ) \to 0$ in the bounds given by Theorem~\ref{thm:norms_up_finite} establishes the results.
\end{proof}

Theorem~\ref{thm:norms_up_finite} and Corollary~\ref{cor:norms_up_asymp} imply that in our setting (Subsection~\ref{sec:analysis:setting}), where minimizing norms (or quasi-norms) and minimizing rank contradict each other, the implicit regularization of deep matrix factorization is willing to completely give up on the former in favor of the latter, at least on the condition that the product matrix (Equation~\eqref{eq:prod_mat}) has positive determinant at initialization.
How probable is this condition?
By Proposition~\ref{prop:det_pos} below, it holds with probability~$0.5$ if the product matrix is initialized by any one of a wide array of common distributions, including matrix Gaussian distribution with zero mean and independent entries, and a product of such.
We note that rescaling (multiplying by~$\alpha > 0$) initialization does not change the sign of product matrix's determinant, therefore as postulated by Conjecture~\ref{conj:no_norm}, initialization close to the origin (along with small learning rate\note{
Recall that gradient flow corresponds to gradient descent with infinitesimally small learning rate.
})
can \emph{not} ensure convergence to solution with minimal norm or quasi-norm.
\begin{proposition} \label{prop:det_pos}
If $W \in \R^{d , d}$ is a random matrix whose entries are drawn independently from continuous distributions, each symmetric about the origin, then $\Pr ( \det ( W ) > 0 ) = \Pr ( \det ( W ) < 0 ) = 0.5$.
Furthermore, for $L \in \N$, if $W_1 , W_2 , \ldots , W_L \in \R^{d , d}$ are random matrices drawn independently from continuous distributions, and there exists $l \in \{ 1 , 2 , \ldots , L \}$ with $\Pr ( \det ( W_l ) > 0 ) = 0.5$, then $\Pr ( \det ( W_L W_{L - 1} \cdots W_1 ) > 0 ) = \Pr ( \det ( W_L W_{L - 1} \cdots W_1 ) < 0 ) = 0.5$.
\end{proposition}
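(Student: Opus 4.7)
The proof decomposes into two independent arguments, both of which exploit sign-reversal symmetry.

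For the first claim, I would use a distributional-symmetry argument. Let $\tilde{W}$ be obtained from $W$ by negating its first row. Because the entries of $W$ are independent and each is symmetric about~$0$, the law of~$\tilde{W}$ coincides with the law of~$W$. Since negating a row flips the sign of the determinant, $\det(\tilde{W}) = -\det(W)$, and hence $\Pr(\det(W)>0) = \Pr(\det(\tilde{W})>0) = \Pr(-\det(W)>0) = \Pr(\det(W)<0)$. To upgrade this into $\Pr(\det(W)>0)=0.5$, it remains to verify that $\Pr(\det(W)=0)=0$. I would prove this by induction on~$d$: the base case $d=1$ is the continuity assumption on the single entry; for the inductive step, expanding along the first row gives $\det(W) = c_{1,1}(W)_{1,1} + r$, where both the cofactor~$c_{1,1}$ and the remainder~$r$ depend only on the other entries of $W$. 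Conditioning on those other entries, either $c_{1,1}\neq 0$ (in which case $\det(W)=0$ pins $(W)_{1,1}$ to a single point, which has probability zero under its continuous marginal) or $c_{1,1}=0$ (an event of probability zero by the inductive hypothesis, applied to the $(d-1)\times(d-1)$ minor, whose entries still satisfy the independent-symmetric-continuous hypothesis).

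For the second claim, I would invoke the multiplicativity $\det(W_L W_{L-1}\cdots W_1) = \prod_{k=1}^{L}\det(W_k)$. Since each $W_k$ is drawn from a continuous distribution on $\R^{d,d}$, each $\det(W_k)$ is nonzero almost surely, so the signs $X_k := \sign(\det(W_k)) \in \{-1,+1\}$ are well-defined a.s. Mutual independence of the $W_k$ yields mutual independence of the~$X_k$. By hypothesis there exists~$l$ with $\Pr(X_l=1)=\Pr(X_l=-1)=0.5$. Conditioning on $\{X_k : k \neq l\}$, the quantity $Y := \prod_{k\neq l} X_k \in \{-1,+1\}$ is determined, while $X_l$ remains uniform on $\{-1,+1\}$; therefore $Y \cdot X_l = \sign(\det(W_L \cdots W_1))$ is conditionally uniform on $\{-1,+1\}$, and hence also unconditionally uniform, giving the desired equality.

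The main obstacle I anticipate is the zero-probability step $\Pr(\det(W)=0)=0$ in the first claim: the hypothesis provides only continuous \emph{marginals}, which does not automatically supply a joint density on~$\R^{d^2}$ that would let one simply cite ``the zero set of a nontrivial polynomial has Lebesgue measure zero''. The conditioning-plus-induction sketch above is the natural workaround, but care is required to verify at each inductive step that the $(d-1)\times(d-1)$ minor inherits the independence and symmetric-continuous-marginals assumptions; fortunately this is immediate since removing a row and a column preserves both properties. Beyond this point, everything reduces to the algebra of determinants together with a one-line conditioning argument for independent symmetric Bernoulli signs.
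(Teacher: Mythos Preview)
Your proposal is correct and follows essentially the same approach as the paper: the row-negation symmetry argument for the first claim and the determinant-multiplicativity-plus-independence argument for the second claim are exactly what the paper does. The only difference is that for the step $\Pr(\det(W)=0)=0$ the paper simply cites a reference (Remark~2.5 in~\cite{hackbusch2012tensor}), whereas you supply a self-contained inductive conditioning argument; your version is in fact more careful, since it works even when ``continuous'' is read as merely atomless rather than absolutely continuous.
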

\begin{proof}[Proof sketch (for complete proof see Subappendix~\ref{app:proofs:det_pos})]
Multiplying a row of $W$ by $-1$ keeps its distribution intact while flipping the sign of its determinant.
This implies $\Pr ( \det ( W ) > 0 ) = \Pr ( \det ( W ) < 0 )$.
The first result then follows from the fact that a matrix drawn from a continuous distribution is almost surely non-singular.
The second result is an outcome of the same fact, as well as the multiplicativity of determinant and the law of total probability.
\end{proof}

% CONVERGENCE TO ZERO LOSS
\subsection{Convergence to zero loss} \label{sec:analysis:loss_down}

It is customary in the theory of deep learning (\cf~\cite{gunasekar2017implicit,gunasekar2018implicit,arora2019implicit}) to distinguish between implicit regularization~---~which concerns the type of solutions found in training~---~and the complementary question of whether training loss is globally optimized.
We supplement our implicit regularization analysis (Subsection~\ref{sec:analysis:norms_up}) by addressing this complementary question in two ways:
\emph{(i)}~in Section~\ref{sec:experiments} we empirically demonstrate that on the matrix completion problem we analyze (Subsection~\ref{sec:analysis:setting}), gradient descent over deep matrix factorizations (Section~\ref{sec:model}) indeed drives training loss towards global optimum, \ie~towards zero;
and
\emph{(ii)}~in Proposition~\ref{prop:loss_down} below we theoretically establish convergence to zero loss for the special case of depth~$2$ and scaled identity initialization (treatment of additional depths and initialization schemes is left for future work).
We note that when combined with Corollary~\ref{cor:norms_up_asymp}, Proposition~\ref{prop:loss_down} affirms that in the latter special case, all norms and quasi-norms indeed grow to infinity while rank is essentially minimized.\note{
Notice that under (positively) scaled identity initialization the determinant of the product matrix (Equation~\eqref{eq:prod_mat}) is positive, as required by Corollary~\ref{cor:norms_up_asymp}.
}
\begin{proposition} \label{prop:loss_down}
Consider the setting of Theorem~\ref{thm:norms_up_finite} in the special case of depth $L = 2$ and initial product matrix (Equation~\eqref{eq:prod_mat}) $W_{L : 1} ( 0 ) = \alpha \cdot I$, where $I$ stands for the identity matrix and $\alpha \in ( 0 , 1 ]$.
Under these conditions $\lim_{t \to \infty} \ell ( t ) = 0$, \ie~the training loss is globally optimized.
\end{proposition}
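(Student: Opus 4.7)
The plan is to reduce the dynamics to a three-variable ODE by exploiting the extra symmetry provided by scaled identity initialization, then analyze that ODE directly. Since the observations~\eqref{eq:obs} are transposition-symmetric, $\nabla\ell(W)$ is symmetric whenever $W$ is; moreover, one can check that balancedness~\eqref{eq:balance} with $W_{2:1}(0)=\alpha I$ admits a factorization satisfying $W_1(0)=W_2(0)^\top$, and since Lemma~\ref{lem:prod_mat_dyn} makes the product-matrix dynamics depend only on $W_{2:1}$, we may assume this symmetric factorization WLOG. The symmetry $W_1=W_2^\top$ is preserved by gradient flow (thanks to the symmetry of $\nabla \ell$ at symmetric $W$), so $W_{2:1}(t)$ stays symmetric. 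Combined with $\det W_{2:1}(t)>0$ (from the proof of Theorem~\ref{thm:norms_up_finite}), $W_{2:1}(t)$ remains symmetric positive definite, and the depth-$2$ product-matrix equation of Lemma~\ref{lem:prod_mat_dyn} simplifies to $\dot W_{2:1} = -\nabla\ell(W_{2:1})\,W_{2:1} - W_{2:1}\,\nabla\ell(W_{2:1})$.

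Writing $x(t):=(W_{2:1}(t))_{1,1}$, $y(t):=(W_{2:1}(t))_{1,2}=(W_{2:1}(t))_{2,1}$, $z(t):=(W_{2:1}(t))_{2,2}$ (with $x(0)=z(0)=\alpha$ and $y(0)=0$), this yields
\[
\dot x = 2y(1-y), \qquad \dot y = (x+z)(1-y) - yz, \qquad \dot z = 2y(1-y) - 2z^2.
\]
Invariant-region arguments at $y=0$ (where $\dot y = x+z > 0$) and $y=1$ (where $\dot y = -z < 0$) pin $y(t)\in[0,1)$ for $t>0$, so $\dot x\geq 0$ and $x(t)$ is monotone non-decreasing with limit $x_\infty\in[\alpha,\infty]$. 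I would rule out $x_\infty<\infty$ as follows: then $\dot x - \dot z = 2z^2$ gives $\int_0^\infty z^2\,dt<\infty$, so Barbalat's lemma applied to $z^2$ (uniformly continuous since $z$ and $\dot z$ are bounded) forces $z\to 0$; positivity of $\det W_{2:1}=xz-y^2$ with $x$ bounded and $z\to 0$ then forces $y\to 0$, but then $\dot y \to x_\infty>0$ would push $y$ out of $[0,1)$, contradicting the invariance. Hence $x(t)\to\infty$.

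The main obstacle is deducing $\ell\to 0$ from $x\to\infty$, and the key idea is to linearize the $y$-equation in $\xi:=1-y$, giving $\dot\xi + (x+2z)\xi = z$ with $\xi(0)=1$. Multiplying by the integrating factor $\mu(t):=\exp\bigl(\int_0^t (x(s)+2z(s))\,ds\bigr)$ and integrating yields $\mu(t)\xi(t) = 1 + \int_0^t \mu(s)z(s)\,ds$. Since $x\geq\alpha>0$, we have $\mu\to\infty$, and L'Hopital's rule applied to $\mu\xi/\mu$ gives $\lim_{t\to\infty}\xi(t) = \lim_{t\to\infty} z/(x+2z) = 0$, using boundedness of $z$ (from $z^2/2\leq\ell\leq\ell(0)$) and $x\to\infty$. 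Hence $y\to 1$. Since $\ell=(y-1)^2+z^2/2$ is monotone non-increasing with limit $\ell_\infty$ and $(y-1)^2\to 0$, we get $z^2/2\to\ell_\infty$; if $\ell_\infty>0$ then $z$ would eventually be bounded below by a positive constant, so $\det W_{2:1}=xz-y^2\to\infty$ (because $x\to\infty$), contradicting the monotone bound $\det W_{2:1}(t)\leq\det W_{2:1}(0)=\alpha^2$ (which follows from $\frac{d}{dt}\det W_{2:1}=-2z\,\det W_{2:1}$). Therefore $\ell_\infty=0$.
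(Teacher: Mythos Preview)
Your proof is correct and takes a genuinely different route from the paper's.

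Both arguments begin the same way: reduce to a symmetric positive-definite product matrix (the paper via a standalone lemma employing symmetric-factorization machinery and analyticity; you via the shorter observation that $W_1=W_2^\top$ is preserved by gradient flow and that $\det>0$ together with continuity of eigenvalues pins positive definiteness), then write down the same three-variable ODE and establish the invariant region $y\in[0,1)$, $x,z>0$. From there the paths diverge. The paper controls the loss directly by a Lyapunov-type estimate, showing $\tfrac{d}{dt}\ell(W_{2:1}) \leq -\tfrac{\alpha}{2}(1-y)^4$ via a spectral bound on $\mathrm{Tr}(\nabla\ell\cdot W_{2:1}\cdot\nabla\ell)$, integrating to obtain $\int_0^\infty (1-y)^4\,dt<\infty$, invoking a Barbalat-type lemma to deduce $y\to 1$, and then extracting $z\to 0$ from the $\dot z$-equation by an ad hoc $\epsilon$-argument. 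You instead first establish $x\to\infty$ by contradiction (Barbalat on $z^2$ combined with the determinant constraint $y^2<xz$), solve the linear ODE for $\xi=1-y$ explicitly by integrating factor and read off $\xi\to 0$ via the $*/\infty$ form of L'H\^opital, and finally force $z\to 0$ through the clean identity $\tfrac{d}{dt}\det W_{2:1}=-2z\det W_{2:1}$, which caps $\det\leq\alpha^2$. Your route trades the paper's spectral Lyapunov bound for more standard ODE tools; the determinant-monotonicity observation is particularly neat, and you obtain $x\to\infty$~---~essentially the paper's headline norm-growth phenomenon~---~as a free byproduct rather than as a separate theorem.
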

\begin{proof}[Proof sketch (for complete proof see Subappendix~\ref{app:proofs:loss_down})]
We first establish that the product matrix is positive definite for all $t$.
This simplifies a dynamical characterization from~\cite{arora2018optimization} (restated as Lemma~\ref{lem:prod_mat_dyn} in Subappendix~\ref{app:lemmas}), yielding lucid differential equations governing the entries of the product matrix.
Careful analysis of these equations then completes the proof.
\end{proof}

  % ROBUSTNESS TO PERTURBATIONS
\subsection{Robustness to perturbations} \label{sec:analysis:robust}

Our analysis (Subsection~\ref{sec:analysis:norms_up}) has shown that when applying a deep matrix factorization (Section~\ref{sec:model}) to the matrix completion problem defined in Subsection~\ref{sec:analysis:setting}, if the product matrix (Equation~\eqref{eq:prod_mat}) has positive determinant at initialization~---~a condition that holds with probability~$0.5$ under the wide variety of random distributions specified by Proposition~\ref{prop:det_pos}~---~then the implicit regularization drives \emph{all} norms and quasi-norms \emph{towards infinity}, while rank is essentially driven towards its minimum.
A natural question is how common this phenomenon is, and in particular, to what extent does it persist if the observed entries we defined (Equation~\eqref{eq:obs}) are perturbed.
Theorem~\ref{thm:norms_up_finite_perturb} below generalizes Theorem~\ref{thm:norms_up_finite} (from Subsection~\ref{sec:analysis:norms_up}) to the case of arbitrary non-zero values for the off-diagonal observations~$b_{1 , 2} , b_{2 , 1}$, and an arbitrary value for the diagonal observation~$b_{2 , 2}$.
In this generalization, the assumption (from Theorem~\ref{thm:norms_up_finite}) of the product matrix's determinant at initialization being positive is modified to an assumption of it having the same sign as~$b_{1 , 2} \cdot b_{2 , 1}$ (the probability of which is also~$0.5$ under the random distributions covered by Proposition~\ref{prop:det_pos}).
Conditioned on the modified assumption, the smaller $\abs{b_{2 , 2}}$ is compared to~$\abs{b_{1 , 2} \cdot b_{2 , 1}}$, the higher the implicit regularization is guaranteed to drive norms and quasi-norms, and the lower it is guaranteed to essentially drive the rank.
Two immediate implications of Theorem~\ref{thm:norms_up_finite_perturb} are:
\emph{(i)}~if the diagonal observation is unperturbed ($b_{2 , 2} = 0$), the off-diagonal ones ($b_{1 , 2} , b_{2 , 1}$) can take on \emph{any} non-zero values, and the phenomenon of implicit regularization driving norms and quasi-norms towards infinity (while essentially driving rank towards its minimum) will persist;
and
\emph{(ii)}~this phenomenon gracefully recedes as the diagonal observation is perturbed away from zero.
We note that Theorem~\ref{thm:norms_up_finite_perturb} applies even if the unobserved entry is repositioned, thus our construction is robust not only to perturbations in observed values, but also to an arbitrary change in the observed locations.
See Subappendix~\ref{app:experiments:further} for empirical demonstrations.
\begin{theorem} \label{thm:norms_up_finite_perturb}
Consider the setting of Theorem~\ref{thm:norms_up_finite} subject to the following changes:
\emph{(i)}~the observations from Equation~\eqref{eq:obs} are generalized to:
\be
\Omega = \{ ( 1 , 2 ) , ( 2 , 1 ) , ( 2 , 2 ) \}
\quad , \quad
b_{1 , 2} = z \in \R {\setminus} \{ 0 \}
~ , ~
b_{2 , 1} = z' \in \R {\setminus} \{ 0 \}
~ , ~
b_{2 , 2} = \epsilon \in \R
\label{eq:obs_perturb}
\text{\,,}
\ee
leading to the following solution set in place of that from Equation~\eqref{eq:sol_set}:
\be
\widetilde{\S} = \left\{ W \in \R^{2 , 2} : ( W )_{1 , 2} = z ,  ( W )_{2 , 1} = z' , ( W )_{2 , 2} = \epsilon \right\}
\label{eq:sol_set_perturb}
\text{\,;}
\ee
and
\emph{(ii)}~the assumption $\det ( W_{L : 1} ( 0 ) ) > 0$ is generalized to $\sign ( \det ( W_{L : 1} ( 0 ) ) ) = \sign ( z \cdot z' )$, where $W_{L : 1} ( t )$ denotes the product matrix (Equation~\eqref{eq:prod_mat}) at time~$t \geq 0$ of optimization.
Under these conditions, for any norm or quasi-norm over matrices~$\norm{\cdot}$:
\be
\norm{W_{L : 1} ( t )} \geq a_{\norm{\cdot}} \cdot \frac{\abs{z} \cdot |z'|}{\abs{ \epsilon } + \sqrt{2 \ell (t) } } - b_{\norm{\cdot}}
\quad , ~ t \geq 0
\label{eq:norms_lb_perturb}
\text{\,,}
\ee
where $b_{\norm{\cdot}} := \max \big \{ \nicefrac{ a_{\norm{\cdot}} \cdot \abs{z} \cdot |z'|}{\left ( \abs{ \epsilon } + \min \{ \abs{z}, |z'| \} \right )} \, , \, 8 c_{\norm{\cdot}}^2 \max \{ \abs{z}, |z'|, \abs{\epsilon} \} \max_{ i,j \in \{1,2\}  } \norm{ \e_i \e_j^\top } \big \}$, $a_{\norm{\cdot}} := \norm{\e_1 \e_1^\top} / c_{\norm{\cdot}}$,
the vectors $\e_1, \e_2 \in \R^2$ form the standard basis, and $c_{\norm{\cdot}} \geq 1$ is a constant with which $\norm{\cdot}$ satisfies the weakened triangle inequality (see Footnote~\ref{note:qnorm}).
On the other hand:
\bea
& \erank ( W_{L : 1} ( t ) ) \leq \inf_{W' \in \widetilde{S}} \erank(W') + \frac{16}{\min \{ \abs{z} , |z'| \} } \Big ( \abs{\epsilon} + \sqrt{2\ell (t)} \Big )
& \quad , ~ t \geq 0
\label{eq:erank_ub_perturb}
\text{\,,}
\\[1mm]
& D ( W_{L : 1} ( t ) , \M_{\irank ( \widetilde{\S} )} ) \leq 4 \abs{\epsilon} + \Big (4 + \frac{ \sqrt{ \abs{z} \cdot |z'| } }{\min \{\abs{z} , |z'| \}} \Big ) \sqrt{2 \ell (t)}
& \quad , ~ t \geq 0
\label{eq:irank_dist_ub_perturb}
\text{\,,}
\eea
where $\erank ( \cdot )$ stands for effective rank (Definition~\ref{def:erank}), and $D ( \cdot \hspace{0.5mm} , \M_{\irank ( \widetilde{\S} )} )$ represents distance from the infimal rank (Definition~\ref{def:irank_dist}) of the solution set~\smash{$\widetilde{\S}$}.
Moreover, Equations~\eqref{eq:norms_lb_perturb}, \eqref{eq:erank_ub_perturb} and~\eqref{eq:irank_dist_ub_perturb} hold even if the above setting is further generalized as follows:
\emph{(i)}~the unobserved entry resides in location $( i , j ) \in \{ 1, 2 \} \times \{ 1, 2 \}$, with $z , z' \in \R {\setminus} \{ 0 \}$ observed in the adjacent locations and $\epsilon \in \R$ in the diagonally-opposite one;
and
\emph{(ii)}~the sign of $\det ( W_{L : 1} ( 0 ) )$ is equal to that of $z \cdot z'$ if $i = j$, and opposite to it otherwise.
\end{theorem}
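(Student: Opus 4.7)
The plan is to mirror the proof of Theorem~\ref{thm:norms_up_finite} step by step, generalizing each quantity to account for arbitrary off-diagonal values $z, z'$, diagonal perturbation $\epsilon$, and unobserved position $( i , j )$. First I would invoke Lemma~\ref{lem:prod_mat_sing_dyn} to argue that the singular values of $W_{L : 1} ( t )$ evolve smoothly under gradient flow and cannot cross zero; consequently $\det ( W_{L : 1} ( t ) )$ never vanishes, and its sign is preserved for all $t \geq 0$. The assumption in item~\emph{(ii)} of the theorem then guarantees $\sign ( \det ( W_{L : 1} ( t ) ) ) = \sign ( z \cdot z' )$ when $i = j$ and the opposite sign otherwise, which is precisely the sign of $z \cdot z'$ adjusted by the parity of the transposition mapping the unobserved cell to location $( 1 , 1 )$.

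Next, from the loss definition we immediately get the pointwise bound $\abs{( W_{L : 1} )_{i' , j'} ( t ) - b_{i' , j'}} \leq \sqrt{2 \ell ( t )}$ for every observed $( i' , j' )$. I would then expand the $2 \times 2$ determinant $\det ( W_{L : 1} ( t ) ) = ( W_{L : 1} )_{1 , 1} ( t ) ( W_{L : 1} )_{2 , 2} ( t ) - ( W_{L : 1} )_{1 , 2} ( t ) ( W_{L : 1} )_{2 , 1} ( t )$ (or, for $i \ne j$, the analogous expansion whose sign is reversed). Combining the determinant sign with the loss bounds on the three observed entries yields a lower bound of the form $\abs{( W_{L : 1} )_{i , j} ( t )} \geq \frac{\abs{z} \cdot |z'| - O ( \sqrt{\ell ( t )} )}{\abs{\epsilon} + \sqrt{2 \ell ( t )}}$. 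By symmetry under row/column permutations (which flip determinant signs in a way that exactly cancels the $i \ne j$ sign assumption), the same estimate holds regardless of which cell is unobserved, so it suffices to treat $( i , j ) = ( 1 , 1 )$ and transfer.

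With this key estimate in hand, the norm bound~\eqref{eq:norms_lb_perturb} follows from the weakened triangle inequality applied to the decomposition $W_{L : 1} ( t ) = \sum_{i' , j'} ( W_{L : 1} )_{i' , j'} ( t ) \, \e_{i'} \e_{j'}^\top$, which gives $\abs{( W_{L : 1} )_{i , j} ( t )} \cdot \norm{\e_i \e_j^\top} \leq c_{\norm{\cdot}}^2 \big( \norm{W_{L : 1} ( t )} + \sum_{( i' , j' ) \ne ( i , j )} \abs{( W_{L : 1} )_{i' , j'} ( t )} \cdot \norm{\e_{i'} \e_{j'}^\top} \big)$; each observed entry is bounded in magnitude by $\max \{ \abs{z} , |z'| , \abs{\epsilon} \} + \sqrt{2 \ell ( t )}$, and these terms are precisely what is absorbed into $b_{\norm{\cdot}}$. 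For Equations~\eqref{eq:erank_ub_perturb} and~\eqref{eq:irank_dist_ub_perturb}, I would analyze the singular values $\sigma_1 \geq \sigma_2 \geq 0$ of $W_{L : 1} ( t )$: since three of its entries lie within $\sqrt{2 \ell ( t )}$ of $z , z' , \epsilon$ and the fourth satisfies the large lower bound just derived, a direct computation (generalizing the argument used in Proposition~\ref{prop:sol_set_rank} and Theorem~\ref{thm:norms_up_finite}) shows that $\sigma_1$ is of order $\abs{( W_{L : 1} )_{i , j} ( t )}$ while $\sigma_2$ is of order $( \abs{\epsilon} + \sqrt{\ell ( t )} ) / \min \{ \abs{z} , |z'| \}$; plugging these into the definitions of effective rank and Frobenius distance to the rank-$1$ variety yields the stated bounds.

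The main obstacle I expect is not conceptual but rather the careful bookkeeping of constants so that the displayed inequalities remain valid in \emph{all} loss regimes, including when $\ell ( t )$ is large (where the estimates must reduce to trivial bounds absorbed into $b_{\norm{\cdot}}$) and when $\abs{\epsilon}$ is comparable to $\sqrt{\ell ( t )}$ (where the sign-preservation argument needs to still deliver a usable lower bound on $\abs{( W_{L : 1} )_{i , j} ( t )}$ despite potential near-cancellation in the determinant). The case analysis over the four unobserved positions is mechanical once one observes that each is related to the $( 1 , 1 )$ case by a permutation of rows or columns, which simultaneously flips the sign of the determinant and swaps the role of $z$ and $z'$; this is exactly the relationship encoded in the modified assumption on $\sign ( \det ( W_{L : 1} ( 0 ) ) )$, so a single reduction handles all four positions uniformly.
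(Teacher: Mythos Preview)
Your proposal is correct and follows essentially the same approach as the paper: preservation of the determinant sign (via the singular-value dynamics), loss-based bounds on the observed entries, a determinant expansion to lower-bound the unobserved entry by $\frac{\abs{z}\cdot|z'|}{\abs{\epsilon}+\sqrt{2\ell(t)}}$ up to constants, then the weakened triangle inequality for the norm bound and a direct singular-value analysis for the effective-rank and infimal-rank bounds, with the change-of-location handled by the permutation symmetry you describe. The paper's proof differs only in presentation---it packages the key estimates as two lemmas (one for $\abs{w_{1,1}}$ and one for the singular values) and handles the ``large loss'' regime by explicitly checking that the stated bounds become vacuous there, which is exactly the bookkeeping obstacle you anticipated.
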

\begin{proof}[Proof sketch (for complete proof see Subappendix~\ref{app:proofs:norms_up_finite_perturb})]
The proof follows a line similar to that of Theorem~\ref{thm:norms_up_finite}, with slightly more involved derivations.
\end{proof}

\paragraph*{Disqualifying implicit minimization of norms in finite settings}
Theorem~\ref{thm:norms_up_finite} (in Subsection~\ref{sec:analysis:norms_up}), which applies to our original (unperturbed) matrix completion problem (Subsection~\ref{sec:analysis:setting}), has shown that the implicit regularization of deep matrix factorization (Section~\ref{sec:model}) does not minimize norms or quasi-norms, by establishing lower bounds (Equation~\eqref{eq:norms_lb} with~$\norm{\cdot}$ ranging over all possible norms and quasi-norms) that tend to infinity as the training loss~$\ell ( t )$ converges to zero.
The more general Theorem~\ref{thm:norms_up_finite_perturb} allows disqualifying implicit minimization of norms or quasi-norms without requiring divergence to infinity.
To see this, consider the lower bounds it establishes (Equation~\eqref{eq:norms_lb_perturb} with~$\norm{\cdot}$ ranging over all norms and quasi-norms), and in particular their limits as $\ell ( t ) \to 0$. 
If the observed value~$\epsilon$ is different from zero, these limits are all finite.
Moreover, given a particular norm or quasi-norm~$\norm{\cdot}$, we may choose~$\epsilon$ different from zero, yet small enough such that the lower bound for~$\norm{\cdot}$ has limit arbitrarily larger than the infimum of~$\norm{\cdot}$ over the solution set.

% EXPERIMENTS
\section{Experiments} \label{sec:experiments}

This section presents our empirical evaluations.
We begin in Subsection~\ref{sec:experiments:analyzed} with deep matrix factorization (Section~\ref{sec:model}) applied to the settings we analyzed (Section~\ref{sec:analysis}).
Then, we turn to Subsection~\ref{sec:experiments:tensor} and experiment with an extension to tensor (multi-dimensional array) factorization.
For brevity, many details behind our implementation, as well as some experiments, are deferred to Appendix~\ref{app:experiments}.

  % ANALYZED SETTINGS
\subsection{Analyzed settings} \label{sec:experiments:analyzed}

\begin{figure}
\vspace{-8mm}
\begin{center}
\hspace*{-2mm}
\subfloat{
\includegraphics[width=0.31\textwidth]{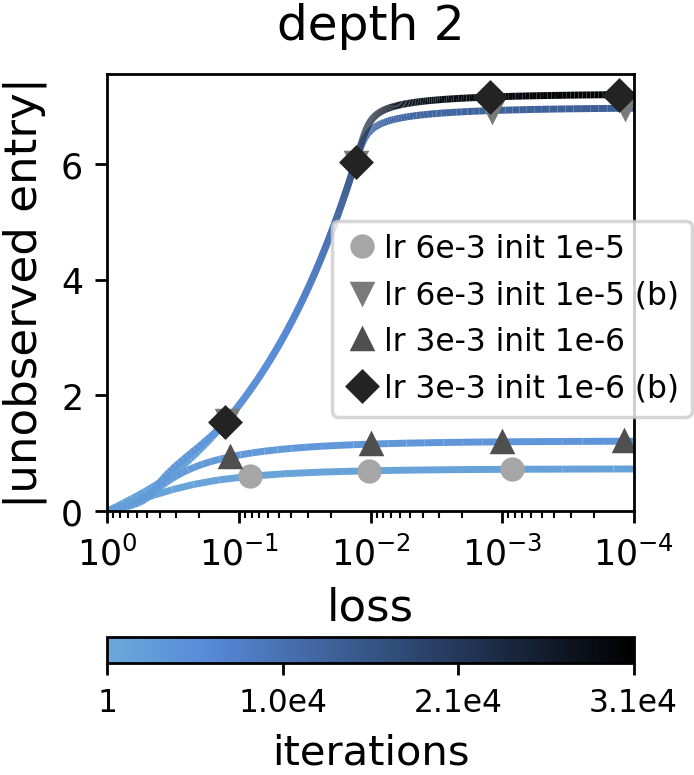}
}
~
\subfloat{
\includegraphics[width=0.31\textwidth]{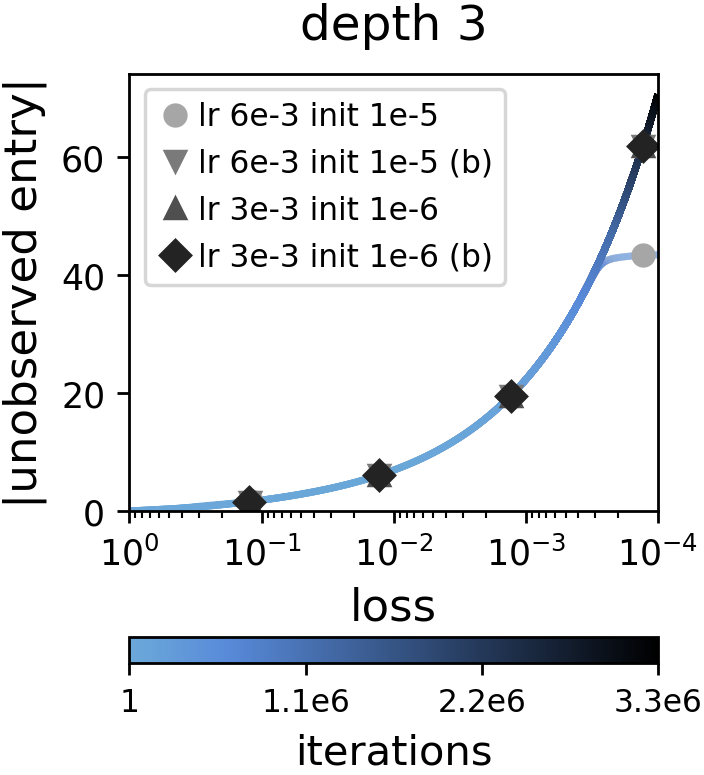}
}
~
\subfloat{
\includegraphics[width=0.31\textwidth]{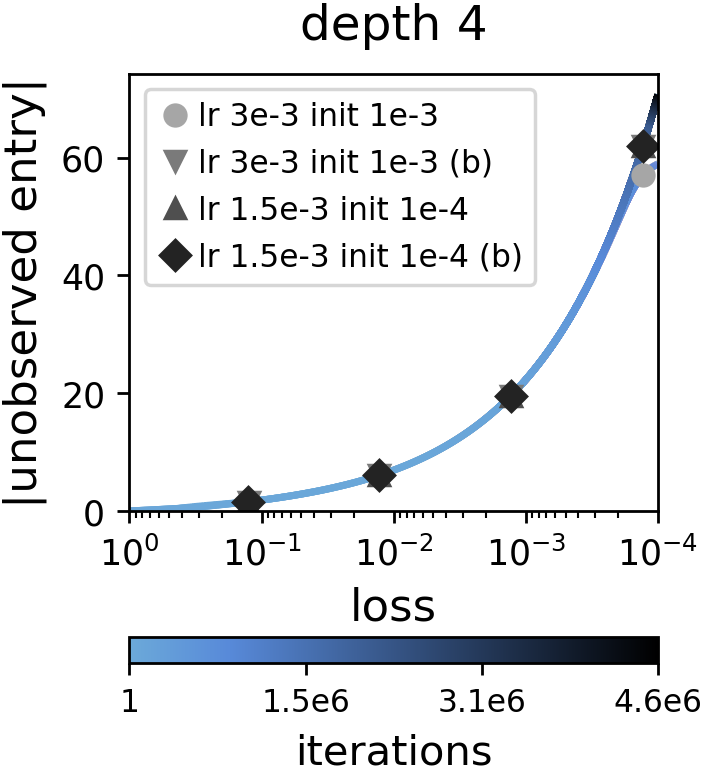}
}
\end{center}
\vspace{-1mm}
\caption{
Implicit regularization in matrix factorization can drive \emph{all} norms (and quasi-norms) \emph{towards infinity}.
For the matrix completion problem defined in Subsection~\ref{sec:analysis:setting}, our analysis (Subsection~\ref{sec:analysis:norms_up}) implies that with small learning rate and initialization close to the origin, when the product matrix (Equation~\eqref{eq:prod_mat}) is initialized to have positive determinant, gradient descent on a matrix factorization leads absolute value of unobserved entry to increase (which in turn means norms and quasi-norms increase) as loss decreases, \ie~as observations are fit.
This is demonstrated in the plots above, which for representative runs, show absolute value of unobserved entry as a function of the loss (Equation~\ref{eq:loss}), with iteration number encoded by color.
Each plot corresponds to a different depth for the matrix factorization, and presents runs with varying configurations of learning rate and initialization (abbreviated as ``lr'' and ``init'', respectively).
Both balanced (Equation~\ref{eq:balance}) and unbalanced (layer-wise independent) random initializations were evaluated (former is marked by~``(b)'').
Independently for each depth, runs were iteratively carried out, with both learning rate and standard deviation for initialization decreased after each run, until the point where further reduction did not yield a noticeable change (presented runs are those from the last iterations of this process).
Notice that depth, balancedness, and small learning rate and initialization, all contribute to the examined effect (absolute value of unobserved entry increasing as loss decreases), with the transition from depth~$2$ to $3$~or~more being most significant.
Notice also that all runs initially follow the same curve, differing from one another in the point at which they divert (enter a phase where examined effect is lesser).
While a complete investigation of these phenomena is left for future work, we provide a partial theoretical explanation in Appendix~\ref{app:unbalanced}.
For further implementation details, and similar experiments with different matrix dimensions, as well as perturbed and repositioned observations, see Appendix~\ref{app:experiments}.
}
\label{fig:experiment_dmf}
\end{figure}

In~\cite{gunasekar2017implicit}, Gunasekar~\etal~experimented with matrix factorization, arriving at Conjecture~\ref{conj:nuclear_norm}.
In the following work~\cite{arora2019implicit}, Arora~\etal~empirically evaluated additional settings, ultimately arguing against Conjecture~\ref{conj:nuclear_norm}, and raising Conjecture~\ref{conj:no_norm}.
Our analysis (Section~\ref{sec:analysis}) affirmed Conjecture~\ref{conj:no_norm}, by providing a setting in which gradient descent (with infinitesimally small learning rate and initialization arbitrarily close to the origin) over (shallow or deep) matrix factorization provably drives \emph{all} norms (and quasi-norms) \emph{towards infinity}.
Specifically, we established that running gradient descent on the overparameterized matrix completion objective in Equation~\eqref{eq:oprm_obj}, where the observed entries are those defined in Equation~\eqref{eq:obs}, leads the unobserved entry to diverge to infinity as loss converges to zero.
Figure~\ref{fig:experiment_dmf} demonstrates this phenomenon empirically.
Figures~\ref{fig:experiment_dmf_diff_dimensions} and~\ref{fig:experiment_dmf_perturb} in Subappendix~\ref{app:experiments:further} extend the experiment by considering, respectively:
different matrix dimensions (see Appendix~\ref{app:dimensions});
and
perturbations and repositionings applied to observations (\cf~Subsection~\ref{sec:analysis:robust}).
The figures confirm that the inability of norms (and quasi-norms) to explain implicit regularization in matrix factorization translates from theory to practice.

% FROM MATRIX TO TENSOR FACTORIZATION
\subsection{From matrix to tensor factorization} \label{sec:experiments:tensor}

At the heart of our analysis (Section~\ref{sec:analysis}) lies a matrix completion problem whose solution set (Equation~\eqref{eq:sol_set}) entails a direct contradiction between minimizing norms (or quasi-norms) and minimizing rank.
We have shown that on this problem, gradient descent over (shallow or deep) matrix factorization is willing to completely give up on the former in favor of the latter.
This suggests that, rather than viewing implicit regularization in matrix factorization through the lens of norms (or quasi-norms), a potentially more useful interpretation is \emph{minimization of rank}.
Indeed, while global minimization of rank is in the worst case computationally hard (\cf~\cite{recht2011null}), it has been shown in~\cite{arora2019implicit} (theoretically as well as empirically) that the dynamics of gradient descent over matrix factorization promote sparsity of singular values, and thus they may be interpreted as searching for low rank locally.
As a step towards assessing the generality of this interpretation, we empirically explore an extension of matrix factorization to \emph{tensor factorization}.\note{
The reader is referred to~\cite{kolda2009tensor} and~\cite{hackbusch2012tensor} for an introduction to tensor factorizations.
}

In the context of matrix completion, (depth~$2$) matrix factorization amounts to optimizing the loss in Equation~\eqref{eq:loss} by applying gradient descent to the parameterization $W = \sum_{r = 1}^R \w_r \otimes \w'_r$, where $R \in \N$ is a predetermined constant, $\otimes$ stands for outer product,\note{
Given $\{ \vv^{( n )} \in \R^{d_n} \}_{n = 1}^N$, the outer product $\vv^{( 1 )} \otimes \vv^{( 2 )} \otimes \cdots \otimes \vv^{( N )} \in \R^{d_1 , d_2 , \ldots , d_N}$~---~an order~$N$ tensor~---~is defined by $( \vv^{( 1 )} \otimes \vv^{( 2 )} \otimes \cdots \otimes \vv^{( N )} )_{i_1 , i_2 , \ldots , i_N} = ( \vv^{( 1 )} )_{i_1} \cdot ( \vv^{( 2 )} )_{i_2} \cdots ( \vv^{( N )} )_{i_N}$.
}
and $\{ \w_r \in \R^d \}_{r = 1}^R , \{ \w'_r \in \R^{d'} \}_{r = 1}^R$ are the optimized parameters.\note{
To see that this parameterization is equivalent to the usual form $W = W_2 W_1$, simply view $R$ as the dimension shared between $W_1$ and~$W_2$, $\{ \w_r \}_{r = 1}^R$ as the columns of~$W_2$, and $\{ \w'_r \}_{r = 1}^R$ as the rows of~$W_1$.
}
The minimal~$R$ required for this parameterization to be able to express a given~$\bar{W} \in \R^{d , d'}$ is precisely the latter's rank.
Implicit regularization towards low rank means that even when $R$ is large enough for expressing any matrix (\ie~$R \geq \min \{ d , d' \}$), solutions expressible (or approximable) with small~$R$ tend to be learned.

\begin{figure}
\vspace{-8mm}
\begin{center}
\hspace*{-4.5mm}
\subfloat{
\includegraphics[width=0.5\textwidth]{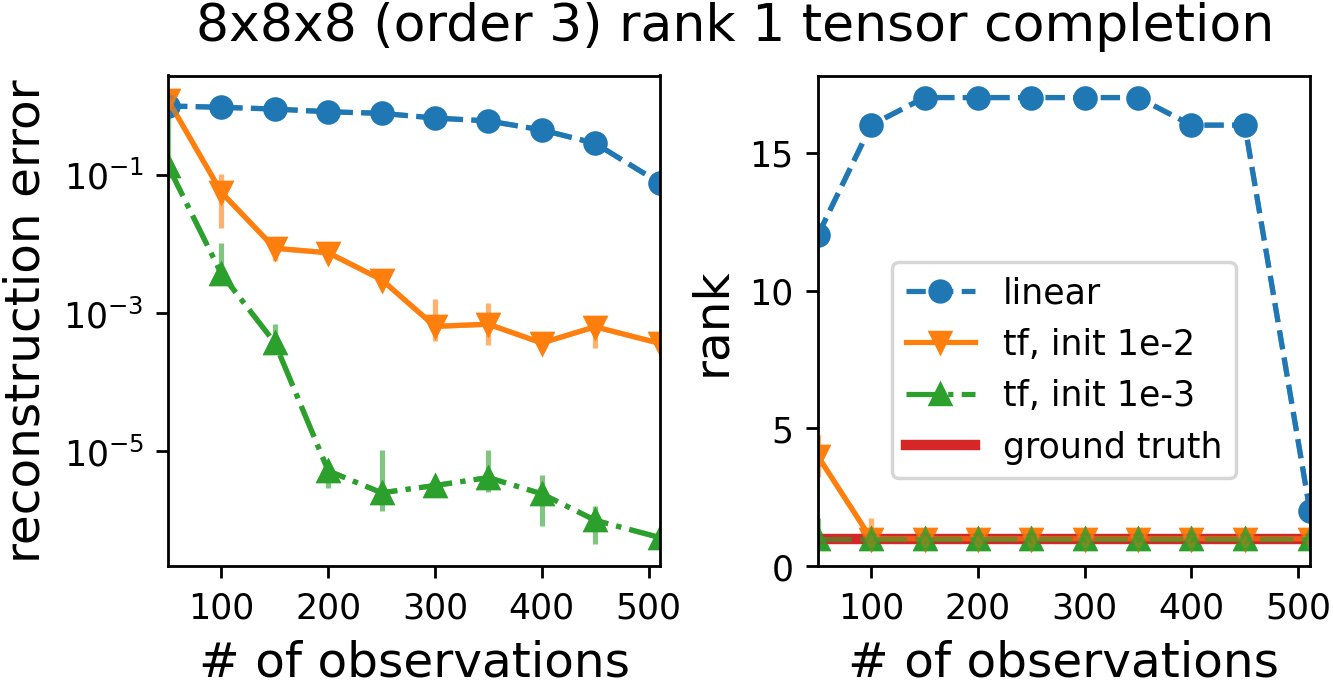}
} ~~
\subfloat{
\includegraphics[width=0.5\textwidth]{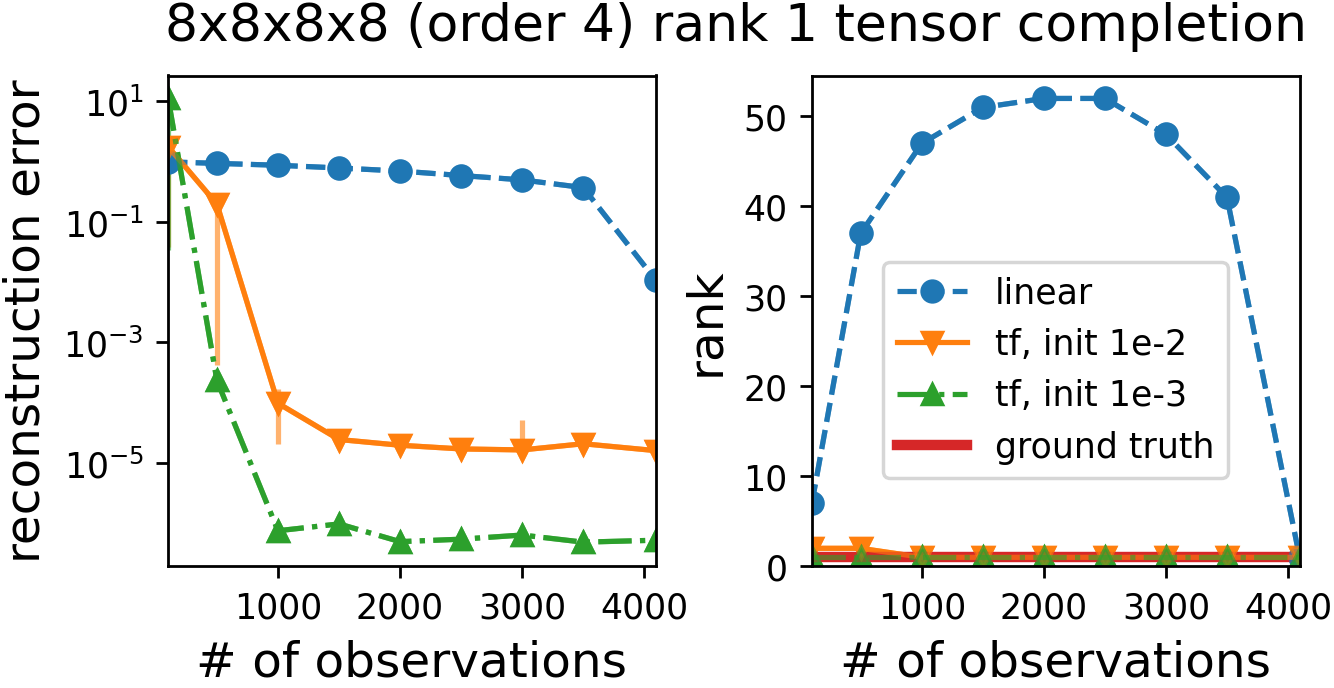}
}
\end{center}
\caption{
Gradient descent over tensor factorization exhibits an implicit regularization towards low (tensor) rank.
Plots above report results of tensor completion experiments, comparing:
\emph{(i)}~minimization of loss (Equation~\eqref{eq:loss_tensor}) via gradient descent over tensor factorization (Equation~\eqref{eq:tf} with~$R$ large enough for expressing any tensor) starting from (small) random initialization (method is abbreviated as~``tf'');
against
\emph{(ii)}~trivial baseline that matches observations while holding zeros in unobserved locations~---~equivalent to minimizing loss via gradient descent over linear parameterization (\ie~directly over~$\W$) starting from zero initialization (hence this method is referred to as ``linear'').
Each pair of plots corresponds to a randomly drawn low-rank ground truth tensor, from which multiple sets of observations varying in size were randomly chosen.
The ground truth tensors corresponding to left and right pairs both have rank~$1$ (for results obtained with additional ground truth ranks see Figure~\ref{fig:experiment_tf_r3} in Subappendix~\ref{app:experiments:further}), with sizes $8$-by-$8$-by-$8$ (order~$3$) and $8$-by-$8$-by-$8$-by-$8$ (order~$4$) respectively.
The plots in each pair show reconstruction errors (Frobenius distance from ground truth) and ranks (numerically estimated) of final solutions as a function of the number of observations in the task, with error bars spanning interquartile range ($25$'th to $75$'th percentiles) over multiple trials (differing in random seed for initialization), and markers showing median.
For gradient descent over tensor factorization, we employed an adaptive learning rate scheme to reduce run times (see Subappendix~\ref{app:experiments:details} for details), and iteratively ran with decreasing standard deviation for initialization, until the point at which further reduction did not yield a noticeable change (presented results are those from the last iterations of this process, with the corresponding standard deviations annotated by ``init'').
Notice that gradient descent over tensor factorization indeed exhibits an implicit tendency towards low rank (leading to accurate reconstruction of low-rank ground truth tensors), and that this tendency is stronger with smaller initialization.
For further details and experiments see Appendix~\ref{app:experiments}.
}
\label{fig:experiment_tf}
\end{figure}

A generalization of the above is obtained by switching from matrices (tensors of \emph{order}~$2$) to tensors of arbitrary order~$N \in \N$.
This gives rise to a \emph{tensor completion} problem, with corresponding loss:
\be
\ell : \R^{d_1 , d_2 , \ldots , d_N} \to \R_{\geq 0}
\quad , \quad
\ell ( \W ) = \frac{1}{2} \sum\nolimits_{( i_1 , i_2 , \ldots , i_N ) \in \Omega} \big( ( \W )_{i_1 , i_2 , \ldots , i_N} - b_{i_1 , i_2 , \ldots , i_N} \big)^2
\label{eq:loss_tensor}
\text{\,,}
\ee
where $\{ b_{i_1 , i_2 , \ldots , i_N} \in \R \}_{( i_1 , i_2 , \ldots , i_N ) \in \Omega}$, $\Omega \subset \{ 1 , 2 , \ldots , d_1 \} \times \{ 1 , 2 , \ldots , d_2 \} \times \cdots \times \{ 1 , 2 , \ldots , d_N \}$, stands for the set of observed entries.
One may employ a tensor factorization by minimizing the loss in Equation~\eqref{eq:loss_tensor} via gradient descent over the parameterization:
\be
\W = \sum\nolimits_{r = 1}^R \w^{( 1 )}_r \otimes \w^{( 2 )}_r \otimes \cdots \otimes \w^{( N )}_r
\quad , ~
\w^{( n )}_r \in \R^{d_n}
~ , ~
r = 1 , 2 , ...\, , R
~ , ~
n = 1 , 2 , ...\, , N
\label{eq:tf}
\text{\,,}
\ee
where again, $R \in \N$ is a predetermined constant, $\otimes$~stands for outer product, and $\{ \w^{( n )}_r \}_{r = 1}^{R} \hspace{0mm}_{n = 1}^{N}$ are the optimized parameters.\note{
There exist many types of tensor factorizations (\cf~\cite{kolda2009tensor,hackbusch2012tensor}).
We treat here the classic and most basic one, known as \emph{CANDECOMP/PARAFAC (CP)}.
\label{note:cp_decomp}
}
In analogy with the matrix case, the minimal~$R$ required for this parameterization to be able to express a given~$\bar{\W} \in \R^{d_1 , d_2 , \ldots , d_N}$ is defined to be the latter's \emph{(tensor) rank}.
An implicit regularization towards low rank here would mean that even when $R$ is large enough for expressing any tensor, solutions expressible (or approximable) with small~$R$ tend to be learned. 

Figure~\ref{fig:experiment_tf} displays results of tensor completion experiments, in which tensor factorization (optimization of loss in Equation~\eqref{eq:loss_tensor} via gradient descent over parameterization in Equation~\eqref{eq:tf}) is applied to observations (\ie~$\{ b_{i_1 , i_2 , \ldots , i_N} \}_{( i_1 , i_2 , \ldots , i_N ) \in \Omega}$) drawn from a low-rank ground truth tensor.
As can be seen in terms of both reconstruction error (distance from ground truth tensor) and (tensor) rank of the produced solutions, tensor factorizations indeed exhibit an implicit regularization towards low rank.
The phenomenon thus goes beyond the special case of matrix (order~$2$ tensor) factorization.
Theoretically supporting this finding is regarded as a promising direction for future research.

\medskip

As discussed in Section~\ref{sec:intro}, matrix completion can be seen as a prediction problem,\note{
Observed entries in the matrix to recover stand for training examples, and unobserved entries for test set.
}
and matrix factorization as its solution with a \emph{linear neural network}.
In a similar vein, tensor completion may be viewed as a prediction problem, and tensor factorization as its solution with a \emph{convolutional arithmetic circuit}~---~see Figure~\ref{fig:convac}.
Convolutional arithmetic circuits form a class of \emph{non-linear} neural networks that has been studied extensively in theory (\cf~\cite{cohen2016expressive,cohen2016convolutional,cohen2017inductive,cohen2017analysis,sharir2018expressive,levine2018deep,cohen2018boosting,balda2018tensor,levine2019quantum}),
and has also demonstrated promising results in practice (see~\cite{cohen2014simnets,cohen2016deep,sharir2016tensorial}).
Analogously to how the input-output mapping of a linear neural network is naturally represented by a matrix, that of a convolutional arithmetic circuit admits a natural representation as a tensor.
Our experiments (Figure~\ref{fig:experiment_tf} and Figure~\ref{fig:experiment_tf_r3} in Subappendix~\ref{app:experiments:further}) show that (at least in some settings) when learned via gradient descent, this tensor tends to have low rank.
We thus obtain a second exemplar of a neural network architecture whose implicit regularization strives to lower a notion of rank for its input-output mapping.
This leads us to believe that the phenomenon may be general, and formalizing notions of rank for input-output mappings of contemporary models may be key to explaining generalization in deep learning.

\begin{figure}
\vspace{-5mm}
\begin{center}
\includegraphics[width=0.9\textwidth]{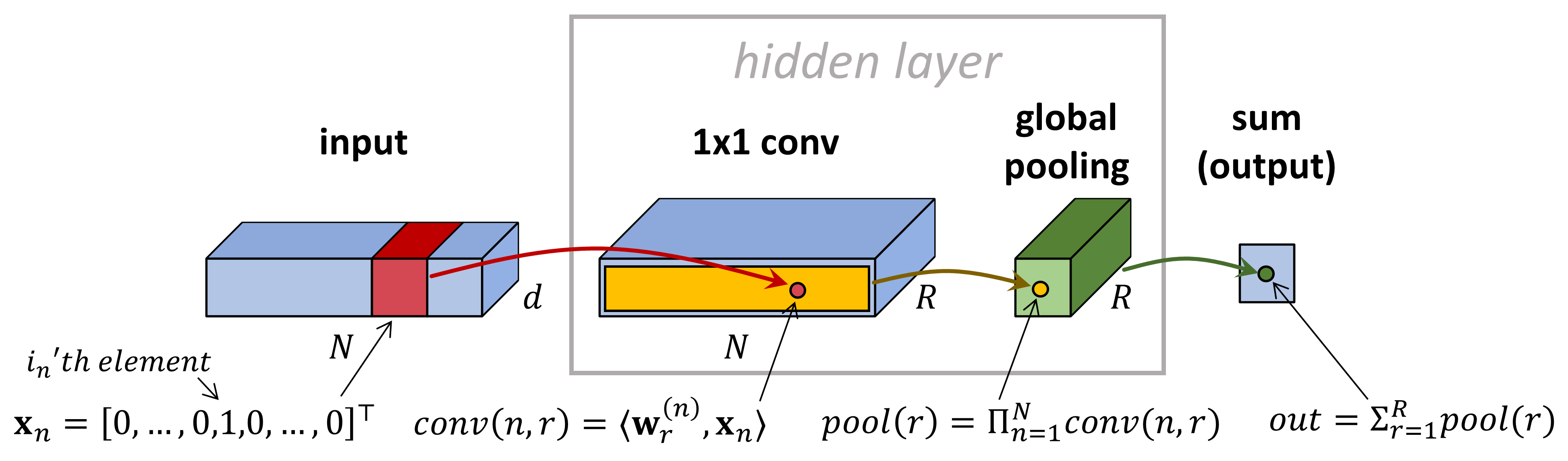}
\end{center}
\vspace{-1mm}
\caption{
Tensor factorizations correspond to convolutional arithmetic circuits (class of \emph{non-linear} neural networks studied extensively), analogously to how matrix factorizations correspond to linear neural networks.
Specifically, the tensor factorization in Equation~\eqref{eq:tf} corresponds to the convolutional arithmetic circuit illustrated above (illustration assumes $d_1 = d_2 = \cdots = d_N = d$ to avoid clutter).
The input to the network is a tuple $( i_1 , i_2 , \ldots , i_N ) \in \{ 1 , 2 , \ldots , d_1 \} \times \{ 1 , 2 , \ldots , d_2 \} \times \cdots \times \{ 1 , 2 , \ldots , d_N \}$, represented via one-hot vectors $( \x_1 , \x_2 , \ldots , \x_N ) \in \R^{d_1} \times \R^{d_2} \times \cdots \times \R^{d_N}$.
These vectors are processed by a hidden layer comprising:
\emph{(i)}~locally connected linear operator with $R$~channels, the $r$'th one computing inner products against filters $( \w^{\scriptscriptstyle ( 1 )}_r , \w^{\scriptscriptstyle ( 2 )}_r , \ldots , \w^{\scriptscriptstyle ( N )}_r ) \in \R^{d_1} \times \R^{d_2} \times \cdots \times \R^{d_N}$ (this operator is referred to as ``$1 {\times} 1$~conv'', appealing to the case of weight sharing, \ie~$\w^{\scriptscriptstyle ( 1 )}_r = \w^{\scriptscriptstyle ( 2 )}_r = \ldots = \w^{\scriptscriptstyle ( N )}_r$);
followed by
\emph{(ii)}~global pooling computing products of all activations in each channel.
The result of the hidden layer is then reduced through summation to a scalar~---~output of the network.
Overall, given input tuple $( i_1 , i_2 , \ldots , i_N )$, the network outputs $( \W )_{i_1 , i_2 , \ldots , i_N}$, where $\W \in \R^{d_1 , d_2 , \ldots , d_N}$ is given by the tensor factorization in Equation~\eqref{eq:tf}.
Notice that the number of terms~($R$) and the tunable parameters~($\{ \w^{\scriptscriptstyle ( n )}_r \}_{r , n}$) in the factorization respectively correspond to the width and the learnable filters of the network.
Our tensor factorization (Equation~\eqref{eq:tf}) was derived as an extension of a shallow (depth~$2$) matrix factorization, and accordingly, the convolutional arithmetic circuit it corresponds to is shallow (has a single hidden layer).
Endowing the factorization with hierarchical structures would render it equivalent to \emph{deep} convolutional arithmetic circuits (see~\cite{cohen2016expressive} for details)~---~investigation of the implicit regularization in these models is viewed as a promising avenue for future research.
}
\label{fig:convac}
\end{figure}

% SUMMARY
\section{Summary} \label{sec:summary}

The extent to which norms (and quasi-norms) can explain the implicit regularization induced by gradient-based optimization is a central question in the theory of deep learning.
A standard test-bed for its study is matrix factorization~---~matrix completion via linear neural networks trained by gradient descent~---~which in practice tends to produce low-rank solutions.
It is an open problem whether the implicit regularization in matrix factorization can be characterized as minimization of a norm (or quasi-norm)~---~Conjecture~\ref{conj:nuclear_norm} from~\cite{gunasekar2017implicit} supports this supposition, whereas Conjecture~\ref{conj:no_norm} from~\cite{arora2019implicit} opposes it.
We presented a simple (and robust to perturbations) matrix completion setting for which, with probability~$0.5$ or more over random initialization of gradient descent, the implicit regularization in matrix factorization provably drives \emph{all} norms (and quasi-norms) to \emph{grow towards infinity}, while rank is essentially minimized.
This affirms Conjecture~\ref{conj:no_norm}, and although it does not formally refute Conjecture~\ref{conj:nuclear_norm} (the latter's technical assumptions are not necessarily satisfied by our setting), we believe that in essence our result implies that norm (or quasi-norm) minimization cannot explain implicit regularization in matrix factorization, let alone in deep learning altogether.

The crux behind the matrix completion setting we defined is that its solution set entails a direct contradiction between minimizing norms (or quasi-norms) and minimizing rank.
The fact that the former is given up on in favor of the latter suggests that, rather than viewing implicit regularization in matrix factorization through the lens of norms (or quasi-norms), a potentially more useful interpretation is minimization of rank.
As a step towards assessing the generality of this interpretation, we experimented with an extension of matrix factorization to tensor factorization, and found that it too exhibits an implicit regularization towards low rank, where rank is defined in the context of tensors.
Similarly to how matrix factorization corresponds to a linear neural network whose input-output mapping is represented by a matrix, tensor factorization corresponds to a convolutional arithmetic circuit (certain type of \emph{non-linear} neural network) whose input-output mapping is represented by a tensor.
We thus obtain a second exemplar of a neural network architecture whose implicit regularization strives to lower a notion of rank for its input-output mapping.
Theoretical investigation of the implicit regularization in tensor factorization is regarded as a promising direction for future research.
More broadly, we believe that neural networks minimizing notions of rank for their input-output mappings may be a general phenomenon, and hypothesize that formalizing such notions in the context of contemporary models may be key to explaining generalization in deep learning.

% NEURIPS BROADER IMPACT
\ifdefined\NEURIPS
	\section*{Broader Impact}

The application of deep learning in practice is based primarily on trial and error, conventional wisdom and intuition, often leading to suboptimal performance, as well as compromise in important aspects such as safety, privacy and fairness.
Developing rigorous theoretical foundations behind deep learning may facilitate a more principled use of the technology, alleviating aforementioned shortcomings.
The current paper takes a step along this vein, by addressing the central question of implicit regularization induced by gradient-based optimization.
While theoretical advances~---~particularly those concerned with explaining widely observed empirical phenomena~---~oftentimes do not pose apparent societal threats, a potential risk they introduce is misinterpretation by scientific readership.
We have therefore made utmost efforts to present our results as transparently as possible.

\fi

% ACKNOWLEDGMENTS
\ifdefined\NEURIPS
	\begin{ack}
	This work was supported by Len Blavatnik and the Blavatnik Family foundation, as well as the Yandex Initiative in Machine Learning.
The authors thank Nathan Srebro and Jason D.~Lee for their illuminating comments which helped improve the manuscript.

	\end{ack}
\else
	\newcommand{\ack}
	{}
\fi
\ifdefined\COLT
	\acks{\ack}
\else
	\ifdefined\CAMREADY
		\ifdefined\ICLR
			\newcommand*{\subsuback}{}
		\fi
		\ifdefined\NEURIPS
			% DO NOTHING - HANDLED EARLIER
		\else
			\section*{Acknowledgments}
			\ack
		\fi
	\fi
\fi

% REFERENCES
\section*{References}
{\small
\ifdefined\ICML
	\bibliographystyle{icml2018}
\else
	\bibliographystyle{plainnat}
\fi
\bibliography{refs}
}

% APPENDIXES
\clearpage
\appendix

% ENDNOTES
\ifdefined\NOTESAPP
	\theendnotes
\fi

% EXTENSION TO UNBALANCED INITIALIZATION
\section{Extension to unbalanced initialization} \label{app:unbalanced}

In this appendix we present extensions of our theory to cases where initialization is unbalanced, \ie~in which Equation~\eqref{eq:balance} only holds approximately.
For simplicity, we limit the presentation to the square setting, where all dimensions of the deep matrix factorization ($d_0 , d_1 , \ldots , d_L$; see Section~\ref{sec:model}) are equal to~$d \in \N$.

The following definition quantifies unbalancedness.
\begin{definition} \label{def:unbalance}
The \emph{unbalancedness magnitude} of matrices $\{ W_l \in \R^{d , d} \}_{l = 1}^L$ is defined to be:
\be
\max\nolimits_{l \in \{ 1 , 2 , \ldots , L - 1 \}} \norm{W_{l + 1}^\top W_{l + 1} - W_l W_l^\top}_{nuclear}
\label{eq:unbalance}
\text{\,.}
\ee
\end{definition}
We will present two approaches for showing that approximate versions of our main theoretical results (Theorems~\ref{thm:norms_up_finite} and~\ref{thm:norms_up_finite_perturb}) hold if unbalancedness magnitude at initialization is small:
\emph{(i)}~using continuity of optimizer trajectory with respect to its initialization (Subappendix~\ref{app:unbalanced:continuity});
and
\emph{(ii)}~employing conservation of unbalancedness magnitude throughout optimization (Subappendix~\ref{app:unbalanced:conserve}).
Both approaches rely on the fact that small unbalancedness magnitude implies proximity to perfect balancedness, as stated formally in the following lemma.
\begin{lemma} \label{lem:unbalance_to_balance}
For any matrices $\{ W_l \in \R^{d , d} \}_{l = 1}^L$ with unbalancedness magnitude (Equation~\eqref{eq:unbalance}) equal to~$\epsilon$, there exist $\{ W'_l \in \R^{d , d} \}_{l = 1}^L$ that are balanced (\ie~have unbalancedness magnitude zero), such that $\norm{W_l - W'_l}_{Fro} \leq ( l - 1 ) \cdot \sqrt{\epsilon}$\, for all $l = 1 , 2 , \ldots , L$.
\end{lemma}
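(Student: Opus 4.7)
The plan is to construct $\{W'_l\}$ explicitly via the right polar decomposition of each factor and then bound the layer-wise distance inductively. For each $l \in \{1,\ldots,L\}$ I would write $W_l = O_l P_l$ with $O_l \in \R^{d,d}$ orthogonal (extending the partial isometry to a full orthogonal matrix when $W_l$ is rank-deficient) and $P_l := (W_l^\top W_l)^{1/2}$ positive semidefinite. I would then set $P'_1 := P_1$, define recursively $P'_{l+1} := O_l P'_l O_l^\top$, and take $W'_l := O_l P'_l$. A short computation verifies balancedness: $(W'_{l+1})^\top W'_{l+1} = (P'_{l+1})^2 = O_l (P'_l)^2 O_l^\top = W'_l (W'_l)^\top$, where the second equality uses orthogonality of $O_l$ and the fact that conjugation by orthogonal matrices commutes with squaring.

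Since $O_l$ is orthogonal, $\|W_l - W'_l\|_{Fro} = \|P_l - P'_l\|_{Fro}$, so it suffices to bound the latter. By the triangle inequality and orthogonal invariance of the Frobenius norm,
\[
\|P_{l+1} - P'_{l+1}\|_{Fro} \leq \|P_{l+1} - O_l P_l O_l^\top\|_{Fro} + \|O_l (P_l - P'_l) O_l^\top\|_{Fro} = \|P_{l+1} - O_l P_l O_l^\top\|_{Fro} + \|P_l - P'_l\|_{Fro}.
\]
The key step is bounding the first term. Both $P_{l+1}$ and $O_l P_l O_l^\top$ are positive semidefinite, so the Powers--Stormer inequality (asserting $\|A^{1/2} - B^{1/2}\|_{Fro}^2 \leq \|A - B\|_{nuclear}$ for positive semidefinite $A, B$) yields
\[
\|P_{l+1} - O_l P_l O_l^\top\|_{Fro}^2 \leq \|P_{l+1}^2 - O_l P_l^2 O_l^\top\|_{nuclear} = \|W_{l+1}^\top W_{l+1} - W_l W_l^\top\|_{nuclear} \leq \epsilon,
\]
where the pivotal identity $O_l P_l^2 O_l^\top = (O_l P_l)(O_l P_l)^\top = W_l W_l^\top$ is what converts conjugated squared polar part into the unbalancedness quantity from Definition~\ref{def:unbalance}. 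Telescoping the recursion $\|P_{l+1} - P'_{l+1}\|_{Fro} \leq \sqrt{\epsilon} + \|P_l - P'_l\|_{Fro}$ from the base case $\|P_1 - P'_1\|_{Fro} = 0$ delivers the claimed bound $\|W_l - W'_l\|_{Fro} \leq (l-1)\sqrt{\epsilon}$.

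The main conceptual hurdle I anticipate is pairing the construction with the right square-root inequality: a naive attempt (e.g., defining $W'_{l+1}$ via the polar decomposition of $W_{l+1}$ combined with $(W'_l (W'_l)^\top)^{1/2}$) forces one to control $\|W_l W_l^\top - W'_l (W'_l)^\top\|_{nuclear}$, which requires a priori operator-norm bounds on the $W_l$'s that the lemma does not provide. The construction above circumvents this by keeping the $P'_l$'s tied to the (absolute) polar parts via orthogonal conjugation, so that Powers--Stormer is applied directly to the unbalancedness gap at each step without any intermediate perturbation of products. A minor subtlety is non-uniqueness of the polar decomposition when some $W_l$ is singular; this is harmless, since arbitrary orthogonal extension of the isometric part leaves both $W_l = O_l P_l$ and $W_l W_l^\top = O_l P_l^2 O_l^\top$ intact.
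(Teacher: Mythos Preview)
Your proof is correct and, once unwound, is \emph{exactly} the paper's construction: writing $W_l = U_l \Sigma_l V_l^\top$ one has $O_l = U_l V_l^\top$ and $P_l = V_l \Sigma_l V_l^\top$, and your recursion $P'_{l+1} = O_l P'_l O_l^\top$, $W'_l = O_l P'_l$ reproduces verbatim the paper's formula $W'_l = \big(\prod_{2}^{r=l} U_r V_r^\top\big)\, U_1 \Sigma_1 U_1^\top \,\big(\prod_{r=2}^{l-1} V_r U_r^\top\big)$. The inductive distance argument is likewise identical---the paper also inserts $\sqrt{W_{l-1} W_{l-1}^\top} = O_{l-1} P_{l-1} O_{l-1}^\top$, applies Powers--St{\o}rmer to the unbalancedness gap, and telescopes---so the only difference is that your polar-decomposition packaging makes the structure (and the key identity $O_l P_l^2 O_l^\top = W_l W_l^\top$) more transparent than the raw SVD bookkeeping.
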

\begin{proof}[Proof sketch (for complete proof see Subappendix~\ref{app:proofs:unbalance_to_balance})]
Based on singular value decompositions of $W_1, W_2, \ldots, W_L$, the proof provides an explicit construction for $W'_1, W'_2, \ldots, W'_L$.
Starting with $W'_1 := W_1$, for $l = 2, 3, \ldots, L$ the matrices $W'_l$ are defined such that: \emph{(i)} $W_{l}^{\prime \top} W'_{l} = W'_{l - 1} W_{l - 1}^{\prime \top}$; and \emph{(ii)} $\norm{W_l - W'_l}_{Fro} \leq \norm{W_{l - 1} - W'_{l - 1}}_{Fro} + \sqrt{\epsilon}$.
The former ensures that $W'_1, W'_2, \ldots, W'_L$ are balanced, whereas the latter implies $\norm{W_l - W'_l}_{Fro} \leq ( l - 1 ) \cdot \sqrt{\epsilon}$\, for all $l = 1 , 2 , \ldots , L$.
\end{proof}

  % FIRST APPROACH: CONTINUITY WITH RESPECT TO INITIALIZATION
\subsection{First approach: continuity with respect to initialization} \label{app:unbalanced:continuity}

Trajectories of gradient flow over a smooth objective are Lipschitz continuous with respect to their initialization, in the sense that for any~$T > 0$, the location at time~$T$ of optimization is a Lipschitz continuous function of the initial point.
This fact is established by Lemma~\ref{lem:gf_diverge} below.
\begin{lemma} \label{lem:gf_diverge}
Let~$\D$ be an open domain in Euclidean space, and let $f : \D \rightarrow \R$ be a twice continuously differentiable function.
Denote by~$\norm{\cdot}_2$ the Euclidean norm, and assume that $f ( \cdot )$ is $\beta$-smooth with respect to~$\norm{\cdot}_2$ for some $\beta \geq 0$.\note{
That is, for any $\theta_1 , \theta_2 \in \D$ it holds that $\norm{\nabla f ( \theta_2 ) - \nabla f ( \theta_1 )}_2 \leq \beta \cdot \norm{\theta_2 - \theta_1}_2$.
}
Let $\theta , \theta' : [ 0 , T ] \to \D$, where~$T > 0$, be two curves born from gradient flow over~$f ( \cdot )$:
\beas
\theta ( 0 ) = \theta_0 \in \D ~ &,& ~ \dot{\theta} ( t ) \, := \tfrac{d}{dt} \theta ( t ) ~\, = - \nabla f ( \theta ( t ) ) ~~ , ~ t \in [ 0 , T ] 
\text{\,,}
\\ [1mm]
\theta' ( 0 ) = \theta'_0 \in \D ~ &,& ~ \dot{\theta}' ( t ) := \tfrac{d}{dt} \theta' ( t ) = - \nabla f ( \theta' ( t ) ) ~ , ~ t \in [ 0 , T ]
\text{\,.}
\eeas
Then, for any $\bar{t} \in [ 0 , T ]$ it holds that:
\be
\norm{\theta ( \bar{t} ) - \theta' ( \bar{t} )}_2 \leq \norm{\theta ( 0 ) - \theta' ( 0 )}_2 \cdot \exp ( \beta \cdot \bar{t} )
\text{\,.}
\label{eq:gf_diverge}
\ee
\end{lemma}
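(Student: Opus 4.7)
The plan is to reduce the claim to a standard Grönwall-type argument applied to the squared distance between the two trajectories. Define $\Delta : [0, T] \to \R^d$ by $\Delta(t) := \theta(t) - \theta'(t)$, and consider the scalar function $g(t) := \norm{\Delta(t)}_2^2$. Since $f$ is twice continuously differentiable, $\theta$ and $\theta'$ (as gradient flow trajectories) are continuously differentiable, so $g$ is continuously differentiable on $[0, T]$. The target bound is equivalent to $g(\bar t) \leq g(0) \cdot \exp(2 \beta \bar t)$, so it suffices to bound the growth of $g$.

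First I would compute the derivative of $g$ via the chain rule, obtaining
\[
g'(t) = 2 \langle \Delta(t), \dot{\Delta}(t) \rangle = -2 \langle \theta(t) - \theta'(t), \nabla f(\theta(t)) - \nabla f(\theta'(t)) \rangle,
\]
using the defining ODEs for $\theta$ and $\theta'$. Next I would apply Cauchy--Schwarz together with the $\beta$-smoothness hypothesis on $f$ to bound the inner product in absolute value by
\[
\norm{\theta(t) - \theta'(t)}_2 \cdot \norm{\nabla f(\theta(t)) - \nabla f(\theta'(t))}_2 \leq \beta \cdot \norm{\theta(t) - \theta'(t)}_2^2 = \beta \, g(t).
\]
This yields the differential inequality $g'(t) \leq 2 \beta \, g(t)$ for every $t \in [0, T]$.

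To conclude, I would invoke the (integral form of) Grönwall's inequality, or equivalently observe that $\tfrac{d}{dt}\bigl( g(t) \cdot e^{-2 \beta t} \bigr) = e^{-2 \beta t} (g'(t) - 2 \beta g(t)) \leq 0$, so the map $t \mapsto g(t) \cdot e^{-2 \beta t}$ is non-increasing on $[0, T]$. Evaluating at $t = \bar t$ and $t = 0$ gives $g(\bar t) \leq g(0) \cdot \exp(2 \beta \bar t)$, and taking square roots delivers Equation~\eqref{eq:gf_diverge}.

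I do not anticipate any genuine obstacle: the argument is textbook, and the hypotheses (twice continuous differentiability plus global $\beta$-smoothness with respect to $\norm{\cdot}_2$) are exactly the ingredients needed to make the smoothness estimate valid along the entire trajectories. The only mild caveat is ensuring the trajectories remain in the open domain $\D$ on $[0, T]$, which is given by hypothesis (both $\theta$ and $\theta'$ are assumed to exist as curves into $\D$ on all of $[0, T]$); thus $\beta$-smoothness can be invoked at every time and the Grönwall step goes through unchanged.
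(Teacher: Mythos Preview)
Your proof is correct and follows essentially the same approach as the paper: define $g(t) = \norm{\theta(t) - \theta'(t)}_2^2$, derive $g'(t) \le 2\beta g(t)$ via Cauchy--Schwarz and $\beta$-smoothness, and conclude by Gr\"onwall. Your integrating-factor finish ($t \mapsto g(t)e^{-2\beta t}$ non-increasing) is in fact slightly cleaner than the paper's, which divides by $g(t)$ and integrates, requiring a separate case analysis (via ODE uniqueness) for times at which $g$ vanishes.
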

\begin{proof}[Proof sketch (for complete proof see Subappendix~\ref{app:proofs:gf_diverge})]
Define the function $g : [ 0 , T ] \to \R_{\geq 0}$ by $g ( t ) := \| \theta ( t ) - \theta' ( t ) \|^2_2$.
Since $f(\cdot)$ is $\beta$-smooth, it holds that $\dot{g} (t) := \tfrac{d}{dt} g ( t ) \leq 2 \beta \cdot g(t)$ for all $t \in [0, T]$.
Dividing the latter inequality by $g(t)$ (with special treatment for the case where~$g ( t ) = 0$) and integrating over time yields the desired result.
\end{proof}

Specializing Lemma~\ref{lem:gf_diverge} to deep matrix factorization (Section~\ref{sec:model}) yields the following result.
\begin{proposition} \label{prop:gf_diverge_dmf}
Consider the overparameterized objective corresponding to a depth~$L$ matrix factorization applied to an arbitrary matrix completion task (see Equations~\eqref{eq:oprm_obj} and~\eqref{eq:prod_mat}).
Let $\theta ( t ) := ( W_1 ( t ) , W_2 ( t ) , \ldots , W_L ( t ) )$ and $\theta' ( t ) := ( W'_1 ( t ) , W'_2 ( t ) , \ldots , W'_L ( t ) )$ be two (arbitrary) curves born from gradient flow over this objective (\cf~Equation~\eqref{eq:gf}).
Given $T > 0$, denote $R := \sup_{t \in [ 0 , T ]} \max \{ \norm{\theta ( t )}_{Fro} , \norm{\theta' ( t )}_{Fro} \}$.\note{
The Frobenius norm of a matrix tuple is defined as the Euclidean norm of their concatenation as a vector, so for example \smash{$\norm{\theta ( t )}_{Fro} = ( \sum_{l = 1}^L \norm{W_l ( t )}_{Fro}^2 )^{0.5}$}.
}
Then, for any $\bar{t} \in [ 0 , T ]$ it holds that:
\be
\norm{\theta ( \bar{t} ) - \theta' ( \bar{t} )}_{Fro} \leq \norm{\theta ( 0 ) - \theta' ( 0 )}_{Fro} \cdot \exp \left ( L R^{L - 2} \left ( 2 R^L + B \right ) \cdot \bar{t} \right )
\text{\,,}
\label{eq:gf_diverge_dmf}
\ee
where $B := ( \sum\nolimits_{(i, j) \in \Omega} b_{i, j}^2 )^{0.5}$, with $\{ b_{i, j} \}_{(i, j) \in \Omega}$ standing for the observed matrix entries.
\end{proposition}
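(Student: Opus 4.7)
\medskip

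\noindent \textbf{Proof proposal.} The plan is to invoke Lemma~\ref{lem:gf_diverge} with $f = \phi$, so the crux is identifying an appropriate $\beta$-smoothness constant for the overparameterized objective on a domain that contains both trajectories throughout $[0, T]$. Since $\phi$ is a polynomial in $\theta$ (and thus smooth on all of parameter space), it suffices to bound the operator norm of its Hessian uniformly over the closed Frobenius ball of radius $R$; picking any open Euclidean ball of radius strictly greater than $R$ as the domain $\D$ in Lemma~\ref{lem:gf_diverge} will then allow us to conclude Equation~\eqref{eq:gf_diverge_dmf} by direct substitution.

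First I would decompose $\phi$ as $\phi(\theta) = \tfrac{1}{2} \norm{P_\Omega(f(\theta) - B)}_{Fro}^2$, where $f(\theta) := W_{L:1}$ is the product-matrix map (a multilinear function of $\theta$), $B$ is the matrix agreeing with the observations on $\Omega$ and zero elsewhere (so that $\norm{P_\Omega B}_{Fro} = B$ in the notation of the proposition), and $P_\Omega$ is the coordinate projection onto observed entries. A standard computation then yields a Hessian of the form
\[
\nabla^2 \phi(\theta)[\Delta, \Delta] \;=\; \norm{P_\Omega \big( d f(\theta)[\Delta] \big)}_{Fro}^2 \;+\; \big\langle P_\Omega(f(\theta) - B), \, d^2 f(\theta)[\Delta, \Delta] \big\rangle ,
\]
where the first-order differential is $d f(\theta)[\Delta] = \sum_{l=1}^{L} W_{L:l+1} \Delta_l W_{l-1:1}$ and the second-order differential is a sum of $\binom{L}{2}$ analogous terms with two $\Delta$ factors inserted into the product.

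Next I would bound each piece on the ball $\{\theta : \norm{\theta}_{Fro} \leq R\}$. Since $\norm{W_l}_{Fro} \leq R$ for every $l$, the triangle inequality together with $\norm{AB}_{Fro} \leq \norm{A}_{Fro}\norm{B}_{Fro}$ gives $\norm{d f(\theta)[\Delta]}_{Fro} \leq R^{L-1} \sum_l \norm{\Delta_l}_{Fro}$, and similarly $\norm{d^2 f(\theta)[\Delta, \Delta]}_{Fro} \leq R^{L-2} \sum_{l < k} \norm{\Delta_l}_{Fro} \norm{\Delta_k}_{Fro}$. Combined with $\norm{P_\Omega(f(\theta) - B)}_{Fro} \leq \norm{f(\theta)}_{Fro} + B \leq R^L + B$, and Cauchy--Schwarz to collapse the sums over $l$ into $\norm{\Delta}_{Fro}$, one obtains $\abs{\nabla^2 \phi(\theta)[\Delta, \Delta]} \leq L R^{L-2}(2 R^L + B) \cdot \norm{\Delta}_{Fro}^2$, which identifies the desired $\beta$. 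Substituting this $\beta$ into Equation~\eqref{eq:gf_diverge} of Lemma~\ref{lem:gf_diverge} yields Equation~\eqref{eq:gf_diverge_dmf}.

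The main obstacle I anticipate is the bookkeeping required to obtain the tight constant $L R^{L-2}(2R^L + B)$ rather than a looser $L^2$ prefactor: one has to carefully exploit Cauchy--Schwarz (or rearrangement) when passing from $\sum_l \norm{\Delta_l}_{Fro}$ and $\sum_{l<k} \norm{\Delta_l}_{Fro}\norm{\Delta_k}_{Fro}$ to $\norm{\Delta}_{Fro}^2$, and to combine the two Hessian summands so that the shared $R^{L-2}$ factors out cleanly. A minor technicality is ensuring that the open domain $\D$ on which Lemma~\ref{lem:gf_diverge} is applied genuinely contains the trajectories $\theta([0,T])$ and $\theta'([0,T])$; this is immediate from the definition of $R$ by taking $\D$ to be, for instance, the open ball of radius $2R$ (the $\beta$-smoothness bound on this larger ball is then governed by the same expression up to a harmless rescaling of $R$, or one simply works with the closed ball of radius $R$ and passes to the limit, since $\phi$ is polynomial and hence smooth everywhere).
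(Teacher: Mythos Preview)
Your approach is correct and reaches the same constant, but it differs from the paper's route in how smoothness is established. The paper does not compute or bound the Hessian; instead it bounds $\norm{\nabla\phi(\theta)-\nabla\phi(\theta')}_{F}$ directly. Writing $\partial_{W_l}\phi=\prod_{r>l}W_r^\top\cdot\nabla\ell(W_{L:1})\cdot\prod_{r<l}W_r^\top$, it invokes the telescoping inequality of Lemma~\ref{lem:matrix_mul_fro_dist} together with the elementary facts $\norm{\nabla\ell(W_{L:1})}_F\leq\norm{W_{L:1}}_F+B$ and $\norm{\nabla\ell(W_{L:1})-\nabla\ell(W'_{L:1})}_F\leq\norm{W_{L:1}-W'_{L:1}}_F$ to bound each block of the gradient difference by $(2R'^{2L-2}+BR'^{L-2})\sum_r\norm{W_r-W'_r}_F$, then applies Cauchy--Schwarz to convert $\sum_r\norm{W_r-W'_r}_F$ into $\sqrt{L}\,\norm{\theta-\theta'}_F$. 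This yields $\beta=LR'^{L-2}(2R'^L+B)$ on the open ball $\D_{R'}$ for any $R'>R$, and the paper concludes by taking $R'\to R^+$---precisely the ``pass to the limit'' option you mention at the end (your alternative of enlarging to radius $2R$ would not recover the stated constant). Your Hessian argument is arguably more conceptual and, if executed carefully, actually produces the slightly sharper $(2L-1)R^{2L-2}+(L-1)BR^{L-2}$ before rounding up; the paper's product-difference route avoids second derivatives altogether and reuses a lemma already needed elsewhere in the appendix. One small slip: $d^2f(\theta)[\Delta,\Delta]$ is a sum of $L(L-1)=2\binom{L}{2}$ terms (equivalently $\binom{L}{2}$ terms each carrying a coefficient~$2$), not $\binom{L}{2}$---this does not damage your final inequality, since $2\sum_{l<k}\norm{\Delta_l}_F\norm{\Delta_k}_F\leq(L-1)\norm{\Delta}_F^2$ still fits comfortably under the target constant.
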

\begin{proof}[Proof sketch (for complete proof see Subappendix~\ref{app:proofs:gf_diverge_dmf})]
The proof follows from Lemma~\ref{lem:gf_diverge}, and the fact that for any $R' > 0$ the overparameterized objective is $L R^{\prime L - 2} \left ( 2 R^{\prime L} + B \right )$-smooth over $\D_{R'} := \{ (W_1 , W_2, \ldots, W_L ) : \norm{ ( W_1, W_2, \ldots, W_L ) }_{Fro} < R' \}$.
\end{proof}

Combining Proposition~\ref{prop:gf_diverge_dmf} with Lemma~\ref{lem:unbalance_to_balance} makes it possible to derive extensions of Theorems~\ref{thm:norms_up_finite} and~\ref{thm:norms_up_finite_perturb} in which the assumption of initialization being perfectly balanced (Equation~\eqref{eq:balance}) is relaxed to a requirement for small unbalancedness magnitude (Definition~\ref{def:unbalance}).
The underlying idea is as follows.
An initialization with small unbalancedness magnitude is close to one which is balanced (Lemma~\ref{lem:unbalance_to_balance}), and for the latter Theorems~\ref{thm:norms_up_finite} and~\ref{thm:norms_up_finite_perturb} may be applied.
The distance between gradient flow trajectories emanating from the two initializations is controlled (Proposition~\ref{prop:gf_diverge_dmf}), therefore results of Theorems~\ref{thm:norms_up_finite} and~\ref{thm:norms_up_finite_perturb} (bounds on norms, quasi-norms, effective rank and distance from infimal rank) carry over~---~with additional error terms~---~to the trajectory originating from the unbalanced initialization. 
A drawback of this approach is that the bounds on distance between trajectories, and accordingly the error terms incurred, grow exponentially with time (see Equation~\eqref{eq:gf_diverge_dmf}).
In the next subappendix we present a different approach that takes into account specific properties of gradient flow over deep matrix factorization, allowing one to overcome this exponential growth (for depth~$L \geq 3$).
 
  % SECOND APPROACH: CONSERVATION OF UNBALANCEDNESS MAGNITUDE
\subsection{Second approach: conservation of unbalancedness magnitude} \label{app:unbalanced:conserve}

Lemma~\ref{lem:unbalance_conserve} below shows that unbalancedness magnitude (Definition~\ref{def:unbalance}) is a conserved quantity of gradient flow over deep matrix factorization (Section~\ref{sec:model}).
\begin{lemma} \label{lem:unbalance_conserve}
Consider the overparameterized objective corresponding to a depth~$L$ matrix factorization applied to an arbitrary matrix completion task (see Equations~\eqref{eq:oprm_obj} and~\eqref{eq:prod_mat}).
Let $( W_1 ( t ) , W_2 ( t ) , \ldots , W_L ( t ) )$ be a curve born from gradient flow over this objective (\cf~Equation~\eqref{eq:gf}), and for any $t \geq 0$, denote by~$\epsilon ( t )$ the associated unbalancedness magnitude (Equation~\eqref{eq:unbalance}).
Then, $\epsilon ( t )$~is constant through time, \ie~$\epsilon ( t ) = \epsilon ( 0 )$ for all $t \geq 0$.
\end{lemma}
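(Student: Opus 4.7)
The plan is to establish the stronger fact that each individual matrix $W_{l+1}(t)^\top W_{l+1}(t) - W_l(t) W_l(t)^\top$ is itself conserved under the gradient flow dynamics (not just its nuclear norm). This is the well-known ``conservation law'' of gradient flow on deep linear networks derived in Arora et al.~(2018), and once it is in hand the claim follows immediately, since the nuclear norm of a constant matrix is a constant and $\epsilon(t)$ is merely a maximum over $l$ of such norms.

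First, I would write out the partial derivative of $\phi$ with respect to $W_l$ explicitly. Using the chain rule on $\phi(W_1,\ldots,W_L) = \ell(W_{L:1})$ one obtains
\[
\tfrac{\partial}{\partial W_l} \phi = W_{L:l+1}^\top \cdot \nabla \ell(W_{L:1}) \cdot W_{l-1:1}^\top,
\]
where $W_{L:l+1} := W_L W_{L-1} \cdots W_{l+1}$ and $W_{l-1:1} := W_{l-1} \cdots W_1$, with the conventions that empty products equal the identity. The gradient flow equation \eqref{eq:gf} then gives explicit formulas for $\dot{W}_l$ and $\dot{W}_{l+1}$.

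Next, I would compute $\tfrac{d}{dt}(W_{l+1}^\top W_{l+1}) = W_{l+1}^\top \dot{W}_{l+1} + \dot{W}_{l+1}^\top W_{l+1}$ and $\tfrac{d}{dt}(W_l W_l^\top) = \dot{W}_l W_l^\top + W_l \dot{W}_l^\top$. Substituting the gradient flow expressions, one finds
\[
W_{l+1}^\top \dot{W}_{l+1} = - W_{L:l+1}^\top \, \nabla\ell(W_{L:1}) \, W_{l:1}^\top, \qquad W_l \dot{W}_l^\top = - W_{l:1} \, \nabla\ell(W_{L:1})^\top \, W_{L:l+1},
\]
which are transposes of each other. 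Consequently their symmetrizations are equal, so $\tfrac{d}{dt}(W_{l+1}^\top W_{l+1}) = \tfrac{d}{dt}(W_l W_l^\top)$ for every $l \in \{1,\ldots,L-1\}$. Hence $W_{l+1}(t)^\top W_{l+1}(t) - W_l(t) W_l(t)^\top$ is constant in $t$.

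Finally, since each of these $L-1$ matrices is time-independent, so is its nuclear norm, and the maximum over $l$ defining $\epsilon(t)$ (Equation~\eqref{eq:unbalance}) is constant, giving $\epsilon(t) = \epsilon(0)$ for all $t \geq 0$. I do not anticipate any serious obstacle: the whole argument is a short algebraic calculation, and the conservation identity is standard. The only minor subtlety is bookkeeping of the product index conventions for $W_{L:l+1}$ and $W_{l-1:1}$ (especially the boundary cases $l=1$ and $l=L-1$ where one of the products is empty), but this is handled uniformly by the convention that an empty matrix product equals the identity.
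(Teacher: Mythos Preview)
Your proposal is correct and follows essentially the same approach as the paper: compute the gradient flow expression for $\dot{W}_l$, use it to show that $\tfrac{d}{dt}(W_{l+1}^\top W_{l+1}) = \tfrac{d}{dt}(W_l W_l^\top)$ for each $l$, conclude that the difference matrix is conserved, then take nuclear norms and a maximum. The only cosmetic difference is that the paper observes directly that $\dot{W}_l W_l^\top = W_{l+1}^\top \dot{W}_{l+1}$ (equality, not just transposes) before symmetrizing, but your transposition observation yields the same conclusion.
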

\begin{proof}[Proof sketch (for complete proof see Subappendix~\ref{app:proofs:unbalance_conserve})]
For $l = 1, 2, \ldots, L - 1$, using the dynamics of $W_l (t)$ and $W_{l + 1} (t)$ under gradient flow, we show that:
\[
\tfrac{d}{dt} ( W_l ( t ) W_l ( t )^\top ) = \tfrac{d}{dt} ( W_{l + 1} ( t )^\top W_{l + 1} ( t ) )
\quad
, ~ \forall t \geq 0
\text{\,.}
\]
This implies $W_{l + 1} ( t )^\top W_{l + 1} ( t ) - W_l ( t ) W_l ( t )^\top = W_{l + 1} ( 0 )^\top W_{l + 1} ( 0 ) - W_l ( 0 ) W_l ( 0 )^\top$ for all $t \geq 0$.
The proof concludes by taking nuclear norm of both sides of the latter equality, followed by maximization over $l \in \{1, 2, \ldots, L - 1 \}$.
\end{proof}

Combining Lemma~\ref{lem:unbalance_conserve} with Lemma~\ref{lem:unbalance_to_balance} implies that if unbalancedness magnitude is small at initialization, it remains that way throughout, and thus for every point along the optimization trajectory there exists some nearby point which is balanced (\ie~has unbalancedness magnitude zero).
We may imagine a gradient flow trajectory emanating from such balanced point, and import certain characteristics from this imaginary trajectory to the original one.
The idea of using imaginary balancedly-initialized trajectories for analyzing the unbalanced case also appears in the approach laid out in Subappendix~\ref{app:unbalanced:continuity}.
However, whereas there only one such trajectory was employed, here there are infinitely many~---~one for each point in time.
This allows us to maintain small distance from an imaginary trajectory (as opposed to a distance that grows exponentially with time~---~see Proposition~\ref{prop:gf_diverge_dmf}), facilitating import of characteristics during which incurred error terms are small.

In the context of Theorems~\ref{thm:norms_up_finite} and~\ref{thm:norms_up_finite_perturb}, the critical characteristic of trajectories originating from balanced initializations is that they do not allow the product matrix's (Equation~\eqref{eq:prod_mat}) determinant to change sign, or more specifically, its smallest singular value to cross zero.
Using the aforementioned technique (proximity to imaginary balancedly-initialized trajectories), we may import an approximate version of this characteristic into trajectories whose initializations have small unbalancedness magnitude.
This amounts to a bound on the rate at which the smallest singular value of the product matrix can approach zero, yielding a guaranteed time throughout which the results of Theorems~\ref{thm:norms_up_finite} and~\ref{thm:norms_up_finite_perturb} hold.

Theorem~\ref{thm:norms_up_finite_unbalance} below formalizes the logic outlined above, extending Theorem~\ref{thm:norms_up_finite} to the case of unbalanced initialization (we omit here the formal extension of Theorem~\ref{thm:norms_up_finite_perturb}, as it is essentially the same).
\begin{theorem} \label{thm:norms_up_finite_unbalance}
Consider the setting of Theorem~\ref{thm:norms_up_finite}, with the assumption of balanced initialization (Equation~\eqref{eq:balance}) removed, allowing initialization with unbalancedness magnitude~$\epsilon > 0$ (Definition~\ref{def:unbalance}).
Assume that:
\emph{(i)}~the deep matrix factorization is square, \ie~its hidden dimensions are equal to~$2$;
\emph{(ii)}~the loss at initialization is lower than that at zero, \ie~$\ell ( W_{L : 1} ( 0 ) ) < \ell ( 0 ) = 1$, where $W_{L : 1} ( t )$ denotes the product matrix (Equation~\eqref{eq:prod_mat}) at time~$t \geq 0$ of optimization;
and
\emph{(iii)}
\[
\epsilon \leq 
\begin{cases}
\exp \left ( - \frac{ 2^{16} \left ( \max \left \{ 32, \max_{l \in [L]} \norm{ W_l (0) }_{Fro} \right \} + 1 \right )^6 }{ (1 - \sqrt{\ell_{init}} )^4 } \right ) & \text{, if depth $L = 2$}
\vspace{2mm} \\
\frac{ (1 - \sqrt{\ell_{init}})^{128} }{2^{64L + 256} L^{128}\left ( \max \left \{ 32, \max_{l \in [L]} \norm{ W_l (0) }_{Fro} \right \} + 1 \right )^{128L - 64} } & \text{, if depth $L \geq 3$}
\end{cases}
\text{\,,}
\]
where $\ell_{init} := \ell ( W_{L : 1} ( 0 ) )$.\note{
These assumptions are technical in nature; we defer their relaxation to future work.
}
Then, the results of Theorem~\ref{thm:norms_up_finite}~---~bounds on (quasi-)norms, effective rank and distance from infimal rank (Equations~\eqref{eq:norms_lb}, \eqref{eq:erank_ub} and~\eqref{eq:irank_dist_ub} respectively)~---~all hold at least until one of the following takes place:
\begin{itemize}[leftmargin=5mm]
\item Optimization time~$t$ reaches:
\be
t =
\begin{cases}
\frac{1}{2^{2 / 3} (1 - \sqrt{\ell_{init}} )^{ 4 / 3}} \cdot \ln \left ( \frac{1}{\epsilon} \right )^{2 / 3} - \ln \left ( \frac{e}{(1 - \sqrt{\ell_{init}} ) \sigma_{init} } \right ) & \text{, if depth $L = 2$}
\vspace{2mm} \\
\frac{ 2^{4L / 3} L }{ ( 1 - \sqrt{\ell_{init}} )^2 }\cdot \epsilon^{ - \frac{ 3L - 8 }{ 32L - 16 } } - 2^{- (5L + 5)} \sigma_{init}^{ - \frac{ L - 2 }{ L } } & \text{, if depth $L \geq 3$}
\end{cases}
\text{\,,}
\label{eq:unbalance_exit_time}
\ee
where $\sigma_{init} := \min \{ \sigma_{min} ( W_{L : 1} (0) ) , ( 1 - \sqrt{ \ell_{init} } ) / 2 \}$, with $\sigma_{min} ( W_{L : 1} (0) )$ standing for the minimal singular value of $W_{L : 1} (0)$;
or
\item (Quasi-)norms, effective rank and distance from infimal rank are jointly bounded as follows:
\be
\norm{W_{L : 1} ( t )} \geq
\begin{cases}
\frac{\norm{ \e_1 \e_1^\top } (1 - \sqrt{\ell_{init}})^{4 / 3} }{2^{11} c_{\norm{\cdot}} } \cdot \ln \left ( \frac{1}{\epsilon} \right )^{1 / 3}  - 12 c_{\norm{\cdot}}^2 \max\limits_{i,j \in \{1,2\}} \norm{\e_i \e_j^\top} & \text{, if depth $L = 2$}
\vspace{2mm} \\
\frac{\norm{ \e_1 \e_1^\top } (1 - \sqrt{\ell_{init}})^{6 / 5} }{2^{4L} L^{6 / 5} c_{\norm{\cdot}} } \cdot \epsilon^{- \frac{L}{128L - 64}} - 12 c_{\norm{\cdot}}^2 \max\limits_{i,j \in \{1,2\}} \norm{\e_i \e_j^\top} & \text{, if depth $L \geq 3$}
\end{cases}
\text{\,,}
\label{eq:unbalance_exit_norms_lb}
\ee
\be
\hspace{-3mm}
\erank ( W_{L : 1} ( t ) ) \leq
\begin{cases}
\inf_{W' \in \S} \erank (W') + \frac{2^9}{ (1 - \sqrt{ \ell_{init} })^{2 / 3} } \cdot \ln \left ( \frac{1}{\epsilon} \right )^{- 1/ 6} & \text{, if depth $L = 2$}
\vspace{2mm} \\
\inf_{W' \in \S} \erank (W') + \frac{2^{2L + 5} L}{ 1 - \sqrt{ \ell_{init} } } \cdot \epsilon^{\frac{L}{256L - 128}} & \text{, if depth $L \geq 3$}
\end{cases}
\text{\,,}
\label{eq:unbalance_exit_erank_ub}
\ee
\be
D ( W_{L : 1} ( t ) , \M_{\irank ( \S )} ) \leq
\begin{cases}
\frac{2^{12} }{ (1 - \sqrt{\ell_{init}} )^{4 / 3} } \cdot \ln \left ( \frac{1}{\epsilon} \right )^{- 1/ 3} + \sqrt{2 \ell ( W_{L : 1} (t) )} & \text{, if depth $L = 2$}
\vspace{2mm} \\
\frac{2^{3L + 4} L^{6 / 5}}{(1 - \sqrt{\ell_{init}} )^{6 / 5} } \cdot \epsilon^{\frac{L}{128L - 64}} + \sqrt{2 \ell ( W_{L : 1} (t) )} & \text{, if depth $L \geq 3$}
\end{cases}
\text{\,,}
\label{eq:unbalance_exit_irank_dist_ub}
\ee 
\vspace{-2mm}\\
where~$\norm{\cdot}$ is any norm or quasi-norm over matrices, $c_{\norm{\cdot}} \geq 1$ is a constant with which $\norm{\cdot}$ satisfies the weakened triangle inequality (see Footnote~\ref{note:qnorm}), $\erank ( \cdot )$ stands for effective rank (Definition~\ref{def:erank}), and $D ( \cdot \hspace{0.5mm} , \M_{\irank ( \S )} )$ represents distance from the infimal rank (Definition~\ref{def:irank_dist}) of the solution set~$\S$ (Equation~\eqref{eq:sol_set}).
\end{itemize}
\end{theorem}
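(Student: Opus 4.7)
The plan is to combine the two lemmas of Subappendix~\ref{app:unbalanced:conserve}: conservation of unbalancedness magnitude (Lemma~\ref{lem:unbalance_conserve}) and proximity between unbalanced and balanced configurations (Lemma~\ref{lem:unbalance_to_balance}). Together they ensure that at every time $t \geq 0$ along the gradient flow trajectory $(W_1(t), \ldots, W_L(t))$, there exists a balanced snapshot $(W'_1(t), \ldots, W'_L(t))$ with $\norm{W_l(t) - W'_l(t)}_{Fro} \leq (l-1)\sqrt{\epsilon}$; in particular the product matrix $W'_{L:1}(t)$ of this snapshot is close to $W_{L:1}(t)$, with an error governed by $\epsilon$ and $\max_l \norm{W_l(t)}_{Fro}$. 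This localizes the balanced analysis underlying Theorem~\ref{thm:norms_up_finite} at every time, up to controllable $\sqrt{\epsilon}$-size perturbations.

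The core technical step is to establish that $\sigma_{\min}(W_{L:1}(t))$ cannot collapse to zero too quickly. For each reference time~$t_0$, an imaginary gradient flow trajectory emanating from the balanced snapshot $(W'_l(t_0))$ has its product matrix singular values governed by the clean balanced dynamics invoked in the proof of Theorem~\ref{thm:norms_up_finite}, in particular preventing $\sigma_{\min}$ from crossing zero along that reference. Proposition~\ref{prop:gf_diverge_dmf} bounds how fast the actual trajectory can diverge from this reference, translating to a lower bound on $\sigma_{\min}(W_{L:1}(t))$ that remains strictly positive until the exit time~$T_\epsilon$ of Equation~\eqref{eq:unbalance_exit_time}. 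On $[0, T_\epsilon]$ strict positivity of $\sigma_{\min}(W_{L:1}(t))$~---~equivalently preservation of $\mathrm{sign}(\det(W_{L:1}(t)))$~---~allows the proof of Theorem~\ref{thm:norms_up_finite} to be run essentially verbatim, yielding Equations~\eqref{eq:norms_lb}, \eqref{eq:erank_ub}, and~\eqref{eq:irank_dist_ub}. The alternative exit clause in Equations~\eqref{eq:unbalance_exit_norms_lb}--\eqref{eq:unbalance_exit_irank_dist_ub} should emerge by tracking what the proof delivers at $t = T_\epsilon$: the singular value lower bound, together with closeness to the balanced snapshot, forces $\abs{(W_{L:1})_{1,1}(t)}$ to be large, which via Propositions~\ref{prop:sol_set_norms} and~\ref{prop:sol_set_rank} yields the $\epsilon$-dependent fallback bounds on (quasi-)norms, effective rank, and distance from infimal rank.

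The principal obstacle is the depth $L \geq 3$ regime. A naive use of Proposition~\ref{prop:gf_diverge_dmf}, which predicts exponential-in-time divergence between the actual and reference trajectories, only yields a logarithmic-in-$1/\epsilon$ exit time~---~matching the $L=2$ bound $\ln(1/\epsilon)^{2/3}$ but falling far short of the polynomial bound $\epsilon^{-(3L-8)/(32L-16)}$ claimed for $L \geq 3$. Reaching the sharper exit time seems to require refreshing the reference balanced trajectory continuously in time, rather than using a single one over a long interval: one should derive a differential inequality directly on $\sigma_{\min}(W_{L:1}(t))$ in which the unbalancedness contributes only a correction of order $\sqrt{\epsilon} \cdot \mathrm{poly}(\max_l \norm{W_l}_{Fro})$, and integrate it. A subsidiary difficulty is bounding $\max_l \norm{W_l(t)}_{Fro}$ from above along the trajectory so these correction terms remain meaningful; this will likely demand a bootstrapping argument exploiting the hypothesis $\ell_{init} < 1$. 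Reproducing the precise exponents and prefactors appearing in the theorem statement is then a matter of careful bookkeeping rather than genuine difficulty.
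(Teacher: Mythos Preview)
Your core plan---conserve unbalancedness (Lemma~\ref{lem:unbalance_conserve}), at each instant pass to a nearby balanced snapshot (Lemma~\ref{lem:unbalance_to_balance}), and derive a differential inequality on $\sigma_{\min}(W_{L:1}(t))^2$ with an additive $\sqrt{\epsilon}\cdot\mathrm{poly}(\max_l\norm{W_l}_{Fro})$ correction---is exactly what the paper does. Two points of clarification are worth making.

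First, Proposition~\ref{prop:gf_diverge_dmf} is not used anywhere in the proof, not even for $L=2$. The paper compares $\tfrac{d}{dt}\sigma_2(W_{L:1})^2$ and $\tfrac{d}{dt}\sigma_2(W'_{L:1})^2$ \emph{at the same instant} by direct calculation (writing $\sigma_2^2$ explicitly in terms of $\norm{W_{L:1}}_F^2$ and $\det W_{L:1}$, then differentiating). The balanced dynamics give $|\tfrac{d}{dt}\sigma_2(W'_{L:1})^2|\lesssim\sigma_2(W'_{L:1})^{3-2/L}$; the instantaneous comparison adds $O(\sqrt{\epsilon})$; then an ODE lemma (the paper's Lemma~\ref{lem:dyn_sys_exp_and_poly_lower_bounds}) integrates. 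For $L=2$ the exponent $3-2/L$ equals $2$ and you get Gr\"onwall, hence the logarithmic time; for $L\geq 3$ the ODE is superlinear and yields polynomial time.

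Second, and more substantively: your ``bootstrapping argument'' to bound $\max_l\norm{W_l(t)}_{Fro}$ is a wrong turn. The factor norms are \emph{not} bounded along the trajectory (indeed, the product matrix diverges), so no such bootstrap exists. The paper instead fixes a threshold $R=R(\epsilon)$ and performs a dichotomy: either $\max_l\norm{W_l(t)}_{Fro}\leq R$ throughout $[0,T]$, in which case the $\sqrt{\epsilon}$-corrections stay small and the differential-inequality argument gives the lower bound on~$T$ in Equation~\eqref{eq:unbalance_exit_time}; or some $\bar{t}\leq T$ has $\max_l\norm{W_l(\bar{t})}_{Fro}=R$, in which case the balanced snapshot at $\bar{t}$ satisfies $\sigma_1(W'_{L:1})=\sigma_1(W'_l)^L\gtrsim (R/\sqrt{2})^L$ (all balanced factors share singular values, Lemma~\ref{lem:bal_prod_mat_sing_val}), so $\norm{W_{L:1}(\bar{t})}_F\gtrsim R^L$, and since $\ell_{init}<1$ bounds the observed entries, $\abs{(W_{L:1}(\bar{t}))_{1,1}}\gtrsim R^L$---yielding Equations~\eqref{eq:unbalance_exit_norms_lb}--\eqref{eq:unbalance_exit_irank_dist_ub} directly. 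So the alternative exit clause is not ``what the proof delivers at $t=T_\epsilon$'' but rather the second branch of this case split; the hypothesis $\ell_{init}<1$ is used only to bound the observed entries in that branch and to separate $\sigma_1$ from $\sigma_2$ in the first.
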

\begin{proof}[Proof sketch (for complete proof see Subappendix~\ref{app:proofs:norms_up_finite_unbalance})]
By the proof of Theorem~\ref{thm:norms_up_finite} (given in Subappendix~\ref{app:proofs:norms_up_finite}), its results (Equations~\eqref{eq:norms_lb}, \eqref{eq:erank_ub} and~\eqref{eq:irank_dist_ub}) hold for any $t \geq 0$ with $\det (W_{L : 1} (t) ) > 0$.
Bearing in mind that by assumption $\det (W_{L : 1} ( 0 ) ) > 0$, we let $T {\,\in\,} ( 0, \infty )$ be the initial time at which $\det ( W_{L: 1 } (T) ) = 0$ (if no such $T$ exists, the proof concludes).
Fixing an arbitrary time $\bar{t} \in [0, T]$, Lemmas~\ref{lem:unbalance_conserve} and~\ref{lem:unbalance_to_balance} imply that there exists a point $(W'_1, W'_2, \ldots, W'_L)$ which meets the balancedness condition (\ie~has unbalancedness magnitude zero), and is within (Frobenius) distance $\OO (L^2 \cdot \sqrt{\epsilon})$ from $(W_1 (\bar{t}), W_2 (\bar{t}), \ldots, W_L (\bar{t}))$.
Imagining a gradient flow path that emanates from $(W'_1, W'_2, \ldots, W'_L)$, one may employ Lemma~\ref{lem:prod_mat_sing_dyn} from Subappendix~\ref{app:lemma:dmf}, to characterize the movement of the singular values of~$W'_{L : 1} := W'_L W'_{L - 1} \cdots W'_1$.
Continuity arguments then imply that the singular values of $W_{L : 1} (\bar{t})$ move similarly (at time $\bar{t}$), allowing us to obtain an upper bound on the rate at which the minimal singular value of $W_{L : 1} (\bar{t})$ can decay.
Integrating this upper bound yields a lower bound on~$T$, specified in Equation~\eqref{eq:unbalance_exit_time} as one of the possible outcomes.
The continuity arguments employed require $\norm{( W_1 (t), W_2 (t), \ldots, W_L (t) )}_{Fro}$, $t \in [0, T]$, to be bounded by a certain constant.
If this is not the case then necessarily at some time $t \leq T$ the unobserved entry of $W_{L : 1} (t)$ is large, leading to the bounds on (quasi-)norms, effective rank and distance from infimal rank in the alternative outcome (Equations~\eqref{eq:unbalance_exit_norms_lb}, \eqref{eq:unbalance_exit_erank_ub} and~\eqref{eq:unbalance_exit_irank_dist_ub} respectively).
\end{proof}

Theorem~\ref{thm:norms_up_finite_unbalance} states that if initialization has unbalancedness magnitude~$\epsilon > 0$ (Definition~\ref{def:unbalance}), then the results of Theorem~\ref{thm:norms_up_finite}~---~bounds on (quasi-)norms, effective rank and distance from infimal rank (Equations~\eqref{eq:norms_lb}, \eqref{eq:erank_ub} and~\eqref{eq:irank_dist_ub} respectively)~---~are guaranteed to hold for a certain period of time (Equation~\eqref{eq:unbalance_exit_time}), or until certain terminal bounds (Equations~\eqref{eq:unbalance_exit_norms_lb}, \eqref{eq:unbalance_exit_erank_ub} and~\eqref{eq:unbalance_exit_irank_dist_ub}) are jointly satisfied.
Taking~$\epsilon \to 0^+$, the aforementioned period of time tends to infinity, and the terminal bounds tend to the limits (as loss goes to zero) of the bounds in Theorem~\ref{thm:norms_up_finite}, meaning we effectively converge to the latter.
The rate of this convergence highly depends on the depth of the matrix factorization~---~roughly speaking, it is proportional to a fractional power of~$\ln ( 1 / \epsilon )$ for depth~$2$, and to a fractional power of~$1 / \epsilon$ for depth~$3$ or more.
Disregarding constants (terms that do not depend on~$\epsilon$),\note{
We did not attempt to optimize those; doing so is regarded as a potential direction for future work.
}
this implies that in order to get comparable guarantees, the unbalancedness magnitude of initialization needs to be exponentially smaller with depth~$2$ than with depth~$3$ or more.
We thus have a theoretical reasoning that resonates with the empirical phenomenon reported in Figure~\ref{fig:experiment_dmf}, by which in practical settings (gradient descent with small learning rate and near-zero initialization), the prediction of Theorem~\ref{thm:norms_up_finite}~---~unobserved entry increasing (and therefore norms and quasi-norms increasing, with effective rank and distance from infimal rank decreasing) as loss decreases~---~sustains for much longer with depth~$3$ or more than it does with depth~$2$.\note{
Note that this phenomenon takes place even when initializations are perfectly balanced (\ie~have unbalancedness magnitude zero), the reason being the discrepancy between gradient descent with small learning rate and gradient flow.
Specifically, while the latter would have conserved the balancedness throughout (Lemma~\ref{lem:unbalance_conserve}), the former will (generically) lead to positive unbalancedness magnitude immediately after its commencement.
}

% EXTENSION TO DIFFERENT MATRIX DIMENSIONS
\section{Extension to different matrix dimensions} \label{app:dimensions}

In this appendix we outline an extension of the construction and analysis given in Subsections~\ref{sec:analysis:setting} and~\ref{sec:analysis:norms_up} respectively, to completion of matrices with dimensions beyond~$2$-by-$2$.
The extension presented here is not unique, but rather one simple option out of many.
It is demonstrated empirically in Subappendix~\ref{app:experiments:further} (Figure~\ref{fig:experiment_dmf_diff_dimensions}).

Beginning with square matrices, for $2 \leq d \in \N$, consider completion of a $d$-by-$d$ matrix based on the following observations:
\bea
\Omega &=& \{1, \ldots, d\} \times \{1, \ldots , d\} \, \setminus \, \{ (1 , 1 ) \} 
\text{\,,} 
\nonumber
\\[1mm]
b_{i,j} &=& 
\begin{cases}
	 1 & , \text{if } i = j \geq 3 \text{ or } (i,j) \in \{(1,2), (2,1)\}\\
	 0 & , \text{otherwise} 
\end{cases} 
~ , ~ \text{for } ( i , j ) \in \Omega
\label{eq:obs_dxd}
\text{\,,}
\eea
where, as in Section~\ref{sec:model}, $\Omega$~represents the set of observed locations, and $\{ b_{i , j} \in \R \}_{( i , j ) \in \Omega}$ the~corresponding set of observed values.
The solution set for this problem (\ie~the set of matrices zeroing the loss in Equation~\eqref{eq:loss}) is: 
\hspace{85mm}
\begingroup
\setlength{\columnsep}{25pt}
\begin{wrapfigure}{r}{0.39\textwidth}
\vspace{-4.5mm}
\hspace*{3mm}
\includegraphics[width=0.31\textwidth]{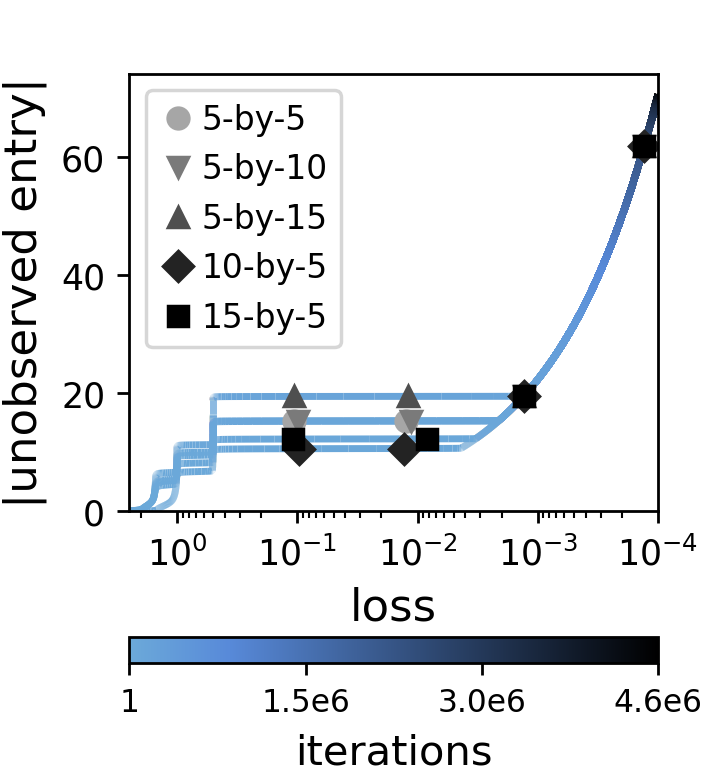}
\vspace{-0.5mm}
\caption{
Phenomenon of implicit regularization in matrix factorization driving \emph{all} norms (and quasi-norms) \emph{towards infinity} extends to arbitrary matrix dimensions.
Appendix~\ref{app:dimensions} outlines an extension of the construction and analysis given in Subsections~\ref{sec:analysis:setting} and~\ref{sec:analysis:norms_up} respectively, to completion of matrices with arbitrary dimensions.
The extension implies that for any $2 \leq d, d' \in \N$, when applying matrix factorization to the specified $d$-by-$d'$ matrix completion problem, decreasing loss, \ie~fitting observations, can lead absolute value of unobserved entry to increase (which in turn means norms and quasi-norms increase).
This is demonstrated in the plot above, which for representative runs corresponding to different choices of $d$ and~$d'$, shows absolute value of unobserved entry as a function of the loss (Equation~\ref{eq:loss}), with iteration number encoded by color.
Runs were obtained with a depth~$3$ matrix factorization initialized randomly by an unbalanced (layer-wise independent) distribution, with the latter's standard deviation and the learning rate for gradient descent set to the smallest values used for depth~$3$ in Figure~\ref{fig:experiment_dmf} (other settings we evaluated produced similar results).
For further implementation details see Subappendix~\ref{app:experiments:details:dmf}.
}
\vspace{-20mm}
\label{fig:experiment_dmf_diff_dimensions}
\end{wrapfigure}
\be
\S_d := \left \{
\begin{pmatrix}
w_{1,1} \hspace{-2mm} & 1 & 0 & 0 & \hspace{-2mm} \cdots \hspace{-2mm} & 0 \\[0.5mm]
1 \hspace{-2mm} & 0 & 0 & 0 & \hspace{-2mm} \cdots \hspace{-2mm} & 0 \\[0.5mm]
0 \hspace{-2mm} & 0 & 1 & 0 & \hspace{-2mm} \cdots \hspace{-2mm} & 0 \\[0.5mm]
0 \hspace{-2mm} & 0 & 0 & 1 &  & 0 \\[-1.5mm]
\vdots \hspace{-2mm} & \vdots & \vdots &  & \hspace{-2mm} \ddots \hspace{-2mm} &  \\
0 \hspace{-2mm} & 0 & 0 & 0 &  & 1
\end{pmatrix}
\in \R^{d,d} 
: w_{1,1} \in \R \right \} 
\label{eq:sol_set_dxd}
\text{\,.}
\ee
\vspace{-2mm}\\
Observing~$\S_d$, while comparing to the solution set~$\S$ in our original construction (Equation~\eqref{eq:sol_set}), we see that the former has a $2$-by-$2$ block diagonal structure, with the top-left block holding the latter, and the bottom-right block set to identity.
This implies that $d - 2$ of the singular values along~$\S_d$ are fixed to one, and the remaining two are identical to the singular values along~$\S$.
Results analogous to Propositions~\ref{prop:sol_set_norms} and~\ref{prop:sol_set_rank} can therefore easily be proven.
Since the determinant along~$\S_d$ is bounded below and away from zero (it is equal to~$-1$), approaching~$\S_d$ while having positive determinant necessarily means that absolute value of unobserved entry (\ie~of the entry in location~$( 1 , 1 )$) grows towards infinity.
Combining this with the fact that the product matrix (Equation~\eqref{eq:prod_mat}) of a depth~$L \geq 2$ matrix factorization maintains the sign of its determinant (see Lemma~\ref{lem:det_does_not_change_sign} in Subappendix~\ref{app:lemma:dmf}), results analogous to Theorem~\ref{thm:norms_up_finite} and Corollary~\ref{cor:norms_up_asymp} may readily be established.
That is, one may show that, with probability~$0.5$ or more over random near-zero initialization, gradient descent with small learning rate drives \emph{all} norms (and quasi-norms) \emph{towards infinity}, while essentially driving rank towards~its~minimum.

Moving on to the rectangular case, for $2 \leq d , d' \in \N$, consider completion of a $d$-by-$d'$ matrix based on the same observations as in Equation~\eqref{eq:obs_dxd}, but with additional zero observations such that only the entry in location~$( 1 , 1 )$ is unobserved.
The singular values along the solution set for this problem are the same as those along~$\S_d$ (Equation~\eqref{eq:sol_set_dxd}).
Moreover, assuming without loss of generality that~$d \leq d'$, if a matrix factorization applied to this problem is initialized such that its product matrix holds zeros in columns $d + 1$ to~$d'$, then a dynamical characterization from~\cite{arora2018optimization} (restated as Lemma~\ref{lem:prod_mat_dyn} in Subappendix~\ref{app:lemma:dmf}), along with the structure of the loss (Equation~\eqref{eq:loss}), ensure the leftmost $d$-by-$d$~submatrix of the product matrix evolves precisely as in the square case discussed above, while the remaining columns ($d + 1$~to~$d'$) stay at zero.
Results thus carry over from the square to the rectangular case.

% FURTHER EXPERIMENTS AND IMPLEMENTATION DETAILS
\section{Further experiments and implementation details} \label{app:experiments}

  % FURTHER EXPERIMENTS
\subsection{Further experiments} \label{app:experiments:further}

Figures~\ref{fig:experiment_dmf_diff_dimensions} and~\ref{fig:experiment_dmf_perturb} supplement Figure~\ref{fig:experiment_dmf} from Subsection~\ref{sec:experiments:analyzed}, by demonstrating empirically that the phenomenon of implicit regularization in matrix factorization driving all norms (and quasi-norms) towards infinity is, respectively:
\emph{(i)}~applicable to arbitrary matrix dimensions, as outlined in Appendix~\ref{app:dimensions};
and
\emph{(ii)}~robust to perturbations, as proven in Subsection~\ref{sec:analysis:robust}.
Figure~\ref{fig:experiment_tf_r3} supplements Figure~\ref{fig:experiment_tf} from Subsection~\ref{sec:experiments:tensor}, further demonstrating that gradient descent over tensor factorization exhibits an implicit regularization towards low (tensor) rank.

  % IMPLEMENTATION DETAILS
\subsection{Implementation details} \label{app:experiments:details}

Below we provide a full description of implementation details omitted from our experimental reports (Section~\ref{sec:experiments} and Subappendix~\ref{app:experiments:further}).
Source code for reproducing our results and figures, based on the PyTorch framework~(\cite{paszke2017automatic}), 
\ifdefined\CAMREADY
can be found at \url{https://github.com/noamrazin/imp_reg_dl_not_norms}.
\else
is attached as supplementary material and will be made publicly available.
\fi

    % DEEP MATRIX FACTORIZATION
\subsubsection{Deep matrix factorization (Figures~\ref{fig:experiment_dmf},~\ref{fig:experiment_dmf_diff_dimensions}, and~\ref{fig:experiment_dmf_perturb})} \label{app:experiments:details:dmf}

\begin{figure}
\vspace{-8mm}
\begin{center}
\hspace{-5mm}
\subfloat{
	\includegraphics[width=0.31\textwidth]{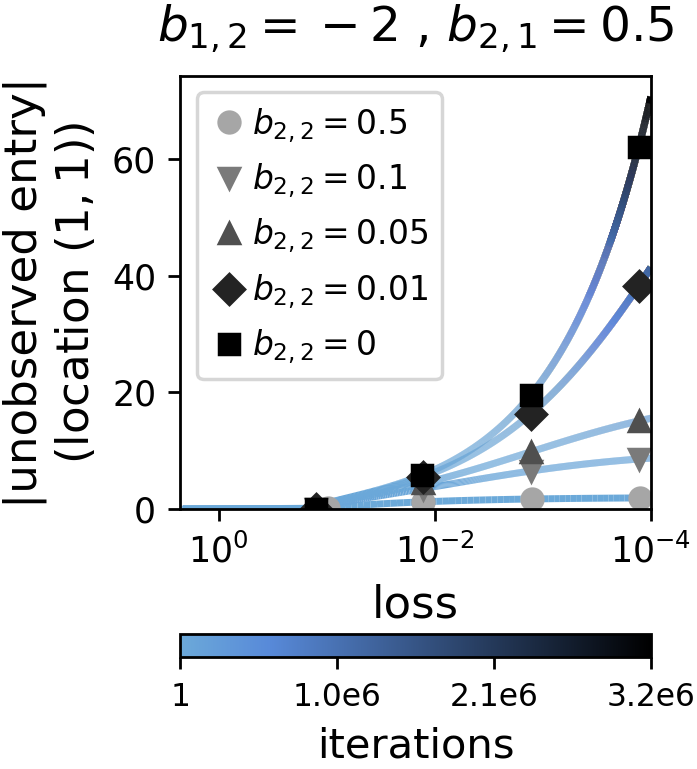}
} ~~~~~~~~~~~~
\subfloat{
	\includegraphics[width=0.31\textwidth]{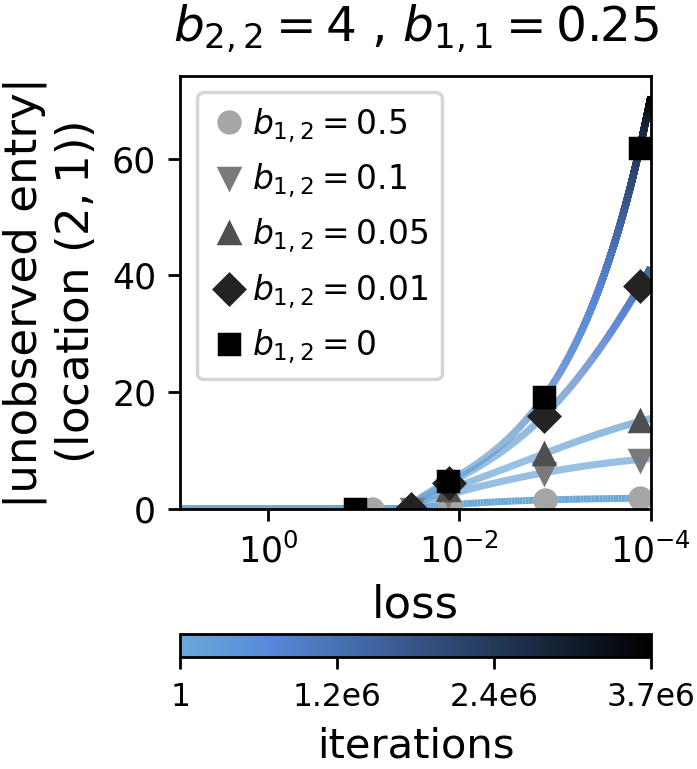}
}
\end{center}
\vspace{-1mm}
\caption{
Phenomenon of implicit regularization in matrix factorization driving \emph{all} norms (and quasi-norms) \emph{towards infinity} is robust to perturbations.
Our analysis (Subsection~\ref{sec:analysis:robust}) implies that, when applying matrix factorization to the matrix completion problem defined in Subsection~\ref{sec:analysis:setting}, even if observations are perturbed and repositioned, decreasing loss, \ie~fitting them, leads absolute value of unobserved entry to increase (which in turn means norms and quasi-norms increase).
Specifically, with $( i , j ) \in \{ 1 , 2 \} \times \{ 1 , 2 \}$ representing the unobserved location and $\bar{i} := 3 - i$, $\bar{j} := 3 - j$, Theorem~\ref{thm:norms_up_finite_perturb} implies that:
\emph{(i)}~if the diagonally-opposite observation~$b_{\bar{i} , \bar{j}}$ is unperturbed (stays at zero), the adjacent ones $b_{i , \bar{j}} , b_{\bar{i} , j}$ can take on \emph{any} non-zero values, and as long as at initialization the sign of the product matrix's (Equation~\ref{eq:prod_mat}) determinant accords with that of $b_{i , \bar{j}} \cdot b_{\bar{i} , j}$, the absolute value of unobserved entry will grow to infinity;
and
\emph{(ii)}~the extent to which absolute value of unobserved entry grows gracefully recedes as $b_{\bar{i} , \bar{j}}$ is perturbed away from zero.
This is demonstrated in the plots above, which for representative runs, show absolute value of unobserved entry as a function of the loss (Equation~\ref{eq:loss}), with iteration number encoded by color.
Each plot corresponds to a different choice of~$( i , j )$ and a different assignment for~$b_{i , \bar{j}} , b_{\bar{i} , j}$, presenting runs with varying values for~$b_{\bar{i} , \bar{j}}$.
Runs were obtained with a depth~$3$ matrix factorization initialized randomly by an unbalanced (layer-wise independent) distribution, with the latter's standard deviation and the learning rate for gradient descent set to the smallest values used for depth~$3$ in Figure~\ref{fig:experiment_dmf} (other settings we evaluated produced similar results).
For further implementation details see Subappendix~\ref{app:experiments:details:dmf}.
}
\label{fig:experiment_dmf_perturb}
\end{figure}

In all experiments with deep matrix factorization, hidden dimensions were set to the minimal value ensuring unconstrained search space, \ie~to the minimum between the number of rows and the number of columns in the matrix to complete.
Gradient descent was run with fixed learning rate until loss (Equation~\eqref{eq:loss}) reached a value lower than~$10^{-4}$ or $5 \cdot 10^6$~iterations elapsed.
Both balanced (Equation~\eqref{eq:balance}) and unbalanced (layer-wise independent) random initializations were calibrated according to a desired standard deviation~$\alpha > 0$ for the entries of the initial product matrix (Equation~\eqref{eq:prod_mat}).
Namely:
\emph{(i)}~under unbalanced initialization, entries of all weight matrices were sampled independently from a Gaussian distribution with zero mean and standard deviation $(\alpha^2 / \bar{d}^{L - 1})^{1 / 2L}$, where~$L$ stands for the depth of the factorization, and~$\bar{d}$ for the size of its hidden dimensions;
and
\emph{(ii)}~under balanced initialization, we used Procedure~1 from~\cite{arora2019convergence}, based on a Gaussian distribution with independent entries, zero mean and standard deviation~$\alpha$.
In accordance with the description in Appendix~\ref{app:dimensions}, if the matrix to complete was rectangular, we ensured that excess rows or columns of the initial product matrix held zeros, by clearing (setting to zero) corresponding rows or columns of the initial leftmost or rightmost (respectively) matrix in the factorization.\note{
That is, if the matrix to complete had size $d$-by-$d'$ with $d \neq d'$, we cleared rows $d' + 1$ to~$d$ of~$W_L ( 0 )$ if~$d > d'$, and columns $d + 1$ to~$d'$ of~$W_1 ( 0 )$ if~$d' > d$.
}
Random initializations were repeated until the determinant of the initial product matrix (or of its top-left $\min \{ d, d' \}$-by-$\min \{ d, d' \}$ submatrix if its size was $d$-by-$d'$ with $d \neq d'$) was of the necessary sign,\note{
Positive for the experiments reported by Figures~\ref{fig:experiment_dmf} and~\ref{fig:experiment_dmf_diff_dimensions}, and negative for those reported by Figure~\ref{fig:experiment_dmf_perturb}.
}
taking two attempts on average.
In the experiment reported by Figure~\ref{fig:experiment_dmf}, runs with matrix factorization depths~$2$ and~$3$ were carried out with learning rates $\{ 6 \cdot 10^{-2} , 3 \cdot 10^{-2}, 9 \cdot 10^{-3}, 6 \cdot 10^{-3}, 3 \cdot 10^{-3}, 9 \cdot 10^{-4} \}$ and corresponding standard deviations for initialization $\{ 10^{-2}, 10^{-3}, 10^{-4}, 10^{-5}, 10^{-6}, 10^{-7} \}$.
Factorizations of depth~$4$ were slightly more sensitive to changes in learning rate, thus we refined attempted values to $\{ 6 \cdot 10^{-3}, 4.5 \cdot 10^{-3}, 3 \cdot 10^{-3}, 1.5 \cdot 10^{-3}, 10^{-3} \}$, with corresponding standard deviations for initialization $\{ 10^{-1}, 10^{-2}, 10^{-3}, 10^{-4}, 10^{-5} \}$.

    % TENSOR FACTORIZATION
\subsubsection{Tensor factorization (Figures~\ref{fig:experiment_tf} and~\ref{fig:experiment_tf_r3})} \label{app:experiments:details:tf}

\begin{figure}
	\vspace{-8mm}
	\begin{center}
		\hspace*{-4.5mm}
		\subfloat{
			\includegraphics[width=0.5\textwidth]{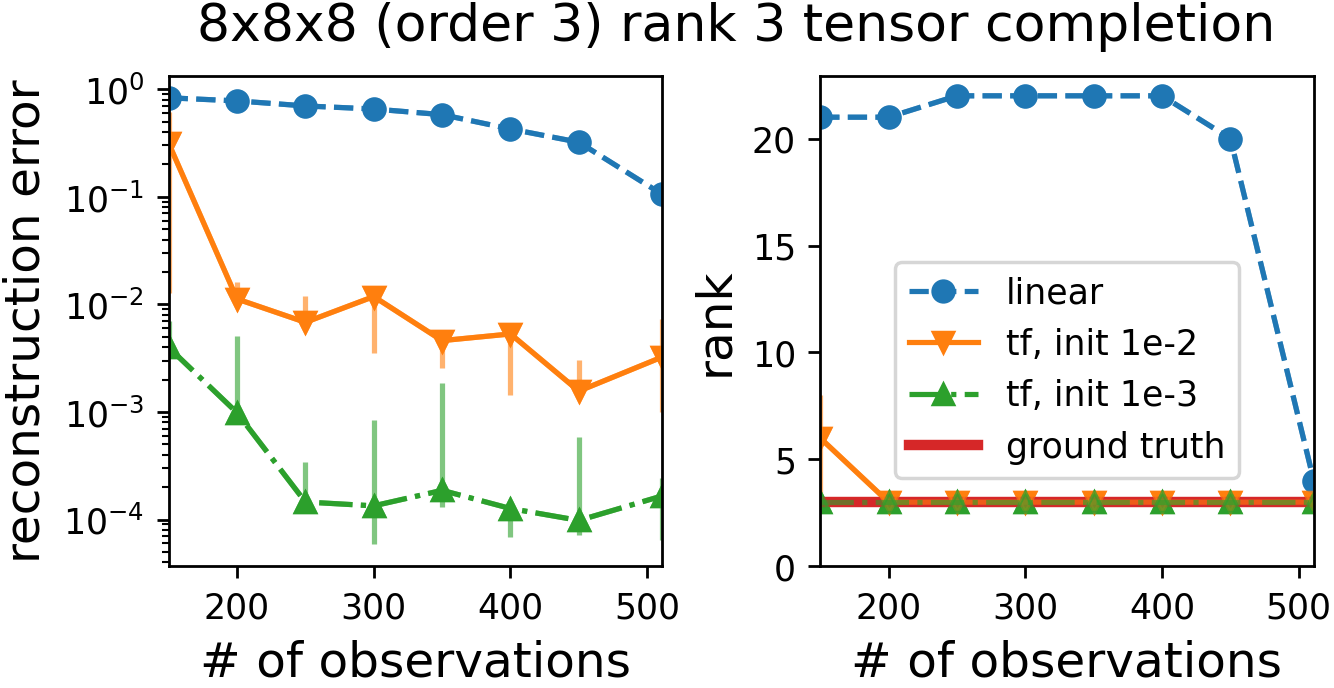}
		} ~~
		\subfloat{
			\includegraphics[width=0.5\textwidth]{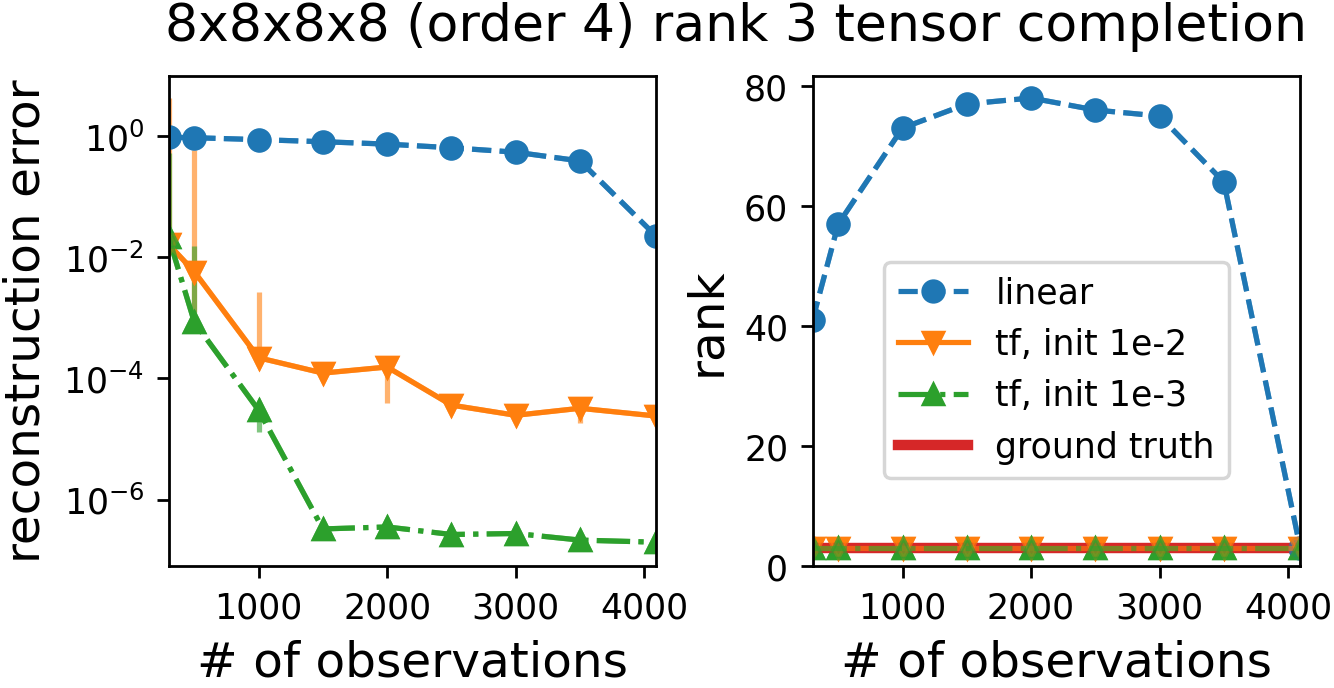}
		}
	\end{center}
	\caption{
		Gradient descent over tensor factorization exhibits an implicit regularization towards low (tensor) rank.
		This figure is identical to Figure~\ref{fig:experiment_tf}, except that the experiments it portrays had ground truth tensors of rank~$3$ (instead of~$1$).
		For further details see caption of Figure~\ref{fig:experiment_tf}, as well as Subappendix~\ref{app:experiments:details:tf}.
	}
	\label{fig:experiment_tf_r3}
\end{figure}

In all experiments with tensor factorization (Equation~\eqref{eq:tf}), the number of terms~$R$ was set to ensure an unconstrained search space, \ie~it was set to $8^2$ and~$8^3$ for tensor sizes $8$-by-$8$-by-$8$ and $8$-by-$8$-by-$8$-by-$8$ respectively.\note{
As shown in~\cite{hackbusch2012tensor}, for any $d_1 , d_2 , \ldots , d_N \in \N$, using $R = ( \Pi_{n = 1}^N d_n ) / \max \{ d_n \}_{n = 1}^N$ suffices for~expressing all tensors in $\R^{d_1 , d_2 , \ldots , d_N}$.
}
Horizontal axes in all plots begin from the smallest number of observations producing stable results, and end when all entries but one are observed.
Specifically:
\emph{(i)}~in the experiments with rank~$1$ ground truth tensors (Figure~\ref{fig:experiment_tf}), the number of observations ranged over $\{ 50, 100, 150, \ldots , 400, 450, 511 \}$ and $\{ 100, 500, 1000, 1500, \ldots, 3000, 3500, 4095\}$ for orders~$3$ and~$4$ respectively;
and
\emph{(ii)}~for experiments with rank~$3$ ground truth tensors (Figure~\ref{fig:experiment_tf_r3}), the minimal number of observations was increased threefold (\ie~ranges of $\{ 150, 200, 250, \ldots , 400, 450, 511 \}$ and $\{ 300, 500, 1000, 1500, \ldots, 3000, 3500, 4095\}$ were used for orders~$3$ and~$4$ respectively).
Gradient descent was run until the mean squared error over observations reached a value lower than~$10^{-6}$ or~$10^6$ iterations elapsed.
For initialization, weights were sampled independently from a Gaussian distribution with zero mean and varying standard deviation.
In particular, five trials (differing in random seed) were conducted for each standard deviation in the range $\{10^{-1}, 10^{-2}, 10^{-3}, 10^{-4} \}$.
To facilitate more efficient experimentation, we employed an adaptive learning rate scheme, where at each iteration a base learning rate of~$10^{-2}$ was divided by the square root of an exponential moving average of squared gradient norms.
That is, with base learning rate $\eta = 10^{-2}$ and weighted average coefficient $\beta = 0.99$, at iteration~$t$ the learning rate was set to $\eta_t = \eta / (\sqrt{\gamma_t / (1 - \beta^t)} + 10^{-6})$, where \smash{$\gamma_t = \beta \cdot \gamma_{t-1} + (1 - \beta) \cdot \sum \hspace{0mm}_{r = 1}^{R} \hspace{0mm}_{n = 1}^{N} \| \nicefrac{\partial}{\partial \w_r^{(n)}} \ell ( \{ \w^{( n )}_r ( t ) \}_{r , n} ) \|^2_F$}, with $\gamma_0 = 0$ and $\ell ( \cdot )$ standing for the mean squared error over observations.
We emphasize that only the learning rate (step size) is affected by this scheme, not the direction of movement.
Comparisons between the scheme and a fixed (small) learning rate schedule have shown no noticeable impact on the end result, with significant difference in terms of run time.

When referring to tensor rank, we mean the classic \emph{CP-rank} (see~\cite{kolda2009tensor}).
While exact inference of this rank is in the worst case computationally hard (\cf~\cite{haastad1990tensor}), in practice, a standard way to estimate it is by the minimal number of terms ($R$~in Equation~\eqref{eq:tf}) for which the Alternating Least Squares (ALS) algorithm achieves reconstruction (mean squared) error below a certain threshold (see~\cite{kolda2009tensor} for further details).
We follow this method with a threshold of~$10^{-6}$.
Generating a ground truth rank~$R^*$ tensor $\W^* \in \R^{d_1, d_2, \ldots, d_N}$ was done by computing:
\[
\W^* = \sum\nolimits_{r = 1}^{R^*} \w^{* ( 1 )}_r \otimes \w^{* ( 2 )}_r \otimes \cdots \otimes \w^{* ( N )}_r
\quad , ~
\w^{* ( n )}_r \in \R^{d_n}
~ , ~
r = 1 , 2 , ...\, , R^*
~ , ~
n = 1 , 2 , ...\, , N
\text{\,,}
\]
with $\{ \w^{* ( n )}_r \}_{r = 1}^{R^*} \hspace{0mm}_{n = 1}^{N}$ drawn independently from the standard normal distribution.
After every such generation, we estimated the rank of the obtained tensor (its construction only ensures a rank of \emph{at most}~$R^*$), and repeated the process if it was smaller than~$R^*$.
For convenience, we subsequently normalized the ground truth tensor to be of unit Frobenius norm.
\endgroup

% DEFERRED PROOFS
\section{Deferred proofs} \label{app:proofs}

%-------------------------------------------------------------------
% NOTATIONS
%-------------------------------------------------------------------
\subsection{Notations} \label{app:proofs:notation}

We define a few notational conventions that will be used throughout our proofs.
For $N \in \N$, let $[N]$ denote the set $\{ 1 , 2 , \ldots , N \}$.
Let $\{ \e_i \}_{i = 1}^d \subset \R^d$ be the standard basis vectors, \ie~$\e_i$ holds~$1$ in its $i$'th coordinate and $0$ elsewhere.
The singular values of a matrix $W \in \R^{d,d'}$ are denoted by $\sigma_1(W) \geq \ldots \geq \sigma_{\min \{d, d'\}}(W) \geq 0$, where by convention $\sigma_i (W) := 0$ for $i > \min \{d, d' \}$.
Similarly, the eigenvalues of a symmetric matrix $W \in \R^{d, d}$ are denoted by $\lambda_1(W) \geq \ldots \geq \lambda_d (W)$. 
We let $\norm{W}_{S_p}$, with $p \in (0, \infty]$, stand for the Schatten-$p$ (quasi-)norm of a matrix $W \in \R^{d, d'}$, and denote by $\norm{W}_F$ the special case $p = 2$, \ie~the Frobenius norm.
The Euclidean norm of a vector $w \in \R^d$ is denoted by $\norm{w}_2$.
Since norms are a special case of quasi-norms, when providing results applicable to both, only the latter is explicitly treated.
To admit a compact representation of matrix products, given $1\leq a \leq b \leq L$ and matrices $W_1, W_2 , \ldots ,W_L$ for which the product $W_L W_{L-1} \cdots W_1$ is defined, we denote:
\[
\begin{split}
& \prod\nolimits_{a}^{r=b} W_r := W_b \cdots W_a ~, \\
& \prod\nolimits_{r=a}^{b} W_r^\top  := W_a^\top  \cdots W_b^\top \text{\,.}
\end{split}
\]
By definition, if $a > b$, then both $\prod_{a}^{r=b} W_r$ and $\prod_{r=a}^{b} W_r^\top $ are identity matrices, with size to be inferred by context.
The $k$'th derivative of a function (from~$\R$ to~$\R$) $f(t)$ is denoted by $f^{(k)}(t)$, with $f^{(0)}(t) := f(t)$ by convention.
For consistency with differential equations literature, when the variable~$t$ is regarded as a time index, we also denote the first order derivative by~$\dot{f}(t)$.
Lastly, when clear from context, a time index~$t$ will often be omitted.

%-------------------------------------------------------------------
% USEFUL LEMMAS
%-------------------------------------------------------------------
\subsection{Useful lemmas} \label{app:lemmas}

%-------------------------------------------------------------------
% DMF USEFUL LEMMA SUBSECTION
%-------------------------------------------------------------------
\subsubsection{Deep matrix factorization} \label{app:lemma:dmf}

For completeness, we include the following result from~\cite{arora2018optimization}, which characterizes the evolution of the product matrix under gradient flow on a deep matrix factorization:
\begin{lemma}[adaptation of Theorem~1 in~\cite{arora2018optimization}] \label{lem:prod_mat_dyn}
Let $\ell : \R^{d, d'} \to \R_{\geq 0}$ be an analytic\note{
An infinitely differentiable function $f: \D \rightarrow \R$ is \textit{analytic} if at every $x \in \D$ its Taylor series converges to it on some neighborhood of $x$ (see \cite{krantz2002primer} for further details).
Specifically, the matrix completion loss considered (Equation~\eqref{eq:loss}) is analytic.
\label{note:analytic_func}
}
loss, overparameterized by a depth~$L$ matrix factorization:
\[
\phi(W_1, \ldots, W_L) = \ell(W_L W_{L-1} \cdots W_1)
\text{\,.}
\]
Suppose we run gradient flow over the factorization:
\[
\dot{W_l}(t) := \tfrac{d}{dt} W_l(t) = -\tfrac{\partial}{\partial W_l} \phi(W_1(t), \ldots, W_L(t))
\quad,~ t \geq 0 ~,~ l = 1, \ldots, L
\text{\,,}
\]
with a balanced initialization,~\ie:
\[
W_{l + 1} (0)^\top W_{l + 1}(0) = W_l(0) W_l (0)^\top
\quad,~ l = 1, \ldots, L - 1
\text{\,.}
\]
Then, the product matrix $W_{L : 1}(t) = W_L(t) \cdots W_1(t)$ obeys the following dynamics:
\[
\dot{W}_{L : 1}(t) = -\sum\nolimits_{l = 1}^L \left[ W_{L : 1}(t) W_{L : 1} (t)^\top \right]^\frac{l - 1}{L} \cdot \nabla \ell\big(W_{L : 1}(t)\big) \cdot \left[ W_{L : 1} (t)^\top W_{L : 1}(t) \right]^\frac{L - l}{L}
\text{\,,}
\]
where~$[\,\cdot\,]^\beta$, $\beta \in \R_{\geq 0}$, stands for a power operator defined over positive semidefinite matrices (with $\beta = 0$ yielding identity by definition).\note{
As discussed in Section~\ref{sec:model}, mounting empirical and theoretical evidence suggest a close match between the predictions of gradient flow with balanced initialization, and its practical realization via gradient descent with small learning rate and near-zero initialization (\cf~\cite{arora2018optimization,arora2019convergence,ji2019gradient}).
It was recently argued in~\cite{radhakrishnan2020balancedness} that certain aspects of balancedness do not transfer from gradient flow to gradient descent.
However, the definitions in~\cite{radhakrishnan2020balancedness} deviate from the conventional ones, hence its conclusions are not applicable to standard settings.
}
\end{lemma}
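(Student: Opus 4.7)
The plan is to derive the claimed dynamics in three steps: compute the per-factor gradient flow equations, establish a conservation law guaranteeing that balancedness persists along the trajectory, and use that balancedness to recast the naive sum expression for $\dot W_{L:1}$ into the announced fractional-power form.

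First, I would apply the chain rule to $\phi(W_1,\dots,W_L)=\ell(W_L\cdots W_1)$ to obtain
\[
\tfrac{\partial \phi}{\partial W_l}=(W_L\cdots W_{l+1})^\top\,\nabla\ell(W_{L:1})\,(W_{l-1}\cdots W_1)^\top,
\]
with empty products interpreted as the identity. Differentiating $W_{L:1}=W_L\cdots W_1$ via the product rule and substituting $\dot W_l=-\tfrac{\partial \phi}{\partial W_l}$ then gives
\[
\dot W_{L:1}=-\sum_{l=1}^{L} P_l P_l^\top\,\nabla\ell(W_{L:1})\,Q_l^\top Q_l,\qquad P_l:=W_L\cdots W_{l+1},\;Q_l:=W_{l-1}\cdots W_1.
\]
The remaining work is to identify $P_lP_l^\top$ and $Q_l^\top Q_l$ with the announced fractional powers (up to a harmless re-indexing of the outer sum).

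Second, I would show that the balancedness condition is a conserved quantity, \emph{i.e.}~$\tfrac{d}{dt}(W_{l+1}^\top W_{l+1}-W_l W_l^\top)\equiv 0$ for every $l\in\{1,\dots,L-1\}$. Computing $\tfrac{d}{dt}(W_{l+1}^\top W_{l+1})=\dot W_{l+1}^\top W_{l+1}+W_{l+1}^\top\dot W_{l+1}$ and $\tfrac{d}{dt}(W_l W_l^\top)=\dot W_l W_l^\top+W_l\dot W_l^\top$, and plugging in the gradient flow equations derived above, the $\nabla\ell$-dependent terms coincide after cancelling the common partial products $(W_L\cdots W_{l+2})^\top$ and $(W_{l-1}\cdots W_1)^\top$. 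Since the difference vanishes at $t=0$ by assumption, we conclude $W_{l+1}(t)^\top W_{l+1}(t)=W_l(t)W_l(t)^\top$ for all $t\geq 0$ and all $l$. (This is precisely the argument underlying the closely related Lemma~\ref{lem:unbalance_conserve} stated earlier.)

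Third, and this is the step I expect to be the main obstacle, I would use the chain of balancedness identities to prove that $P_lP_l^\top=[W_{L:1}W_{L:1}^\top]^{(L-l)/L}$ and $Q_l^\top Q_l=[W_{L:1}^\top W_{L:1}]^{(l-1)/L}$. Considering SVDs $W_l=U_l\Sigma_l V_l^\top$, balancedness yields $V_{l+1}\Sigma_{l+1}^2 V_{l+1}^\top=U_l\Sigma_l^2 U_l^\top$, forcing the nonzero singular values of the $W_l$'s to coincide and permitting the choice $V_{l+1}=U_l$. The product then telescopes as $W_{L:1}=U_L\Sigma^L V_1^\top$ (with a common $\Sigma$ on the nonzero subspace), so $W_{L:1}W_{L:1}^\top=U_L\Sigma^{2L}U_L^\top$ and $P_lP_l^\top=U_L\Sigma^{2(L-l)}U_L^\top=[W_{L:1}W_{L:1}^\top]^{(L-l)/L}$, with a symmetric computation for $Q_l^\top Q_l$. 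The delicate issue is the non-uniqueness of SVD in the presence of zero or repeated singular values; I would address this either by restricting to a dense open set of times where singular values are simple and nonzero and then extending by continuity, or by working basis-free with spectral projectors onto eigenspaces of $W_{L:1}W_{L:1}^\top$ to bypass the choice of $U_l,V_l$ altogether. Once these identities are in hand, substituting into the sum expression for $\dot W_{L:1}$ and re-indexing $l\mapsto L-l+1$ delivers exactly the statement of the lemma.
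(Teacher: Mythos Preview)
The paper does not actually prove Lemma~\ref{lem:prod_mat_dyn}; it merely restates the result, explicitly attributing it as ``adaptation of Theorem~1 in~\cite{arora2018optimization}'' and introducing it with ``For completeness, we include the following result from~\cite{arora2018optimization}.'' Your three-step outline---chain rule for the per-factor gradients, conservation of balancedness (indeed the same computation as in Lemma~\ref{lem:unbalance_conserve}), and the SVD-telescoping identification of the partial products with fractional powers of $W_{L:1}W_{L:1}^\top$ and $W_{L:1}^\top W_{L:1}$---is correct and is essentially the argument given in the cited reference, so there is nothing to compare against in the present paper.
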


Additionally, recall from~\cite{arora2019implicit} the following characterization for the singular values of $W_{L : 1}(t)$:
\begin{lemma}[adaptation of Lemma~1 and Theorem~3 in~\cite{arora2019implicit}] \label{lem:prod_mat_sing_dyn}
Consider the setting of Lemma~\ref{lem:prod_mat_dyn} for depth $2 \leq L \in \N$.
Then, there exist analytical functions $\{ \sigma_r : [0, \infty) \rightarrow \R \}_{r=1}^{\min \{d, d'\}}, \{ \uu_r : [0,\infty) \rightarrow \R^d \}_{r=1}^{\min \{d, d'\}}$ and $\{ \vv_r : [0, \infty) \rightarrow \R^{d'} \}_{r=1}^{\min \{d, d'\}}$ such that:
\beas
&\sigma_r ( t ) \geq 0
~~ , ~~
\uu_r ( t )^\top \uu_{r'} ( t ) = \vv_r ( t )^\top \vv_{r'} ( t ) = 
\begin{cases}
1 & , r = r' \\
0 & , r \neq r' 
\end{cases}
\quad , ~
t \geq 0 
~ , ~ 
r , r' \in [\min \{ d, d' \}]&
\\[1mm]
&W_{L : 1}(t) = \sum_{r = 1}^{\min \{d, d'\}} \sigma_r(t) \uu_r (t) \vv_r (t)^\top&
\text{\,,}
\eeas
\ie~$\sigma_r (t)$ are the singular values of $W_{L : 1} (t)$, and $\uu_r (t), \vv_r (t)$ are corresponding left and right (respectively) singular vectors.
Furthermore, the singular values~$\sigma_r (t)$ evolve by:
\be
\dot{\sigma}_r (t) = -L \cdot \left ( \sigma_r^2 (t) \right )^{1 - 1 / L} \cdot \left \langle \nabla \ell \left ( W_{L : 1} (t) \right ) , \uu_r (t) \vv_r (t)^\top \right \rangle \quad ,~r = 1, \ldots, \min \{ d, d' \}
\text{\,.}
\label{eq:prod_mat_sing_value_ode}
\ee
\end{lemma}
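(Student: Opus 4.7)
The proof has two stages: first, establish the existence of analytic families $\sigma_r(t), \uu_r(t), \vv_r(t)$ that give a valid SVD of $W_{L:1}(t)$ with the stated orthonormality; second, derive the ODE~\eqref{eq:prod_mat_sing_value_ode} by differentiating $\sigma_r = \uu_r^\top W_{L:1} \vv_r$ and substituting the dynamics from Lemma~\ref{lem:prod_mat_dyn}. The first stage is the nontrivial one; once analyticity is in hand, the second is a short algebraic computation.

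For the analytic SVD, I would observe that $W_{L:1}(t)$ is analytic in $t$, since it is the image of a real-analytic vector field on a real-analytic manifold (the loss $\ell$, hence the gradient flow, being analytic). Form the Hermitian dilation $H(t) := \bigl(\begin{smallmatrix} 0 & W_{L:1}(t) \\ W_{L:1}(t)^\top & 0 \end{smallmatrix}\bigr)$, an analytic symmetric matrix-valued function. By Rellich's theorem on analytic perturbation of symmetric matrices, there exist analytic eigenvalues and a correspondingly analytic orthonormal system of eigenvectors of $H(t)$, from which one reads off the required $\sigma_r, \uu_r, \vv_r$ (the eigenvalues of $H$ come in pairs $\pm \sigma_r$ with eigenvectors built from $\uu_r$ and $\vv_r$). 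The continuation produced by Rellich's theorem may yield signed $\sigma_r$; however, the replacement $(\sigma_r, \uu_r, \vv_r) \mapsto (-\sigma_r, -\uu_r, \vv_r)$ preserves analyticity, unit norm, and mutual orthogonality, so sign flips at the (isolated) zero crossings of the analytic $\sigma_r$ allow us to enforce $\sigma_r(t) \geq 0$ without sacrificing analyticity. The ODE to be derived below is invariant under such flips, so the sign convention is immaterial to the statement.

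For the ODE, I would differentiate $\sigma_r(t) = \uu_r(t)^\top W_{L:1}(t) \vv_r(t)$ via the product rule. Orthonormality gives $\dot\uu_r^\top \uu_r = \dot\vv_r^\top \vv_r = 0$, and combined with $W_{L:1} \vv_r = \sigma_r \uu_r$ and $\uu_r^\top W_{L:1} = \sigma_r \vv_r^\top$, the two singular-vector derivative terms vanish, leaving $\dot\sigma_r = \uu_r^\top \dot W_{L:1} \vv_r$. Substituting the expression for $\dot W_{L:1}$ from Lemma~\ref{lem:prod_mat_dyn}, and using that $W_{L:1} W_{L:1}^\top = \sum_{r'} \sigma_{r'}^2 \uu_{r'} \uu_{r'}^\top$ and $W_{L:1}^\top W_{L:1} = \sum_{r'} \sigma_{r'}^2 \vv_{r'} \vv_{r'}^\top$ diagonalize in the bases $\{\uu_{r'}\}$ and $\{\vv_{r'}\}$ respectively, any fractional power $\beta \geq 0$ satisfies $\uu_r^\top [W_{L:1} W_{L:1}^\top]^\beta = (\sigma_r^2)^\beta \uu_r^\top$ and $[W_{L:1}^\top W_{L:1}]^\beta \vv_r = (\sigma_r^2)^\beta \vv_r$ (with the convention $\beta = 0$ yielding the identity). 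Plugging $\beta = (l-1)/L$ and $\beta = (L-l)/L$ into the $l$-th summand collapses it to $-(\sigma_r^2)^{(l-1)/L + (L-l)/L}\, \langle \nabla \ell(W_{L:1}), \uu_r \vv_r^\top \rangle = -(\sigma_r^2)^{1-1/L}\, \langle \nabla \ell(W_{L:1}), \uu_r \vv_r^\top \rangle$. All $L$ terms are identical, producing the factor of $L$ in~\eqref{eq:prod_mat_sing_value_ode}.

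\textbf{Main obstacle.} The hard part is the analyticity claim and the accompanying bookkeeping. Rellich-type results require analyticity in a real parameter (which is given here), and they handle eigenvalue coalescence gracefully, but two subtleties demand care: (i) translating analytic eigendata of the Hermitian dilation back into analytic singular data of $W_{L:1}(t)$ while respecting the pairing $\pm \sigma_r$; and (ii) arranging $\sigma_r \geq 0$ globally without breaking analyticity at zero crossings, handled by the sign-flip argument above together with the isolatedness of zeros of a nonzero analytic function. The remainder is direct computation, and corresponds to Lemma~1 and Theorem~3 of~\cite{arora2019implicit}.
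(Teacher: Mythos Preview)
The paper does not prove this lemma itself; it is restated from~\cite{arora2019implicit} and invoked as a black box. Your outline---Hermitian dilation of the analytic curve $W_{L:1}(t)$ followed by Rellich's theorem for the analytic SVD, then the algebraic derivation of the ODE from Lemma~\ref{lem:prod_mat_dyn}---matches the approach of that source, and your ODE computation is correct.

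There is, however, a genuine gap in your treatment of the non-negativity condition $\sigma_r(t)\geq 0$. You write that ``sign flips at the (isolated) zero crossings of the analytic $\sigma_r$ allow us to enforce $\sigma_r(t)\geq 0$ without sacrificing analyticity.'' This is false: if an analytic $\tilde\sigma_r$ has a simple zero at some $t_0$, then $|\tilde\sigma_r|$ is not even differentiable there, let alone analytic. Piecewise sign-flipping destroys analyticity precisely at the crossings; the global flip $(\sigma_r,\uu_r,\vv_r)\mapsto(-\sigma_r,-\uu_r,\vv_r)$ that you correctly note is analyticity-preserving can only be applied once, not interval-by-interval. The correct route---visible in the paper's own proof of Lemma~\ref{lem:det_does_not_change_sign}---is to first derive the ODE for the \emph{signed} analytic $\tilde\sigma_r$ coming out of Rellich (your derivation works verbatim, since it only uses $W_{L:1}\vv_r=\tilde\sigma_r\uu_r$, $W_{L:1}^\top\uu_r=\tilde\sigma_r\vv_r$, and powers of $\tilde\sigma_r^2$), and then integrate the ODE to see that $\tilde\sigma_r$ cannot change sign: for $L=2$ one gets $\tilde\sigma_r(t)=\tilde\sigma_r(0)\exp\bigl(\int_0^t g\bigr)$, and for $L>2$ an analogous closed form applies. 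Thus a single global sign choice at $t=0$ yields $\sigma_r(t)\geq 0$ for all $t$ while keeping analyticity; no piecewise surgery is needed (or possible).
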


We rely on this result to establish that for square product matrices the sign of $\det (W_{L : 1}(t))$ does not change throughout time. 
\begin{lemma} \label{lem:det_does_not_change_sign}
Consider the setting of Lemma~\ref{lem:prod_mat_dyn} with depth $2 \leq L \in \N$ and $d = d'$.
Then, the determinant of $W_{L : 1}(t)$ has the same sign as its initial value $\det (W_{L : 1}(0))$.
That is, $\det (W_{L : 1}(t))$ is identically zero if $\det (W_{L : 1}(0)) = 0$, is positive if $\det (W_{L : 1}(0)) > 0$, and is negative if $\det (W_{L : 1}(0)) < 0$.
\end{lemma}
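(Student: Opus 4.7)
The plan is to leverage the continuity of $\det(W_{L:1}(t))$ together with the singular-value ODE provided by Lemma~\ref{lem:prod_mat_sing_dyn}. Since gradient flow is continuous, so is $t \mapsto \det(W_{L:1}(t))$, and the sign can flip only if the determinant crosses zero, which happens exactly when some singular value $\sigma_r(t)$ vanishes. Thus it suffices to prove the following invariance for each of the analytic singular-value curves $\sigma_r(t)$ from Lemma~\ref{lem:prod_mat_sing_dyn}: if $\sigma_r(0) > 0$ then $\sigma_r(t) > 0$ for all $t \geq 0$, and if $\sigma_r(0) = 0$ then $\sigma_r(t) \equiv 0$.

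The key object is the scalar ODE
\[
\dot{\sigma}_r(t) = -L \cdot \sigma_r(t)^{2 - 2/L} \cdot c_r(t)
\quad, \quad
c_r(t) := \left\langle \nabla \ell(W_{L:1}(t)) ,\, \uu_r(t) \vv_r(t)^\top \right\rangle
\text{\,.}
\]
First I would note that $c_r(t)$ is bounded on every compact time interval: the loss is non-increasing along gradient flow, and for the matrix-completion loss this bounds each entry of $\nabla \ell(W_{L:1}(t))$, which by Cauchy--Schwarz (together with $\|\uu_r(t) \vv_r(t)^\top\|_F = 1$) bounds $|c_r(t)|$. Next I would observe that the right-hand side is locally Lipschitz in $\sigma_r$ at $\sigma_r = 0$: for $L = 2$ it is linear in $\sigma_r$, and for $L \geq 3$ the exponent $2-2/L > 1$ makes $\sigma \mapsto \sigma^{2-2/L}$ of class $C^1$ on $[0,\infty)$ with vanishing derivative at $0$.

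From this Lipschitz property I would invoke uniqueness of solutions to the ODE (treating $c_r$ as a given bounded continuous function of time). If $\sigma_r(0) = 0$, then the constantly-zero function solves the ODE forward in time, and by uniqueness it is the only solution, so $\sigma_r \equiv 0$. If, conversely, $\sigma_r(0) > 0$ but $\sigma_r(t_0) = 0$ at some smallest time $t_0 > 0$, then reversing time at $t_0$ gives an ODE with the same locally-Lipschitz form starting at $\sigma = 0$, whose unique solution is identically zero; this contradicts $\sigma_r > 0$ on $[0,t_0)$. Combining the two cases, every analytic singular-value curve keeps its ``zero vs.\ positive'' status for all $t \geq 0$, hence $\det(W_{L:1}(t))$ is identically zero when $\det(W_{L:1}(0)) = 0$ and is never zero otherwise. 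In the latter case, continuity then forces the sign to be preserved.

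The main subtlety I anticipate is justifying the Lipschitz-based uniqueness cleanly in the forward-singular case for $L \geq 3$: one has to argue that $\sigma_r^{2 - 2/L}$ is well-defined and $C^1$ up to $\sigma_r = 0$ (treating $\sigma_r$ as a real-valued analytic function that is \emph{a priori} allowed to take either sign along its zero crossings, but which here represents a nonnegative singular value). Once this is handled, together with the boundedness of $c_r(t)$ on compact intervals, the uniqueness argument closes the proof without further calculation.
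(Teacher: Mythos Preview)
Your proposal is correct and follows the same overall route as the paper: use the singular-value ODE from Lemma~\ref{lem:prod_mat_sing_dyn} to show each $\sigma_r(t)$ preserves its zero-versus-positive status, then conclude via continuity of $\det(W_{L:1}(t))$. The difference is purely in how the scalar ODE is handled. The paper integrates it explicitly by separation of variables---writing $g(t):=-L\,c_r(t)$, for $L=2$ one gets $\sigma_r(t)=\sigma_r(0)\exp\bigl(\int_0^t g\bigr)$, and for $L\geq 3$ one gets the closed form $\sigma_r(t)=\bigl(\sigma_r(0)^{2/L-1}+(2/L-1)\int_0^t g\bigr)^{1/(2/L-1)}$---from which positivity (or identical vanishing when $\sigma_r(0)=0$) is read off directly, with possible finite-time blow-up but never a zero crossing. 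This sidesteps your Lipschitz/uniqueness machinery and dissolves the very subtlety you flag: there is no need to analyze $\sigma^{2-2/L}$ at $\sigma=0$ or to argue by time reversal. One small correction to your write-up: the lemma is stated for arbitrary analytic $\ell$, so boundedness of $c_r(t)$ on compact intervals should be justified by continuity of $\nabla\ell(W_{L:1}(t))$ and $\uu_r,\vv_r$ alone, not via the matrix-completion gradient structure.
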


\begin{proof}
We prove an analogous claim for the singular values of $W_{L : 1}(t)$, from which the lemma readily follows.
That is, for $r \in [d]$, the singular value $\sigma_r (t)$ is identically zero if $\sigma_r (0) = 0$, and is positive if $\sigma_r (0) > 0$.

For conciseness, define $g(t) := -L \cdot \left \langle \nabla \ell \left ( W_{L : 1} (t) \right ) , \uu_r (t) \vv_r (t)^\top \right \rangle$.
Invoking Lemma~\ref{lem:prod_mat_sing_dyn}, let us solve the differential equation for  $\sigma_r (t)$.
If $L = 2$, the solution to Equation~\eqref{eq:prod_mat_sing_value_ode} is $\sigma_r (t) = \sigma_r (0) \cdot \exp \left ( \int\nolimits_{t' = 0}^t g(t') dt' \right )$.
Clearly, $\sigma_r (t)$ is either identically zero or positive according to its initial value.
If $L > 2$, Equation~\eqref{eq:prod_mat_sing_value_ode} is solved by:
\[
\sigma_r (t) = 
\begin{cases}
~~ \left ( \sigma_r (0)^{\frac{2}{L} - 1} + \left ( \frac{2}{L} - 1 \right ) \int\nolimits_{t'=0}^t g(t') dt' \right )^{\frac{1}{\frac{2}{L} - 1}} & , \sigma_r (0) > 0 \\
\qquad\qquad\qquad\qquad 0 & , \sigma_r (0) = 0
\end{cases}
\text{\,.}
\]
As before, if $\sigma_r (0) = 0$, then $\sigma_r (t) = 0$ for all $t \geq 0$.
If $\sigma_r (0) > 0$, divergence in finite time of $\sigma_r(t)$ is possible, however, its positivity is preserved until that occurs nonetheless.

Turning our attention to the determinant of $W_{L : 1} (t)$, suppose $\det (W_{L : 1}(0)) = 0$. Then, $W_{L : 1}(0)$ has a singular value which is $0$, and for all $t$ that singular value and the determinant remain $0$.
If $\det (W_{L : 1}(0)) \neq 0$, the product matrix remains full rank for all $t$.
The proof then immediately follows from the continuity of $\det (W_{L : 1}(t))$.
\end{proof}

We will also make use of the following lemmas:
\begin{lemma}[adapted from~\cite{arora2019implicit}] \label{lem:analytic_factors_and_prod_mat}
Under the setting of Lemma~\ref{lem:prod_mat_dyn}, $W_1 (t), W_2(t), \ldots, W_L (t), W_{L : 1} (t)$ and $\nabla \ell (W_{L : 1} (t))$ are analytic functions of $t$.
\end{lemma}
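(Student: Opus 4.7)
The plan is to invoke the standard theorem on analytic dependence for ordinary differential equations: if the right-hand side of an ODE system is an analytic function of the state variables, then the unique solution curve is an analytic function of the time variable (locally, and hence throughout its interval of existence by analytic continuation). This result can be found in classical references on ODE theory (e.g., Hartman's \emph{Ordinary Differential Equations}, or Coddington and Levinson).

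First I would verify that the gradient flow system satisfies the hypothesis of this theorem. Since $\ell$ is analytic by assumption (see Footnote~\ref{note:analytic_func}, which notes that the matrix completion loss in Equation~\eqref{eq:loss} is analytic) and the map $(W_1, W_2, \ldots, W_L) \mapsto W_L W_{L-1} \cdots W_1$ is a polynomial, the overparameterized objective $\phi(W_1, W_2, \ldots, W_L) = \ell(W_L W_{L-1} \cdots W_1)$ is analytic in its arguments as a composition of analytic maps. Consequently, its partial derivatives $\tfrac{\partial}{\partial W_l} \phi$ are also analytic, so the gradient flow ODE in Equation~\eqref{eq:gf} has an analytic right-hand side. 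By the aforementioned ODE theorem, each $W_l(t)$ is an analytic function of $t$ on the (maximal) interval of existence of the solution.

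Given analyticity of $W_1(t), W_2(t), \ldots, W_L(t)$, the product matrix $W_{L:1}(t) = W_L(t) W_{L-1}(t) \cdots W_1(t)$ is analytic as well, since entrywise it is a polynomial combination of analytic functions (and products and sums of analytic functions are analytic). Finally, $\nabla \ell(W_{L:1}(t))$ is analytic as the composition of the analytic map $\nabla \ell(\cdot)$ (analytic since $\ell$ is) with the analytic matrix-valued function $W_{L:1}(t)$.

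The only mildly delicate point, and the one I would be most careful about, is the citation of analytic dependence for ODEs in the right form. One way to avoid any ambiguity is to recall that the proof of this classical result proceeds by Picard iteration in the complex domain: since the right-hand side extends holomorphically to a complex neighborhood of the initial condition, the Picard iterates converge uniformly on a small complex disk around each real time $t_0$ in the interval of existence, yielding a holomorphic (hence real-analytic) solution in a neighborhood of $t_0$. Covering the interval of existence by such neighborhoods then gives analyticity throughout. No further subtlety is needed, so the proof is essentially a direct invocation of standard ODE machinery.
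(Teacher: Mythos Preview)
Your proposal is correct and follows essentially the same approach as the paper's proof: both establish analyticity of $\phi$ via closure of analytic functions under composition, invoke a standard theorem that solutions of ODEs with analytic right-hand side are analytic in time (the paper cites Theorem~1.1 in Ilyashenko's lectures, you cite Hartman or Coddington--Levinson), and then use closure properties again to conclude analyticity of $W_{L:1}(t)$ and $\nabla \ell(W_{L:1}(t))$. Your extra paragraph sketching the Picard-iteration justification is not in the paper but is harmless added detail.
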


\begin{proof}
Analytic functions are closed under summation, multiplication, and composition.
The analyticity of $\ell(\cdot)$ therefore implies that $\phi(\cdot)$ (Equation~\eqref{eq:oprm_obj}) is analytic as well.
From Theorem 1.1 in~\cite{ilyashenko2008lectures}, it then follows that under gradient flow (Equation~\eqref{eq:gf}) $W_1(t), W_2(t), \ldots , W_L(t)$ are analytic functions of~$t$. 
Lastly, the aforementioned closure properties imply that $W_{L : 1} (t)$ and $\nabla \ell ( W_{L : 1} (t) )$ are also analytic in~$t$.
\end{proof}

\begin{lemma} \label{lem:bal_prod_mat_sing_val}
For any matrices $\{ W_l \in \R^{d_l , d_{l - 1}} \}_{l = 1}^L$, with $d_0, d_1, \ldots, d_L \in \N$, that are balanced, \ie~that meet $W_{l + 1}^\top W_{l + 1} = W_l W_l^\top$ for all $l \in [L - 1]$, it holds that $\sigma_i ( W_{L : 1} ) = \sigma_i ( W_l )^L$ for all $l \in [L]$ and $i \in [ \min\{ d_L, d_0 \} ]$.
\end{lemma}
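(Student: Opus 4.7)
The plan is to prove the lemma algebraically by constructing compatible singular value decompositions (SVDs) of the factors $W_1, W_2, \ldots, W_L$ in a way that makes the SVD of the product $W_{L:1}$ transparent. For each $l \in [L]$, I will write a full SVD $W_l = U_l \Sigma_l V_l^\top$ with $U_l \in \R^{d_l \times d_l}$, $V_l \in \R^{d_{l-1} \times d_{l-1}}$ orthogonal, and $\Sigma_l \in \R^{d_l \times d_{l-1}}$ rectangular diagonal with entries arranged in non-increasing order down the diagonal.

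Next, I would exploit the balancedness condition. Substituting the SVDs into $W_{l+1}^\top W_{l+1} = W_l W_l^\top$ yields
\[
V_{l+1} \Sigma_{l+1}^\top \Sigma_{l+1} V_{l+1}^\top = U_l \Sigma_l \Sigma_l^\top U_l^\top,
\]
which is an equality of two eigendecompositions of the same positive semidefinite $d_l \times d_l$ matrix. Comparing the (ordered) eigenvalues gives $\Sigma_{l+1}^\top \Sigma_{l+1} = \Sigma_l \Sigma_l^\top$ as $d_l \times d_l$ diagonal matrices, which means that the $i$-th singular values of $W_l$ and $W_{l+1}$ coincide for every $i$. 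Iterating over $l$, all factors share a common singular-value sequence $\sigma_1 \geq \sigma_2 \geq \cdots \geq 0$. Moreover, since $U_l$ is an orthonormal eigenbasis of $W_{l+1}^\top W_{l+1}$ (with eigenvalues in non-increasing order), it is a legitimate choice for the right-singular-vector matrix of $W_{l+1}$; thus we may re-select the SVDs so that $V_{l+1} = U_l$ for each $l \in [L-1]$. (The only ambiguity occurs within eigenspaces of repeated singular values, which we absorb into the choice of $U_{l+1}$ on the matching subspace, so consistency is preserved.)

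With these aligned SVDs, the product telescopes:
\[
W_{L:1} = U_L \Sigma_L V_L^\top U_{L-1} \Sigma_{L-1} V_{L-1}^\top \cdots U_1 \Sigma_1 V_1^\top = U_L \Sigma_L \Sigma_{L-1} \cdots \Sigma_1 V_1^\top,
\]
since each interior factor $V_{l+1}^\top U_l$ collapses to the identity. The product of the rectangular diagonal matrices $\Sigma_L \Sigma_{L-1} \cdots \Sigma_1$ is itself a $d_L \times d_0$ rectangular diagonal matrix whose $i$-th diagonal entry equals $\prod_{l=1}^L \sigma_i(W_l) = \sigma_i^L$ for $i \leq \min\{d_0, d_L\}$ (and zero beyond). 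Since $U_L$ and $V_1$ are orthogonal, this display is a valid SVD of $W_{L:1}$, so $\sigma_i(W_{L:1}) = \sigma_i^L = \sigma_i(W_l)^L$ for every $l \in [L]$ and $i \in [\min\{d_L, d_0\}]$, as claimed.

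The main obstacle is the bookkeeping required to legitimately impose the alignment $V_{l+1} = U_l$ in the presence of repeated singular values: the SVD is only unique up to an orthogonal transformation within each eigenspace, so I need to argue that the alignment can be enforced layer by layer without breaking the SVD of previously fixed layers. This is handled by processing layers in order and choosing $U_{l+1}$ (which has remaining freedom on both the eigenspaces of $W_{l+1}^\top W_{l+1}$ attached to each singular value and on the zero-singular-value subspace) after $V_{l+1} := U_l$ is fixed, which never conflicts with the already-chosen $U_l$.
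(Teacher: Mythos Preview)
Your proposal is correct and follows essentially the same route as the paper: both build aligned SVDs $W_l = U_l \Sigma_l U_{l-1}^\top$ layer by layer from the balancedness condition, telescope the product to $W_{L:1} = U_L(\Sigma_L\cdots\Sigma_1)V_1^\top$, and read off the singular values. The only cosmetic difference is that the paper isolates the ``given $B^\top B = A A^\top$, one may choose an SVD of $B$ whose right singular vectors are the left singular vectors of $A$'' step into a standalone helper lemma, whereas you argue it inline via the eigendecomposition comparison; your handling of the repeated-singular-value ambiguity and the case where an intermediate $d_l$ is smaller than $\min\{d_0,d_L\}$ (forcing the extra singular values to be zero) matches the paper's bookkeeping.
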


\begin{proof}
We construct singular value decompositions for $W_1, W_2, \ldots, W_L$ in an iterative process as follows.
First, let $W_1 = U_1 \Sigma_1 V_1^\top$ be an arbitrary singular value decomposition of $W_1$, \ie~$U_1 \in \R^{d_1, d_1}, V_1 \in \R^{d_{0}, d_{0}}$ are orthogonal matrices, and $\Sigma_1 \in \R^{d_1, d_{0}}_{\geq 0}$ is rectangular-diagonal holding the singular values of~$W_1$ in non-increasing order.
Then, for $l = 2, 3, \ldots, L$, balancedness of $W_1, W_2, \ldots, W_L$ and Lemma~\ref{lem:balanced_matching_svd} imply that $W_{l}$ has a singular value decomposition $U_{l} \Sigma_{l} U_{l - 1}^\top$, where:
$U_{l} \in \R^{d_{l} d_{l}}$ is orthogonal;
$\Sigma_{l} \in \R^{d_{l}, d_{l - 1}}_{\geq 0}$ is rectangular-diagonal holding the singular values of $W_l$ in non-increasing order;
and
$\sigma_i (W_{l}) = (\Sigma_{l})_{i, i} = (\Sigma_{l - 1})_{i, i} = \sigma_i (W_{l - 1})$ for $i \in [ \min\{ d_{l}, d_{l - 1}, d_{l - 2} \} ]$, with the remaining diagonal entries of $\Sigma_l$ and $\Sigma_{l - 1}$ being $0$.
With $\{ U_l \}_{l = 1}^L, \{ \Sigma_l \}_{l = 1}^L, V_1$ as described above, the product matrix can be written as:
\[
W_{L : 1} = \prod\nolimits_{1}^{l = L} W_l = \left [ \prod\nolimits_{2}^{l = L} U_{l} \Sigma_l U_{l - 1}^\top \right ] \cdot U_1 \Sigma_1 V_{1}^\top = U_L \cdot \prod\nolimits_{1}^{l = L} \Sigma_l \cdot V_1^\top
\text{\,.}
\]
That is, $U_L \cdot \prod_{1}^{l = L} \Sigma_l \cdot V_1^\top$ is a singular value decomposition of $W_{L : 1}$, and $\sigma_i ( W_{L : 1} ) = ( \prod_{l = 1}^L \Sigma_l )_{i, i}$ for all $i \in [ \min\{ d_L, d_0 \} ]$.

Fix $l \in [L]$ and $i \in [ \min\{ d_L, d_0 \} ]$.
If $i > \min\{ d_{l'} \}_{l' = 0}^L$, for any $l' \in [L]$ with $\min \{d_{l'}, d_{l' - 1} \} \geq i$, by our construction it holds that $(\Sigma_{l'})_{i, i} = 0$.
Hence, recalling that by convention $\sigma_i (W_{l}) = 0$ if $i > \min \{d_{l}, d_{l - 1} \}$, we may conclude that $\sigma_i ( W_{L : 1} ) = \sigma_i ( W_l )^L = 0$.
Otherwise, if $i \leq \min\{ d_{l'} \}_{l' = 0}^L$, the fact that $\sigma_i (W_{l'}) = (\Sigma_{l'})_{i, i} = (\Sigma_{l' - 1})_{i, i} = \sigma_i (W_{l' - 1})$ for all $l' = 2, 3, \ldots, L$ implies that $\sigma_i ( W_{L : 1} ) = \sigma_i ( W_l )^L$.
\end{proof}

%-------------------------------------------------------------------
% TECHNICAL USEFUL LEMMAS SUBSECTION
%-------------------------------------------------------------------
\subsubsection{Technical} \label{app:lemma:technical}

Included below are a few technical lemmas used in our analyses.
\begin{lemma} \label{lem:entropy_sqrt_bound}
Let $h: [0, 1] \rightarrow \R$ be the binary entropy function $h(p) := -p \cdot \ln (p) - (1-p) \ln (1-p)$, where by convention $0 \cdot \ln (0) = 0$.
Then, for all $p \in [0,1]$:
\[
	h(p) \leq 2 \sqrt{p} \text{\,.}
\]
\end{lemma}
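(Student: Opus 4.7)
The plan is to reduce the inequality to a one-variable calculus problem by first replacing the more delicate term $-(1-p)\ln(1-p)$ with a linear upper bound. Specifically, I would start from the elementary fact that $\ln(1-x) \geq -\frac{x}{1-x}$ for all $x \in [0,1)$, which follows immediately from the power series $\ln(1-x) = -\sum_{k\geq 1} x^k/k \geq -\sum_{k\geq 1} x^k = -x/(1-x)$. Multiplying through by $-(1-p)$ yields $-(1-p)\ln(1-p) \leq p$ for $p \in [0,1)$, and by the $0 \cdot \ln 0 = 0$ convention this also holds at $p = 1$. Substituting into the definition of $h$ gives
\[
h(p) \;\leq\; -p\ln p + p \;=\; p\,(1 - \ln p)
\quad , ~ p \in [0,1]
\text{\,.}
\]

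It then suffices to show $p\,(1-\ln p) \leq 2\sqrt{p}$ on $[0,1]$, or equivalently (dividing by $\sqrt{p}$ for $p > 0$) that $\phi(p) := \sqrt{p}\,(1-\ln p) \leq 2$. The case $p = 0$ is trivial since both sides vanish. For $p \in (0,1]$, a direct computation gives $\phi'(p) = -(1+\ln p)/(2\sqrt{p})$, so $\phi$ has a unique critical point on $(0,1]$ at $p = 1/e$, which is a maximum. Evaluating, $\phi(1/e) = e^{-1/2} \cdot 2 = 2/\sqrt{e} < 2$, so $\phi(p) \leq 2/\sqrt{e} \leq 2$ throughout, completing the argument.

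There is no real obstacle here; the only subtlety is making sure the endpoint behavior at $p = 0$ and $p = 1$ (where logarithms blow up) is handled by the $0 \cdot \ln 0 = 0$ convention before invoking derivatives, and that the inequality $-(1-p)\ln(1-p) \leq p$ is applied in the correct direction. The route through $p\,(1-\ln p)$ is chosen because a direct attempt to bound $h(p) - 2\sqrt{p}$ by monotonicity is awkward near $p = 0$ (where $h'$ blows up logarithmically), whereas the two-step bound cleanly separates the two singular terms.
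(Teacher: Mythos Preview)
Your proof is correct, and it takes a genuinely different route from the paper's. The paper first establishes the sharper, symmetric inequality $h(p)\le 2\sqrt{p(1-p)}$ by analyzing the ratio $f(p)=h(p)^2/\bigl(p(1-p)\bigr)$: it computes $f'(p)$ explicitly, shows via a concavity argument on $g(p)=-p\ln p$ that $g(p)\ge g(1-p)$ on $(0,\tfrac12)$, deduces that $f$ is maximized at $p=\tfrac12$, and only then relaxes $\sqrt{p(1-p)}\le\sqrt{p}$. By contrast, you break the symmetry immediately: you bound $-(1-p)\ln(1-p)\le p$ with a one-line series argument, reduce to the single-term function $\phi(p)=\sqrt{p}\,(1-\ln p)$, and optimize that directly. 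Your approach is shorter and more elementary, and in fact yields the slightly better constant $h(p)\le (2/\sqrt{e})\sqrt{p}$; the paper's approach, on the other hand, produces the symmetric bound $h(p)\le 2\ln 2\,\sqrt{p(1-p)}$, which is tighter near $p=1$ and may be of independent interest. Both arguments handle the endpoints $p\in\{0,1\}$ via the $0\cdot\ln 0=0$ convention in the same way.
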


\begin{proof}
We present a tighter inequality, $h(p) \leq 2 \sqrt{p (1-p)}$, from which the proof immediately follows since $2 \sqrt{p(1-p)} \leq 2 \sqrt{p}$ for $p \in [0,1]$.

Define the function $f(p) := \frac{h(p)^2}{p(1-p)}$ over the open interval $(0, 1)$.
Differentiating it with respect to $p$ we have:
\[
\frac{d}{dp} f(p) = \frac{ \left (-p \cdot \ln (p) \right )^2 - \left ( -(1-p) \cdot \ln (1-p) \right )^2}{p^2 (1-p)^2}.
\]
Introducing $g(p) := -p \cdot \ln (p)$, we show that $g(p)^2 > g(1-p)^2$ for all $p \in (0, \frac{1}{2})$.
It is easily verified that $g(p) - g(1 - p)$ is concave on the interval $(0, \frac{1}{2})$ (second derivative is negative).
Since for $p = 0$ and $p = 1 / 2$ we have exactly $g(p) - g(1-p) = 0$, it holds that $g(p) - g(1-p) \geq 0$ and $g(p)^2 \geq g(1-p)^2$ for all $p \in (0, \frac{1}{2})$.
Noticing $\frac{d}{dp} f(p) = \left ( g(p)^2 - g(1-p)^2 \right ) / p^2 (1-p)^2$, it follows that $f (\cdot)$ is monotonically non-decreasing on $(0, \frac{1}{2})$.
Due to the fact that $f(p) = f(1-p)$, it is non-increasing on $(\frac{1}{2}, 1)$, and attains its maximal value over $(0, 1)$ at $p = \frac{1}{2}$.
Putting it all together, for $p \in (0, 1)$ we have:
\[
	h(p) \leq \sqrt{p (1 - p)} \cdot \sqrt{f(1 / 2)} = 2 \ln (2) \cdot \sqrt{p (1 - p)} \leq 2 \sqrt{p (1 - p)} \text{\,,}
\]
and for $p = 0,1$ there is exact equality, completing the proof.
\end{proof}

\begin{lemma} \label{lem:analytic_func_all_deriv_eq}
Let $f,g: [0, \infty) \rightarrow \R$ be real analytic functions (see Footnote~\ref{note:analytic_func}) such that $f^{(k)}(0) = g^{(k)}(0)$ for all $k \in \N \cup \{ 0 \}$.
Then, $f(t) = g(t)$ for all $t \geq 0$.
\end{lemma}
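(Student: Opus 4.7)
The plan is to reduce to a single function by setting $h := f - g$, which is itself real analytic on $[0,\infty)$ (analytic functions form an algebra) and satisfies $h^{(k)}(0) = 0$ for every $k \in \N \cup \{0\}$. The goal then becomes showing $h \equiv 0$ on $[0,\infty)$.

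The main step is a standard identity-theorem argument based on connectedness. Consider the set
\[
S := \{ t \in [0,\infty) : h^{(k)}(t) = 0 \text{ for all } k \in \N \cup \{0\} \}.
\]
First, $S$ is closed in $[0,\infty)$ since it is an intersection of the closed sets $\{t : h^{(k)}(t) = 0\}$ (each $h^{(k)}$ is continuous, being analytic). Second, I would argue that $S$ is open in $[0,\infty)$: if $t_0 \in S$, then by analyticity at $t_0$ there is a neighborhood $U$ of $t_0$ in $[0,\infty)$ on which $h$ coincides with its Taylor series at $t_0$, namely $h(t) = \sum_{k \geq 0} \tfrac{h^{(k)}(t_0)}{k!}(t - t_0)^k = 0$; consequently all derivatives of $h$ vanish throughout $U$, so $U \subseteq S$. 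Since $0 \in S$ by hypothesis and $[0,\infty)$ is connected, we conclude $S = [0,\infty)$, which in particular gives $h(t) = 0$ for every $t \geq 0$, i.e.\ $f(t) = g(t)$.

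The only real subtlety is the interpretation of analyticity at the boundary point $t = 0$, which in this paper (see Footnote~\ref{note:analytic_func}) is taken in the sense that the Taylor series at every point converges to the function on some neighborhood. This is exactly what is needed to carry out the openness step at $t_0 = 0$ (using a one-sided neighborhood) and at any interior $t_0 > 0$; no additional work is required. Given this, the argument above is essentially mechanical and I do not anticipate a significant obstacle.
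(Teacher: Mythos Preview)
Your proof is correct and follows essentially the same approach as the paper: define $h := f - g$, observe it is analytic with all derivatives vanishing at $0$, and invoke the identity theorem to conclude $h \equiv 0$ on the connected domain $[0,\infty)$. The only difference is that the paper simply cites the identity theorem (``an analytic function with all zero derivatives at a point is constant on the corresponding connected component''), whereas you spell out the standard open--closed connectedness argument that proves it.
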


\begin{proof}
Define the function $h(t) := f(t) - g(t)$.
Since analytic functions are closed under subtraction, $h (\cdot)$ is analytic as well.
An analytic function with all zero derivatives at a point is constant on the corresponding connected component.
Noticing that $h^{(k)}(0) = 0$ for all $k \in \N \cup \{ 0 \}$, we may conclude that $h(t) = 0$ and $f(t) = g(t)$ for all $t \geq 0$.
\end{proof}

\begin{lemma} \label{lem:A_TBA_trace_bound}
Let $A,B \in \R^{d, d}$, and suppose $B$ is positive semidefinite.
Then,
\[
	\Tr (A^\top BA) \geq \lambda_1 (B) \cdot \sigma_d (A)^2 \text{\,.}	
\]
\end{lemma}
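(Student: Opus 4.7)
The plan is to exploit the spectral decomposition of $B$ and the variational characterization of the smallest singular value of $A$. Since $B$ is positive semidefinite, it admits an eigendecomposition $B = U \Lambda U^\top$, where $U \in \R^{d,d}$ is orthogonal with columns $\uu_1, \uu_2, \ldots, \uu_d$, and $\Lambda = \diag(\lambda_1(B), \lambda_2(B), \ldots, \lambda_d(B))$ with all $\lambda_i(B) \geq 0$.

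Using the cyclic property of the trace, I would rewrite
\[
\Tr(A^\top B A) = \Tr(B A A^\top) = \Tr(\Lambda \cdot U^\top A A^\top U) = \sum_{i=1}^d \lambda_i(B) \cdot \uu_i^\top A A^\top \uu_i = \sum_{i=1}^d \lambda_i(B) \cdot \norm{A^\top \uu_i}_2^2.
\]
The next step is to lower bound each term $\norm{A^\top \uu_i}_2^2$. Because $\uu_i$ is a unit vector and $A$ and $A^\top$ share the same singular values, the standard variational bound gives $\norm{A^\top \uu_i}_2^2 \geq \sigma_d(A)^2 \cdot \norm{\uu_i}_2^2 = \sigma_d(A)^2$ for every $i \in [d]$.

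Finally, since all eigenvalues $\lambda_i(B)$ are non-negative, it follows that $\sum_{i=1}^d \lambda_i(B) \geq \lambda_1(B)$, so
\[
\Tr(A^\top B A) \geq \sigma_d(A)^2 \cdot \sum_{i=1}^d \lambda_i(B) \geq \lambda_1(B) \cdot \sigma_d(A)^2,
\]
yielding the desired inequality. There is no genuine obstacle here; the only point that requires minor care is the direction of the singular value inequality (smallest singular value lower-bounds the norm of $A^\top v$ for unit $v$) and the use of positive semidefiniteness of $B$ to drop the remaining non-negative eigenvalues in the final step.
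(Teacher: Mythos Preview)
Your proof is correct, and in fact cleaner than the paper's. The paper proceeds differently: it first bounds $\Tr(A^\top BA) \geq \lambda_1(A^\top BA)$ (using that $A^\top BA$ is positive semidefinite), then lower-bounds $\lambda_1(A^\top BA)$ by evaluating the quadratic form $y^\top A^\top BA y$ at a specific test vector $y = V U^\top \oo_1$ built from the SVD $A = U\Sigma V^\top$ and the leading eigenvector $\oo_1$ of $B$; this yields $\lambda_1(B)(\oo_1^\top U\Sigma U^\top \oo_1)^2 \geq \lambda_1(B)\sigma_d(A)^2$. Your route --- expanding $\Tr(A^\top BA) = \sum_i \lambda_i(B)\,\norm{A^\top \uu_i}_2^2$ directly via the eigendecomposition of $B$ and then bounding each summand --- avoids the SVD of $A$ and the test-vector construction entirely, and along the way actually gives the stronger intermediate bound $\Tr(A^\top BA) \geq \sigma_d(A)^2 \cdot \Tr(B)$. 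Both approaches rely on the same two ingredients (positive semidefiniteness of $B$ and the variational characterization of $\sigma_d(A)$), but yours uses them more directly.
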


\begin{proof}
The matrix $A^\top BA$ is positive semidefinite since for all $\y \in \R^d$ we have:
\[
	\y^\top  A^\top BA \y = (A \y)^\top B(A \y) \geq 0 \text{\,.}
\]
Therefore, $\Tr (A^\top BA) \geq \lambda_{1} (A^\top BA)$. 
Let $B = ODO^\top$ be an orthogonal eigenvalue decomposition of $B$, \ie~$O \in \R^{d,d}$ is an orthogonal matrix with columns $\{\oo_i\}_{i=1}^d$ and $D \in \R^{d,d}$ is diagonal holding the non-negative eigenvalues of $B$.
Additionally, let $A = U\Sigma V^\top $ be a singular value decomposition of $A$, where $U,V \in \R^{d,d}$ are orthogonal matrices, and $\Sigma \in \R^{d,d}_{\geq 0}$ is diagonal holding the singular values of $A$.
For any unit vector (with respect to the $\ell_2$ norm) $\y \in \R^d$ it holds that:
\[
	\y^\top  A^\top BA \y = \sum_{i=1}^d \lambda_i (B) (\oo_i^\top  A \y)^2 \geq \lambda_{1} (B) (\oo_{1}^\top  A \y)^2 \text{\,.}
\]
Replacing $A$ with its singular value decomposition and choosing $\y = VU^\top  \oo_1$:
\[
	\lambda_{1} (B) (\oo_{1}^\top  A \y)^2  = \lambda_{1} (B) (\oo_{1}^\top  U\Sigma U^\top  \oo_1)^2 \text{\,.}
\]
Recalling that for any unit vector the quadratic form of a symmetric matrix is bounded by the maximal and minimal eigenvalues completes the proof:
\[
	\Tr (A^\top BA) \geq \lambda_{1} (A^\top BA) \geq \lambda_{1} (B) (\oo_{1}^\top  U\Sigma U^\top  \oo_1)^2 \geq \lambda_{1} (B) \cdot \sigma_{d} (A)^2 \text{\,.}
\]
\end{proof}

\begin{lemma} \label{lem:matrix_mul_fro_dist}
For any $\{ A_l \in \R^{d_l, d_{l - 1} }\}_{l = 1}^L$ and $\{ B_l \in \R^{d_l, d_{l - 1}} \}_{l = 1}^L$, where $d_0, d_1, \ldots, d_L \in \N$, it holds that:
\[
\norm{ \prod_{1}^{l = L} A_l - \prod_{1}^{l = L} B_l }_F \leq \sum_{l = 1}^L  \norm{ A_l - B_l }_F \cdot \prod_{ r \neq l } \max \{ \norm{A_r}_F, \norm{B_r}_F \}
\text{\,.}
\]
\end{lemma}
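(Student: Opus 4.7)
The plan is to prove the bound via a standard telescoping decomposition combined with submultiplicativity of the Frobenius norm. Write
\[
\prod_{1}^{l=L} A_l - \prod_{1}^{l=L} B_l \;=\; \sum_{l=1}^{L} \Big( \prod_{l+1}^{r=L} B_r \Big) \, (A_l - B_l) \, \Big( \prod_{1}^{r=l-1} A_r \Big),
\]
which is the usual hybrid telescoping: the $l$-th summand replaces $A_l$ by $B_l$ while keeping factors with index $> l$ as $B$'s and factors with index $< l$ as $A$'s. Adjacent summands cancel to leave exactly $\prod A_l - \prod B_l$. This identity holds regardless of the matrix dimensions, provided all products are well-defined, which is guaranteed by the assumption on the $d_l$'s.

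Next, take the Frobenius norm of both sides, apply the triangle inequality over the $L$ summands, and then invoke the submultiplicativity of the Frobenius norm (i.e., $\|XY\|_F \leq \|X\|_F \cdot \|Y\|_F$) to each summand:
\[
\Big\| \Big( \prod_{l+1}^{r=L} B_r \Big) (A_l - B_l) \Big( \prod_{1}^{r=l-1} A_r \Big) \Big\|_F \leq \|A_l - B_l\|_F \cdot \prod_{r > l} \|B_r\|_F \cdot \prod_{r < l} \|A_r\|_F.
\]
Finally, for each $r \neq l$, bound $\|A_r\|_F$ and $\|B_r\|_F$ by $\max\{\|A_r\|_F, \|B_r\|_F\}$, which gives
\[
\prod_{r > l} \|B_r\|_F \cdot \prod_{r < l} \|A_r\|_F \;\leq\; \prod_{r \neq l} \max\{\|A_r\|_F, \|B_r\|_F\},
\]
and summing over $l$ yields the stated inequality.

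There is no real obstacle here; the only things to double-check are the telescoping identity (a straightforward induction on $L$, with the base case $L=1$ giving $A_1 - B_1$ and the inductive step peeling off either the leftmost or rightmost factor), and the fact that Frobenius submultiplicativity $\|XY\|_F \leq \|X\|_F \|Y\|_F$ applies to rectangular matrices of compatible dimensions (it does, since it follows from Cauchy--Schwarz applied to the rows of $X$ and columns of $Y$). Empty products are interpreted as identity matrices of the appropriate size, consistent with the convention fixed in Subappendix~\ref{app:proofs:notation}.
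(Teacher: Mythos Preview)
Your proof is correct and takes essentially the same approach as the paper: both rely on the same telescoping decomposition, triangle inequality, and Frobenius submultiplicativity, with the only cosmetic difference being that the paper unrolls the telescoping via an explicit induction on $L$ (peeling off the leftmost factor at each step) whereas you write the full telescoping sum at once.
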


\begin{proof}
The proof is by induction over $L \in \N$.
For $L = 1$, the claim is trivial.
Assuming it holds for $L - 1 \geq 1$, we prove that it holds for $L$ as well:
\[
\begin{split}
\norm{ \prod_{1}^{l = L} A_l - \prod_{1}^{l = L} B_l  }_F & = \norm{ \prod_{1}^{l = L} A_l - B_L \cdot \prod_{1}^{l = L - 1} A_l + B_L \cdot \prod_{1}^{l = L - 1} A_l - \prod_{1}^{l = L} B_l }_F \\
& \leq \norm{ A_L - B_L }_F \cdot \norm{  \prod_{1}^{l = L - 1} A_l  }_F + \norm{ B_L }_F \cdot \norm{ \prod_{1}^{l = L - 1} A_l - \prod_{1}^{l = L - 1 } B_l }_F \\
& \leq \norm{ A_L - B_L }_F \cdot \prod_{ r \neq L } \max \{ \norm{A_r}_F, \norm{B_r}_F \}  \\
& \hspace{3.5mm} + \max \{ \norm{ A_L}_F, \norm{B_L}_F \} \cdot \norm{ \prod_{1}^{l = L - 1} A_l - \prod_{1}^{l = L - 1 } B_l }_F
\text{\,,}
\end{split}
\]
The proof concludes by applying the inductive assumption for $L - 1$.
\end{proof}

\begin{lemma} \label{lem:balanced_matching_svd}
For $B \in \R^{d, k}, A \in \R^{k, d'}$ such that $B^\top B = A A^\top$, let $U_A \Sigma_A V_A^\top$ be a singular value decomposition of $A$, \ie~$U_A \in \R^{k, k}, V_A \in \R^{d', d'}$ are orthogonal matrices, and $\Sigma_A \in \R^{k, d'}_{\geq 0}$ is rectangular-diagonal holding the singular values of~$A$ in non-increasing order.
Then, there exist an orthogonal $U_B \in \R^{d, d}$ and a rectangular-diagonal $\Sigma_B \in \R^{d, k}_{\geq 0}$ such that:
\begin{itemize}
	\item $(\Sigma_{B})_{i, i} = (\Sigma_{A})_{i, i}$\, for any $i \in [ \min\{ d, k, d' \} ]$, with the remaining diagonal entries of $\Sigma_A$ and $\Sigma_B$ being $0$; and
	\item $B = U_B \Sigma_B U_A^\top$, \ie~$U_B \Sigma_B U_A^\top$ is a singular value decomposition of $B$.
\end{itemize}
\end{lemma}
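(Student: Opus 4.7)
\textbf{Proof plan for Lemma~\ref{lem:balanced_matching_svd}.} The plan is to read off the right singular vectors of $B$ directly from the hypothesis $B^\top B = A A^\top$, and then construct $U_B$ and $\Sigma_B$ column-by-column. First I would substitute the given singular value decomposition of $A$ into the right-hand side to obtain $B^\top B = A A^\top = U_A \Sigma_A \Sigma_A^\top U_A^\top$, which is an eigendecomposition of $B^\top B$ with orthogonal eigenbasis $U_A$ and eigenvalues equal to the diagonal entries of the $k \times k$ diagonal matrix $\Sigma_A \Sigma_A^\top$. Since $B \in \R^{d,k}$ we have $\mathrm{rank}(B^\top B) = \mathrm{rank}(B) \leq \min\{d,k\}$, so the equality of Gram matrices forces $(\Sigma_A)_{i,i} = 0$ for every $i > \min\{d,k\}$. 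Defining $\Sigma_B \in \R^{d,k}_{\geq 0}$ to be rectangular-diagonal with $(\Sigma_B)_{i,i} := (\Sigma_A)_{i,i}$ for $i \leq \min\{d,k,d'\}$ and zero elsewhere is therefore consistent, and automatically achieves the first bullet of the lemma.

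Next I would introduce the auxiliary matrix $C := B U_A \in \R^{d,k}$ and compute its Gram matrix:
\[
C^\top C \;=\; U_A^\top B^\top B U_A \;=\; U_A^\top U_A \Sigma_A \Sigma_A^\top U_A^\top U_A \;=\; \Sigma_A \Sigma_A^\top \;=\; \Sigma_B^\top \Sigma_B.
\]
Thus the columns of $C$ are pairwise orthogonal, and the squared norm of the $i$-th column equals $(\Sigma_B)_{i,i}^2$. In particular, columns of $C$ indexed by $i$ with $(\Sigma_B)_{i,i} = 0$ (including all $i > d$, when $k > d$) vanish identically, while the remaining columns, after normalization by $(\Sigma_B)_{i,i}$, form an orthonormal set of at most $\min\{d,k,d'\} \leq d$ unit vectors in $\R^d$.

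I would then construct $U_B \in \R^{d,d}$ by taking, for every index $i \leq d$ with $(\Sigma_B)_{i,i} > 0$, the $i$-th column of $U_B$ to be $C_{:,i}/(\Sigma_B)_{i,i}$, and extending the remaining columns to any completion to an orthonormal basis of $\R^d$ (this completion is possible precisely because the partially constructed columns are orthonormal and number at most~$d$). By construction, the $i$-th column of $U_B \Sigma_B$ equals $C_{:,i}$ whenever $(\Sigma_B)_{i,i} > 0$, and equals zero otherwise; since the zero-column case coincides with the vanishing columns of $C$ identified above, we obtain $U_B \Sigma_B = C$. Multiplying on the right by $U_A^\top$ gives $U_B \Sigma_B U_A^\top = C U_A^\top = B U_A U_A^\top = B$, which is the second bullet.

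The only subtle point, and the one I would be most careful with, is the bookkeeping between the three index ranges $\min\{d,k\}$, $\min\{k,d'\}$ and $\min\{d,k,d'\}$ when $d$, $k$ and $d'$ are not ordered: the rank bound above ensures that all ``large'' diagonal indices of $\Sigma_A$ are forced to zero, and one must verify that this is exactly what makes the definition of $\Sigma_B$ well-posed and the zero-column matching in the last step work uniformly in the three possible orderings of the dimensions. Beyond this indexing check, the argument is a routine manipulation of the SVD.
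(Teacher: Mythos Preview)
Your proposal is correct and is essentially the same argument as the paper's: the paper defines $\uu_B^{(i)} := (\Sigma_A)_{i,i}^{-1} B \uu_A^{(i)}$ for the nonzero singular values, verifies orthonormality via $B^\top B = U_A \Sigma_A \Sigma_A^\top U_A^\top$, completes to an orthonormal basis of $\R^d$, and then checks $B = U_B \Sigma_B U_A^\top$ by observing that the columns $\uu_A^{(i)}$ with zero singular value lie in $\ker(B)$. Your matrix $C = B U_A$ packages the same column-by-column construction, and your Gram-matrix identity $C^\top C = \Sigma_B^\top \Sigma_B$ is exactly the orthonormality and kernel verification done in one line; the index bookkeeping you flag is handled in the paper implicitly via the common rank $r = \mathrm{rank}(A) = \mathrm{rank}(B)$.
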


\begin{proof}
For any matrix $W$ it holds that $\rank (W) = \rank (W W^\top) = \rank (W^\top W)$.
Therefore, the fact that $B^\top B = A A^\top$ implies $\rank (A) = \rank (B)$.
Denote $r := \rank (A)$.
For $i \in [k]$, let $\uu_{A}^{(i)} \in \R^k$ be the $i$'th column vector of $U_A$.
Furthermore, for $i \in [r]$ define $\uu_B^{(i)} := \frac{1}{(\Sigma_A)_{i, i}} B \uu_A^{(i)} \in \R^d$.
We claim that $\uu_B^{(1)}, \uu_B^{(2)}, \ldots, \uu_B^{(r)}$ form an orthonormal set.
Indeed, for any $i, j \in [r]$:
\[
\inprod{ \uu_B^{(i)} }{ \uu_B^{(j)} } = \frac{1}{(\Sigma_A)_{i, i} (\Sigma_A)_{j, j} } \left ( \uu_A^{(i)} \right )^\top B^\top B \uu_A^{(j)} = \frac{ (\Sigma_A)_{j, j} }{(\Sigma_A)_{i, i}} \left ( \uu_A^{(i)} \right )^\top \uu_A^{(j)} = 
\begin{cases}
	1 & \text{, if $i = j$}
	\vspace{2mm} \\
	0 & \text{, if $i \neq j$}
\end{cases}
\text{\,,}
\]
where the second equality follows from $B^\top B \uu_A^{(j)} = A A^\top \uu_A^{(j)} = U_A \Sigma_A \Sigma_A^\top U_A^\top \uu_A^{(j)} = (\Sigma_A)_{j, j}^2 \uu_A^{(j)} $.
If $r < d$, we let $\uu_B^{(r + 1)}, \uu_B^{(r + 2)}, \ldots, \uu_B^{(d)}$ be an arbitrary completion of $\uu_B^{(1)}, \uu_B^{(2)}, \ldots, \uu_B^{(r)}$ to an orthonormal basis for $\R^d$.
Define $U_B \in \R^{d, d}$ such that its $i$'th column vector is $\uu_B^{(i)}$, and let $\Sigma_B \in \R^{d, k}_{\geq 0}$ be rectangular-diagonal with $(\Sigma_B)_{i, i} = (\Sigma_A)_{i, i}$ for $i \in [ r ]$, and $(\Sigma_B)_{i, i} = 0$ for $i > r$.
Since both $U_B$ and $U_A$ are orthogonal matrices, the proof concludes by showing that $B = U_B \Sigma_B U_A^\top$.
By the definitions of $U_B$ and $\Sigma_B$, we may write:
\[
U_B \Sigma_B U_A^\top = \sum\nolimits_{i = 1}^r  (\Sigma_A)_{i, i} \uu_B^{(i)} \left ( \uu_A^{(i)} \right )^\top = B \sum\nolimits_{i = 1}^r  \uu_A^{(i)} \left ( \uu_A^{(i)} \right )^\top
\text{\,.}
\]
Notice that $U_A \Sigma_A \Sigma_A^\top U_A^\top$ is an eigenvalue decomposition of~$A A^\top$, and hence of~$B^\top B$.
Therefore, $\ker (B) = \ker (B^\top B) = \text{span} \big \{ \uu_A^{(r + 1)},  \uu_A^{(r + 2)}, \ldots,  \uu_A^{(k)} \big \}$.
With this in hand, for any vector $\x \in \R^{k}$ we have that:
\[
B \x = B U_A U_A^\top \x = B \sum_{i = 1}^r  \uu_A^{(i)} \left ( \uu_A^{(i)} \right )^\top \x
\text{\,.}
\]
Therefore, we may conclude $B = B \sum_{i = 1}^r  \uu_A^{(i)} \left ( \uu_A^{(i)} \right )^\top = U_B \Sigma_B U_A^\top$.
\end{proof}

\begin{lemma} \label{lem:ode_bounded_in_interval}
Let $g: [0, \infty) \rightarrow \R$ be a continuously differentiable function, and fix some $t > 0$.
If $g(t) < g(0)$, then for any $a \in (g(t), g(0)]$ there exists $t_a \in [0, t)$ such that $g ( t_a ) = a$ and $\dot{g} ( t_a ) \leq 0$.
Similarly, if $g(t) > g(0)$, then for any $a \in [g(0), g(t))$ there exists $t_a \in [0, t)$ such that $g ( t_a ) = a$ and $\dot{g} ( t_a ) \geq 0$.
\end{lemma}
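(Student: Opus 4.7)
I will treat the first case (where $g(t) < g(0)$ and $a \in (g(t), g(0)]$); the second case follows by the symmetric argument applied to $-g$. The plan is to pick $t_a$ to be the \emph{last} time before $t$ at which $g$ equals the level~$a$, and then argue that just to the right of $t_a$ the function must lie strictly below~$a$, forcing $\dot g(t_a) \le 0$.

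Concretely, I would define the level set $S_a := \{\, s \in [0,t] : g(s) = a \,\}$. It is nonempty: if $a = g(0)$ then $0 \in S_a$, and if $a < g(0)$ then since $g(t) < a < g(0)$ the intermediate value theorem produces some $s \in (0,t)$ with $g(s) = a$. By continuity of $g$, the set $S_a$ is closed in $[0,t]$, and being bounded it has a maximum; let $t_a := \max S_a$, so $g(t_a) = a$. Since $g(t) < a = g(t_a)$, we have $t_a \neq t$, hence $t_a \in [0,t)$ as required.

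Next I would show that $g(s) < a$ for every $s \in (t_a, t]$. First, $g(s) \neq a$ on this interval by definition of $t_a = \max S_a$. Second, suppose for contradiction that $g(s^\star) > a$ for some $s^\star \in (t_a, t)$; then from $g(s^\star) > a > g(t)$ another application of the intermediate value theorem (on $[s^\star, t]$) yields a point $s' \in (s^\star, t)$ with $g(s') = a$, contradicting the maximality of $t_a$. Consequently, for all $s \in (t_a, t]$,
\[
\frac{g(s) - g(t_a)}{s - t_a} = \frac{g(s) - a}{s - t_a} < 0,
\]
and passing to the limit $s \to t_a^+$ (which exists and equals $\dot g(t_a)$ by continuous differentiability) gives $\dot g(t_a) \le 0$, completing the argument.

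The only subtle point is ensuring that $g$ lies \emph{strictly below} $a$ to the right of $t_a$ (rather than oscillating across~$a$); this is where the IVT-based contradiction using $g(t) < a$ is essential. Everything else is a routine use of compactness of level sets and the one-sided derivative. The symmetric case, where $g(t) > g(0)$, is handled by applying the above reasoning to $\tilde g := -g$, which satisfies $\tilde g(t) < \tilde g(0)$ and for which level $-a \in (\tilde g(t), \tilde g(0)]$ corresponds to the original level $a \in [g(0), g(t))$.
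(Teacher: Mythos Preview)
Your proof is correct and follows essentially the same approach as the paper: both take $t_a$ to be the \emph{last} time in $[0,t]$ at which $g$ hits the level $a$, and then leverage maximality together with the intermediate value theorem to rule out $\dot g(t_a) > 0$. The only cosmetic difference is that the paper argues by contradiction (if $\dot g(t_a) > 0$ then $g$ is locally increasing, so it must cross $a$ again before reaching $g(t) < a$), whereas you argue directly via the one-sided difference quotient; both routes are standard and equally valid.
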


\begin{proof}
Let $t > 0$ be such that $g(t) < g(0)$, and fix some $a \in (g(t), g(0)]$.
Define $t_a := \max \{t' : t'~\leq~t \text{ and } g(t') = a \}$.
Continuity of~$g (\cdot)$, along with the intermediate value theorem, imply that $t_a$ is well defined (maximum of a closed non-empty set bounded from above).
Assume by contradiction that $\dot{g}(t_a) > 0$.
Then, $g (\cdot)$~is monotonically increasing on some neighborhood of $t_a$.
Thus, by the intermediate value theorem, there exists $t' \in (t_a , t)$ such that $g ( t' ) = a$, in contradiction to the definition of $t_a$.
An identical argument establishes the analogous result for the case $g(t) > g(0)$.
\end{proof}

\begin{lemma} \label{lem:bounded_integral_and_der_convergence}
Let $g: [0, \infty) \rightarrow \R$ be a non-negative differentiable function. 
Assume there exist constants $a,b > 0$ such that $\int\nolimits_{t'=0}^{t} g(t') dt' \leq a$ and $\dot{g}(t) \leq b$ for all $t \geq 0$.
Then, $lim_{t \rightarrow \infty} g(t) = 0$.
\end{lemma}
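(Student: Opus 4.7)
The plan is to argue by contradiction: assume $\lim_{t \to \infty} g(t) \neq 0$. Since $g$ is non-negative, this failure of convergence must take the form of a positive limit superior, so there exist $\epsilon > 0$ and an increasing sequence $\{s_n\}_{n \in \N}$ with $s_n \to \infty$ such that $g(s_n) \geq \epsilon$ for every $n$.

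Next I would leverage the derivative bound $\dot{g}(t') \leq b$ to propagate this lower bound backward in time. For any $s \leq s_n$, integrating the derivative inequality gives $g(s_n) - g(s) \leq b(s_n - s)$, hence $g(s) \geq g(s_n) - b(s_n - s)$. Setting $\delta := \epsilon / (2b)$, this yields $g(s) \geq \epsilon / 2$ for every $s \in [s_n - \delta, s_n]$. I would then pass to a subsequence (still denoted $\{s_n\}$) satisfying $s_1 > \delta$ and $s_{n+1} - s_n > \delta$ for all $n$, which is possible because $s_n \to \infty$. With this choice, the intervals $I_n := [s_n - \delta, s_n]$ are pairwise disjoint and contained in $[0, \infty)$.

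The final step is an integral-accumulation argument. By disjointness of the $I_n$'s and the uniform lower bound on $g$ inside each of them,
\[
\int_{0}^{s_N} g(t') \, dt' \;\geq\; \sum_{n=1}^{N} \int_{I_n} g(t') \, dt' \;\geq\; N \cdot \frac{\epsilon}{2} \cdot \delta \;=\; N \cdot \frac{\epsilon^2}{4b} \text{\,.}
\]
Choosing $N$ so large that $N \cdot \epsilon^2 / (4b) > a$ contradicts the hypothesis $\int_{0}^{t} g(t') \, dt' \leq a$ for all $t \geq 0$, completing the proof. There is no real obstacle here: the derivative upper bound alone already supplies enough regularity to convert ``spikes'' of height $\epsilon$ into ``bumps'' of width at least $\delta$ and fixed minimum height $\epsilon / 2$, and the integrability hypothesis then prohibits infinitely many such bumps. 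The only mild subtlety is ensuring the intervals $I_n$ lie inside $[0, \infty)$, which is handled by discarding finitely many initial terms of $\{s_n\}$.
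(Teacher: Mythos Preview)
Your proof is correct and takes essentially the same approach as the paper: both argue by contradiction, use the derivative bound $\dot{g} \leq b$ to convert each occurrence of $g \geq \epsilon$ into an interval of length $\epsilon/(2b)$ on which $g \geq \epsilon/2$, and then observe that infinitely many such disjoint intervals contribute total mass at least $N\epsilon^2/(4b)$, contradicting the integral bound. Your backward-propagation step is slightly more direct than the paper's transition-interval construction (which first argues separately that $g$ must also dip below $\epsilon/2$ infinitely often), but the core idea and the resulting arithmetic are identical.
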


\begin{proof}
By way of contradiction let us assume that $g(t)$ does not converge to $0$.
Let $\epsilon > 0$ be such that for all $M > 0$ there exists $t > M$ with $g(t) > \epsilon$.

We claim that for all $M, \epsilon' > 0$ there exists $t > M$ such that $g(t) < \epsilon'$. 
Otherwise, we have a contradiction to the bound on the integral of $g( \cdot )$. Combined with our assumption, this means that for all $M > 0$ we can find an interval $[t_1, t_2]$, with $t_1 > M$, where $g(t)$ transitions from $\frac{\epsilon}{2}$ to $\epsilon$.
We now examine one such interval. Formally, for $t_0$ with $g(t_0) < \frac{\epsilon}{2}$, we define:
\[
	t_2 := \min \left \{t | t \geq t_0 \text{ and } g(t) = \epsilon \right \} \quad , \quad t_1 := \max \left \{t | t \leq t_2 \text{ and } g(t) = \epsilon / 2 \right \} \text{\,.}
\]
Due to the fact that $g ( \cdot )$ is continuous, $t_2$ and $t_1$ are well defined as they are the minimum and maximum, respectively, of closed non-empty sets bounded from below and above, respectively.
Furthermore, notice that $t_0 < t_1 < t_2$.
From the mean value theorem and the bound on the derivative of $g ( \cdot )$ we have $t_2 - t_1 \geq \epsilon / 2b$.
Since $g(t) \geq \epsilon / 2$ over the interval $[t_1, t_2]$, this gives us $\int\nolimits_{t'=t_1}^{t_2} g(t') dt' \geq \epsilon^2 / 4b$.
Recall there are infinitely many such occurrences, implying that $\int\nolimits_{t'=0}^{\infty} g(t') dt' = \infty$, in contradiction to the bound on the integral.
\end{proof}

\begin{lemma}
\label{lem:dyn_sys_exp_and_poly_lower_bounds}
Let $t_0 > 0$ and $g: [t_0, \infty ) \rightarrow \R$ be a continuously differentiable function such that there exists $T > t_0$ for which $g(t)$ is positive over $[t_0, T)$.
Assume that $\abs{ \dot{g} (t) } \leq a + b \cdot g(t)^{ \gamma }$ for all $t \in [t_0, T]$, with $a, b > 0$, and $\gamma \geq 1$.
Then:
\begin{itemize}
\item If $\gamma = 1$:
\[
g(T) \geq \frac{ a + b \cdot g(t_0) }{ b } \cdot e^{ - b (T - t_0) } - \frac{a}{b}
\text{\,.}
\]
\item Otherwise, if $\gamma > 1$:
\[
g(T) \geq \frac{ 1 }{  b^{1 / \gamma} \left [ b^{1 / \gamma} (\gamma - 1) (T - t_0) + \left ( a^{1 / \gamma} + b^{1 / \gamma} \cdot g(t_0) \right )^{1 - \gamma} \right ]^{ 1 / (\gamma - 1) } } - \left ( \frac{a}{b} \right )^{1 / \gamma}
\text{\,.}
\]
\end{itemize}
\end{lemma}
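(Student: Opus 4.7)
The plan is to handle the two cases $\gamma = 1$ and $\gamma > 1$ separately, but in both cases the starting point is the same one-sided differential inequality $\dot{g}(t) \geq -(a + b \cdot g(t)^\gamma)$ that follows immediately from the hypothesis $\abs{\dot{g}(t)} \leq a + b \cdot g(t)^\gamma$. A standard comparison-theorem / integrating-factor style argument then converts this differential inequality into the claimed integral bound on $g(T)$.

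For $\gamma = 1$ the inequality $\dot{g}(t) \geq -a - b \cdot g(t)$ is linear. I would introduce the auxiliary function $h(t) := a + b \cdot g(t)$, observe that it satisfies $\dot{h}(t) \geq -b \cdot h(t)$ on $[t_0, T]$, and multiply through by the integrating factor $e^{b(t-t_0)}$ to see that $e^{b(t-t_0)} h(t)$ is non-decreasing. Evaluating at $T$ and solving for $g(T)$ gives the first bound in the lemma directly.

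For $\gamma > 1$ the obstacle is that the right-hand side does not separate cleanly, so I would first upper bound it by a single $\gamma$-th power. The key inequality is the superadditivity fact $u^\gamma + v^\gamma \leq (u+v)^\gamma$ for $u,v \geq 0$ and $\gamma \geq 1$ (which can be proved by noting $(u/(u+v))^\gamma + (v/(u+v))^\gamma \leq u/(u+v) + v/(u+v) = 1$ since $x^\gamma \leq x$ for $x \in [0,1]$). Applied with $u = a^{1/\gamma}$ and $v = b^{1/\gamma} g(t)$, this yields $a + b \cdot g(t)^\gamma \leq \big(a^{1/\gamma} + b^{1/\gamma} g(t)\big)^\gamma$. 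Substituting $u(t) := a^{1/\gamma} + b^{1/\gamma} g(t)$ (which is strictly positive on $[t_0, T)$ since $a > 0$ and $g \geq 0$ there), the differential inequality becomes the separable $\dot{u}(t) \geq -b^{1/\gamma} u(t)^\gamma$.

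The final step is separation of variables: dividing by $u^\gamma > 0$ and recognizing the left-hand side as $\tfrac{d}{dt}\big[u^{1-\gamma}/(1-\gamma)\big]$, then integrating from $t_0$ to $T$ and carefully flipping inequalities (since $1-\gamma < 0$), I get an upper bound on $u(T)^{1-\gamma} = 1/u(T)^{\gamma-1}$, which inverts to a lower bound on $u(T)$ and hence on $g(T) = (u(T) - a^{1/\gamma})/b^{1/\gamma}$ matching the stated expression. The step I expect to require the most care is the sign tracking through the $u^{1-\gamma}$ manipulation, together with verifying that the denominator $(a^{1/\gamma} + b^{1/\gamma} g(t_0))^{1-\gamma} + b^{1/\gamma}(\gamma-1)(T-t_0)$ remains strictly positive so that the inversion is well defined — the latter is immediate from $u(t_0) > 0$ and $\gamma - 1 > 0$. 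As a cross-check, letting $\gamma \to 1^+$ in the second formula should recover the first.
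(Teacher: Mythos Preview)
Your proposal is correct and follows essentially the same route as the paper's proof: both start from the one-sided inequality $\dot{g}(t) \geq -(a + b\,g(t)^\gamma)$, use the superadditivity bound $a + b\,g(t)^\gamma \leq (a^{1/\gamma} + b^{1/\gamma} g(t))^\gamma$ for $\gamma > 1$ (the paper phrases it via the norm inequality $\|x\|_\gamma \leq \|x\|_1$, which is equivalent to your $x^\gamma \leq x$ argument), and then integrate. The only cosmetic differences are that for $\gamma = 1$ you use an integrating factor on $h(t) = a + b\,g(t)$ while the paper divides by $a + b\,g(t)$ and integrates the logarithmic derivative, and for $\gamma > 1$ you substitute $u(t) = a^{1/\gamma} + b^{1/\gamma} g(t)$ whereas the paper integrates directly in $g$; both lead to the identical algebraic bound.
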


\begin{proof}
Since $g (\cdot)$ is continuous over $[t_0, \infty )$, and positive over $[t_0, T)$, it is non-negative at $T$.
In the case where $\gamma = 1$, we have that $\dot{g} (t) \geq -a -b \cdot g(t)$.
Dividing both sides by $a + b \cdot g(t)$ (positive since $a,b > 0$ and $g(t) \geq 0$), and integrating over $t \in [t_0, T]$, we have that:
\[
\frac{1}{b} \left [ \ln \left ( a + b \cdot g(T) \right ) - \ln \left ( a + b \cdot g( t_0 ) \right ) \right ] = \int_{t = t_0}^T \frac{ \dot{g} (t) }{ a + b \cdot g(t) } dt \geq - (T - t_0)
\text{\,.}
\]
The lower bound on $g(T)$ readily follows by rearranging the inequality above.

Suppose $\gamma > 1$.
We begin by showing that $a + b \cdot g(t)^{\gamma} \leq ( a^{1 / \gamma} + b^{1 / \gamma} \cdot g(t) )^\gamma$ whenever $g(t) \geq 0$.
To see it is so, notice that for any $\x := (x_1, \ldots, x_d) \in \R^d_{\geq 0}$ it holds that $\sum_{i = 1}^d x_i^\gamma \leq ( \sum_{i = 1}^d x_i )^\gamma$.
This is directly implied from the following norm inequality: $\norm{ \x }_\gamma := ( \sum_{i = 1}^d x_i^\gamma )^{1 / \gamma} \leq \norm{ \x }_1 := \sum_{i = 1}^d x_i$.
Thus, for $t \in [t_0, T]$ it holds that $\abs{ \dot{g} (t) } \leq ( a^{1 / \gamma} + b^{1 / \gamma} \cdot g(t) )^\gamma$, and in particular:
\[
\dot{g} (t) \geq - \left ( a^{1 / \gamma} + b^{1 / \gamma} \cdot g(t) \right )^\gamma
\text{\,.}
\]
Dividing by $( a^{1 / \gamma} + b^{1 / \gamma} \cdot g(t) )^\gamma$ (positive since $a, b > 0$ and $g(t) \geq 0$), and integrating over $t \in [t_0, T]$:
\[
\begin{split}
& \int_{t = t_0}^T \frac{ \dot{g} (t) }{ \left ( a^{1 / \gamma} + b^{1 / \gamma} \cdot g(t) \right )^\gamma } dt \geq - (T - t_0) \\
\implies & \frac{1}{ b^{1 / \gamma} (1 - \gamma) } \left [ \left ( a^{1 / \gamma} + b^{1 / \gamma} \cdot g(T) \right )^{1 - \gamma} - \left ( a^{1 / \gamma} + b^{1 / \gamma} \cdot g(t_0) \right )^{1 - \gamma} \right ] \geq - (T - t_0)
\text{\,.}
\end{split}
\]
Rearranging the inequality above establishes the desired result.
\end{proof}

\begin{lemma} \label{lem:gf_loss_dec}
Let~$\D$ be an open domain in Euclidean space, and let $f : \D \rightarrow \R$ be a continuously differentiable function.
Let $\theta : [ 0 , \infty ) \to \D$  be a curve born from gradient flow over~$f ( \cdot )$:
\[
\theta ( 0 ) = \theta_0 \in \D \quad , \quad \dot{\theta} ( t ) := \tfrac{d}{dt} \theta ( t ) = - \nabla f ( \theta ( t ) ) ~~ , ~ t \geq 0 
\text{\,.}
\]
Then, $f ( \theta (t) )$ is monotonically non-increasing with respect to $t$.
\end{lemma}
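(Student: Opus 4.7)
The plan is to invoke the chain rule together with the defining ODE of gradient flow. Concretely, I would first observe that since $\theta(\cdot)$ is continuously differentiable (it solves an ODE whose right-hand side is continuous) and $f(\cdot)$ is continuously differentiable by hypothesis, the composition $t \mapsto f(\theta(t))$ is continuously differentiable on $[0,\infty)$, so it suffices to control the sign of its derivative.

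Next, I would compute the derivative via the chain rule and substitute the gradient flow equation $\dot{\theta}(t) = -\nabla f(\theta(t))$:
\[
\tfrac{d}{dt} f(\theta(t)) = \inprod{\nabla f(\theta(t))}{\dot{\theta}(t)} = -\inprod{\nabla f(\theta(t))}{\nabla f(\theta(t))} = -\norm{\nabla f(\theta(t))}_2^2 \leq 0.
\]
The conclusion that $f(\theta(t))$ is monotonically non-increasing then follows from the fundamental theorem of calculus (or just the mean value theorem): for any $0 \leq t_1 \leq t_2$, integrating the above from $t_1$ to $t_2$ gives $f(\theta(t_2)) - f(\theta(t_1)) = -\int_{t_1}^{t_2} \norm{\nabla f(\theta(t))}_2^2 \, dt \leq 0$.

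There is essentially no obstacle here; this is a standard textbook fact. The only tiny subtlety worth mentioning is that one should verify $\theta$ indeed takes values in $\D$ so that $\nabla f(\theta(t))$ is well-defined along the trajectory, but this is guaranteed by the hypothesis that $\theta : [0,\infty) \to \D$ is a gradient flow curve.
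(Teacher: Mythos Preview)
Your proposal is correct and matches the paper's proof essentially verbatim: both compute $\tfrac{d}{dt} f(\theta(t)) = \inprod{\nabla f(\theta(t))}{\dot{\theta}(t)} = -\norm{\nabla f(\theta(t))}_2^2 \leq 0$ via the chain rule and the gradient flow equation. Your added remarks on differentiability and the integration step are minor elaborations the paper omits, but the argument is the same.
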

\begin{proof}
The proof immediately follows from the fact that for any $t \in [0, \infty)$:
\[
\frac{d}{dt} f( \theta (t) ) = \inprod{ \nabla f ( \theta (t) ) }{ \dot{\theta (t)} } = \inprod{ \nabla f ( \theta (t) ) }{ -  \nabla f ( \theta (t) ) } = - \norm{ \nabla f ( \theta (t) ) }_2^2 \leq 0
\text{\,.}
\]
\end{proof}

%-------------------------------------------------------------------
% SOLUTION SET NORMS PROOF
%-------------------------------------------------------------------
\subsection{Proof of Proposition~\ref{prop:sol_set_norms}}
\label{app:proofs:sol_set_norms}

For a quasi-norm $\norm{\cdot}$, the weakened triangle inequality (see Footnote~\ref{note:qnorm}) implies that there exists a constant $c_{\norm{\cdot}} \geq 1$ for which
\be
\begin{split}
	\norm{W} & \geq \frac{1}{c_{\norm{\cdot}}} \norm{(W)_{1,1} \e_1 \e_1^\top} - \norm{W - (W)_{1,1} \e_1 \e_1^\top} \\
	& = \abs{(W)_{1,1}} \frac{ \norm{\e_1 \e_1^\top}}{c_{\norm{\cdot}}} - \norm{\e_2 \e_1^\top + \e_1 \e_3^\top} 
	\text{\,,}
\end{split}	
\label{eqn:triangle_eq_W}
\ee
for any $W \in \S$.
Fix some $\epsilon > 0$ and define $M_{\norm{\cdot} , \epsilon} := \{(W)_{1,1} \in \R : \norm{W} \leq \inf_{W' \in \S} \norm{W'} + \epsilon , ~~ W \in \S \}$, the set of $(W)_{1,1}$ values corresponding to $\epsilon$-minimizers of $\norm{\cdot}$.
The first part of the proposition thus boils down to showing $M_{\norm{\cdot} , \epsilon}$ is bounded.
By Equation~\eqref{eqn:triangle_eq_W}, there exist a $C > 0$ such that $\abs{(W)_{1,1}} > C$ means $\norm{W} > \inf_{W' \in \S} \norm{W'} + \epsilon$.
Hence, $M_{\norm{\cdot} , \epsilon} \subset I_{\norm{\cdot}, \epsilon} := [-C, C]$.

If in addition $\norm{\cdot}$ is a Schatten-$p$ quasi-norm for $p \in (0,\infty]$, we now show that $W$ is its minimizer over $\S$ if and only if $(W)_{1,1} = 0$.
Let $W_x \in \S$ denote the solution matrix with $(W_x)_{1,1} = x$ for $x \in \R$.
The singular values of an arbitrary such $W_x$ are:
\be
\left \{ \sigma_1 (W_x), \sigma_2 (W_x) \right \} = \left \{ \abs{ \left (x + \sqrt{x^2 + 4} \right ) / 2 } , \abs{ \left ( x - \sqrt{x^2 + 4} \right ) / 2 } \right \}
\text{\,.}
\label{eqn:sol_Wx_sing_values}
\ee
Starting with $p = \infty$, the corresponding norm is the spectral norm $\norm{W_x}_{S_\infty} := \sigma_1(W_x)$.
When $x = 0$, we have that $\sigma_1(W_0) = 1$. 
If $x > 0$, then $\sigma_1(W_x) = ( x + \sqrt{x^2 + 4} ) ~ / ~ 2 > 1$.
Similarly, if $x < 0$, then $\sigma_1(W_x) = ( -x + \sqrt{x^2 + 4} ) ~ / ~ 2 > 1$. Therefore, $\norm{W_x}_{S_\infty}$ attains its minimal value of $1$ if and only if $x = 0$.

Moving to the case of $p \in (0, \infty)$, the corresponding quasi-norm is $\norm{W_x}_{S_p} := (\sigma_1(W_x)^p + \sigma_2(W_x)^p)^{\frac{1}{p}}$.
We now examine $\norm{W_x}_{S_p}^p$ for $x > 0$:
\[
	\norm{W_x}_{S_p}^p = \left (\frac{x + \sqrt{x^2 + 4}}{2} \right)^p + \left (\frac{-x + \sqrt{x^2 + 4}}{2} \right )^p
	\text{\,.}
\]
Differentiating with respect to $x$, we arrive at:
\[
\begin{split}
	& \frac{p}{2^p} \left ( \left (x + \sqrt{x^2 + 4} \right )^{p-1} \left (1 + \frac{x}{\sqrt{x^2 + 4}} \right ) + \left (-x + \sqrt{x^2 + 4} \right )^{p-1} \left (-1 + \frac{x}{\sqrt{x^2 + 4}} \right ) \right ) \\
	& > \frac{p}{2^p} \left ( \left ( x + \sqrt{x^2 + 4} \right )^{p-1} - \left( -x + \sqrt{x^2 + 4} \right )^{p-1} \right ) \\
	& > 0
	\text{\,,}
\end{split}
\]
where in the first transition we used the fact that both $\left ( x + \sqrt{x^2 + 4} \right )^{p-1}$ and $\left( -x + \sqrt{x^2 + 4} \right )^{p-1}$ are positive (as well as $x$). It then directly follows that $\norm{W_x}_{S_p}^p$ and thus $\norm{W_x}_{S_p}$ are monotonically increasing with respect to $x$ on $(0, \infty)$.

Similar arguments show that when $x < 0$ the Schatten-$p$ quasi-norm of $W_x$ is monotonically decreasing with respect to $x$, implying that $\norm{W_x}_{S_p}$ is minimized if and only if $x = 0$.\note{
The claim relies on the fact that the Schatten-$p$ quasi-norm of $W_x$ is continuous with respect to $x$ for all $p \in (0, \infty)$.
We note, however, that quasi-norms in general may be discontinuous.
}
\qed

%-------------------------------------------------------------------
% SOLUTION SET RANK PROOF
%-------------------------------------------------------------------
\subsection{Proof of Proposition~\ref{prop:sol_set_rank}}
\label{app:proofs:sol_set_rank}

As in the proof of Proposition~\ref{prop:sol_set_norms} (Subappendix~\ref{app:proofs:sol_set_norms}), we denote by $W_x \in \S$ the solution matrix with $(W_x)_{1,1} = x$.
We begin by analyzing the behavior of $\sigma_1(W_x)$ and $\sigma_2(W_x)$ with respect to $x$.
When $x = 0$ the singular values are simply $\sigma_1 (W_0) = \sigma_2 (W_0) = 1$.
When $x$ is positive, the singular values may be written as:
\[
	\sigma_1(W_x) = \frac{x + \sqrt{x^2 + 4}}{2} \quad , \quad \sigma_2(W_x) = \frac{-x + \sqrt{x^2 + 4}}{2}
	\text{\,.}
\]
Taking the derivative with respect to $x$, we arrive at:
\[
	\frac{d}{dx} \sigma_1(W_x) = \frac{1}{2} +\frac{x}{2 \sqrt{x^2 + 4}}   \quad , \quad \frac{d}{dx} \sigma_2 (W_x) = -\frac{1}{2} +\frac{x}{2 \sqrt{x^2 + 4}}
	\text{\,.}
\]
Since $x > 0$, we have that $d/dx~\sigma_1 (W_x) > 0$ and $d/dx~\sigma_2 (W_x) < 0$.
In other words, $\sigma_1(W_x)$ is monotonically increasing, while $\sigma_2(W_x)$ is monotonically decreasing, when $x > 0$.
It can easily be verified that $\sigma_1(W_x)$ and $\sigma_2(W_x)$ are even functions of $x$, \ie~$\sigma_1(W_x) = \sigma_1(W_{-x})$ and $\sigma_2(W_x) = \sigma_2(W_{-x})$.
It then follows that $\sigma_1(W_x)$ is monotonically decreasing (conversely $\sigma_2(W_x)$ is monotonically increasing) when $x < 0$. Noticing that $\lim_{x \rightarrow \infty} \sigma_1(W_x) = \infty$ and $\lim_{x \rightarrow \infty} \sigma_2(W_x) = 0$ (accordingly $\lim_{x \rightarrow -\infty} \sigma_1(W_x) = \infty$ and $\lim_{x \rightarrow -\infty} \sigma_2(W_x) = 0$ ), we have a characterization of the behavior of $\sigma_1(W_x)$ and $\sigma_2(W_x)$.

We are now in a position to obtain the desired results for effective and infimal ranks.
The effective rank (Definition~\ref{def:erank}) of $W_x$ can be written as
\[
	\erank ( W_x ) = \exp \left \{ H \left ( \frac{\sigma_1 (W_x)}{\sigma_1 (W_x) + \sigma_2 (W_x)}, \frac{\sigma_2 (W_x)}{\sigma_1 (W_x) + \sigma_2 (W_x)} \right ) \right \}
	\text{\,.}
\]
The binary entropy function is bounded by $\ln (2)$, hence, the effective rank over $\S$ is bounded by $2$.
This upper bound is attained at $x = 0$.
According to the singular values analysis, when $\abs{x} \rightarrow \infty$ we have that $\rho_1 (W_x)$ monotonically increases towards $1$, starting from the value $\rho_1 (W_0) = \frac{1}{2}$.
Noticing that this implies the entropy function and effective rank monotonically decrease towards $0$ and $1$, respectively, completes the effective rank analysis.

Next, we analyze the infimal rank of $\S$ and the distance of $W_x$ from that infimal rank.
The distance of $W_x$ from $\M_1$ is $D ( W_x , \M_1 ) = \sigma_2(W_x)$.
Since $\lim_{x \rightarrow \infty} \sigma_2(W_x) = 0$, we have $D(\S , \M_1) = 0$.
Clearly $D (\S, \M_0) > 0$, leading to the conclusion that the infimal rank of $\S$ is $1$.
Finally, the analysis of $\sigma_2(W_x)$ directly implies that the distance of $W_x$ from the infimal rank of $\S$ is maximized when $x = 0$, monotonically tending to $0$ as $\abs{x} \rightarrow \infty$.
\qed

%----------------------------------------------------------------------
% PROOF OF NORM, EFFECTIVE RANK and DISTANCE TO INFIMAL RANK BOUNDS 2x2
%----------------------------------------------------------------------
\subsection{Proof of Theorem~\ref{thm:norms_up_finite}} \label{app:proofs:norms_up_finite}

In the following, as stated in Subappendix~\ref{app:proofs:notation}, for results that hold for all $t \geq 0$ or when clear from the context, we omit the time index $t$.
Furthermore, we denote the entries of the product matrix $W_{L : 1}$ by $\{w_{i , j}\}_{i,j \in [2]}$.

We begin by deriving loss-dependent bounds for $\abs{w_{1,1}}, \sigma_1 (W_{L : 1})$ and $\sigma_2 (W_{L : 1})$.
Writing the loss explicitly:
\[
	\ell (W_{L : 1}) = \frac{1}{2} \left [ (w_{1,2} - 1)^2 + (w_{2,1} - 1)^2 + w_{2,2}^2 \right ]
	\text{\,,}
\]
we can upper bound each of the non-negative terms separately. Multiplying by $2$ and taking the square root of both sides yields:
\be
	\abs{w_{2,2}} \leq \sqrt{2 \ell (W_{L : 1})}  \quad,\quad \abs{w_{1,2} - 1} \leq \sqrt{2 \ell (W_{L : 1})} \quad,\quad \abs{w_{2,1} - 1} \leq \sqrt{2 \ell (W_{L : 1})}
	\text{\,.}
	\label{eqn:entries_bound}
\ee
The following lemma characterizes the relation between $|w_{1,1}|$ and the loss.
\begin{lemma} \label{lem:w_11_lower_bound}
Suppose $\ell (W_{L : 1}) < \frac{1}{2}$.
Then:
\[
	\abs{w_{1,1}} > \frac{(1 - \sqrt{2 \ell (W_{L : 1})})^2}{\sqrt{2 \ell (W_{L : 1})}} = \frac{1}{\sqrt{2 \ell (W_{L : 1})}} - 2 + \sqrt{2 \ell (W_{L : 1})}
	\text{\,.}
\]
\end{lemma}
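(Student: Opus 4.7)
The plan is to combine the entry-wise loss bounds in Equation~\eqref{eqn:entries_bound} with the positivity of $\det(W_{L:1})$, the latter being guaranteed in the context of Theorem~\ref{thm:norms_up_finite} by the assumption $\det(W_{L:1}(0)) > 0$ together with Lemma~\ref{lem:det_does_not_change_sign}. Expanding the $2 \times 2$ determinant gives $w_{1,1} w_{2,2} - w_{1,2} w_{2,1} > 0$, so $w_{1,1} w_{2,2} > w_{1,2} w_{2,1}$. The goal is to show the right-hand side is bounded below by a positive quantity, while the factor $|w_{2,2}|$ on the left is bounded above by $\sqrt{2\ell(W_{L:1})}$, and then to solve for $|w_{1,1}|$.

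First I would observe that the assumption $\ell(W_{L:1}) < 1/2$ implies $\sqrt{2\ell(W_{L:1})} < 1$. Combined with $|w_{1,2} - 1| \leq \sqrt{2\ell(W_{L:1})}$ and $|w_{2,1} - 1| \leq \sqrt{2\ell(W_{L:1})}$ from Equation~\eqref{eqn:entries_bound}, this yields $w_{1,2} \geq 1 - \sqrt{2\ell(W_{L:1})} > 0$ and similarly $w_{2,1} \geq 1 - \sqrt{2\ell(W_{L:1})} > 0$. Multiplying these two inequalities gives $w_{1,2} w_{2,1} \geq \bigl(1 - \sqrt{2\ell(W_{L:1})}\bigr)^2 > 0$.

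Chaining with the determinant inequality produces $w_{1,1} w_{2,2} > \bigl(1 - \sqrt{2\ell(W_{L:1})}\bigr)^2 > 0$, which in particular forces $w_{1,1}$ and $w_{2,2}$ to share a sign and both to be nonzero. Taking absolute values and dividing by $|w_{2,2}|$, which by Equation~\eqref{eqn:entries_bound} satisfies $|w_{2,2}| \leq \sqrt{2\ell(W_{L:1})}$, gives
\[
|w_{1,1}| \;>\; \frac{\bigl(1 - \sqrt{2\ell(W_{L:1})}\bigr)^2}{|w_{2,2}|} \;\geq\; \frac{\bigl(1 - \sqrt{2\ell(W_{L:1})}\bigr)^2}{\sqrt{2\ell(W_{L:1})}}.
\]
Expanding the square and dividing term by term yields the claimed equivalent form $\tfrac{1}{\sqrt{2\ell(W_{L:1})}} - 2 + \sqrt{2\ell(W_{L:1})}$.

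There is no real obstacle here: the only subtle point is being explicit that $|w_{2,2}| > 0$ (so that the division is legal), which follows from the strict inequality $w_{1,1} w_{2,2} > 0$. One minor care-point is tracking that $\sqrt{2\ell(W_{L:1})} < 1$, so that the factors $1 - \sqrt{2\ell(W_{L:1})}$ in the lower bounds for $w_{1,2}, w_{2,1}$ are indeed positive before multiplying; without this, the step $w_{1,2} w_{2,1} \geq (1-\sqrt{2\ell(W_{L:1})})^2$ would be off by a sign.
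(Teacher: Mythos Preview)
Your proof is correct and follows essentially the same approach as the paper: use Lemma~\ref{lem:det_does_not_change_sign} to get $w_{1,1}w_{2,2} > w_{1,2}w_{2,1}$, apply the entry bounds from Equation~\eqref{eqn:entries_bound} under $\ell(W_{L:1}) < 1/2$ to lower bound the right side by $(1-\sqrt{2\ell(W_{L:1})})^2$ and upper bound $|w_{2,2}|$ by $\sqrt{2\ell(W_{L:1})}$, then divide. Your version is in fact slightly more careful in making explicit why $|w_{2,2}| > 0$ and why the lower bounds on $w_{1,2}, w_{2,1}$ are positive before multiplying.
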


\begin{proof}
From Lemma~\ref{lem:det_does_not_change_sign}, the determinant of $W_{L : 1}$ does not change signs and remains positive, \ie:
\be
	\det (W_{L : 1}) = w_{1,1}w_{2,2} - w_{1,2}w_{2,1} > 0
	\text{\,.}
	\label{eqn:pos_det_inequality}
\ee
Under the assumption that $\ell (W_{L : 1}) < \frac{1}{2}$, both $w_{1,2}$ and $w_{2,1}$ are positive and lie inside the open interval $(0,2)$.
Since the determinant is positive, $w_{2,2} \neq 0$ and $w_{1,1}w_{2,2} > 0$ must hold.
Rearranging Equation~\eqref{eqn:pos_det_inequality}, we may therefore write $\abs{w_{1,1} w_{2,2}} > w_{1,2}w_{2,1}$.
Dividing both sides by $|w_{2,2}|$ and applying the bounds from Equation~\eqref{eqn:entries_bound} completes the proof:
\[
	\abs{w_{1,1}} > \frac{(1 - \sqrt{2 \ell (W_{L : 1})})^2}{\sqrt{2 \ell (W_{L : 1})}} = \frac{1}{\sqrt{2 \ell (W_{L : 1})}} - 2 + \sqrt{2 \ell (W_{L : 1})}
	\text{\,.}
\]
\end{proof}
An immediate consequence of the lemma above is that decreasing the loss towards zero drives $|w_{1,1}|$ towards infinity.

With this bound in hand, Lemma~\ref{lem:W_singular_values_bound} below establishes bounds on the singular values of $W_{L : 1}$.
In turn, they will allow us to obtain the necessary results for effective rank (Definition~\ref{def:erank}) and distance from infimal rank of $\S$ (Definition~\ref{def:irank_dist}).
\begin{lemma} \label{lem:W_singular_values_bound}
The singular values of $W_{L : 1}$ fulfill:
\be
\sigma_1(W_{L : 1}) \geq \abs{w_{1,1}} - \sqrt{2 \ell (W_{L : 1})} \quad , \quad \sigma_2(W_{L : 1}) \leq 3\sqrt{2 \ell (W_{L : 1})} \text{\,.}
\label{eqn:W_sing_val_bounds_no_assump}
\ee
Furthermore, if $\ell (W_{L : 1}) < \frac{1}{2}$, then:
\be
\sigma_1 (W_{L : 1}) \geq \frac{1}{\sqrt{2 \ell (W_{L : 1})}} - 2  \text{\,.}
\label{eqn:W_sing_val_bounds_assump}
\ee
\end{lemma}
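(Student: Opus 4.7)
My plan is to combine three elementary observations: (a) the entry-wise bound $\sigma_1(W) \geq |w_{ij}|$ for any $2 \times 2$ matrix $W$ and indices $i,j$, (b) the identities $\sigma_1 \sigma_2 = |\det(W)|$ and $\sigma_1^2 + \sigma_2^2 = \norm{W}_F^2$ that both hold in the $2 \times 2$ case, and (c) the invariant $\det(W_{L:1}) > 0$ inherited from Lemma~\ref{lem:det_does_not_change_sign}, together with the entry-wise loss bounds in Equation~\eqref{eqn:entries_bound}. The bounds on $\sigma_1$ are the easy parts. For Equation~\eqref{eqn:W_sing_val_bounds_no_assump}, I would simply write $\sigma_1(W_{L:1}) \geq |w_{1,1}| \geq |w_{1,1}| - \sqrt{2\ell(W_{L:1})}$ via observation~(a). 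For Equation~\eqref{eqn:W_sing_val_bounds_assump}, I would chain the same entry-wise bound with Lemma~\ref{lem:w_11_lower_bound}, dropping the nonnegative $\sqrt{2\ell(W_{L:1})}$ term to obtain $\sigma_1(W_{L:1}) \geq 1/\sqrt{2\ell(W_{L:1})} - 2$.

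The bound $\sigma_2(W_{L:1}) \leq 3\sqrt{2\ell(W_{L:1})}$ is the part that requires real work, because it is claimed unconditionally, whereas Lemma~\ref{lem:w_11_lower_bound}~---~and hence any argument that divides by $|w_{1,1}|$~---~is only available once $\ell(W_{L:1}) < 1/2$. I plan to split the argument into these two regimes. When $\ell(W_{L:1}) < 1/2$, Equation~\eqref{eqn:entries_bound} forces $w_{1,2}, w_{2,1} \in (0,2)$ and hence $w_{1,2}w_{2,1} > 0$; combined with $\det(W_{L:1}) > 0$ this gives $w_{1,1}w_{2,2} > w_{1,2}w_{2,1} > 0$, so that
\[
|\det(W_{L:1})| = w_{1,1}w_{2,2} - w_{1,2}w_{2,1} \leq w_{1,1}w_{2,2} \leq |w_{1,1}| \cdot \sqrt{2\ell(W_{L:1})}.
\]
Using observation~(b) together with $\sigma_1(W_{L:1}) \geq |w_{1,1}| > 0$ then yields $\sigma_2(W_{L:1}) \leq \sqrt{2\ell(W_{L:1})}$, which is in fact stronger than the claimed bound.

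When $\ell(W_{L:1}) \geq 1/2$, I fall back on observation~(b) in its Pythagorean form: since $W_{L:1}$ is $2 \times 2$, $\sigma_2(W_{L:1})^2 = \norm{W_{L:1}}_F^2 - \sigma_1(W_{L:1})^2 \leq w_{1,2}^2 + w_{2,1}^2 + w_{2,2}^2$, where I have again used $\sigma_1(W_{L:1}) \geq |w_{1,1}|$. Because $\sqrt{2\ell(W_{L:1})} \geq 1$ in this regime, the crude inequality $1 + \sqrt{2\ell(W_{L:1})} \leq 2\sqrt{2\ell(W_{L:1})}$ lets me bound each off-diagonal square by $8\ell(W_{L:1})$ and the $(2,2)$ square by $2\ell(W_{L:1})$, totalling $18\ell(W_{L:1})$ and producing the claimed factor of~$3$. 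The main obstacle I anticipate is the cosmetic one of stitching together a uniform bound: the determinant-based argument degrades as $\ell \to (1/2)^-$ (through the $(1-\sqrt{2\ell})^2$ factor implicit in Lemma~\ref{lem:w_11_lower_bound}), while the Frobenius argument is only tight once $1+\sqrt{2\ell}$ is comparable to $\sqrt{2\ell}$, so a case split at $\ell = 1/2$ appears to be unavoidable rather than an artifact of the proof.
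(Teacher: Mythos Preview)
Your proof is correct, but it takes a genuinely different and more elementary route than the paper. The paper defines $W_\S := \begin{pmatrix} w_{1,1} & 1 \\ 1 & 0 \end{pmatrix}$, the orthogonal projection of $W_{L:1}$ onto the solution set, invokes the singular-value perturbation bound $|\sigma_i(W_{L:1}) - \sigma_i(W_\S)| \leq \norm{W_{L:1} - W_\S}_F = \sqrt{2\ell(W_{L:1})}$, and then computes the eigenvalues of $W_\S$ explicitly to show $\sigma_1(W_\S) \geq |w_{1,1}|$ and $\sigma_2(W_\S) \leq 2\sqrt{2\ell(W_{L:1})}$ (the latter via the same case split at $\ell = 1/2$, using $\sigma_2(W_\S) \leq 2/(2 + |w_{1,1}|)$ and Lemma~\ref{lem:w_11_lower_bound}). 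Your argument sidesteps the projection and perturbation machinery entirely, working directly with $W_{L:1}$ through the entry bound $\sigma_1 \geq |w_{1,1}|$, the identity $\sigma_1\sigma_2 = |\det(W_{L:1})|$, and the Frobenius identity $\sigma_1^2 + \sigma_2^2 = \norm{W_{L:1}}_F^2$. What you gain is a sharper intermediate bound in the small-loss regime (your $\sigma_2 \leq \sqrt{2\ell}$ versus the paper's $3\sqrt{2\ell}$) and no need for explicit eigenvalue formulas; what the paper's approach buys is a template that generalizes more transparently to the perturbed setting of Theorem~\ref{thm:norms_up_finite_perturb}, where $W_{\widetilde{\S}}$ is no longer symmetric and determinant-based reasoning is less clean. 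Your observation that the case split at $\ell = 1/2$ is intrinsic rather than an artifact is also borne out by the paper's proof, which makes the same split.
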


\begin{proof}
Define $W_\S := \begin{pmatrix} w_{1,1} & 1 \\ 1 & 0 \end{pmatrix}$, the orthogonal projection of $W_{L : 1}$ onto the solution set $\S$.
By Corollary 8.6.2 in~\cite{golub2012matrix} we have that:
\be
	|\sigma_i (W_{L : 1}) - \sigma_i (W_\S)| \leq \norm{W_{L : 1} - W_\S}_F = \sqrt{2 \ell (W_{L : 1})} \quad ,~ i = 1,2 \text{\,.}
	\label{eqn:W_singular_value_perturbation_bound}
\ee
One can easily verify that $W_\S$ is a symmetric indefinite matrix with eigenvalues
\[
\left \{\lambda_{1}(W_\S) , \lambda_{2}(W_\S) \right \} = \left \{\left ( w_{1,1} + \sqrt{w_{1,1}^2 + 4} \right ) / ~ 2 ~ , ~ \left ( w_{1,1} - \sqrt{w_{1,1}^2 + 4}\right ) / ~ 2 \right \}
\text{\,.}
\]
Suppose that $w_{1,1} \geq 0$.
We thus have:
\[
	\sigma_1(W_\S) = \max_{i = 1,2} \abs{\lambda_i (W_\S)} = \frac{w_{1,1} + \sqrt{w_{1,1}^2 + 4}}{2} \geq |w_{1,1}|
	\text{\,,}
\]
and
\[
	\begin{split}
	\sigma_2(W_\S) &  = \min_{i = 1,2} \abs{\lambda_i (W_\S)}  \\
	& = \frac{\sqrt{w_{1,1}^2 + 4} - w_{1,1}}{2} \\
	& = \frac{2}{\sqrt{w_{1,1}^2 + 4} + w_{1,1}} \\
	& \leq \frac{2}{2 + w_{1,1}}
	\text{\,,}
	\end{split}
\]
where in the third transition we made use of the identity $a - b = \frac{a^2 - b^2}{a + b}$ for $a, b \in \R$ such that $a + b \neq 0$.
If $\ell (W_{L : 1}) \geq \frac{1}{2}$, it holds that $\sigma_2 (W_\S) \leq 2 / (2 + w_{1,1}) \leq 1 \leq 2\sqrt{2 \ell (W_{L : 1})}$.
Otherwise, we may apply the lower bound on $w_{1,1}$ (Lemma~\ref{lem:w_11_lower_bound}) and conclude that $\sigma_2(W_\S) \leq 2\sqrt{2 \ell (W_{L : 1})}$ for any loss value.
Having established that $\sigma_1(W_\S) \geq |w_{1,1}|$ and $\sigma_2 (W_\S) \leq 2\sqrt{2 \ell (W_{L : 1})}$, Equation~\eqref{eqn:W_singular_value_perturbation_bound} completes the proof of Equation~\eqref{eqn:W_sing_val_bounds_no_assump}.
It remains to see that if $\ell (W_{L : 1}) < \frac{1}{2}$, from the lower bound on $w_{1,1}$ (Lemma~\ref{lem:w_11_lower_bound}), Equation~\eqref{eqn:W_sing_val_bounds_assump} immediately follows.

By similar arguments, Equations~\eqref{eqn:W_sing_val_bounds_no_assump} and~\eqref{eqn:W_sing_val_bounds_assump} hold for $w_{1,1} < 0$ as well.
\end{proof}

%-------------------------------------------------------------------
% NORM BOUND PROOF
%-------------------------------------------------------------------
\subsubsection{Proof of Equation~\eqref{eq:norms_lb} (lower bound for quasi-norm)} \label{sec:norm_bound}

We turn to lower bound the quasi-norm of the product matrix.
It holds that:
\be
\begin{split}
\norm{W_{L : 1}} &\geq \frac{1}{c_{\norm{\cdot}}} \norm{ w_{1,1} \e_1 \e_1^\top } - \norm{ W_{L : 1} - w_{1,1} \e_1 \e_1^\top }
\text{\,,}
\label{eqn:triangle_inequality_norm_lower_bound}
\end{split}
\ee
where $c_{\norm{\cdot}} \geq 1$ is a constant for which $\norm{\cdot}$ satisfies the weakened triangle inequality (see Footnote~\ref{note:qnorm}). 
We now assume that $\ell (W_{L : 1}) < \frac{1}{2}$. 
Later this assumption will be lifted, providing a bound that holds for all loss values.
Subsequent applications of the weakened triangle inequality, together with homogeneity of $\norm{\cdot}$ and the bounds on the entries of $W_{L : 1}$ (Equation~\eqref{eqn:entries_bound}), give:
\[
	\begin{split}
	 \norm{W_{L : 1} - w_{1,1} \e_1 \e_1^\top} & \leq c_{\norm{\cdot}} |w_{2,2}| \norm{\e_2 \e_2^\top} + c_{\norm{\cdot}}^2 \left ( |w_{2,1}| \norm{\e_2 \e_1^\top} + |w_{1,2}| \norm{\e_1 \e_2^\top} \right ) \\
 	 & \leq c_{\norm{\cdot}} \sqrt{2 \ell( W_{L : 1})} \norm{\e_2 \e_2^\top} + c_{\norm{\cdot}}^2 \left ( 1 + \sqrt{2 \ell( W_{L : 1})} \right ) \left (\norm{\e_2 \e_1^\top} + \norm{\e_1 \e_2^\top} \right ) \\
	 & \leq c_{\norm{\cdot}} \norm{\e_2 \e_2^\top} + 2c_{\norm{\cdot}}^2 \left (\norm{\e_2 \e_1^\top} + \norm{\e_1 \e_2^\top} \right ) \\
	 & \leq 2c_{\norm{\cdot}}^2 \left (\norm{\e_2 \e_2^\top} + \norm{\e_2 \e_1^\top} + \norm{\e_1 \e_2^\top} \right )
	 \text{\,.}
	\end{split}
\]
Plugging the inequality above and the lower bound on $|w_{1,1}|$ (Lemma~\ref{lem:w_11_lower_bound}) into Equation~\eqref{eqn:triangle_inequality_norm_lower_bound}, we have:
\[
\begin{split}
\norm{W_{L : 1}} & \geq  \frac{\norm{ \e_1 \e_1^\top }}{c_{\norm{\cdot}}} \left ( \frac{1}{\sqrt{2 \ell (W_{L : 1})}} - 2 + \sqrt{2 \ell (W_{L : 1})} \right ) - 2c_{\norm{\cdot}}^2 \left (\norm{\e_2 \e_2^\top} + \norm{\e_2 \e_1^\top} + \norm{\e_1 \e_2^\top} \right ) \\
& \geq \frac{\norm{ \e_1 \e_1^\top }}{c_{\norm{\cdot}}} \frac{1}{\sqrt{2 \ell (W_{L : 1})}} - 2\frac{\norm{ \e_1 \e_1^\top }}{c_{\norm{\cdot}}} - 2c_{\norm{\cdot}}^2 \left (\norm{\e_2 \e_2^\top} + \norm{\e_2 \e_1^\top} + \norm{\e_1 \e_2^\top} \right ) \\
& \geq \frac{\norm{ \e_1 \e_1^\top }}{c_{\norm{\cdot}}} \frac{1}{\sqrt{2 \ell (W_{L : 1})}} - 2c_{\norm{\cdot}}^2 \left (\norm{\e_1 \e_1^\top} + \norm{\e_2 \e_2^\top} + \norm{\e_2 \e_1^\top} + \norm{\e_1 \e_2^\top} \right )
\text{\,.}
\end{split}
\]
Since $\norm{W_{L : 1}}$ is trivially lower bounded by zero, defining the constants
\[
a_{\norm{\cdot}} := \frac{ \norm{\e_1 \e_1^\top} }{ \sqrt{2} c_{\norm{\cdot}} }
~ , ~
b_{\norm{\cdot}} := \max \left \{ \sqrt{2} a_{\norm{\cdot}} , 8c_{\norm{\cdot}}^2 \max_{i,j \in \{1,2\}} \norm{\e_i \e_j^\top} \right \}
,
\]
allows us, on the one hand, to arrive at a bound of the form:
\[
	\norm{W_{L : 1}} \geq a_{\norm{\cdot}} \cdot \frac{1}{\sqrt{\ell (W_{L : 1})}}  -  b_{\norm{ \cdot }}
	\text{\,,}
\]
and on the other hand, to lift our previous assumption on the loss: when $\ell (W_{L : 1}) \geq \frac{1}{2}$ the bound is vacuous, \ie~non-positive and trivially holds.
Noticing this is exactly Equation~\eqref{eq:norms_lb} (recall we omitted the time index $t$), concludes the first part of the proof.

%-------------------------------------------------------------------
% EFFECTIVE RANK BOUND PROOF
%-------------------------------------------------------------------
\subsubsection{Proof of Equation~\eqref{eq:erank_ub} (upper bound for effective rank)} \label{sec:effective_rank_bound}

During the following effective rank (Definition~\ref{def:erank}) analysis we operate under the assumption of $\ell (W_{L : 1}) < \frac{1}{32}$.
We later remove this assumption, delivering a bound that holds for all loss values.
Making use of the obtained bounds on $\sigma_1(W_{L : 1})$ and $\sigma_2(W_{L : 1})$ (Lemma~\ref{lem:W_singular_values_bound}) we arrive at:
\[
	\begin{split}
		\rho_1 (W_{L : 1}) & = \frac{\sigma_1 (W_{L : 1})}{\sigma_1 (W_{L : 1}) + \sigma_2 (W_{L : 1})} \\
		& \geq \frac{\sigma_1 (W_{L : 1})}{\sigma_1 (W_{L : 1}) + 3\sqrt{2 \ell (W_{L : 1})}} \\
		& = 1 - \frac{3\sqrt{2 \ell (W_{L : 1})}}{\sigma_1 (W_{L : 1}) + 3\sqrt{2 \ell (W_{L : 1})}} \\
		& \geq 1 - \frac{3\sqrt{2 \ell (W_{L : 1})}}{ \frac{1}{ \sqrt{2 \ell (W_{L : 1})}} - 2 + 3\sqrt{2 \ell (W_{L : 1})}} \\
		& = 1 - \frac{6 \ell (W_{L : 1})}{ 6 \ell (W_{L : 1}) - 2\sqrt{2 \ell (W_{L : 1})} + 1 }
		\text{\,.}
	\end{split}
\]
Given our assumption on the loss, we have $1 - 2\sqrt{2 \ell ((W_{L : 1}))} \geq \frac{1}{2}$ and thus
\be
	\rho_2 (W_{L : 1}) = 1 - \rho_1 (W_{L : 1}) \leq \frac{6 \ell (W_{L : 1})}{6 \ell (W_{L : 1}) + \frac{1}{2}} \leq 12 \ell (W_{L : 1})
	\text{\,.}
	\label{eqn:rho_2_upper_bound}
\ee
Let $h \left (\rho_2 (W_{L : 1}) \right ) := - \rho_2 (W_{L : 1}) \cdot \ln \left (\rho_2 (W_{L : 1}) \right ) - (1 - \rho_2 (W_{L : 1})) \cdot \ln \left (1 - \rho_2 (W_{L : 1}) \right )$ denote the binary entropy function, and recall that the effective rank of $W_{L : 1}$ is defined to be $\erank (W_{L : 1}) := \exp \{ h \left (\rho_2 (W_{L : 1}) \right ) \}$.
The exponent function is convex and therefore upper bounded on the interval $[0, \ln (2)]$ by the linear function that intersects it at these points.
Formally, for $x \in [0, \ln (2)]$ it holds that $\exp (x) \leq 1 + \frac{1}{\ln (2)} x$, yielding the following bound:
\[
	\erank (W_{L : 1}) \leq 1 + \frac{1}{\ln (2)} \cdot  h \left (\rho_2 (W_{L : 1}) \right )
	\text{\,.}
\]
By Lemma~\ref{lem:entropy_sqrt_bound} we have that $h \left (\rho_2 (W_{L : 1}) \right ) \leq 2 \sqrt{\rho_2 (W_{L : 1})}$.
Combined with Equation~\eqref{eqn:rho_2_upper_bound}, since $\inf_{W' \in \S} \erank (W') = 1$ (Proposition~\ref{prop:sol_set_rank}), this leads to:
\[
	\erank (W_{L : 1}) \leq \inf_{W' \in \S} \erank (W') + \frac{2\sqrt{12}}{\ln (2)} \cdot \sqrt{\ell (W_{L : 1})}
	\text{\,.}
\]
Recall that the time index $t$ is omitted, and the result holds for all $t \geq 0$, \ie~this is exactly Equation~\eqref{eq:erank_ub}.
To remove our assumption on the loss, notice that when $\ell (W_{L : 1}) \geq \frac{1}{32}$ the bound is trivial, as the right-hand side is greater than $2$, which is the maximal effective rank (for a $2 \times 2$ matrix).

%-------------------------------------------------------------------
% DISTANCE FROM INFIMAL RANK RESULT
%-------------------------------------------------------------------
\subsubsection{Proof of Equation~\eqref{eq:irank_dist_ub} (upper bound for distance from infimal rank)} \label{sec:distance_from_infimal_rank_upper_bound}

According to Proposition~\ref{prop:sol_set_rank}, the infimal rank of $\S$ is $1$.
The quantity we seek to upper bound is therefore $D ( W_{L : 1} ( t ) , \M_1 ) = \sigma_2 (W_{L : 1} (t) )$. By Equation~\eqref{eqn:W_sing_val_bounds_no_assump} in Lemma~\ref{lem:W_singular_values_bound}, for all $t \geq 0$ we have
\[
	D(W_{L : 1} (t), \M_{1}) \leq 3\sqrt{2} \cdot \sqrt{\ell (t)}
	\text{\,,}
\]
completing the proof.
\qed

%-------------------------------------------------------------------
% POSITIVE DETERMINANT PROPOSITION PROOF
%-------------------------------------------------------------------
\subsection{Proof of Proposition~\ref{prop:det_pos}}
\label{app:proofs:det_pos}

Define $W_{-1}$ to be the matrix obtained from $W$ by multiplying its first row by $-1$.
On the one hand, symmetry around the origin implies that $W_{-1}$ and $W$ follow the same distribution.
On the other hand, $\det ( W_{-1} ) = - \det ( W ) $.
Due to the fact that the set of matrices with zero determinant has probability $0$ under continuous distributions (see, \eg,~Remark 2.5 in~\cite{hackbusch2012tensor}), we may conclude $\Pr ( \det (W) > 0 ) = \Pr ( \det (W) < 0 ) = 0.5$.

For $W_1, W_2, \ldots , W_L$ random matrices drawn independently, let $l \in [L]$ be the index such that $\Pr ( \det (W_l) > 0) = 0.5$.
Since $Pr ( \det (W_{l'}) = 0) = 0$ for any $l' \in [L]$, the proof readily follows from determinant multiplicativity and the law of total probability:
\[
\begin{split}
\Pr \left ( \det ( W_L W_{L - 1} \cdots W_1 ) > 0 \right ) & = \Pr ( \det (W_l) > 0) \cdot \Pr \left ( \Pi_{i \neq l} \det (W_i) > 0 \right ) \\
& + \Pr ( \det (W_l) < 0) \cdot \Pr \left ( \Pi_{i \neq l} \det (W_i) < 0 \right ) \\
& = \frac{1}{2} \left [ \Pr \left ( \Pi_{i \neq l} \det (W_i) > 0 \right ) + \Pr \left ( \Pi_{i \neq l} \det (W_i) < 0 \right ) \right ] \\
& = 0.5
\text{\,.}
\end{split}
\]
An identical computation yields $\Pr \left ( \det ( W_L W_{L - 1} \cdots W_1 ) < 0 \right ) = 0.5$.
\qed

%-------------------------------------------------------------------
% PROOF OF LOSS CONVERGENCE 2x2 DEPTH 2 ID INIT
%-------------------------------------------------------------------
\subsection{Proof of Proposition~\ref{prop:loss_down}} \label{app:proofs:loss_down}

The proof makes use of the following lemma, proven in Subappendix~\ref{app:proofs:prod_mat_remains_pos_def}.

\begin{lemma} \label{lem:balanced_scaled_id_remains_pos_def}
Consider the setting of Theorem~\ref{thm:norms_up_finite} (arbitrary depth $L \in \N$) in the special case of an initial product matrix $W_{L : 1} (0) = \alpha \cdot I$, where $I$ stands for identity matrix and $\alpha \in (0, 1]$. Then, $W_{L : 1} (t)$ is positive definite for all $t \geq 0$ .
\end{lemma}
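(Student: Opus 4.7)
The plan is to establish two properties of the product matrix $W_{L:1}(t)$ in sequence: that it remains \emph{symmetric} throughout gradient flow, and that its \emph{determinant} remains strictly positive. Together, these force both eigenvalues to stay on the positive side of zero (since $d = d' = 2$), yielding positive definiteness. The determinant part is immediate: $\det(W_{L:1}(0)) = \det(\alpha I) = \alpha^2 > 0$, so Lemma~\ref{lem:det_does_not_change_sign} delivers $\det(W_{L:1}(t)) > 0$ for all $t \geq 0$.

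For symmetry, I will regard Lemma~\ref{lem:prod_mat_dyn} as providing an autonomous ODE $\dot W_{L:1} = F(W_{L:1})$, with
\[
F(M) := -\sum_{l=1}^{L} \bigl[M M^\top\bigr]^{\frac{l-1}{L}} \nabla \ell(M) \bigl[M^\top M\bigr]^{\frac{L-l}{L}}.
\]
Near $\alpha I$ the matrices $M M^\top$ and $M^\top M$ are positive definite, so the fractional powers are real-analytic in $M$; hence $F$ is locally Lipschitz and the solution is unique. The key algebraic observation is the transpose-equivariance $F(M^\top) = F(M)^\top$. To see this, note first that the loss in Equation~\eqref{eq:loss} with observations in Equation~\eqref{eq:obs} satisfies $\ell(M) = \ell(M^\top)$ because $b_{1,2} = b_{2,1}$, and therefore $\nabla \ell(M^\top) = \nabla \ell(M)^\top$. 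Substituting $M^\top$ into $F$ and comparing with $F(M)^\top$, the two expressions agree after the reindexing $l \leftrightarrow L+1-l$ and the use of symmetry of the fractional powers. Applying this identity to the curve $t \mapsto W_{L:1}(t)^\top$, which shares the initial value $\alpha I$ with $W_{L:1}(t)$ and satisfies the same ODE, uniqueness gives $W_{L:1}(t) = W_{L:1}(t)^\top$ for all $t$.

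Combining, the symmetric $2 \times 2$ matrix $W_{L:1}(t)$ has two continuous real eigenvalues $\lambda_1(t) \geq \lambda_2(t)$ with $\lambda_1(0) = \lambda_2(0) = \alpha > 0$. Suppose for contradiction that $\lambda_2(t_0) \leq 0$ at some $t_0 > 0$. By continuity there exists $t^* \in (0, t_0]$ with $\lambda_2(t^*) = 0$, whence $\det(W_{L:1}(t^*)) = 0$, contradicting the established positivity of the determinant. Therefore $\lambda_1(t), \lambda_2(t) > 0$ for all $t \geq 0$, and $W_{L:1}(t)$ is positive definite.

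The main obstacle is verifying the transpose-equivariance $F(M^\top) = F(M)^\top$: the non-commutativity of matrix multiplication means one must carefully exploit the symmetry of $[MM^\top]^{\beta}$ and $[M^\top M]^{\beta}$ together with the reindexing $l \leftrightarrow L+1-l$ to pair up the summands correctly; everything else reduces to a standard uniqueness-of-ODE argument plus an eigenvalue-continuity observation.
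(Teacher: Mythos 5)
Your proof is correct and takes a genuinely different route from the paper's. The paper works at the level of the \emph{factorization}: it introduces the notion of a ``symmetric factorization'' ($W_l = W_{L-l+1}^\top$), proves by an induction on all time-derivatives at $t=0$ (using analyticity of the trajectory) that this structure is preserved under gradient flow (Lemmas~\ref{lem:induction_on_order_k} and~\ref{lem:sym_factorization_stays_sym}), constructs the explicit symmetric-balanced initialization $\bar{W}_l(0) = \alpha^{1/L} I$ that shares the product matrix $\alpha I$, and invokes the fact that under balancedness the product-matrix trajectory depends only on $W_{L:1}(0)$ so the two trajectories coincide. You instead reason directly at the level of the product-matrix ODE $\dot W_{L:1} = F(W_{L:1})$ from Lemma~\ref{lem:prod_mat_dyn}: you verify the transpose-equivariance $F(M^\top) = F(M)^\top$ (which hinges on $\nabla\ell(M^\top) = \nabla\ell(M)^\top$, true because $b_{1,2}=b_{2,1}$), observe that $t \mapsto W_{L:1}(t)^\top$ therefore solves the same IVP as $W_{L:1}(t)$, and conclude symmetry by uniqueness of ODE solutions.

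Your argument is arguably cleaner, avoiding both the derivative-by-derivative induction and the detour through an alternative factorization. One point you should spell out more explicitly: local Lipschitzness of $F$ near $\alpha I$ only yields uniqueness on a short initial interval, and the fractional powers $[MM^\top]^\beta$ cease to be smooth at singular $M$. To extend uniqueness to all $t \geq 0$ you need the trajectory to remain in the open set of invertible matrices, which is exactly what Lemma~\ref{lem:det_does_not_change_sign} gives you via $\det W_{L:1}(t) > 0$. You already invoke that lemma for the determinant, so the pieces are all on the table; just make the chaining of local uniqueness along the trajectory explicit. The concluding eigenvalue-continuity step is fine and matches the paper's.
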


With Lemma~\ref{lem:balanced_scaled_id_remains_pos_def} in place, we may derive the exact differential equations governing the product matrix in our setting of depth $L = 2$.
Then, a detailed analysis of the dynamics will yield convergence of the loss to global minimum, \ie~ $\lim_{t \to \infty} \ell ( t ) = 0$.
As usual, we omit the time index $t$ when stating results for all $t$ or when clear from the context.

According to Lemma~\ref{lem:balanced_scaled_id_remains_pos_def}, the product matrix $W_{L : 1}$ is symmetric and positive definite. 
Thus, we may write the loss and its gradient with respect to $W_{L : 1}$ as:
\be
	\ell (W_{L : 1}) = \frac{1}{2} \left [ w_{2,2}^2 + 2(w_{1,2} - 1)^2 \right ] \quad , \quad \nabla \ell (W_{L : 1}) = \begin{pmatrix}
	0 & w_{1,2} - 1 \\
	w_{1,2} - 1 & w_{2,2}
	\end{pmatrix} \text{\,,}
	\label{eqn:loss_and_grad_W}
\ee
where $\{w_{i,j}\}_{i , j \in [2]}$ are the entries of $W_{L : 1}$.
Since the factors $W_1$ and $W_2$ are balanced at initialization (Equation~\eqref{eq:balance}), the differential equation governing the product matrix (Lemma~\ref{lem:prod_mat_dyn}) for depth $L = 2$ gives:
\be
\begin{split}
\dot{W}_{L : 1} & = - \left[ W_{L : 1} W_{L : 1}^\top \right]^\frac{1}{2} \cdot \nabla \ell (W_{L : 1}) -\nabla \ell (W_{L : 1}) \cdot \left[ W_{L : 1}^\top W_{L : 1} \right]^\frac{1}{2} \\
& = -W_{L : 1} \nabla \ell (W_{L : 1}) -\nabla \ell (W_{L : 1}) W_{L : 1}
\text{\,,}
\end{split}
\label{eqn:W_derivative_depth_2}
\ee
where the transition is by positive definiteness of $W_{L:1}$.
Writing the differential equation of each entry separately, we have:
\be
\begin{split}
	\dot{w}_{1,1} & = 2w_{1,2} (1 - w_{1,2}) \text{\,,}\\
	\dot{w}_{2,2} & = 2w_{1,2} (1 - w_{1,2}) -2w_{2,2}^2 \text{\,,}\\
	\dot{w}_{1,2} & = w_{2,2} (1 - 2w_{1,2}) + w_{1,1} (1 - w_{1,2})
	\text{\,.}
	\label{eqn:W_entries_diff_eqns}
\end{split}
\ee
Let us characterize the behavior of these entries throughout time.
\begin{lemma} \label{lem:W_entries_opt_bounds}
The following holds for all $t \geq 0$:
\begin{enumerate} 
	\item $w_{1,1} > 0$ and is monotonically non-decreasing.
	\item $0 \leq w_{1,2} \leq 1$.
	\item $0 < w_{2,2} \leq 1$.
\end{enumerate}
\end{lemma}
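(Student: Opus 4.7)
The plan is as follows. First, Lemma~\ref{lem:balanced_scaled_id_remains_pos_def} (invoked in the excerpt just before the target lemma) tells us that $W_{L:1}(t)$ is positive definite for all $t \geq 0$, which immediately yields the strict positivity claims $w_{1,1}(t) > 0$ and $w_{2,2}(t) > 0$. Hence the task reduces to establishing the three interval bounds $w_{1,2}(t) \in [0,1]$, $w_{2,2}(t) \leq 1$, and the monotonicity of $w_{1,1}$. The initial values are $w_{1,1}(0) = w_{2,2}(0) = \alpha \in (0,1]$ and $w_{1,2}(0) = 0$ (since $W_{L:1}(0) = \alpha I$), so all three bounds hold at $t=0$.

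I would carry out a joint first-exit-time argument using the ODEs in Equation~\eqref{eqn:W_entries_diff_eqns}. Let $T := \inf\{ t \geq 0 : w_{1,2}(t) \notin [0,1] \text{ or } w_{2,2}(t) > 1 \}$, and suppose for contradiction that $T < \infty$. By continuity, at least one bound is tight at $T$ while all three hold on $[0,T]$. For each of the three ways the supremum could be attained, Lemma~\ref{lem:ode_bounded_in_interval} (together with the intermediate value theorem) would supply a time at which the appropriate derivative has the wrong sign, which I would then refute by direct evaluation: (i) if $w_{1,2}(T) = 0$, then the last equation in \eqref{eqn:W_entries_diff_eqns} gives $\dot w_{1,2}(T) = w_{1,1}(T) + w_{2,2}(T) > 0$, preventing $w_{1,2}$ from entering $(-\infty,0)$; (ii) if $w_{1,2}(T) = 1$, then $\dot w_{1,2}(T) = -w_{2,2}(T) < 0$, preventing $w_{1,2}$ from entering $(1,\infty)$; (iii) if $w_{2,2}(T) = 1$, then using $w_{1,2}(T) \in [0,1]$ so that $w_{1,2}(T)(1-w_{1,2}(T)) \leq 1/4$, the second equation gives $\dot w_{2,2}(T) \leq 1/2 - 2 < 0$, preventing $w_{2,2}$ from entering $(1,\infty)$. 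Each case contradicts the definition of $T$, so $T = \infty$ and claims~2 and~3 follow.

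Once the bound $w_{1,2}(t) \in [0,1]$ is established for all $t \geq 0$, monotonicity of $w_{1,1}$ is immediate from the first equation in \eqref{eqn:W_entries_diff_eqns}, since $\dot w_{1,1}(t) = 2 w_{1,2}(t)(1 - w_{1,2}(t)) \geq 0$. This completes claim~1.

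The main obstacle is that the three bounds are coupled through the ODE system: the argument ruling out each boundary crossing at $T$ relies on the other bounds still holding at $T$ (for instance, the bound $w_{1,2}(T) \in [0,1]$ used in case (iii), and the strict positivity from positive definiteness used in cases (i) and (ii)). The clean way to handle this is exactly the simultaneous first-exit-time definition above, so that at $T$ all other bounds are still valid by continuity and the ODE can be used at the boundary without circularity.
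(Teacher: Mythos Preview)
Your proposal is correct and follows essentially the same approach as the paper: positive definiteness gives $w_{1,1},w_{2,2}>0$, then the ODEs in Equation~\eqref{eqn:W_entries_diff_eqns} together with Lemma~\ref{lem:ode_bounded_in_interval} yield the interval bounds by checking that the derivative points inward at each boundary, after which $\dot w_{1,1}\geq 0$ is immediate. The only organizational difference is that the paper treats the bounds sequentially rather than via a joint first-exit time (since $w_{1,2}\in[0,1]$ depends only on $w_{1,1},w_{2,2}>0$, which is already global from positive definiteness, and then $w_{2,2}\leq 1$ depends only on $w_{1,2}\in[0,1]$), and it bounds $\dot w_{2,2}$ whenever $w_{2,2}>1/2$ rather than at $w_{2,2}=1$; these are minor variations of the same argument.
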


\begin{proof}
Since $W_{L : 1}$ is positive definite, it follows that $w_{1,1}$ and $w_{2,2}$ are positive.
Examining the behavior of $w_{1,2}$ (Equation~\eqref{eqn:W_entries_diff_eqns}): on the one hand, when $w_{1,2} = 0$ then $\dot{w}_{1,2} = w_{2,2} + w_{1,1} > 0$, and on the other hand, when $w_{1,2} = 1$ then $\dot{w}_{1,2} = -w_{2,2} < 0$.
Because $w_{1,2}$ is initialized at $0$, it stays in the interval $[0, 1]$.
Otherwise, by Lemma~\ref{lem:ode_bounded_in_interval}, we have a contradiction to the positivity of $\dot{w}_{1,2}$ when $w_{1,2} = 0$ or its negativity when $w_{1,2} = 1$.
Similarly, if $w_{2,2} > \frac{1}{2}$ we have $\dot{w}_{2,2} < 2w_{1,2} (1 - w_{1,2}) -\frac{1}{2} \leq 0$.
Since at initialization $w_{2,2} (0) = \alpha \leq 1$, by Lemma~\ref{lem:ode_bounded_in_interval}, it will not go above $1$.
Lastly, since $w_{1,2}$ is in the interval $[0, 1]$, it holds that $\dot{w}_{1,1} \geq 0$, \ie~$w_{1,1}$ is monotonically non-decreasing.
\end{proof}

We turn our focus to the derivative of the loss with respect to $t$:
\[
	\frac{d}{dt} \ell (W_{L : 1}) = \langle \nabla \ell (W_{L : 1}) , \dot{W}_{L : 1} \rangle
	\text{\,.}
\]
Plugging in Equation~\eqref{eqn:W_derivative_depth_2} and recalling the fact that $\langle A, B \rangle = \Tr (A^\top B)$ for matrices $A, B$ of the same size:
\[
	\frac{d}{dt} \ell (W_{L : 1}) = - \Tr (\nabla \ell (W_{L : 1})^\top  W_{L : 1} \nabla \ell (W_{L : 1})) - \Tr (\nabla \ell(W_{L : 1})^\top  \nabla \ell(W_{L : 1}) W_{L : 1})
	\text{\,.}
\]
From the cyclic property of the trace operator and symmetry of $\nabla \ell (W_{L : 1})$ (Equation~\eqref{eqn:loss_and_grad_W}), we arrive at the following expression:
\[
	\frac{d}{dt} \ell (W_{L : 1}) = -2\Tr (\nabla \ell (W_{L : 1}) W_{L : 1} \nabla \ell(W_{L : 1}))
	\text{\,.}
\]
Notice that since $\nabla\ell (W_{L : 1}) W_{L : 1} \nabla \ell (W_{L : 1})$ is positive semidefinite the trace is non-negative and $\frac{d}{dt} \ell (W_{L : 1}) \leq 0$.
That is, the loss is monotonically non-increasing throughout time.
Invoking Lemma~\ref{lem:A_TBA_trace_bound}, we can upper bound the derivative by:
\be
	\frac{d}{dt} \ell (W_{L : 1}) \leq -2 \lambda_{1}(W_{L : 1}) \cdot \sigma_{2} (\nabla \ell (W_{L : 1}))^2 \text{\,,}
	\label{eqn:L_deriv_spectral_bound}
\ee
where $\lambda_{1} (W_{L : 1})$ is the maximal eigenvalue of $W_{L : 1}$ and $\sigma_{2} (\nabla \ell (W_{L : 1}))$ is the minimal singular value of $\nabla \ell (W_{L : 1})$.
The maximal eigenvalue of a symmetric matrix is greater than its diagonal entries. 
Therefore, $\lambda_{1}(W_{L : 1}) \geq w_{1,1}$. 
Since $w_{1,1}$ is initialized at $\alpha > 0$, and by Lemma~\ref{lem:W_entries_opt_bounds} is monotonically non-decreasing, we have $\lambda_{1}(W_{L : 1}) \geq \alpha$.
Writing the eigenvalues of $\nabla \ell(W_{L : 1})$ explicitly:
\[
\begin{split}
	\lambda_{1} (\nabla \ell(W_{L : 1})) & = \frac{ w_{2,2} + \sqrt{w_{2,2}^2 + 4(1 - w_{1,2})^2} }{2} \text{\,,} \\
	\lambda_{2} (\nabla \ell(W_{L : 1})) & = \frac{ w_{2,2} - \sqrt{w_{2,2}^2 + 4(1 - w_{1,2})^2} }{2}
	\text{\,,}
\end{split}
\]
we can see that, since $w_{2,2}$ is positive (Lemma~\ref{lem:W_entries_opt_bounds}), $\sigma_{2} (\nabla \ell (W_{L : 1})) = \min_{i=1,2} \abs{\lambda_i (\nabla \ell(W_{L : 1}))} = ( \sqrt{w_{2,2}^2 + 4(1 - w_{1,2})^2} - w_{2,2} ) / 2$.
Applying the identity $a - b = \frac{a^2 - b^2}{a + b}$ for $a, b \in \R$ such that $a + b \neq 0$, and the bounds on $w_{2,2}$ and $w_{1,2}$ (Lemma~\ref{lem:W_entries_opt_bounds}):
\[
	\begin{split}
	\sigma_2 (\nabla \ell (W_{L : 1})) & = \frac{ 2(1 - w_{1,2})^2 }{  \sqrt{w_{2,2}^2 + 4(1 - w_{1,2})^2} + w_{2,2} } \\
	& \geq \frac{ 2(1 - w_{1,2})^2 }{ \sqrt{1 + 4(1 - w_{1,2})^2} + 1 } \\
	& \geq \frac{ 2(1 - w_{1,2})^2 }{ 2 \abs{(1 - w_{1,2})} + 2 } \\
	& \geq \frac{1}{2} (1 - w_{1,2})^2 \text{\,,}
	\end{split}
\]
where in the penultimate transition we bounded the square root of a sum by the sum of square roots.
Returning to Equation~\eqref{eqn:L_deriv_spectral_bound} we have:
\[
	\frac{d}{dt} \ell (W_{L : 1}) \leq - b (1 - w_{1,2})^4 \text{\,,}
\]
for $b = \frac{1}{2} \alpha$. 
We are now in a position to prove that $w_{1,2} \rightarrow 1$ as $t$ tends to infinity.
Integrating both sides with respect to time:
\[
	\ell (W_{L : 1}(t)) - \ell (W_{L : 1}(0)) \leq -b \int\nolimits_{t'=0}^{t} (1 - w_{1,2}(t'))^4 dt' \text{\,.}
\]
Since $\ell (W_{L : 1}(t)) \geq 0$, by rearranging the inequality we may write:
\[
	\int\nolimits_{t'=0}^{t} (1 - w_{1,2}(t'))^4 dt' \leq \frac{\ell (W_{L : 1}(0))}{b} \text{\,.}
\]
Going back to the differential equation of $\dot{w}_{1,2}$ (Equation~\eqref{eqn:W_entries_diff_eqns}), by applying the bounds on $w_{1,2}$ and $w_{2,2}$ (Lemma~\ref{lem:W_entries_opt_bounds}) we have that $\dot{w}_{1,2} \geq -1$.
Defining $g(t) := (1 - w_{1,2}(t))^4$, it then holds that $\dot{g}(t) \leq 4$. Since $g(\cdot)$ is non-negative and has an upper bounded integral and derivative, from Lemma~\ref{lem:bounded_integral_and_der_convergence}, we can conclude that $lim_{t \rightarrow \infty} g(t) = 0$ and $lim_{t \rightarrow \infty} w_{1,2} (t) = 1$.

Because $\ell (W_{L : 1}(t))$ is monotonically non-increasing, we need only show that for each $\epsilon > 0$ there exists a $t_{\epsilon} > 0$ such that $\ell (W_{L : 1} (t_{\epsilon})) < \epsilon$. 
Having already established that $w_{1,2} (t)$ converges to $1$, this amounts to finding a large enough $t_{\epsilon}$ for which $w_{2,2} (t_{\epsilon})$ is sufficiently close to $0$.
Fix some $\epsilon > 0$ and let $\hat{t} > 0$ be such that for all $t \geq \hat{t}$ the following holds:
\be
	2(1 - w_{1,2}(t))^2 < \epsilon \quad , \quad 2w_{1,2}(t)(1 - w_{1,2}(t)) < \epsilon
	\text{\,.}
	\label{eqn:w_12_conv_eps_bound}
\ee
Such $\hat{t}$ exists since all terms above converge to $0$.
Returning to the differential equation of $\dot{w}_{2,2}$ (Equation~\eqref{eqn:W_entries_diff_eqns}):
\be
	\dot{w}_{2,2}(t) < \epsilon -2w_{2,2}(t)^2
	\text{\,.}
	\label{eqn:w_22_dot_upper_bound}
\ee
Recalling that $w_{2,2}(t) > 0$ (Lemma~\ref{lem:W_entries_opt_bounds}), it follows that there exists $t_{\epsilon} \geq \hat{t}$ with $\dot{w}_{2,2}(t_{\epsilon}) > -\epsilon$ (otherwise $w_{2,2} (t)$ goes to $-\infty$ as $t \rightarrow \infty$, in contradiction to the positivity of $w_{2,2} (t)$).
For the above $t_{\epsilon}$, by rearranging the terms in Equation~\eqref{eqn:w_22_dot_upper_bound} we achieve $w_{2,2}(t_{\epsilon}) < \sqrt{\epsilon}$.
Finally, combined with Equation~\eqref{eqn:w_12_conv_eps_bound}, the result readily follows:
\[
	\ell (W_{L : 1}(t_{\epsilon})) = \frac{1}{2} \left [ w_{2,2}(t_{\epsilon})^2 + 2(w_{1,2}(t_{\epsilon}) - 1)^2 \right ] < \epsilon
	\text{\,,}
\]
concluding the proof.
\qed

%-------------------------------------------------------------------
% PRODUCT MATRIX REMAINS POSITIVE DEFINITE - LEMMA
%-------------------------------------------------------------------
\subsubsection{Proof of Lemma~\ref{lem:balanced_scaled_id_remains_pos_def}} \label{app:proofs:prod_mat_remains_pos_def}

The proof proceeds as follows. 
We initially consider initializations where $W_1 (0), \ldots , W_L (0)$ form a \emph{symmetric factorization} of $W_{L : 1} (0)$ (Definition~\ref{def:sym_factorization}), and show that this ensures the product matrix stays symmetric.
Then, we establish that for every balanced initial factors (Equation~\eqref{eq:balance}) with a positive definite product matrix there exist alternative balanced factors such that: \emph{(i)}~the initial product matrix is the same; and \emph{(ii)}~the factors form a symmetric factorization of the product matrix. 
Since the product matrices for the original and the constructed initializations obey the exact same dynamics (Lemma~\ref{lem:prod_mat_dyn}), the proof concludes.

\begin{definition} \label{def:sym_factorization}
We say that the matrices $W_1, W_2, \ldots ,W_L \in \R^{d, d}$ form a \textit{symmetric factorization} of $W\in \R^{d, d}$ if $W = W_L W_{L-1}\cdots W_1$ and
\[
	W_l = W_{L-l+1}^\top  \quad , l\in \{1, \ldots ,\lfloor L/2 \rfloor + 1\}
	\text{\,.}
\]
\end{definition}
A straightforward result is that matrices with a symmetric factorization are symmetric themselves.
\begin{lemma} \label{lem:symm_factorization_is_sym}
If a matrix $W\in \R^{d, d}$ has a symmetric factorization, then it is symmetric.
\end{lemma}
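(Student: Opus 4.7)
The plan is to directly verify $W^\top = W$ by applying transposition to the factorization and reindexing. The only subtle point is that the symmetric factorization condition is stated only for $l \in \{1, \ldots, \lfloor L/2 \rfloor + 1\}$, so the first step is to observe that this condition actually propagates to all $l \in [L]$.

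Concretely, I would first show that $W_l^\top = W_{L-l+1}$ for every $l \in [L]$. For $l \leq \lfloor L/2 \rfloor + 1$ this is immediate from Definition~\ref{def:sym_factorization} by taking transposes of both sides of $W_l = W_{L-l+1}^\top$. For $l > \lfloor L/2 \rfloor + 1$, set $l' := L - l + 1$, which satisfies $l' \leq \lfloor L/2 \rfloor + 1$, and then the definition gives $W_{l'} = W_{L - l' + 1}^\top = W_l^\top$; transposing yields $W_l = W_{l'}^\top = W_{L-l+1}^\top$. (The case when $L$ is odd and $l = \lfloor L/2 \rfloor + 1 = (L+1)/2$ is self-paired: $l = L - l + 1$, so the condition just asserts $W_l = W_l^\top$, which is compatible with everything that follows.)

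With the pairing $W_l^\top = W_{L-l+1}$ established for all $l$, I would then compute
\begin{equation*}
W^\top = (W_L W_{L-1} \cdots W_1)^\top = W_1^\top W_2^\top \cdots W_L^\top = W_L W_{L-1} \cdots W_1 = W,
\end{equation*}
where the second equality uses the standard reversal rule for transposition of a product, and the third equality substitutes $W_l^\top = W_{L-l+1}$ termwise (so the factor in position $l$ from the left becomes $W_{L-l+1}$, reversing the indexing back to the original product). This gives $W = W^\top$, proving the claim.

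There is no real obstacle here; the only thing to be careful about is the indexing bookkeeping in the reversed product, and the trivial verification that the pairing condition covers all indices (including the middle one when $L$ is odd). The argument requires no hypothesis beyond the definition itself.
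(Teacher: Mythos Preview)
Your proposal is correct and follows essentially the same approach as the paper: transpose the product $W = W_L \cdots W_1$ and use the pairing $W_l^\top = W_{L-l+1}$ to recover the original product. The paper's proof is a one-line chain of equalities that leaves the extension of the pairing to all $l \in [L]$ implicit, whereas you spell that step out carefully; this extra bookkeeping is harmless and arguably clearer.
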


\begin{proof}
Let $W_1, W_2, \ldots ,W_L \in \R^{d, d}$ form a symmetric factorization of $W$. It directly follows that
\[
	W = W_L W_{L-1}\cdots W_1 = W_1^\top  \cdots W_{L-1}^\top W_L^\top  = W^\top \text{\,.}
\]
\end{proof}

By Lemma~\ref{lem:analytic_factors_and_prod_mat}, $W_1(t), \ldots , W_L(t), W_{L : 1}(t)$ and $\nabla \ell ( W_{L : 1} (t) )$ are analytic, and hence infinitely differentiable, with respect to $t$.
Lemmas~\ref{lem:induction_on_order_k} and~\ref{lem:sym_factorization_stays_sym} below thus establish that if $W_1 (0), \ldots , W_L (0)$ form a symmetric factorization of $W_{L : 1} (0)$, then the product matrix stays symmetric for all $t$.

\begin{lemma} \label{lem:induction_on_order_k}
Under the setting of Lemma~\ref{lem:balanced_scaled_id_remains_pos_def}, assume that the matrices $W_1(0), \ldots ,W_L(0)$ form a symmetric factorization of $W_{L : 1} (0)$ (Definition~\ref{def:sym_factorization}).
Then, for all $k\in \N\cup \{0\}$:
\be
	W_{L : 1}^{(k)}(0)=W_{L : 1}^{(k)}(0)^\top
	\text{\,,}
	\label{eqn:induction_W_sym_k_deriv}
\ee
and
\be 
	W_{l}^{(k)}(0) = W_{L-l+1}^{(k)}(0)^\top  \quad , l \in \{1, \ldots,\lfloor L/2 \rfloor + 1\}
	\text{\,.}
	\label{eqn:induction_sym_fac_k_deriv}
\ee
\end{lemma}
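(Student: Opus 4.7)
The plan is to proceed by induction on $k$, establishing Equations~\eqref{eqn:induction_W_sym_k_deriv} and~\eqref{eqn:induction_sym_fac_k_deriv} simultaneously. The base case $k=0$ is immediate: \eqref{eqn:induction_sym_fac_k_deriv} follows directly from the hypothesis that $W_1(0), \ldots, W_L(0)$ form a symmetric factorization of $W_{L:1}(0)$, and \eqref{eqn:induction_W_sym_k_deriv} at $k=0$ is then obtained from Lemma~\ref{lem:symm_factorization_is_sym}.

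For the inductive step, assume both equations hold for all $k' \le k$. I would differentiate the gradient flow equation
\[
\dot W_l(t) = -\prod_{r=l+1}^{L} W_r(t)^\top \cdot \nabla \ell(W_{L:1}(t)) \cdot \prod_{r=1}^{l-1} W_r(t)^\top
\]
exactly $k$ times via Leibniz's rule, expressing $W_l^{(k+1)}(0)$ as a multinomial sum over partitions $k_1 + \cdots + k_{l-1} + k_\nabla + k_{l+1} + \cdots + k_L = k$, with each summand a product of $W_m^{(k_m)}(0)^\top$ factors and a middle factor $\nabla\ell(W_{L:1})^{(k_\nabla)}(0)$. A key auxiliary observation is that for the matrix completion loss of Subsection~\ref{sec:analysis:setting}, $\nabla \ell$ is an affine map whose observation pattern is symmetric under $(i,j) \mapsto (j,i)$, so it sends symmetric matrices to symmetric matrices; combined with the inductive hypothesis on \eqref{eqn:induction_W_sym_k_deriv}, this makes every $\nabla\ell(W_{L:1})^{(k_\nabla)}(0)$ with $k_\nabla \le k$ symmetric.

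Next I would transpose the Leibniz expansion of $W_l^{(k+1)}(0)$, apply the inductive hypothesis $W_m^{(k_m)}(0) = W_{L-m+1}^{(k_m)}(0)^\top$ (valid since $k_m \le k$) together with the symmetry of $\nabla\ell(W_{L:1})^{(k_\nabla)}(0)$ to every factor, and reindex the partition via the reflection $m \mapsto L-m+1$; the outcome should match, term by term, the Leibniz expansion of $W_{L-l+1}^{(k+1)}(0)$, yielding \eqref{eqn:induction_sym_fac_k_deriv} at level $k+1$. To then promote \eqref{eqn:induction_W_sym_k_deriv} to level $k+1$, I would apply Leibniz to $W_{L:1}^{(k+1)} = (W_L \cdots W_1)^{(k+1)}$, transpose the resulting sum, substitute $W_m^{(k_m)}(0)^\top = W_{L-m+1}^{(k_m)}(0)$ (now available for all $k_m \le k+1$ thanks to the identity just established), and reindex using the same reflection to recover $W_{L:1}^{(k+1)}(0)$ itself.

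The hard part will be the combinatorial bookkeeping of matching the transposed Leibniz expansion of $W_l^{(k+1)}(0)$ with that of $W_{L-l+1}^{(k+1)}(0)$: the reversal of factor order induced by transposition has to interlock correctly with the reflection $m \mapsto L-m+1$ on the partition indices, and the middle factor $\nabla\ell(W_{L:1})^{(k_\nabla)}$ must be handled separately via its symmetry rather than via the symmetric-factorization identity. It is worth noting that the symmetry of the observation pattern in Subsection~\ref{sec:analysis:setting} is precisely what permits the induction to pass through this middle term; without it, $\nabla \ell$ would not preserve symmetry and the chain would break.
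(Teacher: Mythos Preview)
Your proposal is correct and follows essentially the same approach as the paper: induction on $k$, Leibniz expansion of the gradient flow dynamics, use of the symmetry of $\nabla\ell(W_{L:1})^{(k_\nabla)}(0)$ (which the paper also isolates as the key auxiliary fact specific to this loss), and the reflection $m\mapsto L-m+1$ to match the transposed expansion of $W_l^{(k+1)}(0)$ with that of $W_{L-l+1}^{(k+1)}(0)$. The only cosmetic difference is that the paper frames the induction as ``assume for $m<k$, prove for $k$'' and, for the symmetry of $W_{L:1}^{(k)}(0)$, explicitly pairs each Leibniz summand with its index-reversed partner to exhibit the sum as a sum of symmetric matrices---but this is the same reindexing argument you describe.
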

	
\begin{proof}
The proof is by induction over $k$.
For $k=0$, the claim holds directly from the initialization assumption and Lemma~\ref{lem:symm_factorization_is_sym}.	
For $k\in \N$, suppose the claim is true for all $m\in \N \cup \{0\}$ with $m < k$. 
We begin by showing Equation~\eqref{eqn:induction_sym_fac_k_deriv} holds for $k$. 
In turn, this will lead to Equation~\eqref{eqn:induction_W_sym_k_deriv} holding as well. 
For $l \in [L]$, the dynamics of $W_l(t)$ under gradient flow are
\[
	W_{l}^{(1)}(t) = - \tfrac{\partial}{\partial W_l} \phi ( W_1 ( t ) , W_2 ( t ) , \ldots , W_L ( t ) ) = -\prod_{r=l+1}^{L} W_{r}(t)^\top  \cdot G(t) \cdot \prod_{r=1}^{l-1} W_{r}(t)^\top \text{\,,}
\]
where $G(t) := \nabla \ell (W_{L : 1} (t))$ denotes the loss gradient with respect to $W_{L:1}$ at time $t$.
We can explicitly write the $k$'th ($k \geq 1$) derivative with respect to $t$ of each $W_l(t)$ using the product rule for higher order derivatives:
\[
	W_{l}^{(k)}(t) = -\sum_{i_1, \ldots ,i_L} \binom{k-1}{i_1, \ldots ,i_L} \prod_{r=l+1}^{L} W_{r}^{(i_r)}(t)^\top  \cdot G^{(i_l)}(t) \cdot \prod_{r=1}^{l-1} W_{r}^{(i_r)}(t)^\top \text{\,,}
\]
where $\sum_{l=1}^L i_l = k-1$ and $\binom{k-1}{i_1, \ldots ,i_L} = (k-1)! / \left ( i_1! \cdots i_L! \right )$ for $i_1, \ldots, i_L \in \{0, 1, \ldots, k -1 \}$. 
Taking the transpose of both sides we have:
\be
	W_{l}^{(k)}(t)^\top  = -\sum_{i_1, \ldots ,i_L} \binom{k-1}{i_1, \ldots ,i_L} \prod_{1}^{r=l-1} W_{r}^{(i_r)}(t) \cdot G^{(i_l)}(t)^\top  \cdot \prod_{l+1}^{r=L} W_{r}^{(i_r)}(t) \text{\,.}
\label{eqn:W_j_transpose_k_deriv}
\ee
Turning our attention to $G(t)$, we may write it explicitly as:
\[
	G(t) = \nabla \ell (W_{L : 1}(t)) = \begin{pmatrix}
	0 & w_{1,2} (t) - 1\\
	w_{2, 1} (t) - 1 & w_{2,2} (t)
	\end{pmatrix} \text{\,,}
\]
where $\{ w_{i , j} (t) \}_{i,j \in [2]}$ are the entries of $W_{L : 1} (t)$.
For $m < k$, note that when $W_{L : 1}^{(m)} (t)$ is symmetric so is $G^{(m)} (t)$.
With this in hand, the inductive assumption (Equation~\eqref{eqn:induction_W_sym_k_deriv}) implies that $G^{(m)} (0)$ is symmetric (for all $m < k$).
Combined with Equation~\eqref{eqn:induction_sym_fac_k_deriv} (for $m < k$, from the inductive assumption), we may write Equation~\eqref{eqn:W_j_transpose_k_deriv} for $t=0$ as:
\[
	W_{l}^{(k)}(0)^\top  = -\sum_{i_1, \ldots ,i_L} \binom{k-1}{i_1, \ldots ,i_L} \prod_{r=L-l+2}^{L} W_{r}^{(i_{L-r+1})}(0)^\top  \cdot G^{(i_l)}(0) \cdot \prod_{r=1}^{L - l} W_{r}^{(i_{L-r+1})}(0)^\top \text{\,.}
\]
Reordering the sum according to $h_r:=i_{L-r+1}$ and noticing that $\binom{k-1}{h_1, \ldots ,h_L} = \binom{k-1}{i_1, \ldots ,i_L}$, we conclude:
\[
	W_{l}^{(k)}(0)^\top  = -\sum_{h_1, \ldots ,h_L} \binom{k-1}{h_1, \ldots ,h_L} \prod_{r=L-l+2}^{L} W_{r}^{(h_r)}(0)^\top  \cdot G^{(h_{L-l+1})}(0) \cdot \prod_{r=1}^{L-l} W_{r}^{(h_r)}(0)^\top \text{\,.}
\]
That is,
\[
	W_{l}^{(k)}(0)^\top  =  W_{L-l+1}^{(k)}(0) \text{\,,}
\] 
proving Equation~\eqref{eqn:induction_sym_fac_k_deriv}.
	
It remains to show that $W_{L : 1}^{(k)}(0)$ is symmetric. Similarly to before, we take the $k$'th derivative of $W_{L : 1}(t) := W_L(t) \cdots W_1(t)$ using the product rule:
\[
	W_{L : 1}^{(k)}(t) = \sum_{i_1, \ldots ,i_L} \binom{k}{i_1, \ldots ,i_L} \prod_{1}^{l=L} W_{l}^{(i_l)}(t) \text{\,,}
\]
where $\sum_{l=1}^L i_l = k$ and $\binom{k}{i_1, \ldots ,i_L} = k! / (i_1! \cdots i_L!)$ for $i_1, \ldots, i_L \in \{0, 1, \ldots, k \}$.
For convenience, we denote $B_{i_1, \ldots ,i_L}(t) := \binom{k}{i_1, \ldots ,i_L} \prod_{1}^{l = L} W_{l}^{(i_l)}(t)$.
Pairing up elements in the sum with indices $(i_1, \ldots ,i_L)$ that are a reverse order of each other, \ie~$(i_1, \ldots ,i_L)$ is paired with $(i_L, \ldots ,i_1)$:
\be
	W_{L : 1}^{(k)}(t) = \sum_{i_1, \ldots ,i_L}  \frac{1}{2} \left [B_{i_1, \ldots ,i_L}(t)) + B_{i_L, \ldots ,i_1}(t) \right ]
	\text{\,.}
	\label{eqn:W_k_deriv_paired}
\ee
With Equation~\eqref{eqn:W_k_deriv_paired} in place, we can conclude the proof by showing $W_{L : 1}^{(k)}(0)$ is a sum of symmetric matrices. 
By the inductive assumption for Equation~\eqref{eqn:induction_sym_fac_k_deriv}, which was established in the first part of the proof for $k$ as well, we have:
\be
	B_{i_1, \ldots ,i_L}(0) =  B_{i_L, \ldots ,i_1}(0)^\top \text{\,,}
	\label{eqn:B_0_eq_transpose}
\ee
for each $(i_1, \ldots ,i_L)$. 
Therefore, the matrix $B_{i_1, \ldots ,i_L}(0) +  B_{i_L, \ldots ,i_1}(0)$ is symmetric.
Plugging Equation~\eqref{eqn:B_0_eq_transpose} into Equation~\eqref{eqn:W_k_deriv_paired} with $t=0$, we arrive at a representation of $W_{L : 1}^{(k)}(0)$ as a sum of symmetric matrices.
Thus, $W_{L : 1}^{(k)}(0)$ is symmetric, completing the proof.
\end{proof}	
	
\begin{lemma} \label{lem:sym_factorization_stays_sym}
Under the setting of Lemma~\ref{lem:balanced_scaled_id_remains_pos_def}, assume that the matrices $W_1(0), \ldots ,W_L(0)$ form a symmetric factorization of $W_{L : 1} (0)$ (Definition~\ref{def:sym_factorization}). Then, $W_{L : 1} (t)$ is symmetric for all $t \geq 0$ .
\end{lemma}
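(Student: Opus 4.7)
The plan is to combine the derivative-level symmetry established in Lemma~\ref{lem:induction_on_order_k} with the analyticity of the product matrix provided by Lemma~\ref{lem:analytic_factors_and_prod_mat}, and then invoke Lemma~\ref{lem:analytic_func_all_deriv_eq} entry-wise to promote equality at $t = 0$ (of all derivatives) to equality for all $t \geq 0$.

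More concretely, I would first fix indices $i, j \in [2]$ and define the two scalar functions $f_{i,j}(t) := (W_{L:1}(t))_{i,j}$ and $g_{i,j}(t) := (W_{L:1}(t))_{j,i}$ from $[0, \infty)$ to $\R$. By Lemma~\ref{lem:analytic_factors_and_prod_mat}, $W_{L:1}(t)$ is an analytic function of $t$, so each entry $f_{i,j}(t)$ and $g_{i,j}(t)$ is analytic as well (analytic functions are closed under projections onto coordinates).

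Next, I would read off the conclusion of Lemma~\ref{lem:induction_on_order_k}, which gives $W_{L:1}^{(k)}(0) = W_{L:1}^{(k)}(0)^\top$ for every $k \in \N \cup \{0\}$. Taking the $(i,j)$ entry of both sides yields $f_{i,j}^{(k)}(0) = g_{i,j}^{(k)}(0)$ for all $k \in \N \cup \{0\}$. Applying Lemma~\ref{lem:analytic_func_all_deriv_eq} to the pair of analytic functions $f_{i,j}$ and $g_{i,j}$ then gives $f_{i,j}(t) = g_{i,j}(t)$ for all $t \geq 0$, \ie~$(W_{L:1}(t))_{i,j} = (W_{L:1}(t))_{j,i}$. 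Since this holds for every pair $(i,j)$, we conclude that $W_{L:1}(t)$ is symmetric for all $t \geq 0$.

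There is no real obstacle here; the heavy lifting was already done in Lemma~\ref{lem:induction_on_order_k}, and this lemma is essentially a one-line packaging using analyticity. The only subtlety worth noting is the verification that the entries of $W_{L:1}(t)$ inherit analyticity from $W_{L:1}(t)$ itself, which is immediate since projection onto a coordinate is a linear (hence analytic) map.
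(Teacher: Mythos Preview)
Your proof is correct and uses the same ingredients as the paper's---Lemma~\ref{lem:induction_on_order_k} for the derivative-level identities at $t=0$, Lemma~\ref{lem:analytic_factors_and_prod_mat} for analyticity, and Lemma~\ref{lem:analytic_func_all_deriv_eq} to pass from equality of all derivatives at a point to equality on the whole ray. The route differs slightly: you apply Lemma~\ref{lem:analytic_func_all_deriv_eq} directly to the entries of $W_{L:1}(t)$ using the product-level conclusion $W_{L:1}^{(k)}(0)=W_{L:1}^{(k)}(0)^\top$ of Lemma~\ref{lem:induction_on_order_k}, whereas the paper applies Lemma~\ref{lem:analytic_func_all_deriv_eq} to the \emph{factors} using the factor-level conclusion $W_l^{(k)}(0)=W_{L-l+1}^{(k)}(0)^\top$, shows that $W_1(t),\ldots,W_L(t)$ remain a symmetric factorization for all $t$, and then invokes Lemma~\ref{lem:symm_factorization_is_sym}. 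Your path is marginally more direct for the stated conclusion; the paper's path yields the slightly stronger intermediate fact that the symmetric-factorization structure itself is preserved along the flow, though that extra information is not used elsewhere.
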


\begin{proof}
By Lemmas~\ref{lem:induction_on_order_k} and~\ref{lem:analytic_func_all_deriv_eq}, we may conclude that for all $t \geq 0$:
\[
	W_{l}(t) = W_{L-l+1}(t)^\top  \quad , l \in \{1, \ldots,\lfloor L/2 \rfloor + 1\}
	\text{\,.}
\]
In words, $W_1(t), \ldots ,W_L(t)$ form a symmetric factorization of $W_{L : 1} (t)$, and therefore $W_{L : 1} (t)$ is symmetric (Lemma~\ref{lem:symm_factorization_is_sym}).
\end{proof}

Going back to the setting of Lemma~\ref{lem:balanced_scaled_id_remains_pos_def} --- initialization is balanced (Equation~\eqref{eq:balance}), but does not necessarily comprise a symmetric factorization --- we show that here too the product matrix remains symmetric throughout optimization.
To do so, we first construct a factorization of $W_{L : 1} (0)$ that is both balanced and symmetric, for which Lemma~\ref{lem:sym_factorization_stays_sym} ensures the product matrix stays symmetric throughout optimization.
We then prove that the trajectories of the product matrix for the original and the modified initializations coincide.

Recall that $W_{L : 1}(0) = \alpha \cdot I$ and define $\bar{W}_l(0) : = \alpha^{\frac{1}{L}} \cdot  I$ for $l \in [L]$.
It is easily verified that:
\begin{itemize}
	\item $W_{L : 1} (0) = \bar{W}_L(0) \cdots \bar{W}_1(0)$.
	\item $\bar{W}_l(0) = \bar{W}_{L-l+1}(0)^\top$ for $l \in [L]$.
	\item $\bar{W}_{l+1}(0)^\top  \bar{W}_{l+1}(0) = \bar{W}_{l}(0) \bar{W}_{l}(0)^\top $ for $l \in [L - 1]$.
\end{itemize}
Meaning, $\bar{W}_1(0), \ldots , \bar{W}_L(0)$ are balanced, and form a symmetric factorization of $W_{L : 1} (0)$. 
Suppose the factors $\bar{W}_1(t), \ldots , \bar{W}_L(t)$ follow the gradient flow dynamics, with initial values $\bar{W}_1(0), \ldots , \bar{W}_L(0)$, and let $\bar{W}_{L : 1}(t) := \bar{W}_L(t) \cdots \bar{W}_1(t) $ be the induced product matrix. 
From Lemma~\ref{lem:sym_factorization_stays_sym}, it follows that $\bar{W}_{L : 1}(t)$ is symmetric for all $t \geq 0$.

As characterized in \cite{arora2018optimization} (restated as Lemma~\ref{lem:prod_mat_dyn}), if the initial factors are balanced, the product matrix trajectory depends only on its initial value $W_{L : 1}(0)$.
Since both the original and modified initializations are balanced and have the same product matrix, they lead to the exact same trajectory.
Thus, $W_{L : 1}(t) = \bar{W}_{L : 1}(t)$ for all $t \geq 0$, and specifically, $W_{L : 1} (t)$ is symmetric.

The last step is to see that $W_{L : 1} (t)$ is not only symmetric, but positive definite as well.
Since its initial value $W_{L : 1} (0)$ is positive definite, it suffices to show that its eigenvalues do not change sign. 
By Lemma~\ref{lem:det_does_not_change_sign}, the determinant of $W_{L : 1} (t)$ is positive for all $t$.
Specifically, the product matrix does not have zero eigenvalues.
Recalling that $W_{L:1} (t)$ is an analytic function of~$t$ (Lemma~\ref{lem:analytic_factors_and_prod_mat}), Theorem~6.1 in~\cite{kato2013perturbation} implies that its eigenvalues are continuous in~$t$.
Therefore, they can not change sign, as that would require them to pass through zero, concluding the proof.
\qed

%--------------------------------------------------------------------------------
% PROOF OF PERTURBED NORM, EFFECTIVE RANK and DISTANCE TO INFIMAL RANK BOUNDS 2x2
%--------------------------------------------------------------------------------
\subsection{Proof of Theorem~\ref{thm:norms_up_finite_perturb}} \label{app:proofs:norms_up_finite_perturb}

The proof follows a similar line to that of Theorem~\ref{thm:norms_up_finite} (Subappendix~\ref{app:proofs:norms_up_finite}), where the differences mostly stem from the fact that the solution set $\widetilde{\S}$ (Equation~\eqref{eq:sol_set_perturb}) is not confined to symmetric matrices, as opposed to the original $\S$ (Equation~\eqref{eq:sol_set}), slightly complicating the computation of singular values.
For the sake of the proof, as mentioned in Subappendix~\ref{app:proofs:notation}, we omit the time index $t$ when stating results for all $t \geq 0$ or when clear from context. 
We also let $\{w_{i , j}\}_{i,j \in [2]}$ denote the entries of the product matrix~$W_{L : 1}$.

We begin by deriving loss-dependent bounds for $\abs{w_{1,1}}, \sigma_1 (W_{L : 1})$ and $\sigma_2 (W_{L : 1})$.
The entries of $W_{L : 1}$ can be trivially bounded by the loss as follows:
\be
\abs{w_{2,2} - \epsilon}  \leq \sqrt{2 \ell (W_{L : 1})} \quad , \quad |w_{1,2} - z| \leq \sqrt{2 \ell (W_{L : 1})} \quad,\quad \abs{w_{2,1} - z'} \leq \sqrt{2 \ell (W_{L : 1})}
\text{\,.}
\label{eqn:perturbed_entries_loss_bound}
\ee
Lemma~\ref{lem:perturbed_w11_bound} below, analogous to Lemma~\ref{lem:w_11_lower_bound} from the proof of Theorem~\ref{thm:norms_up_finite}, characterizes the relation between $\abs{w_{1,1}}$ and the loss.
\begin{lemma} \label{lem:perturbed_w11_bound}
Suppose $\ell (W_{L : 1}) < \min \{ z^2 / 2, z'^2 / 2 \}$. 
Then:
\[
\abs{w_{1,1}} > \frac{ ( \abs{z} - \sqrt{2 \ell (W_{L : 1}) } ) ( |z'| - \sqrt{2 \ell (W_{L : 1}) } ) }{ \abs{\epsilon} + \sqrt{2 \ell (W_{L : 1}) } } \geq \frac{\abs{z} \cdot |z'|}{\abs{ \epsilon } + \sqrt{2 \ell (W_{L : 1}) } } - (\abs{z} + |z'|)
\text{\,.}
\]
\end{lemma}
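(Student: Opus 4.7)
The plan is to mirror the structure of Lemma~\ref{lem:w_11_lower_bound} from Subappendix~\ref{app:proofs:norms_up_finite}, with the main adjustments accounting for the perturbed observation values. First I would apply Lemma~\ref{lem:det_does_not_change_sign} to conclude that $\sign ( \det ( W_{L : 1} ) ) = \sign ( \det ( W_{L : 1} ( 0 ) ) ) = \sign ( z \cdot z' )$ throughout gradient flow, so that
\[
\sign ( w_{1,1} w_{2,2} - w_{1,2} w_{2,1} ) = \sign ( z \cdot z' )
\text{\,.}
\]

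Second, the assumption $\ell ( W_{L : 1} ) < \min \{ z^2 / 2 , z'^2 / 2 \}$ together with Equation~\eqref{eqn:perturbed_entries_loss_bound} forces $w_{1,2}$ and $w_{2,1}$ to share the signs of $z$ and $z'$ respectively, with $\abs{w_{1,2}} > \abs{z} - \sqrt{2 \ell ( W_{L : 1} )}$ and $\abs{w_{2,1}} > \abs{z'} - \sqrt{2 \ell ( W_{L : 1} )}$, both strictly positive. Hence $w_{1,2} w_{2,1}$ has the same sign as $z \cdot z'$, and combined with the sign-of-determinant identity above, this yields $\sign ( w_{1,1} w_{2,2} ) = \sign ( z \cdot z' )$ as well, and
\[
\abs{w_{1,1} w_{2,2}} > \abs{w_{1,2} w_{2,1}} > \bigl ( \abs{z} - \sqrt{2 \ell ( W_{L : 1} )} \bigr ) \bigl ( \abs{z'} - \sqrt{2 \ell ( W_{L : 1} )} \bigr )
\text{\,.}
\]
Dividing by $\abs{w_{2,2}} \leq \abs{\epsilon} + \sqrt{2 \ell ( W_{L : 1} )}$ (available from Equation~\eqref{eqn:perturbed_entries_loss_bound}) then establishes the first inequality in the lemma's statement.

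For the second (weaker) inequality, I would expand the numerator as
\[
\bigl ( \abs{z} - \sqrt{2 \ell ( W_{L : 1} )} \bigr ) \bigl ( \abs{z'} - \sqrt{2 \ell ( W_{L : 1} )} \bigr ) = \abs{z} \cdot \abs{z'} - \sqrt{2 \ell ( W_{L : 1} )} ( \abs{z} + \abs{z'} ) + 2 \ell ( W_{L : 1} )
\text{\,,}
\]
drop the non-negative $2 \ell ( W_{L : 1} )$ term, and note that $\sqrt{2 \ell ( W_{L : 1} )} \leq \abs{\epsilon} + \sqrt{2 \ell ( W_{L : 1} )}$ to replace the division by $\abs{\epsilon} + \sqrt{2 \ell ( W_{L : 1} )}$ in the subtracted term with no division at all, at the cost of a weaker bound.

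The only real subtlety is the sign-chase in the second step, which must simultaneously exploit the determinant invariance and the closeness of the off-diagonal entries to $z, z'$; the remaining manipulations are algebraic and require no new ideas beyond those already used in Lemma~\ref{lem:w_11_lower_bound}.
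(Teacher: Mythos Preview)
Your proposal is correct and follows essentially the same approach as the paper: invoke Lemma~\ref{lem:det_does_not_change_sign} to fix the sign of the determinant, use the loss bound to pin down the signs of $w_{1,2}, w_{2,1}$, deduce $\abs{w_{1,1} w_{2,2}} > \abs{w_{1,2} w_{2,1}}$ by the same case split on $\sign(z \cdot z')$, and divide by $\abs{w_{2,2}}$. Your derivation of the second inequality is in fact more explicit than the paper's, which simply asserts it.
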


\begin{proof}
According to Lemma~\ref{lem:det_does_not_change_sign}, the determinant of $W_{L : 1}$ does not change sign, \ie~it remains equal to $\sign ( z \cdot z' )$ (the initial sign assumed).
Under the assumption that $\ell (W_{L : 1}) < \min \{ z^2 / 2, z'^2 / 2 \}$, both $w_{1,2}$ and $w_{2,1}$ have the same signs as $z$ and $z'$, respectively, implying that $w_{2,2} \neq 0$ (otherwise we have a contradiction to the sign of the product matrix determinant).
If $ z \cdot z' > 0$, the determinant is positive as well, and it holds that $w_{1,1} w_{2,2} > w_{1,2} w_{2,1} > 0$.
Otherwise, if $ z \cdot z' < 0$ we have $w_{1,1} w_{2,2} < w_{1,2} w_{2,1} < 0$.
Putting it together we may write $\abs{w_{1,1} w_{2,2}} > \abs{w_{1,2} w_{2,1}}$.
Dividing by $\abs{w_{2,2}}$ and applying the bounds from Equation~\eqref{eqn:perturbed_entries_loss_bound} then completes the proof:
\[
\abs{w_{1,1}} > \frac{ ( \abs{z} - \sqrt{2 \ell (W_{L : 1}) } ) ( |z'| - \sqrt{2 \ell (W_{L : 1}) } ) }{ \abs{\epsilon} + \sqrt{2 \ell (W_{L : 1}) } } \geq \frac{\abs{z} \cdot |z'|}{\abs{ \epsilon } + \sqrt{2 \ell (W_{L : 1}) } } - (\abs{z} + |z'|)
\text{\,.}
\]
\end{proof}
We are now able to see that, indeed, the smaller $\abs{\epsilon}$ is compared to $\abs{z \cdot z'}$, the higher $\abs{w_{1,1}}$ will be driven when the loss is minimized.
With Lemma~\ref{lem:perturbed_w11_bound} in place, we are now able to bound the singular values of $W_{L : 1}$.
\begin{lemma} \label{lem:perturbed_W_singular_values_bound}
The singular values of $W_{L : 1}$ fulfill:
\be
\begin{split}
& \sigma_1(W_{L : 1}) \geq \frac{1}{\sqrt{2}} \cdot |w_{1,1}|  - \sqrt{2\ell (W_{L : 1})} ~, \\ & \sigma_2(W_{L : 1}) \leq 4 \abs{\epsilon} + \left (4 + \frac{ \sqrt{ \abs{z} \cdot |z'| } }{\min \{\abs{z} , |z'| \}} \right ) \sqrt{2 \ell (W_{L : 1})}
\text{\,.}
\end{split}
\label{eqn:perturbed_sing_values_no_assump_bounds}
\ee
Furthermore, if $\ell (W_{L : 1}) < \min \left \{ z^2 / 2 ~ , ~ z'^2 / 2 \right \}$, the bound on $\sigma_2 (W_{L : 1})$ may be simplified:
\be
\sigma_2(W_{L : 1}) \leq 4 \abs{\epsilon} + 4 \sqrt{2 \ell (W_{L : 1})}
\text{\,.}
\label{eqn:perturbed_sing_values_with_assump_bounds}
\ee
\end{lemma}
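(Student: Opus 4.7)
The plan is to mirror the strategy of Lemma~\ref{lem:W_singular_values_bound}, adapting it to the fact that the Frobenius projection onto $\widetilde{\S}$ is no longer symmetric. Concretely, I would let $W_{\widetilde{\S}}$ denote the orthogonal projection of $W_{L : 1}$ onto $\widetilde{\S}$, which preserves the $(1,1)$-entry $w_{1,1}$ and overwrites the other three entries with $z, z', \epsilon$. By construction $\norm{W_{L : 1} - W_{\widetilde{\S}}}_F = \sqrt{2 \ell (W_{L : 1})}$, so Corollary~8.6.2 of~\cite{golub2012matrix} gives $\abs{\sigma_i(W_{L : 1}) - \sigma_i(W_{\widetilde{\S}})} \leq \sqrt{2 \ell (W_{L : 1})}$ for $i \in \{1, 2\}$. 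It then suffices to lower bound $\sigma_1(W_{\widetilde{\S}})$ and upper bound $\sigma_2(W_{\widetilde{\S}})$.

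The first bound is immediate from $\sigma_1(W_{\widetilde{\S}})^2 + \sigma_2(W_{\widetilde{\S}})^2 = \norm{W_{\widetilde{\S}}}_F^2$ combined with $\sigma_1 \geq \sigma_2$: I would conclude $\sigma_1(W_{\widetilde{\S}}) \geq \norm{W_{\widetilde{\S}}}_F / \sqrt{2} \geq \abs{w_{1,1}} / \sqrt{2}$, and combine with the perturbation inequality to obtain the stated lower bound on $\sigma_1(W_{L:1})$. For the upper bound on $\sigma_2$, I would rely on the $2 \times 2$ identity $\sigma_1(W_{\widetilde{\S}}) \sigma_2(W_{\widetilde{\S}}) = \abs{\det W_{\widetilde{\S}}} = \abs{w_{1,1}\epsilon - zz'}$, supplemented by the elementary inequality $\sigma_1(W_{\widetilde{\S}}) \geq \max\{\abs{w_{1,1}}, \abs{z}, \abs{z'}\}$ (since $\sigma_1$ dominates every row and column norm). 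Together these yield two complementary estimates, namely $\sigma_2(W_{\widetilde{\S}}) \leq \abs{\epsilon} + \abs{zz'} / \abs{w_{1,1}}$ and $\sigma_2(W_{\widetilde{\S}}) \leq \abs{\epsilon} + \min\{\abs{z}, \abs{z'}\}$, useful respectively when $\abs{w_{1,1}}$ is large and when it is moderate.

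The remaining work is a case analysis. Under the loss assumption $\ell(W_{L:1}) < \min\{z^2, z'^2\}/2$, Lemma~\ref{lem:perturbed_w11_bound} forces $\abs{w_{1,1}} > (\abs{z}-\sqrt{2\ell})(\abs{z'}-\sqrt{2\ell}) / (\abs{\epsilon}+\sqrt{2\ell})$, and plugging this into the first estimate---sub-splitting on whether $\sqrt{2\ell}$ lies below half of $\min\{\abs{z},\abs{z'}\}$ to control $(\abs{z}-\sqrt{2\ell})(\abs{z'}-\sqrt{2\ell})$ from below by a constant multiple of $\abs{zz'}$---yields the sharpened bound~\eqref{eqn:perturbed_sing_values_with_assump_bounds}. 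For the general bound~\eqref{eqn:perturbed_sing_values_no_assump_bounds}, I would additionally cover the regime $\sqrt{2\ell} \geq \min\{\abs{z},\abs{z'}\}$ using the second (always-valid) estimate, and absorb the residual $\min\{\abs{z},\abs{z'}\}$ term into $(\sqrt{\abs{zz'}}/\min\{\abs{z},\abs{z'}\}) \sqrt{2\ell}$ via the inequality $\min\{\abs{z},\abs{z'}\} \leq \sqrt{\abs{zz'}}$. I expect the main obstacle to be bookkeeping the constants cleanly through these sub-cases so as to match the precise form of the claimed bounds---in particular the factor~$4$ on $\abs{\epsilon}$ and the exact coefficient $4 + \sqrt{\abs{zz'}}/\min\{\abs{z},\abs{z'}\}$ of $\sqrt{2\ell}$; the underlying ingredients are all routine $2 \times 2$ determinant manipulations and Weyl-type singular value perturbation.
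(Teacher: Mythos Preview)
Your overall strategy matches the paper's: project $W_{L:1}$ onto $\widetilde{\S}$, invoke Weyl-type perturbation (Corollary~8.6.2 of~\cite{golub2012matrix}), and then bound the singular values of the projection. Your $\sigma_1$ bound is identical to the paper's. The genuine difference is in handling $\sigma_2$: you use the $2\times 2$ identity $\sigma_1\sigma_2=\lvert\det\rvert$ together with $\sigma_1\geq\max\{\lvert w_{1,1}\rvert,\lvert z\rvert,\lvert z'\rvert\}$, whereas the paper introduces an intermediate matrix $W_{\widetilde{\S}_0}$ (obtained by zeroing out the $(2,2)$ entry of $W_{\widetilde{\S}}$), computes its singular values via the explicit quadratic formula, and then applies the perturbation bound once more between $W_{\widetilde{\S}}$ and $W_{\widetilde{\S}_0}$ (incurring an additional $\lvert\epsilon\rvert$).

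Your determinant route is correct and arguably cleaner conceptually, but the constant issue you anticipated is real and your sub-splitting as written does not quite close it. In your Case~A ($\sqrt{2\ell}\leq\tfrac12\min\{\lvert z\rvert,\lvert z'\rvert\}$) you would get $(\lvert z\rvert-\sqrt{2\ell})(\lvert z'\rvert-\sqrt{2\ell})\geq\lvert zz'\rvert/4$, hence $\lvert zz'\rvert/\lvert w_{1,1}\rvert<4(\lvert\epsilon\rvert+\sqrt{2\ell})$ and thus $\sigma_2(W_{\widetilde{\S}})\leq\lvert\epsilon\rvert+4(\lvert\epsilon\rvert+\sqrt{2\ell})=5\lvert\epsilon\rvert+4\sqrt{2\ell}$; after the final perturbation step this gives a coefficient $5$ on $\lvert\epsilon\rvert$ rather than the claimed $4$. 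The paper avoids this loss because working with $W_{\widetilde{\S}_0}$ produces the denominator $\sqrt{w_{1,1}^2+z^2+z'^2}\geq(\lvert w_{1,1}\rvert+\lvert z\rvert+\lvert z'\rvert)/\sqrt{3}$, and Lemma~\ref{lem:perturbed_w11_bound} yields exactly $\lvert w_{1,1}\rvert+\lvert z\rvert+\lvert z'\rvert\geq\lvert zz'\rvert/(\lvert\epsilon\rvert+\sqrt{2\ell})$ with no sub-splitting needed---the $\lvert z\rvert+\lvert z'\rvert$ terms cancel cleanly. If you want to match the stated constants with your approach, you would need to sharpen the denominator (e.g.\ use $\sigma_1(W_{\widetilde{\S}})\geq\lVert W_{\widetilde{\S}}\rVert_F/\sqrt{2}\geq\sqrt{(w_{1,1}^2+z^2+z'^2)/2}$ and separate out $\epsilon$), which effectively reconstructs the paper's $W_{\widetilde{\S}_0}$ trick.
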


\begin{proof}
Define $W_{\widetilde{\S}} := \begin{pmatrix} w_{1,1} & z' \\ z & \epsilon \end{pmatrix}$, the orthogonal projection of $W_{L : 1}$ onto the solution set $\widetilde{\S}$.
From Corollary 8.6.2 in~\cite{golub2012matrix} we know that:
\be
|\sigma_i (W_{L : 1}) - \sigma_i (W_{\widetilde{\S}})| \leq \norm{W_{L : 1} - W_{\widetilde{\S}}}_F = \sqrt{2 \ell (W_{L : 1})}  \quad ,~ i = 1,2 \text{\,.}
\label{eqn:perturbed_W_singular_value_perturbation_bound}
\ee
This means that any bound on the singular values of $W_{\widetilde{\S}}$ can be transferred to those of $W_{L : 1}$ (up to an additive loss-dependent term).
It is straightforwardly verified that the squared singular values of $W_{\widetilde{\S}}$ are
\be
\begin{split}
\sigma_{1}^2 (W_{\widetilde{\S}}) & = \frac{1}{2} \left ( w_{1,1}^2 + z^2 + z'^2 + \epsilon^2 + \sqrt{\left ( w_{1,1}^2 + z^2 + z'^2 + \epsilon^2 \right )^2 -4 \left ( w_{1,1} \epsilon - z z' \right )^2} \right ) \text{\,,} \\
\sigma_{2}^2 (W_{\widetilde{\S}}) & = \frac{1}{2} \left ( w_{1,1}^2 + z^2 + z'^2 + \epsilon^2 - \sqrt{\left ( w_{1,1}^2 + z^2 + z'^2 + \epsilon^2 \right )^2 -4 \left ( w_{1,1} \epsilon - z z' \right )^2} \right ) \text{\,.}
\end{split}
\label{eqn:perturbed_W_s_singular_values}
\ee
Note that the term inside the square roots is non-negative for all $w_{1,1}, z, z', \epsilon$.
Since all elements in the expression for $\sigma_1 ^2 (W_{\widetilde{\S}})$ are non-negative, we have $\sigma_{1} (W_{\widetilde{\S}}) \geq (1 / \sqrt{2}) \cdot \abs{w_{1,1}}$.
Combining this with Equation~\eqref{eqn:perturbed_W_singular_value_perturbation_bound} completes the lower bound for $\sigma_1(W_{L : 1})$.

Next, let $W_{\widetilde{\S}_0} := \begin{pmatrix} w_{1,1} & z' \\ z & 0 \end{pmatrix}$ be the matrix obtained by replacing the bottom-right entry of $W_{\widetilde{\S}}$ by~$0$.
Replacing $\epsilon$ with $0$ in Equation~\eqref{eqn:perturbed_W_s_singular_values}, and applying the identity $a - b = \frac{a^2 - b^2}{a + b}$ for $a, b \in \R$ such that $a + b \neq 0$, we get:
\be
\begin{split}
\sigma_{2}^2 (W_{\widetilde{\S}_0}) & = \frac{2 z^2 z'^2}{ w_{1,1}^2 + z^2 + z'^2 + \sqrt{\left ( w_{1,1}^2 + z^2 + z'^2 \right )^2 -4 z^2 z'^2 } } \\ 
& \leq \frac{2 z^2 z'^2}{ w_{1,1}^2 + z^2 + z'^2 }
\text{\,.}
\end{split}
\label{eqn:sq_sing_2_W_s_0_upper_bound}
\ee
We initially prove Equation~\eqref{eqn:perturbed_sing_values_with_assump_bounds} holds in the case where $\ell (W_{L : 1}) < \min \left \{ z^2 / 2 ~ , ~ z'^2 / 2 \right \}$.
By lifting said assumption we then show that the bound on $\sigma_2 (W_{L : 1})$ in Equation~\eqref{eqn:perturbed_sing_values_no_assump_bounds} holds for any loss value.
Under the assumption that $\ell (W_{L : 1}) < \min \left \{ z^2 / 2 ~ , ~ z'^2 / 2 \right \}$, taking the square root of both sides in Equation~\eqref{eqn:sq_sing_2_W_s_0_upper_bound}, we arrive at the following bound:
\[
\begin{split}
\sigma_{2} (W_{\widetilde{\S}_0}) & \leq \sqrt{2} \cdot \frac{\abs{z} \cdot |z'|}{ \sqrt{w_{1,1}^2 + z^2 + z'^2} } \\
& \leq \sqrt{6} \cdot \frac{\abs{z} \cdot |z'|}{ \abs{w_{1,1}} + \abs{z} + |z'| } \\
& \leq \sqrt{6} \cdot \frac{\abs{z} \cdot |z'|}{ \frac{\abs{z} \cdot |z'|}{\abs{ \epsilon } + \sqrt{2 \ell (W_{L : 1}) } } } \\
& \leq 3 \left ( \abs{ \epsilon } + \sqrt{2 \ell (W_{L : 1}) } \right ) 
\text{\,,}
\end{split}
\]
where in the second transition we applied the inequality $\sqrt{w_{1,1}^2 + z^2 + z'^2} \geq (\abs{w_{1,1}} + \abs{z} + |z'|) / \sqrt{3}$, and in the third made use of the bound on $\abs{w_{1,1}}$ (Lemma~\ref{lem:perturbed_w11_bound}).
Applying Corollary 8.6.2 from~\cite{golub2012matrix} twice, once for the matrices $W_{L : 1}$ and $W_{\widetilde{\S}}$, and another for $W_{\widetilde{\S}}$ and $W_{\widetilde{\S}_0}$, we have:
\[
\sigma_2 (W_{L : 1}) \leq 3 \left ( \abs{ \epsilon } + \sqrt{2 \ell (W_{L : 1}) } \right ) + \abs{\epsilon} + \sqrt{2 \ell (W_{L : 1}) } =  4 \left ( \abs{ \epsilon } + \sqrt{2 \ell (W_{L : 1}) } \right )
\text{\,,}
\]
achieving the desired result from Equation~\eqref{eqn:perturbed_sing_values_with_assump_bounds}.
It remains to see that the bound on $\sigma_2(W_{L : 1})$ in Equation~\eqref{eqn:perturbed_sing_values_no_assump_bounds} holds regardless of the loss value.
When $\ell (W_{L : 1}) < \min \left \{ z^2 / 2 ~ , ~ z'^2 / 2 \right \}$ it obviously holds since it is only looser than the bound already obtained under this assumption.
Otherwise, going back to Equation~\eqref{eqn:sq_sing_2_W_s_0_upper_bound}, it can be seen that
\[
\sigma_{2}^2 (W_{\widetilde{\S}_0}) \leq \frac{2 z^2 z'^2}{ (z - z')^2 + 2 \abs{z} \cdot |z'| } \leq  \abs{z} \cdot |z'|
\text{\,.}
\]
Thus, $\sigma_{2} (W_{\widetilde{\S}_0}) \leq \sqrt{ \abs{z} \cdot |z'| }$.
Following the same procedure as before (applying Corollary 8.6.2 from~\cite{golub2012matrix}), combined with the fact that $\ell (W_{L : 1}) \geq \min \left \{ z^2 / 2 ~ , ~ z'^2 / 2 \right \}$ concludes the proof:
\[
\begin{split}
	\sigma_2 (W_{L : 1}) & \leq  \sqrt{ \abs{z} \cdot |z'| } + \abs{\epsilon} + \sqrt{2 \ell (W_{L : 1}) } \\
	& \leq \frac{ \sqrt{ \abs{z} \cdot |z'| } }{ \min \{\abs{z} , |z'| \} } \cdot \sqrt{2 \ell (W_{L : 1}) } + \abs{\epsilon} + \sqrt{2 \ell (W_{L : 1}) } \\
	& \leq 4 \abs{\epsilon} + \left (4 + \frac{ \sqrt{ \abs{z} \cdot |z'| } }{ \min \{\abs{z} , |z'| \} } \right ) \sqrt{2 \ell (W_{L : 1})}
	\text{\,.}
\end{split}
\]

\end{proof}

%-------------------------------------------------------------------
% PERTURBED NORM BOUND PROOF
%-------------------------------------------------------------------
\subsubsection{Proof of Equation~\eqref{eq:norms_lb_perturb} (lower bound for quasi-norm)} \label{sec:norm_bound_perturb}

Turning our attention to $\norm{W_{L : 1}}$, following the same steps as in the proof of Theorem~\ref{thm:norms_up_finite} (Subappendix~\ref{sec:norm_bound}) will lead to a generalized bound.
By the triangle inequality:
\be
\norm{ W_{L : 1} } \geq \frac{1}{c_{\norm{\cdot}}} \norm{w_{1,1} \e_1 \e_1^\top} - \norm{W_{L : 1} - w_{1,1} \e_1 \e_1^\top}
\text{\,,}
\label{eqn:perturb_norm_triangle_ineq}
\ee
where $c_{\norm{\cdot}} \geq 1$ is a constant with which $\norm{\cdot}$ satisfies the weakened triangle inequality (see Footnote~\ref{note:qnorm}).
Let us initially assume that $\ell (W_{L : 1}) < \min \{ z^2 / 2, z'^2 / 2 \}$.
We later lift this assumption, delivering a bound that holds for all loss values.
Invoking Equation~\eqref{eqn:perturbed_entries_loss_bound} we may bound the negative term in Equation~\eqref{eqn:perturb_norm_triangle_ineq} as follows:
\[
\begin{split}
\norm{W_{L : 1} - w_{1,1} \e_1 \e_1^\top} & \leq c_{\norm{\cdot}} |w_{2,2}| \norm{\e_2 \e_2^\top} + c_{\norm{\cdot}}^2 \left ( |w_{2,1}| \norm{\e_2 \e_1^\top} + |w_{1,2}| \norm{\e_1 \e_2^\top} \right ) \\
& \leq 3 c_{\norm{\cdot}}^2 \left ( \max \{ \abs{z}, |z'|, \abs{\epsilon} \} + \sqrt{2 \ell (W_{L : 1})} \right ) \max_{ \substack{ i,j \in \{1,2\} \\ (i,j) \neq (1,1) } } \norm{ \e_i \e_j^\top } \\
& \leq 6 c_{\norm{\cdot}}^2 \max \{ \abs{z}, |z'|, \abs{\epsilon} \} \cdot \max_{ \substack{ i,j \in \{1,2\} \\ (i,j) \neq (1,1) } } \norm{ \e_i \e_j^\top } 
\text{\,,}
\end{split}
\]
Returning to Equation~\eqref{eqn:perturb_norm_triangle_ineq}, applying the inequality above and the bound on $\abs{w_{1,1}}$ (Lemma~\ref{lem:perturbed_w11_bound}) we have:
\[
\begin{split}
\norm{W_{L : 1}} & \geq \frac{\norm{\e_1 \e_1^\top}}{c_{\norm{\cdot}}} \left ( \frac{\abs{z} \cdot |z'|}{\abs{ \epsilon } + \sqrt{2 \ell (W_{L : 1}) } } - \abs{z} - |z'| \right ) - 6 c_{\norm{\cdot}}^2 \max \{ \abs{z}, |z'|, \abs{\epsilon} \} \max_{ \substack{ i,j \in \{1,2\} \\ (i,j) \neq (1,1) } } \norm{ \e_i \e_j^\top } \\
& \geq \frac{\norm{\e_1 \e_1^\top}}{c_{\norm{\cdot}}} \cdot \frac{\abs{z} \cdot |z'|}{\abs{ \epsilon } + \sqrt{2 \ell (W_{L : 1}) } } - 8 c_{\norm{\cdot}}^2 \max \{ \abs{z}, |z'|, \abs{\epsilon} \} \cdot \max_{ i,j \in \{1,2\}  } \norm{ \e_i \e_j^\top }
\text{\,.}
\end{split}
\]
Since $\norm{W_{L : 1}}$ is trivially lower bounded by zero, defining the constants
\[
a_{\norm{\cdot}} := \frac{\norm{\e_1 \e_1^\top}}{c_{\norm{\cdot}}}
~ , ~
b_{\norm{\cdot}} := \max \left \{  \frac{ a_{\norm{\cdot}} \cdot \abs{z} \cdot |z'|}{\abs{ \epsilon } + \min \{ \abs{z}, |z'| \}} , 8 c_{\norm{\cdot}}^2 \max \{ \abs{z}, |z'|, \abs{\epsilon} \} \max_{ i,j \in \{1,2\}  } \norm{ \e_i \e_j^\top }  \right \}
,
\]
allows us, on the one hand, to arrive at a bound of the form:
\[
\norm{W_{L : 1}} \geq a_{\norm{\cdot}} \cdot \frac{\abs{z} \cdot |z'|}{\abs{ \epsilon } + \sqrt{2 \ell (W_{L : 1}) } } - b_{\norm{\cdot}}
\text{\,,}
\]
and on the other hand, to remove the previous assumption on the loss: in the case where $\ell (W_{L : 1}) \geq \min \{ z^2 / 2, z'^2 / 2 \}$, the bound is non-positive and trivially holds.
Noticing this is exactly Equation~\eqref{eq:norms_lb_perturb} (recall we omitted the time index $t$), concludes this part of the proof.

%-------------------------------------------------------------------
% PERTURBED EFFECTIVE RANK BOUND PROOF
%-------------------------------------------------------------------
\subsubsection{Proof of Equation~\eqref{eq:erank_ub_perturb} (upper bound for effective rank)} \label{sec:effective_rank_bound_perturb}

Derivation of the upper bound for effective rank (Definition~\ref{def:erank}) is initially done under the assumption that $\ell (W_{L : 1}) < \min \{ z^2 / 8, z'^2 / 8 \}$.
We then remove this assumption, establishing a bound that holds for all loss values.

The bounds on $\sigma_1 (W_{L : 1})$ and $\sigma_2 (W_{L : 1})$ in Lemma~\ref{lem:perturbed_W_singular_values_bound} give:
\[
\begin{split}
\rho_1 (W_{L : 1}) & =  \frac{\sigma_1 (W_{L : 1})}{\sigma_1 (W_{L : 1}) + \sigma_2(W_{L : 1})} \\
& \geq \frac{\sigma_1 (W_{L : 1})}{\sigma_1 (W_{L : 1}) + 4 \left ( \abs{\epsilon} + \sqrt{2\ell (W_{L : 1})} \right )} \\
& = 1 - \frac{4 \left ( \abs{\epsilon} + \sqrt{2\ell (W_{L : 1})} \right )}{\sigma_1 (W_{L : 1}) + 4 \left ( \abs{\epsilon} + \sqrt{2\ell (W_{L : 1})} \right )} \\
& \geq  1 - \frac{4 \left ( \abs{\epsilon} + \sqrt{2\ell (W_{L : 1})} \right )}{ \frac{1}{\sqrt{2}} \cdot \abs{w_{1,1}} + 4 \abs{\epsilon} + 3 \sqrt{2\ell (W_{L : 1})} } \\
& \geq 1 - \frac{4 \sqrt{2} \left ( \abs{\epsilon} + \sqrt{2\ell (W_{L : 1})} \right )}{\abs{w_{1,1}}}
\text{\,.}
\end{split}
\]

Additionally, under our assumption that $\ell (W_{L : 1}) < \min \{ z^2 / 8, z'^2 / 8 \}$, the bound on $\abs{w_{1,1}}$ in Lemma~\ref{lem:perturbed_w11_bound} can be simplified to:
\[
	\abs{w_{1,1}} \geq \frac{ ( \abs{z} - \sqrt{2 \ell (W_{L : 1}) } ) ( |z'| - \sqrt{2 \ell (W_{L : 1}) } ) }{ \abs{\epsilon} + \sqrt{2 \ell (W_{L : 1}) } } \geq \frac{ \min \{ \abs{z} , |z'| \}^2 }{4 \left ( \abs{\epsilon} + \sqrt{2 \ell (W_{L : 1})} \right )}
	\text{\,.}
\]
Combining the last two inequalities we have:
\[
\rho_2 (W_{L : 1}) = 1 - \rho_1 (W_{L : 1}) \leq  \frac{16 \sqrt{2} \left ( \abs{\epsilon} + \sqrt{2\ell (W_{L : 1})} \right )^2 }{ \min \{ \abs{z} , |z'| \}^2 }
\text{\,.}
\]
It is now possible to see that, in accordance with Subsection~\ref{sec:analysis:robust}, the smaller $\abs{\epsilon}$ is compared to $\min \{ \abs{z} , |z'| \}$, the closer to zero $\rho_2 (W_{L : 1})$ becomes as the loss is minimized.
Let $h \left (\rho_2 (W_{L : 1}) \right ) := - \rho_2 (W_{L : 1}) \cdot \ln \left (\rho_2 (W_{L : 1}) \right ) - (1 - \rho_2 (W_{L : 1})) \cdot \ln \left (1 - \rho_2 (W_{L : 1}) \right )$ denote the binary entropy function, and recall that the effective rank of the product matrix defined to be $\erank (W_{L : 1}) := \exp \{ h \left (\rho_2 (W_{L : 1}) \right ) \}$.
As in the proof of Theorem~\ref{thm:norms_up_finite} (Subappendix~\ref{sec:effective_rank_bound}), we may bound the exponent on the interval $[0, \ln (2)]$ by the linear function intersecting it at these points.
That is,
\[
\erank (W_{L : 1}) \leq 1 + \frac{1}{\ln (2)} \cdot h \left (\rho_2 (W_{L : 1}) \right )
\text{\,.}
\]
From Lemma~\ref{lem:entropy_sqrt_bound} it holds that $h \left (\rho_2 (W_{L : 1}) \right ) \leq 2 \sqrt{\rho_2 (W_{L : 1})}$.
Plugging this into the inequality above leads to:
\[
\begin{split}
\erank (W_{L : 1}) & \leq 1 + \frac{8 \cdot 2^{\frac{1}{4}}}{\ln (2) \cdot \min \{ \abs{z} , |z'| \} } \cdot \left ( \abs{\epsilon} + \sqrt{2\ell (W_{L : 1})} \right ) \\
& \leq 1 + \frac{16}{\min \{ \abs{z} , |z'| \} } \cdot \left ( \abs{\epsilon} + \sqrt{2\ell (W_{L : 1})} \right )
\text{\,,}
\end{split}
\]
where the second transition is a slight simplification of the constants ($2^{1 /4} / \ln (2) < 2$).
As will be shown below, $\inf_{W' \in \widetilde{S}} \erank(W') = 1$.
We may thus conclude:
\[
\erank (W_{L : 1}) \leq \inf_{W' \in \widetilde{S}} \erank(W') + \frac{16}{\min \{ \abs{z} , |z'| \} } \cdot \left ( \abs{\epsilon} + \sqrt{2\ell (W_{L : 1})} \right )
\text{\,.}
\]
Notice that when $\ell (W_{L : 1}) \geq \min \{ z^2 / 8, z'^2 / 8 \}$ the inequality trivially holds since the right-hand side is greater than $2$ (the maximal effective rank for a $2 \times 2$ matrix).
This establishes Equation~\eqref{eq:erank_ub_perturb} (time index is omitted).

It remains to prove that $\inf_{W' \in \widetilde{S}} \erank(W') = 1$.
If $\epsilon \neq 0$, it is trivial since there exists $W' \in \widetilde{\S}$ with $\rank(W') = 1$, meaning $\sigma_2 (W') = 0$ and $\erank(W') = 1$.
If $\epsilon = 0$, examining the squared singular values of $W' \in \widetilde{\S}$ (Equation~\eqref{eqn:perturbed_W_s_singular_values} with $(W')_{1,1}$ in place of $w_{1,1}$) reveals that $\lim_{(W')_{1,1} \rightarrow \infty} \sigma_2 (W') = 0$, while $\lim_{(W')_{1,1} \rightarrow \infty} \sigma_1 (W') = \infty$.
Thus, there exists a matrix in $\widetilde{\S}$ with effective rank arbitrarily close to $1$.
Since the effective rank of any matrix is at least $1$, this implies that $\inf_{W' \in \widetilde{S}} \erank(W') = 1$.

%-------------------------------------------------------------------
% PERTURBED DISTANCE FROM INFIMAL RANK RESULT
%-------------------------------------------------------------------
\subsubsection{Proof of Equation~\eqref{eq:irank_dist_ub_perturb} (upper bound for distance from infimal rank)} \label{sec:distance_from_infimal_rank_upper_bound_perturb}

We claim that the infimal rank (Definition~\ref{def:irank_dist}) of $\widetilde{\S}$ is $1$.
Since $z, z' \neq 0$, it cannot be $0$.
If $\epsilon \neq 0$, our claim is trivial since there exists $W' \in \widetilde{\S}$ with $\rank(W') = 1$.
Otherwise, inspecting the squared singular values of a matrix $W' \in \widetilde{\S}$ (Equation~\eqref{eqn:perturbed_W_s_singular_values} with $(W')_{1,1}$ in place of $w_{1,1}$), we can see that, when $\epsilon = 0$, taking $(W')_{1,1}$ to infinity drives the minimal singular value towards zero ($\lim_{(W')_{1,1} \rightarrow \infty} \sigma_2 (W') = 0$).
Hence, the distance of $\widetilde{\S}$ from the set of matrices with rank $1$ or less is $0$ in this case as well.

The distance of the product matrix from the infimal rank of $\widetilde{\S}$ is therefore $D ( W_{L : 1} ( t ) , \M_1 ) = \sigma_2 (W_{L : 1} (t) )$. 
From Lemma~\ref{lem:perturbed_W_singular_values_bound} we have
\[
	D(W_{L : 1} (t), \M_{1}) \leq 4 \abs{\epsilon} + \left (4 + \frac{ \sqrt{ \abs{z} \cdot |z'| } }{\min \{\abs{z} , |z'| \}} \right ) \sqrt{2 \ell (t)}
	\text{\,,}
\]
for all $t \geq 0$.

%-------------------------------------------------------------------
% PERTURBED ROBUSTNESS TO CHANGE IN OBSERVED LOCATIONS
%-------------------------------------------------------------------
\subsubsection{Robustness to change in observed locations} \label{sec:robustness_observed_locations}

Lastly, we prove that the established bounds (Equations~\eqref{eq:norms_lb_perturb},~\eqref{eq:erank_ub_perturb} and~\eqref{eq:irank_dist_ub_perturb}) are robust to a change in observed locations.
Let $(i, j) \in [2] \times [2]$ be the unobserved entry's location.
Following proof steps analogous to those in Lemmas~\ref{lem:perturbed_w11_bound} and~\ref{lem:perturbed_W_singular_values_bound}~---~while recalling our assumption of $\det ( W_{L : 1} ( 0 ) )$ having same sign as $z \cdot z'$ if $i = j$ and opposite sign otherwise~---~yields identical bounds on the unobserved entry and singular values of $W_{L : 1}$. % (up to replacing $w_{1,1}$ with $w_{i,j}$ in the result for $\sigma_1 (W_{L : 1})$ in Lemma~\ref{lem:perturbed_W_singular_values_bound}).
Since the derivations of Equations~\eqref{eq:norms_lb_perturb}, \eqref{eq:erank_ub_perturb} and~\eqref{eq:irank_dist_ub_perturb} in Subappendices~\ref{sec:norm_bound_perturb}, \ref{sec:effective_rank_bound_perturb} and~\ref{sec:distance_from_infimal_rank_upper_bound_perturb}, respectively, rely solely on the aforementioned bounds, the proof concludes.
\qed

%---------------------------------------------------------------------------------------
% UNBALANCED TO NEARBY BALANCED MATRICES PROOF
%---------------------------------------------------------------------------------------
\subsection{Proof of Lemma~\ref{lem:unbalance_to_balance}} 
\label{app:proofs:unbalance_to_balance}

For $l \in [L]$, let $W_l = U_l \Sigma_l V_l^\top$ be a singular value decomposition of $W_l$, \ie~$U_l, V_l \in \R^{d, d}$ are orthogonal matrices, and $\Sigma_l \in \R^{d, d}_{\geq 0}$ is diagonal holding the singular values of $W_l$ in non-increasing order.
Define $\{ W'_l \in \R^{d, d} \}_{l = 1}^L$ by $W'_1 := W_1$, and:
\[
W'_l := \prod_{2}^{r = l} \left [ U_r V_r^\top \right ] \cdot U_1 \Sigma_1 U_1^\top \cdot \prod_{r = 2}^{l - 1} \left [ V_r U_r^\top \right ] \quad , ~ l = 2, 3, \ldots, L
\text{\,.}
\]
First, for $l \in [L - 1]$:
\[
\begin{split}
W_{l + 1}^{\prime \top} W'_{l + 1} & = \prod_{2}^{r = l} \left [ U_r V_r^\top \right ] U_1 \Sigma_1 U_1^\top \prod_{r = 2}^{l + 1} \left [ V_r U_r^\top \right ] \cdot \prod_{2}^{r = l + 1} \left [ U_r V_r^\top \right ] U_1 \Sigma_1 U_1^\top \prod_{r = 2}^l \left [ V_r U_r^\top \right ] \\
& = \prod_{2}^{r = l} \left [ U_r V_r^\top \right ] U_1 \Sigma_1 U_1^\top \prod_{r = 2}^{l - 1} \left [ V_r U_r^\top \right ] \cdot \prod_{2}^{r = l - 1} \left [ U_r V_r^\top \right ] U_1 \Sigma_1 U_1^\top \prod_{r = 2}^l \left [ V_r U_r^\top \right ] \\
& = W'_l W_l^{\prime \top}
\text{\,,}
\end{split}
\]
\ie~$W'_1, W'_2, \ldots, W'_L$ are balanced.
	
Second, by induction over $l \in [L]$, we prove that $\norm{W_l - W'_l}_F \leq (l - 1) \cdot \sqrt{\epsilon}$.
For $l = 1$ this is trivial, as $W'_1 = W_1$ by definition.
Assume that the bound holds for all $j < l$.
Expressing $W_l$ and $W'_l$ in terms of $\{ U_r, V_r, \Sigma_r \}_{r = 1}^l$ yields:
\[
\norm{ W_l - W'_l }_F = \norm{ U_l \Sigma_l V_l^\top - \prod_{2}^{r = l} \left [ U_r V_r^\top \right ] \cdot U_1 \Sigma_1 U_1^\top \cdot \prod_{r = 2}^{l - 1} \left [ V_r U_r^\top \right ] }_F
\text{\,.}
\]
The Frobenius norm is invariant to multiplication by orthogonal matrices.
Thus, we may multiply by $V_l U_l^\top$ from the left:
\be
\begin{split}
\norm{W_l - W'_l}_F & = \norm{ V_l \Sigma_l V_l^\top - \prod_{2}^{r = l - 1} \left [ U_r V_r^\top \right ] \cdot U_1 \Sigma_1 U_1^\top \cdot \prod_{r = 2}^{l - 1} \left [ V_r U_r^\top \right ]  }_F \\
& = \norm{ \sqrt{ W_l^\top W_l } - \sqrt{ W'_{l - 1} W_{l - 1}^{\prime \top} } }_F \\
& \leq \norm{ \sqrt{ W_l^\top W_l } - \sqrt{ W_{l - 1} W_{l - 1}^\top }}_F + \norm{ \sqrt{ W_{l - 1} W_{l - 1}^\top } -  \sqrt{ W'_{l - 1} W_{l - 1}^{\prime \top} } }_F
\text{\,.}
\end{split}
\label{eq:fro_dist_W_W_prime_l}
\ee
Since the unbalancedness magnitude of $W_1, W_2, \ldots, W_L$ is $\epsilon$, from the Powers-Størmer inequality (Lemma~4.1 in~\cite{powers1970free}) we know that:
\[
\norm{ \sqrt{ W_l^\top W_l } - \sqrt{ W_{l - 1} W_{l - 1}^\top } }_F \leq \sqrt{ \norm{ W_l^\top W_l -  W_{l - 1} W_{l - 1}^\top}_{nuclear} } \leq \sqrt{\epsilon}
\text{\,.}
\]
Additionally, multiplying by $U_{l - 1} V_{l - 1}^\top$ from the right:
\[
\begin{split}
\norm{ \sqrt{ W_{l - 1} W_{l - 1}^\top } - \sqrt{ W'_{l - 1} W_{l - 1}^{\prime \top} } }_F  & = \norm{ U_{l - 1} \Sigma_{l - 1} U_{l - 1}^\top - \prod_{2}^{r = l - 1} \left [ U_r V_r^\top \right ] U_1 \Sigma_1 U_1^\top \prod_{r = 2}^{l - 1} \left [ V_r U_r^\top \right ] }_F \\
& = \norm{ U_{l - 1} \Sigma_{l - 1} V_{l - 1}^\top - \prod_{2}^{r = l - 1} \left [ U_r V_r^\top \right ] U_1 \Sigma_1 U_1^\top \prod_{r = 2}^{l - 2} \left [ V_r U_r^\top \right ] }_F \\
& = \norm{ W_{l - 1} - W'_{l - 1} }_F \\
& \leq (l - 2) \cdot \sqrt{\epsilon}
\text{\,,}
\end{split}
\]
where the last inequality is by the inductive assumption.
Going back to Equation~\eqref{eq:fro_dist_W_W_prime_l}, we conclude:
\[
\norm{ W_l - W'_l }_F \leq \sqrt{\epsilon} + (l - 2) \cdot \sqrt{\epsilon} = (l - 1) \cdot \sqrt{\epsilon} 
\text{\,.}
\]
\qed

%---------------------------------------------------------------------------------------
% GF DIVERGENCE LEMMA PROOF
%---------------------------------------------------------------------------------------
\subsection{Proof of Lemma~\ref{lem:gf_diverge}}
\label{app:proofs:gf_diverge}

Define $g : [0 , T ] \to \R_{\geq 0}$ by $g ( t ) := \norm{\theta ( t ) - \theta' ( t )}^2_2$.
For any $ t \in [ 0 , T ]$ it holds that:
\[
\begin{split}
	\dot{g} ( t ) & = 2 \inprod{\theta ( t ) - \theta' ( t )}{\dot{\theta} ( t ) - \dot{\theta}' ( t )} \\
	& = -2 \inprod{\theta ( t ) - \theta' ( t )}{\nabla f ( \theta ( t ) ) - \nabla f ( \theta' ( t ) )} 
	\text{\,.}
\end{split}
\]
By the Cauchy-Schwartz inequality and smoothness of $f (\cdot)$, we have:
\[
\begin{split}	\dot{g} (t) & \leq 2 \norm{\theta (t) - \theta' (t)}_2 \cdot \norm{\nabla f ( \theta ( t ) ) - \nabla f ( \theta' ( t ) )}_2  \\
	& \leq  2 \beta \norm{\theta ( t ) - \theta' ( t )}^2_2 \\
	& = 2 \beta \cdot g ( t ) 
	\text{\,.}
\end{split}
\]
Let $\bar{t} \in [0, T]$, and suppose that there exists $t_0 \in [ 0 , \bar{t} ]$ for which $g ( t_0 ) = 0$.
Consider the initial value problem induced by gradient flow over~$f(\cdot)$ starting from the point $\theta(t_0) = \theta'(t_0)$.
Since its solution on the interval $[t_0, \bar{t}]$ is unique (by the definition of~$\theta ( \cdot ) , \theta' ( \cdot )$ there exist a solution lying within~$\D$, and it not being unique would contradict, \eg,~Theorem~2.2 in~\cite{teschl2012ordinary}), it holds that $\theta(t) = \theta'(t)$ for any $t \in [t_0, \bar{t}]$.
That is, Equation~\eqref{eq:gf_diverge} trivially holds.
Now assume that $\forall t \in [ 0 , \bar{t} ] : g ( t ) > 0$.
Then:
\[
\frac{\dot{g} ( t )}{g ( t )} \leq 2 \beta
\implies
\int_{t=0}^{\bar{t}} \frac{\dot{g} ( t )}{g ( t )} dt \leq 2 \beta \bar{t}
\implies
\ln ( g ( \bar{t} ) ) - \ln ( g ( 0 ) ) \leq 2 \beta \bar{t}
\implies
g ( \bar{t} ) \leq g ( 0 ) \cdot \exp ( 2 \beta \bar{t} )
\text{\,.}
\]
Taking the square root of both sides in the latter inequality concludes the proof.
\qed

%---------------------------------------------------------------------------------------
% GF DMF DIVERGENCE PROPOSITION PROOF
%---------------------------------------------------------------------------------------
\subsection{Proof of Proposition~\ref{prop:gf_diverge_dmf}}
\label{app:proofs:gf_diverge_dmf}

We note that the following proof does not depend on the dimensions of the deep matrix factorization ($d_0 , d_1 , \ldots , d_L$; see Section~\ref{sec:model}).
That is, Proposition~\ref{prop:gf_diverge_dmf} holds for arbitrary dimensions, and is not~limited to the square case where $d_0 = d_1 = \ldots = d_L$.

Let $R' > 0$, and define $\D_{R'} := \{ (W_1 , W_2, \ldots, W_L ) : \norm{ ( W_1, W_2, \ldots, W_L ) }_{F} < R' \}$.
For any $(W_1, W_2, \ldots, W_L)$ and $(W'_1, W'_2, \ldots, W'_L)$ in $\D_{R'}$:
\be
\begin{split}
& \norm{\nabla \phi (W_1, W_2, \ldots, W_L) - \nabla \phi (W'_1, W'_2, \ldots, W'_L)}_{F} \\
& = \sqrt{ \sum_{l = 1}^L \norm{\prod_{r=l+1}^{L} W_{r}^\top  \cdot \nabla \ell (W_{L : 1}) \cdot \prod_{r=1}^{l-1} W_{r}^\top - \prod_{r=l+1}^{L} W_{r}^{\prime \top}  \cdot \nabla \ell (W'_{L : 1}) \cdot \prod_{r=1}^{l-1} W_{r}^{\prime \top}}_{F}^2 }
\text{\,.}
\end{split}
\label{eq:beta_smooth}
\ee
Since $( \nabla \ell (W_{L : 1}) )_{i, j} = ( W_{L : 1}  )_{i, j} - b_{i, j}$ if $(i, j) \in \Omega$, and otherwise $( \nabla \ell (W_{L : 1}) )_{i, j} = 0$, we have that $\norm{ \nabla \ell (W_{L : 1}) - \nabla \ell (W'_{L : 1}) }_{F} \leq \norm{ W_{L : 1} - W'_{L : 1} }_{F}$ and $\norm{ \nabla \ell (W_{L : 1}) }_{F} \leq \norm{ W_{L : 1} }_{F} + B$.
For each $l \in [L]$, Lemma~\ref{lem:matrix_mul_fro_dist} and sub-multiplicativity of the Frobenius norm then yield:
\[
\begin{split}
& \norm{\prod_{r=l+1}^{L} W_{r}^\top  \cdot \nabla \ell (W_{L : 1}) \cdot \prod_{r=1}^{l-1} W_{r}^\top - \prod_{r=l+1}^{L} W_{r}^{\prime \top}  \cdot \nabla \ell (W'_{L : 1}) \cdot \prod_{r=1}^{l-1} W_{r}^{\prime \top}}_{F} \\
& \hspace{5mm} \leq R^{\prime L - 2} (R^{\prime L} + B) \cdot \sum_{r = 1}^L \norm{W_r - W'_r}_{F} + R^{\prime L - 1} \cdot \norm{\nabla \ell (W_{L : 1}) - \nabla \ell (W'_{L : 1})}_F \\
& \hspace{5mm} \leq R^{\prime L - 2} (R^{\prime L} + B) \cdot \sum_{r = 1}^L \norm{W_r - W'_r}_{F} + R^{\prime L - 1} \norm{ W_{L : 1} - W'_{L : 1} }_{F} \\
& \hspace{5mm} \leq R^{\prime L - 2} (R^{\prime L} + B) \cdot \sum_{r = 1}^L \norm{W_r - W'_r}_{F} + R^{\prime 2L - 2} \cdot \sum_{r = 1}^L \norm{W_r - W'_r}_{F} \\
& \hspace{5mm} = \left ( 2 R^{\prime 2L - 2} + B R^{\prime L - 2} \right ) \cdot \sum_{r = 1}^L \norm{W_r - W'_r}_{F}
\text{\,.}
\end{split}
\]
Plugging the inequality above into Equation~\eqref{eq:beta_smooth}, we conclude:
\[
\begin{split}
& \norm{\nabla \phi (W_1, W_2, \ldots, W_L) - \nabla \phi (W'_1, W'_2, \ldots, W'_L)}_{F} \\
& \hspace{5mm} \leq \sqrt{ L \cdot \left ( 2 R^{\prime 2L - 2} + B R^{\prime L - 2} \right )^2 \cdot \left ( \sum_{l = 1}^L \norm{W_l - W'_l}_{F} \right )^2 } \\
& \hspace{5mm} \leq \sqrt{ L^2 \cdot \left ( 2 R^{\prime 2L - 2} + B R^{\prime L - 2} \right )^2 \cdot \sum_{l = 1}^L  \norm{W_l - W'_l}_{F}^2 } \\
& \hspace{5mm} = L R^{\prime L - 2} \left ( 2 R^{\prime L} + B \right ) \cdot \norm{(W_1, W_2, \ldots, W_L) - (W'_1, W'_2, \ldots, W'_L)}_{F}
\text{\,,}
\end{split}
\]
where the second inequality is by $\sum_{l = 1}^L  \norm{W_l - W'_l}_{F} \leq ( L \cdot \sum_{l = 1}^L \norm{W_l - W'_l}_{F}^2 )^{0.5}$.
That is, the objective $\phi (\cdot)$ is $L R^{\prime L - 2} \left ( 2 R^{\prime L} + B \right )$-smooth over $\D_{R'}$.
For any $\bar{t} \in [0, T]$, from Lemma~\ref{lem:gf_diverge} we have that:
\[
\norm{\theta ( \bar{t} ) - \theta' ( \bar{t} )}_{F} \leq \norm{\theta ( 0 ) - \theta' ( 0 )}_{F} \cdot \exp \left ( L R^{\prime L - 2} \left ( 2 R^{\prime L} + B \right ) \cdot \bar{t} \right )
\text{\,.}
\]
Notice that $R$ is finite (it is the supremum of a continuous function over a closed domain).
Since the inequality above holds for all $R' > R$, taking the limit $R' \rightarrow R^+$ yields Equation~\eqref{eq:gf_diverge_dmf}.
\qed

%---------------------------------------------------------------------------------------
% UNBALANCEDNESS MAGNITUTUDE CONSERVATION LEMMA
%---------------------------------------------------------------------------------------
\subsection{Proof of Lemma~\ref{lem:unbalance_conserve}}
\label{app:proofs:unbalance_conserve}

For any $l \in [L]$:
\[
\dot{W}_l ( t ) =  - \prod_{r = l + 1}^L W_{r} (t)^\top \cdot \nabla \ell ( W_{L : 1} (t) ) \cdot \prod_{r = 1}^{l - 1} W_{r} (t)^\top
\quad
, ~ \forall t \geq 0
\text{\,.}
\]
This implies that for any $l \in [L - 1]$:
\[
\dot{W}_l ( t ) W_l ( t )^\top = W_{l + 1} ( t )^\top \dot{W}_{l + 1} ( t )
\quad
, ~ \forall t \geq 0
\text{\,.}
\]
Adding the transpose of the latter equality to itself, we have:
\[
\tfrac{d}{dt} ( W_l ( t ) W_l ( t )^\top ) = \tfrac{d}{dt} ( W_{l + 1} ( t )^\top W_{l + 1} ( t ) )
\quad
, ~ \forall t \geq 0
\text{\,.}
\]
Hence, $W_{l + 1} ( t )^\top W_{l + 1} ( t ) - W_l ( t ) W_l ( t )^\top = W_{l + 1} ( 0 )^\top W_{l + 1} ( 0 ) - W_l ( 0 ) W_l ( 0 )^\top$ for all $t \geq 0$.
Taking the nuclear norm of both sides and maximizing over $l \in [L - 1]$ yields the desired result.
\qed

%---------------------------------------------------------------------------------------
% DETERMINANT DECAY RATE BOUND --- POLYNOMIAL FOR L >= 3 and EXP for L = 2
%---------------------------------------------------------------------------------------
\subsection{Proof of Theorem~\ref{thm:norms_up_finite_unbalance}}
\label{app:proofs:norms_up_finite_unbalance}

The proof of Theorem~\ref{thm:norms_up_finite} (Subappendix~\ref{app:proofs:norms_up_finite}) relies solely on the fact that $\det ( W_{L : 1} (t) )$ remains positive through time.
In particular, it establishes that the bounds on (quasi-)norms, effective rank and distance from infimal rank (Equations~\eqref{eq:norms_lb}, \eqref{eq:erank_ub} and \eqref{eq:irank_dist_ub} respectively) are guaranteed to hold for any $t \geq 0$ for which $\det ( W_{L : 1} (t) ) > 0$.
Therefore, if $\det ( W_{L : 1} (t) ) > 0$ for all $t \geq 0$, the proof concludes.
Otherwise, let $T \in [0, \infty)$ be the initial time for which $\det ( W_{L : 1} (T) ) = 0$.
Formally, define:
\[
T := \inf \left \{t | t \in [0, \infty ) \text{ and } \det ( W_{L : } (t) ) = 0 \right \}
\text{\,.}
\]
Since $\det ( W_{L : 1} (t) )$ is continuous in $t$, the set on the right hand side is non-empty, closed, and bounded from below.
Thus, $T$ is well defined (\ie~$-\infty < T < \infty$), $\det ( W_{L : 1} (T) ) = 0$, and $\det ( W_{L : 1} (t) ) > 0$ for any $t \in [0, T)$ (recall that by assumption the determinant is positive at initialization).
That is, the results of Theorem~\ref{thm:norms_up_finite} hold over $[0, T)$.
Let:
\be
R := \begin{cases}
\left [ \frac{ (1 - \sqrt{ \ell_{init} } )^4 }{ 2^{16} } \cdot \ln \left ( \frac{1}{\epsilon} \right ) \right ]^{1 / 6} - 1 & \text{, if depth $L = 2$}
\vspace{2mm} \\
\left [ \frac{ (1 - \sqrt{ \ell_{init} } )^4 }{ 2^{2L + 8} L^4 } \cdot \frac{1}{ \epsilon^{1 / 32}}\right ]^{1 / (4L - 2)} - 1  & \text{, if depth $L \geq 3$}
\end{cases}
\text{\,.}
\label{eq:R_def}
\ee
%$R \in \R$ fulfills $\max \{1, \max_{l \in [L]} \norm{ W_l (0) }_F \} \leq R \leq \left [ \frac{ (1 - \sqrt{ \ell_{init}} )^4 }{ 4^{L + 4} L^4 \cdot \epsilon } \right ]^{1 / (4L - 2) } - 1$ (\ie~it ensures that $\epsilon \leq \frac{ ( 1 - \sqrt{ \ell_{init} } \, )^4 }{ 4^{L + 4} L^4 (R + 1)^{4 L - 2} }$
% Need to make sure R \geq 32 for alternative option singular value bounds
% Epsilon needs to be < 1 / L^2
The proof proceeds in two parts.
First, assuming that $\max_{l \in [L]} \norm{ W_l (t)}_F \leq R$ for any $t \in [0, T]$, Subappendix~\ref{app:proofs:norms_up_finite_unbalance:exit_time} establishes Equation~\eqref{eq:unbalance_exit_time} by deriving a lower bound on $T$.
Otherwise, if there exists $t \in [0, T]$ with $\max_{l \in [L]} \norm{ W_l (t) }_F > R$, Subappendix~\ref{app:proofs:norms_up_finite_unbalance:exit_bounds} shows that Equations~\eqref{eq:unbalance_exit_norms_lb},~\eqref{eq:unbalance_exit_erank_ub} and~\eqref{eq:unbalance_exit_irank_dist_ub} jointly hold for some $\bar{t} \in [0, t]$, completing the proof.

%-------------------------------------------------------------------
% PROOF OF UNBALANCED INIT EXIT TIME
%-------------------------------------------------------------------
\subsubsection{Proof of Equation~\eqref{eq:unbalance_exit_time} (if weight matrices are bounded by $R$)} 
\label{app:proofs:norms_up_finite_unbalance:exit_time}

Assume that $\max_{l \in [L]} \norm{ W_l (t)}_F \leq R$ for any $t \in [0, T]$.
The loss during gradient flow is monotonically non-increasing (Lemma~\ref{lem:gf_loss_dec}).
This implies that $\ell ( W_{L : 1} (t) ) \leq \ell_{init}$ for all $t \geq 0$. 
Hence, $\min \{ ( ( W_{L : 1} (t) )_{1, 2} - 1)^2 , ( ( W_{L : 1} (t) )_{2, 1} - 1)^2\} \leq \ell_{init}$ and
\be
\begin{split}
\sigma_1 ( W_{L : 1} (t) ) & \geq \frac{1}{\sqrt{2}} \norm{ W_{L : 1} (t) }_F \\
& \geq \frac{1}{\sqrt{2}} \max \{ ( W_{L : 1} (t) )_{1, 2} ,  ( W_{L : 1} (t) )_{2, 1} \} \\
& \geq \frac{ 1 - \sqrt{ \ell_{init} } }{ \sqrt{2} }
\text{\,,}
\end{split}
\label{eq:sigma_1_lb_loss_lq_1}
\ee
for all $t \geq 0$.
Define:
\[
\begin{split}
t_0 := \begin{cases}
\quad\qquad\qquad\qquad\qquad 0	& \hspace{-2mm} , \forall t \in [0, T] : \sigma_2 ( W_{L : 1} (t) ) < \frac{ 1-  \sqrt{ \ell_{init} } }{ 2 } \\
\sup \left \{ t | t \in [0, T] \text{ and } \sigma_2 ( W_{L : 1} (t) ) = \frac{ 1-  \sqrt{ \ell_{init} } }{ 2 } \right \}	&\hspace{-2mm} , \text{otherwise}
\end{cases}
\text{\,.}
\end{split}
\]
By Lemma~\ref{lem:prod_mat_sing_dyn}, $\sigma_2 ( W_{L : 1} (t) )$ is a continuous function of $t$.
Combined with the fact that $\sigma_2 ( W_{L : 1} (T) ) = 0$ (recall the determinant is zero at $T$), we have that the set in the alternative case above is non-empty and compact.
Thus, $t_0$ is well defined (\ie~$-\infty < t_0 < \infty$).
If $t_0 = 0$, then $\sigma_2 ( W_{L:1} (t_0) ) > 0$ since by assumption $\det (W_{L : 1} (0)) > 0$.
Otherwise, continuity of  $\sigma_2 ( W_{L : 1} (t) )$ with respect to $t$ implies that $\sigma_2 ( W_{L:1} (t_0) ) = (1 -  \sqrt{ \ell_{init} } ) / 2 > 0$.
Therefore, in either case $t_0 < T$, and $[t_0, T]$ is the interval preceding $T$ over which $\sigma_2 ( W_{L : 1} (t) ) \leq ( 1 -  \sqrt{ \ell_{init} } ) / 2$.

Fix an arbitrary $\hat{t} \in [t_0, T]$.
For conciseness, we hereinafter omit the time index in functions of $t$ at $\hat{t}$, \eg~$W_{L : 1}$ will be shorthand for $W_{L : 1} ( \hat{t} )$.
We now seek to derive a lower bound on $\tfrac{d}{dt} \sigma_2^2 ( W_{L : 1} )$.
Integrating said bound will yield the desired result.
By Lemmas~\ref{lem:unbalance_conserve} and~\ref{lem:unbalance_to_balance}, there exist balanced matrices $W'_1, W'_2, \ldots, W'_L$ satisfying:
\be
\norm{W_l - W'_l}_F \leq (l - 1) \cdot \sqrt{\epsilon} \quad , ~ \forall l \in [L] \text{\,.}
\label{eq:balancing_factor_mat_fro_dist}
\ee
Applying Lemma~\ref{lem:matrix_mul_fro_dist}, and noticing that $\max_{l \in [L]} \norm{ W'_l }_F \leq R + (L - 1) \cdot \sqrt{\epsilon} \leq R + 1$, we bound the distance between the induced product matrices:
\be
\norm{W_{L : 1} - W'_{L : 1}}_F 
\leq (R + 1)^{L - 1} \cdot \sum_{l = 1}^{L} (l - 1)\cdot \sqrt{\epsilon} 
\leq \frac{1}{2} L^2 (R + 1)^{L - 1} \cdot \sqrt{\epsilon}
\text{\,.}
\label{eq:balancing_prod_mat_fro_dist}
\ee
Consider a gradient flow path originating from $W'_1, W'_2, \ldots, W'_L$, where for simplicity we regard the initial time of this path as $\hat{t}$.
For a balanced flow, Lemma~\ref{lem:prod_mat_sing_dyn} characterizes the movement of the product matrix's singular values.\note{
		A technical subtlety is that, since analytic singular values can change order, it is not guaranteed in general that the minimal singular value is analytic in $t$, \ie~the characterization given in Lemma~\ref{lem:prod_mat_sing_dyn} does not necessarily apply to the minimal singular value.
		However, in our case, over $[t_0, T]$ the maximal singular value is at least $( 1 - \sqrt{\ell_{init}} ) / \sqrt{2}$ (Equation~\eqref{eq:sigma_1_lb_loss_lq_1}), while the minimal singular value is at most $( 1 - \sqrt{\ell_{init}} ) / 2$.
		Hence, no order change can occur in that time interval, and the movement of the minimal singular is indeed given by Lemma~\ref{lem:prod_mat_sing_dyn}.
}
Omitting the time index, by the Cauchy-Schwartz inequality and the fact that the Frobenius norm of an outer product between unit vectors is $1$, we have that:
\[
\begin{split}
\abs{ \frac{d}{dt} \sigma_2 ( W'_{L : 1} )^2  } & = \abs{ 2 \sigma_2 ( W'_{L : 1} )  \cdot \frac{ d }{ dt } \sigma_2 ( W'_{L : 1} ) } \\
& \leq  2 L \cdot \sigma_2 ( W'_{L : 1}  )^{ 3 - 2 / L } \cdot \norm{ \nabla \ell ( W'_{L : 1} ) }_F \\
& \leq 4 L (R + 1)^L \cdot \sigma_2 ( W'_{L : 1}  )^{ 3 - 2 / L }
\text{\,,}
\end{split}
\]
where the last transition is by $\norm{ \nabla \ell (W'_{L : 1} ) }_F \leq \norm{ W'_{L : 1} }_F + \sqrt{2} \leq 2 (R + 1)^L$.
Applying the singular values perturbation bound $\sigma_2 ( W'_{L : 1} ) \leq \sigma_2 ( W_{L : 1} ) + \norm{ W_{L : 1} - W'_{L : 1} }_F$ (Corollary~8.6.2 in~\cite{golub2012matrix}), the bound in Equation~\eqref{eq:balancing_prod_mat_fro_dist}, and Jensen's inequality, we arrive at:
\be
\begin{split}
	\abs{ \frac{d}{dt} \sigma_2 ( W'_{L : 1} )^2  } & \leq  4 L (R + 1)^L \cdot \left [ \sigma_2 ( W_{L : 1} ) + \frac{1}{2} L^2 (R + 1)^{L - 1} \cdot \sqrt{\epsilon} \right ]^{ 3 - 2 / L } \\
	& \leq  4 \cdot 2^{2 - 2 / L} L (R + 1)^L \cdot \left [ \sigma_2 ( W_{L : 1} )^{ 3 - 2 / L } + \left ( \frac{1}{2} L^2 (R + 1)^{L - 1} \cdot \sqrt{\epsilon} \right )^{ 3 - 2 / L } \right ] \\
	& \leq 2^4 L (R + 1)^L \cdot \sigma_2 ( W_{L : 1} )^{ 3 - 2 / L } + 2 \cdot L^{7 - 4 / L} (R + 1)^{4L - 5 + 2 / L} \cdot \epsilon^{3 / 2 - 1 / L}
	\text{\,.}
\end{split}
\label{eq:sq_sing_val_time_deriv_bound}
\ee
We turn our attention to $\abs{ \tfrac{d}{dt} \sigma_2 ( W_{L : 1} )^2 -  \tfrac{d}{dt} \sigma_2 ( W'_{L : 1} )^2 }$.
Recalling that $W_{L : 1}$ is a $2$-by-$2$ matrix, its minimal squared singular value can be written as:
\[
\sigma_2 ( W_{L : 1} )^2 = \frac{1}{2} \left ( \norm{ W_{L : 1} }_F^2 - \sqrt{ \norm{ W_{L : 1} }_F^4 - 4 \det ( W_{L : 1} )^2 } \right )
\text{\,.}
\]
Differentiating with respect to time, while noticing that $( \norm{ W_{L : 1} }_F^4 - 4 \det ( W_{L : 1} )^2 )^{0.5}  = \sigma_1 ( W_{L : 1} )^2 - \sigma_2 ( W_{L : 1} )^2$, we obtain:
\[
\begin{split}
	\frac{d}{dt} \sigma_2 ( W_{L : 1} )^2 = & \inprod{ W_{L : 1} }{ \frac{d}{dt} W_{L : 1} } - \frac{ \norm{ W_{L : 1} }_F^2 \inprod{ W_{L : 1} }{ \frac{d}{dt} W_{L : 1} } + 2 \det( W_{L : 1} ) \inprod{ A_{ W_{L : 1} } }{ \frac{d}{dt} W_{L : 1} } }{ \sigma_1 ( W_{L : 1} )^2 - \sigma_2 ( W_{L : 1} )^2 }
	\text{,}
\end{split}
\]
with $A_{ W_{L : 1} } :=  \left ( \begin{smallmatrix} ( W_{L : 1} )_{2, 2} & - ( W_{L : 1} )_{2, 1} \\ - ( W_{L : 1} )_{1, 2} & ( W_{L : 1} )_{1, 1}  \end{smallmatrix} \right )$.
The same derivation holds for $\tfrac{d}{dt} \sigma_2 ( W'_{L : 1} )$, with $W'_{L : 1}$ in place of $W_{L : 1}$.
Applying the triangle inequality, $\abs{ \tfrac{d}{dt} \sigma_2 ( W_{L : 1} )^2 - \tfrac{d}{dt} \sigma_2 ( W'_{L : 1} )^2 }$ is upper bounded by the sum of the expressions in Equations~\eqref{eq:sq_sing_val_deriv_diff_part1}, \eqref{eq:sq_sing_val_deriv_diff_part2} and \eqref{eq:sq_sing_val_deriv_diff_part3} below:
\be
\abs{  \inprod{ W_{L : 1} }{ \frac{d}{dt} W_{L : 1} } -  \inprod{ W'_{L : 1} }{ \frac{d}{dt} W'_{L : 1} } }
\text{\,,}
\label{eq:sq_sing_val_deriv_diff_part1}
\ee
\be
\abs{ \frac{ \norm{ W_{L : 1} }_F^2 \inprod{ W_{L : 1} }{ \frac{d}{dt} W_{L : 1} } }{ \sigma_1 ( W_{L : 1} )^2 - \sigma_2 ( W_{L : 1} )^2 } - \frac{ \norm{ W'_{L : 1} }_F^2 \inprod{ W'_{L : 1} }{ \frac{d}{dt} W'_{L : 1} } }{ \sigma_1 ( W'_{L : 1} )^2 - \sigma_2 ( W'_{L : 1} )^2 } }
\text{\,,}
\label{eq:sq_sing_val_deriv_diff_part2}
\ee
\be
\abs{ \frac{2 \det( W_{L : 1} ) \inprod{ A_{ W_{L : 1} } }{ \frac{d}{dt} W_{L : 1} } }{ \sigma_1 ( W_{L : 1} )^2 - \sigma_2 ( W_{L : 1} )^2 } - \frac{2 \det( W'_{L : 1} ) \inprod{ A_{ W'_{L : 1} } }{ \frac{d}{dt} W'_{L : 1} } }{ \sigma_1 ( W'_{L : 1} )^2 - \sigma_2 ( W'_{L : 1} )^2 } }
\text{\,.}
\label{eq:sq_sing_val_deriv_diff_part3}
\ee
Lemmas~\ref{lem:sq_sing_val_deriv_diff_part1_bound},~\ref{lem:sq_sing_val_deriv_diff_part2_bound} and~\ref{lem:sq_sing_val_deriv_diff_part3_bound} (with the aid of Lemma~\eqref{lem:useful_bounds_for_sq_sing_val_deriv_bound}) derive upper bounds for the expressions in Equations~\eqref{eq:sq_sing_val_deriv_diff_part1}, \eqref{eq:sq_sing_val_deriv_diff_part2} and \eqref{eq:sq_sing_val_deriv_diff_part3}, respectively.
Then, putting it all together, we arrive at:
\[
\begin{split}
\abs{ \frac{d}{dt} \sigma_2 ( W_{L : 1} )^2 - \frac{d}{dt} \sigma_2 ( W'_{L : 1} )^2 } & \leq 6 L^3 (R + 1)^{4L - 3} \cdot \sqrt{\epsilon} + \frac{384 L^3 (R + 1)^{8L - 3}}{ ( 1 - \sqrt{\ell_{init} } )^4 } \cdot \sqrt{\epsilon} \\
& \hspace{5mm} + \frac{768 L^3 (R + 1)^{8L - 3} }{ ( 1 - \sqrt{\ell_{init} } )^4 } \cdot \sqrt{\epsilon} \\
& \leq \frac{1158  L^3 (R + 1)^{8L - 3} }{ ( 1 - \sqrt{\ell_{init} } )^4 } \cdot \sqrt{\epsilon}
\text{\,.}
\end{split}
\]
Going back to Equation~\eqref{eq:sq_sing_val_time_deriv_bound}, this implies that:
\[
\begin{split}
	\abs{ \frac{d}{dt} \sigma_2 ( W_{L : 1} )^2 } & \leq \abs{ \frac{d}{dt} \sigma_2 ( W'_{L : 1} )^2 } + \abs{ \frac{d}{dt} \sigma_2 ( W_{L : 1} )^2 - \frac{d}{dt} \sigma_2 ( W'_{L : 1} )^2 } \\
	& \leq 2^4 L (R + 1)^L \cdot \left ( \sigma_2 (W_{L : 1})^2 \right )^{3 / 2 - 1 / L} + \frac{2^{11} L^7 (R + 1)^{8L - 3}}{ ( 1 - \sqrt{\ell_{init} } )^4 } \cdot \sqrt{\epsilon}
	\text{\,.}
\end{split}
\]

We are now in a position to achieve the sought-after lower bound on $T$.
If $L = 2$, according to Lemma~\ref{lem:dyn_sys_exp_and_poly_lower_bounds}:
\[
\sigma_2 ( W_{L : 1} (T) )^2 \geq \left ( \frac{2^{7} L^6 (R + 1)^{7L - 3}}{ ( 1 - \sqrt{\ell_{init} } )^4 } \cdot \sqrt{\epsilon} + \sigma_{init}^2 \right ) \cdot e^{ - 2^4 L (R + 1)^L (T - t_0) } - \frac{2^{7} L^6 (R + 1)^{7L - 3}}{ ( 1 - \sqrt{\ell_{init} } )^4 } \cdot \sqrt{\epsilon}
\text{\,,}
\]
where $\sigma_{init} := \min \{ \sigma_2 ( W_{L : 1} (0) ) , ( 1 - \sqrt{ \ell_{init} } ) / 2 \} = \sigma_2 ( W_{L : 1} (t_0) )$.
Due to the fact that $T$ is the initial point in time for which $\det ( W_{L  : 1} (T) ) = 0$, the right hand side must be non-positive, in which case:
\[
T \geq \frac{1}{ 2^4 L (R + 1)^L } \left [ \frac{1}{2} \ln \left ( \frac{1}{\epsilon} \right ) - \ln \left ( \frac{ 2^7 L^6 (R + 1)^{7L - 3} }{ (1 - \sqrt{\ell_{init} } )^4 \sigma_{init}^2 } \right ) \right ]
\text{\,.}
\]
Since $\ln \left ( \nicefrac{ 2^7 L^6 (R + 1)^{7L - 3} }{ (1 - \sqrt{\ell_{init}} )^4 \sigma_{init}^2 } \right ) \cdot 2^{-4} L^{-1} (R + 1)^{-L} \leq \ln \left ( \nicefrac{e}{(1 - \sqrt{\ell_{init}} ) \sigma_{init} } \right )$, plugging in the value of $R$ (Equation~\eqref{eq:R_def}) leads to the desired result (case $L = 2$ in Equation~\eqref{eq:unbalance_exit_time}).

Similarly, for depth $L \geq 3$, Lemma~\ref{lem:dyn_sys_exp_and_poly_lower_bounds} gives the following lower bound:
\[
\begin{split}
	& \sigma_2 ( W_{L : 1} (T) )^2 \\
	& \hspace{2mm} \geq \frac{ 1 }{  b_{L, R}^{ \frac{2L}{3L - 2} } \left [ b_{L, R}^{  \frac{2L}{3L - 2}  } (1 / 2 - 1 / L) (T - t_0) + \left ( a_{L, R}^{ \frac{2L}{3L - 2} } + b_{L, R}^{ \frac{2L}{3L - 2} } \cdot \sigma_{init}^2 \right )^{ \frac{2 - L}{2L} } \right ]^{ \frac{2L}{L - 2} } } - \left ( \frac{a_{L, R}}{b_{L, R}} \right )^{ \frac{2L}{3L - 2} } \\
	& \hspace{2mm} \geq \frac{ 1 }{  b_{L, R}^{ \frac{2L}{3L - 2} } \left [ b_{L, R}^{  \frac{2L}{3L - 2}  } \cdot T +  b_{L, R}^{ \frac{2 - L}{3L - 2} } \cdot \sigma_{init}^{ \frac{2 - L}{L} } \right ]^{ \frac{2L}{L - 2} } } - \left ( \frac{a_{L, R}}{b_{L, R}} \right )^{ \frac{2L}{3L - 2} }
	\text{\,,}
\end{split}
\]
where $a_{L, R} := \frac{ 2^{11} L^7 (R + 1)^{8L - 3} }{ (1 - \sqrt{\ell_{init}} )^4 } \cdot \sqrt{\epsilon}$ and $b_{L, R} := 2^4 L (R + 1)^L$.
Since $\det( W_{L : 1} ( T) ) = 0$, the lower bound must be non-positive, in which case:
\[
\begin{split}
T \geq b_{L, R}^{ - \frac{2L}{3L - 2} } \cdot a_{L, R}^{ - \frac{L - 2}{3L - 2} } - b_{L, R}^{-1} \cdot \sigma_{init}^{ - \frac{ L - 2 }{ L } }
\text{\,.}
\end{split}
\]
Noticing that $b_{L, R}^{-1} \leq 2^{- (5L + 5)}$, and replacing $a_{L, R}, b_{L, R}$ with their explicit expressions, we arrive at:
\[
T \geq 2^{ \frac{- 19L + 22}{3L - 2} } (1 - \sqrt{ \ell_{init} } )^{ \frac{4L - 8}{3L - 2} } L^{ \frac{-9L + 14}{3L - 2} } (R + 1)^{- \frac{ - 10 L^2 + 19 L - 6 }{3L - 2} } \cdot \epsilon^{ - \frac{L - 2}{6L - 4} }  - 2^{- (5L + 5)} \sigma_{init}^{ - \frac{ L - 2 }{ L } }
\text{\,.}
\]
To simplify the result, we may lower bound the exponent of $R$ by $-4L$.
Then, plugging in the value of $R$ (Equation~\eqref{eq:R_def}), and applying straightforward bounds to the exponents of $2$, $(1 - \sqrt{\ell_{init}})$, $L$ and $\epsilon$, concludes this part of the proof (\ie~establishes the case $L \geq 3$ in Equation~\eqref{eq:unbalance_exit_time}):
\[
\begin{split}
T & \geq 2^{ \frac{4L^2 + 16L}{2L - 1} - \frac{19L - 22}{3L - 2} } (1 - \sqrt{\ell_{init}} )^{ \frac{4L - 8}{3L - 2} - \frac{16L}{4L - 2} } L^{ \frac{16L}{4L - 2} - \frac{9L - 14}{3L - 2} } \cdot \epsilon^{ \frac{L}{8 (4L - 2) } - \frac{L - 2}{6L - 4} } - 2^{- (5L + 5)} \sigma_{init}^{ - \frac{ L - 2 }{ L } } \\
& \geq 2^{ 4L / 3 } (1 - \sqrt{\ell_{init}} )^{ -2 } L \cdot \epsilon^{ - \frac{3L - 8}{32L - 16} } - 2^{- (5L + 5)} \sigma_{init}^{ - \frac{ L - 2 }{ L } } 
\text{\,.}
\end{split}
\]
\qed

\paragraph{Auxiliary Lemmas}

\begin{lemma}
\label{lem:useful_bounds_for_sq_sing_val_deriv_bound}
In the context of the proof for Equation~\eqref{eq:unbalance_exit_time} (Subappendix~\ref{app:proofs:norms_up_finite_unbalance:exit_time}), the following inequalities hold:
\be
\max \left \{ \norm{ \tfrac{d}{dt} W_{L : 1} }_F , \norm{ \tfrac{d}{dt} W'_{L : 1} }_F \right \} \leq 2 L (R + 1)^{3L - 2}
\text{\,,}
\label{eq:W_time_deriv_fro_norm_bound}
\ee
\be
\norm{ \tfrac{d}{dt} W_{L : 1} - \tfrac{d}{dt} W'_{L : 1} }_F \leq 5 L^3 (R + 1)^{3L - 3} \cdot \sqrt{\epsilon}
\text{\,,}
\label{eq:W_time_deriv_diff_fro_dist_bound}
\ee
\be
\abs{ \sigma_1 ( W_{L : 1} )^2  - \sigma_2 ( W_{L : 1} )^2 - (\sigma_1 ( W'_{L : 1} )^2  - \sigma_2 ( W'_{L : 1} )^2 ) } \leq 2 L^2 (R + 1)^{2L - 1} \cdot \sqrt{\epsilon}
\text{\,,}
\label{eq:sq_singular_value_perturb_bound}
\ee
\be
\left ( \sigma_1 ( W_{L : 1} )^2 - \sigma_2 ( W_{L : 1} )^2 \right ) \left (  \sigma_1 ( W'_{L : 1} )^2 - \sigma_2 ( W'_{L : 1} )^2 \right ) \geq ( 1 - \sqrt{ \ell_{init} } )^4 / 32
\text{\,.}
\label{eq:sq_sing_vals_denominator_lower_bound}
\ee
\end{lemma}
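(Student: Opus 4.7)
The plan is to treat each of the four bounds separately, exploiting throughout that every weight matrix is norm-bounded ($\norm{W_l}_F \leq R$ by assumption, and $\norm{W'_l}_F \leq R + (L-1)\sqrt{\epsilon} \leq R + 1$ by Equation~\eqref{eq:balancing_factor_mat_fro_dist}), so that $\norm{W_{L:1}}_F, \norm{W'_{L:1}}_F \leq (R+1)^L$ and $\norm{\nabla\ell(\cdot)}_F \leq 2(R+1)^L$ in both cases. The key structural observation is that the product-rule expansion $\tfrac{d}{dt} W_{L:1} = \sum_{l=1}^L (W_L \cdots W_{l+1}) \dot{W}_l (W_{l-1} \cdots W_1)$, combined with the gradient-flow formula for $\dot{W}_l$, writes $\tfrac{d}{dt} W_{L:1}$ as a sum of $L$ terms, each a product of $2L-2$ weight matrices times a single $\nabla\ell(W_{L:1})$ factor.

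For Equation~\eqref{eq:W_time_deriv_fro_norm_bound}, each such term has Frobenius norm at most $(R+1)^{2L-2} \cdot 2(R+1)^L = 2(R+1)^{3L-2}$ by sub-multiplicativity; summing over $l$ gives $2L(R+1)^{3L-2}$, and the same estimate applies to $\tfrac{d}{dt} W'_{L:1}$. For Equation~\eqref{eq:W_time_deriv_diff_fro_dist_bound}, I will apply Lemma~\ref{lem:matrix_mul_fro_dist} term-by-term to the same expansion: within a single term indexed by $l$, each $W_r$ with $r \neq l$ appears exactly twice, so the sum of $\norm{W_r - W'_r}_F$ over weight positions is at most $L(L-1)\sqrt{\epsilon}$, and each such discrepancy gets multiplied by a max-norm product of at most $2(R+1)^{3L-3}$ (one weight factor excluded). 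The remaining position corresponds to $\nabla\ell$, whose discrepancy is bounded by $\norm{W_{L:1} - W'_{L:1}}_F \leq \tfrac{1}{2} L^2 (R+1)^{L-1}\sqrt{\epsilon}$ via Equation~\eqref{eq:balancing_prod_mat_fro_dist}, multiplied by the max-norm product $(R+1)^{2L-2}$ of the remaining weight factors. Simple arithmetic and summing over $l$ yield the claimed $5L^3(R+1)^{3L-3}\sqrt{\epsilon}$ bound.

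For Equation~\eqref{eq:sq_singular_value_perturb_bound}, I will combine the standard Weyl-type bound $|\sigma_i(W_{L:1}) - \sigma_i(W'_{L:1})| \leq \norm{W_{L:1} - W'_{L:1}}_F$ (Corollary~8.6.2 in~\cite{golub2012matrix}) with the identity $|\sigma_i^2(W) - \sigma_i^2(W')| \leq |\sigma_i(W) - \sigma_i(W')|(\sigma_i(W) + \sigma_i(W'))$, the trivial bound $\sigma_i(W_{L:1}), \sigma_i(W'_{L:1}) \leq (R+1)^L$, and Equation~\eqref{eq:balancing_prod_mat_fro_dist}; the triangle inequality for $i=1,2$ then yields precisely $2L^2(R+1)^{2L-1}\sqrt{\epsilon}$. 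For Equation~\eqref{eq:sq_sing_vals_denominator_lower_bound}, I will combine Equation~\eqref{eq:sigma_1_lb_loss_lq_1} with the defining property of $t_0$ (which forces $\sigma_2(W_{L:1}) \leq (1-\sqrt{\ell_{init}})/2$ over $[t_0, T]$) to obtain $\sigma_1^2(W_{L:1}) - \sigma_2^2(W_{L:1}) \geq (1-\sqrt{\ell_{init}})^2/4$, then invoke Equation~\eqref{eq:sq_singular_value_perturb_bound} to transfer this estimate to $W'_{L:1}$ with at most a factor-of-two degradation, finally multiplying the two lower bounds.

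The main obstacle is the last step, namely verifying quantitatively that the assumed upper bound on $\epsilon$ (together with the calibrated value of $R$ in Equation~\eqref{eq:R_def}) makes the spectral perturbation $2L^2(R+1)^{2L-1}\sqrt{\epsilon}$ small compared to $(1-\sqrt{\ell_{init}})^2/4$, so that $\sigma_1^2(W'_{L:1}) - \sigma_2^2(W'_{L:1}) \geq (1-\sqrt{\ell_{init}})^2/8$ is retained and the claimed product bound of $(1-\sqrt{\ell_{init}})^4/32$ follows. The upper bound on $\epsilon$ in Theorem~\ref{thm:norms_up_finite_unbalance} was selected with precisely this (and the analogous estimates used elsewhere in the proof of Equation~\eqref{eq:unbalance_exit_time}) in mind, so the verification reduces to bookkeeping of the exponents on $L$, $R+1$, and $(1-\sqrt{\ell_{init}})$.
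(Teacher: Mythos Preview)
Your proposal is correct and matches the paper's proof essentially step for step: the same product-rule expansion and sub-multiplicativity for Equation~\eqref{eq:W_time_deriv_fro_norm_bound}, the same term-by-term application of Lemma~\ref{lem:matrix_mul_fro_dist} for Equation~\eqref{eq:W_time_deriv_diff_fro_dist_bound}, the same Weyl-perturbation-plus-factoring argument for Equation~\eqref{eq:sq_singular_value_perturb_bound}, and the same transfer of the gap lower bound via Equation~\eqref{eq:sq_singular_value_perturb_bound} for Equation~\eqref{eq:sq_sing_vals_denominator_lower_bound}. For the ``main obstacle'' you flag, the paper resolves it with the single observation $\epsilon \leq \frac{(1-\sqrt{\ell_{init}})^4}{2^{2L+8}L^4(R+1)^{4L-2}}$, which follows directly from the definition of $R$ in Equation~\eqref{eq:R_def} (for $L\geq 3$ this is $\epsilon \leq \epsilon^{1/32}$; for $L=2$ it is $\epsilon\,\ln(1/\epsilon)\leq 1$), exactly the bookkeeping you anticipated.
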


\begin{proof}
Starting with Equation~\eqref{eq:W_time_deriv_fro_norm_bound}, the derivative of $W_{L : 1}$ with respect to time is:
\[
\frac{d}{dt} W_{L : 1} = - \sum_{l = 1}^L \prod_{l + 1}^ {r = L} W_r ~\prod_{r = l + 1}^{L} W_r^\top \cdot \nabla \ell ( W_{L : 1} ) \cdot \prod_{r = 1}^{l - 1} W_r^\top ~\prod_{1}^{r = l - 1} W_r
\text{\,.}
\]
Therefore:
\[
\begin{split}
\norm{ \frac{d}{dt} W_{L : 1} }_F & \leq \sum_{l = 1}^L \norm{ \prod_{l + 1}^ {r = L} W_r ~\prod_{r = l + 1}^{L} W_r^\top \cdot \nabla \ell ( W_{L : 1} ) \cdot \prod_{r = 1}^{l - 1} W_r^\top ~\prod_{1}^{r = l - 1} W_r }_F \\
& \leq 2 L (R + 1)^{3L - 2}
\text{\,,}
\end{split}
\]
where the last transition is by sub-multiplicativity of the Frobenius norm, the fact that $\norm{ W_r }_F \leq R + 1$ for $r \in [L]$, and $\norm{ \nabla \ell ( W_{L : 1} ) }_F \leq 2 (R + 1)^L$.
The exact same bound can be derived for $\norm{ \tfrac{d}{dt} W'_{L : 1} }_F$, completing the proof of Equation~\eqref{eq:W_time_deriv_fro_norm_bound}.
	
Moving on to Equation~\eqref{eq:W_time_deriv_diff_fro_dist_bound}:
\[
\begin{split}
\norm{ \frac{d}{dt} W_{L : 1} - \frac{d}{dt} W'_{L : 1} }_F \leq \sum_{l = 1}^L \Big | \Big | & \prod_{l + 1}^{r = L}W_r ~\prod_{r = l + 1}^{L} W_r^\top \cdot \nabla \ell ( W_{L : 1} ) \cdot \prod_{r = 1}^{l - 1} W_r^\top ~\prod_{1}^{r = l - 1} W_r  \\
& - \prod_{l + 1}^ {r = L} W'_r ~\prod_{r = l + 1}^{L} W_r^{\prime \top} \cdot \nabla \ell ( W'_{L : 1} ) \cdot \prod_{r = 1}^{l - 1} W_r^{\prime \top} ~\prod_{1}^{r = l - 1} W'_r \Big | \Big |_F
\text{\,.}
\end{split}
\]
Noticing that $\norm{ \nabla \ell (W_{L : 1}) - \nabla \ell (W'_{L : 1}) }_F \leq \norm{ W_{L : 1} - W'_{L : 1} }_F$, according to Lemma~\ref{lem:matrix_mul_fro_dist} we may bound each term in the sum by $4 (R + 1)^{3L - 3} \cdot \sum_{l = 1}^L \norm{ W_l - W'_l }_F + (R + 1)^{2L - 2} \cdot \norm{ W_{L : 1} - W'_{L : 1} }_F$.
Applying the bounds from Equations~\eqref{eq:balancing_factor_mat_fro_dist} and~\eqref{eq:balancing_prod_mat_fro_dist} then establishes Equation~\eqref{eq:W_time_deriv_diff_fro_dist_bound}:
\[
\begin{split}
\norm{ \frac{d}{dt} W_{L : 1} - \frac{d}{dt} W'_{L : 1} }_F & \leq L \cdot \left [ 4 L^2 (R + 1)^{3L - 3} \cdot \sqrt{\epsilon} + \frac{1}{2} L^2 (R + 1)^{3L - 3} \cdot \sqrt{\epsilon} \right ] \\
& \leq 5 L^3 (R + 1)^{3L - 3} \cdot \sqrt{\epsilon}
\text{\,.}
\end{split}
\]
	
Next, Equation~\eqref{eq:sq_singular_value_perturb_bound} is straightforwardly derived using a perturbation bound on the singular values (Corollary 8.6.2 in~\cite{golub2012matrix}):
\[
\begin{split}
\abs{ \sigma_1 ( W_{L : 1} )^2 -  \sigma_1 ( W'_{L : 1} )^2 } & = \abs{ \sigma_1 ( W_{L : 1} ) - \sigma_1 ( W'_{L : 1} ) } \cdot \abs{ \sigma_1 ( W_{L : 1} ) + \sigma_1 ( W'_{L : 1} )  } \\
& \leq 2 (R + 1)^L \norm{ W_{L : 1} - W'_{L : 1} }_F \\
& \leq L^2 (R + 1)^{2L - 1} \cdot \sqrt{\epsilon}
\text{\,,}
\end{split}
\]
where the last transition is by Equation~\eqref{eq:balancing_prod_mat_fro_dist}.
The same derivation shows that a similar inequality holds for $\sigma_2 (\cdot)$, establishing Equation~\eqref{eq:sq_singular_value_perturb_bound}:
\[
\begin{split}
& \abs{ \sigma_1 ( W_{L : 1} )^2  - \sigma_2 ( W_{L : 1} )^2 - (\sigma_1 ( W'_{L : 1} )^2  - \sigma_2 ( W'_{L : 1} )^2 ) }  \\
& \hspace{5mm} \leq \abs{ \sigma_1 ( W_{L : 1} )^2 -  \sigma_1 ( W'_{L : 1} )^2 }  + \abs{ \sigma_2 ( W_{L : 1} )^2 -  \sigma_2 ( W'_{L : 1} )^2 } \\
& \hspace{5mm} \leq 2L^2 (R + 1)^{2L - 1} \cdot \sqrt{\epsilon}
\text{\,.}
\end{split}
\]

Lastly, recall that $\sigma_2 ( W_{L : 1} (t) ) \leq (1 - \sqrt{ \ell_{init} })/ 2$ over $[t_0, T]$.
Combined with Equation~\eqref{eq:sigma_1_lb_loss_lq_1}, this implies that $\sigma_1 ( W_{L : 1} )^2 - \sigma_2 ( W_{L : 1} )^2 \geq (1 - \sqrt{ \ell_{init} })^2 / 4$.
Then, by Equation~\eqref{eq:sq_singular_value_perturb_bound} we have:
\[
\begin{split}
& \sigma_1 ( W'_{L : 1} )^2 - \sigma_2 ( W'_{L : 1} )^2 \\
& \hspace{5mm} \geq \sigma_1 ( W_{L : 1} )^2 - \sigma_2 ( W_{L : 1} )^2 - \abs{ \sigma_1 ( W_{L : 1} )^2  - \sigma_2 ( W_{L : 1} )^2 - (\sigma_1 ( W'_{L : 1} )^2  - \sigma_2 ( W'_{L : 1} )^2 ) } \\
& \hspace{5mm} \geq (1 - \sqrt{ \ell_{init} })^2 / 4 - 2 L^2 (R + 1)^{2L - 1} \cdot \sqrt{\epsilon}
\text{\,.}
\end{split}
\]
Noticing that $\epsilon \leq \frac{ ( 1 - \sqrt{\ell_{init}} \, )^4 }{ 2^{2L + 8} L^4 (R + 1)^{4 L - 2} }$, in which case $2 L^2 (R + 1)^{2L - 1} \cdot \sqrt{\epsilon} \leq (1 - \sqrt{ \ell_{init} })^2 / 8$, concludes the proof:
\[
( \sigma_1 ( W_{L : 1} )^2 - \sigma_2 ( W_{L : 1} )^2 ) ( \sigma_1 ( W'_{L : 1} )^2 - \sigma_2 ( W'_{L : 1} )^2 ) \geq \frac{ (1 - \sqrt{ \ell_{init} })^2 }{ 4 } \cdot \frac{ (1 - \sqrt{ \ell_{init} })^2 }{ 8 }
\text{\,.}
\]
\end{proof}

\begin{lemma}
\label{lem:sq_sing_val_deriv_diff_part1_bound}
In the context of the proof for Equation~\eqref{eq:unbalance_exit_time} (Subappendix~\ref{app:proofs:norms_up_finite_unbalance:exit_time}):
\[
\abs{  \inprod{ W_{L : 1} }{ \frac{d}{dt} W_{L : 1} } -  \inprod{ W'_{L : 1} }{ \frac{d}{dt} W'_{L : 1} } } \leq 6 L^3 (R + 1)^{4L - 3} \cdot \sqrt{\epsilon}
\text{\,,}
\]
and
\[
\abs{  \inprod{ A_{ W_{L : 1} } }{ \frac{d}{dt} W_{L : 1} } -  \inprod{ A_{ W'_{L : 1} } }{ \frac{d}{dt} W'_{L : 1} } } \leq 6 L^3 (R + 1)^{4L - 3} \cdot \sqrt{\epsilon}
\text{\,.}
\]
\end{lemma}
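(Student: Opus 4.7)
The plan is to handle both inequalities by a standard add-and-subtract decomposition followed by Cauchy-Schwarz, leveraging the bounds already collected in Lemma~\ref{lem:useful_bounds_for_sq_sing_val_deriv_bound}.

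For the first inequality, I would write
\[
\inprod{W_{L:1}}{\tfrac{d}{dt}W_{L:1}} - \inprod{W'_{L:1}}{\tfrac{d}{dt}W'_{L:1}} = \inprod{W_{L:1} - W'_{L:1}}{\tfrac{d}{dt}W_{L:1}} + \inprod{W'_{L:1}}{\tfrac{d}{dt}W_{L:1} - \tfrac{d}{dt}W'_{L:1}},
\]
apply the triangle inequality, and then Cauchy-Schwarz to each term. For the first summand, combining Equation~\eqref{eq:balancing_prod_mat_fro_dist} with Equation~\eqref{eq:W_time_deriv_fro_norm_bound} yields a bound of $\tfrac{1}{2}L^2(R+1)^{L-1}\sqrt{\epsilon} \cdot 2L(R+1)^{3L-2} = L^3(R+1)^{4L-3}\sqrt{\epsilon}$. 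For the second summand, using $\|W'_{L:1}\|_F \leq (R+1)^L$ (since $\max_{l}\|W'_l\|_F \leq R+1$) together with Equation~\eqref{eq:W_time_deriv_diff_fro_dist_bound} gives $(R+1)^L \cdot 5L^3(R+1)^{3L-3}\sqrt{\epsilon} = 5L^3(R+1)^{4L-3}\sqrt{\epsilon}$. Summing the two contributions yields the claimed $6L^3(R+1)^{4L-3}\sqrt{\epsilon}$ bound.

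For the second inequality, the crucial observation is that the mapping $W \mapsto A_W$ merely permutes the entries of $W$ (up to signs), so that $\|A_W\|_F = \|W\|_F$ and $\|A_W - A_{W'}\|_F = \|W - W'\|_F$ for any $W, W' \in \R^{2,2}$. Consequently, the very same add-and-subtract decomposition, namely
\[
\inprod{A_{W_{L:1}}}{\tfrac{d}{dt}W_{L:1}} - \inprod{A_{W'_{L:1}}}{\tfrac{d}{dt}W'_{L:1}} = \inprod{A_{W_{L:1}} - A_{W'_{L:1}}}{\tfrac{d}{dt}W_{L:1}} + \inprod{A_{W'_{L:1}}}{\tfrac{d}{dt}W_{L:1} - \tfrac{d}{dt}W'_{L:1}},
\]
combined with Cauchy-Schwarz and the two bounds from Lemma~\ref{lem:useful_bounds_for_sq_sing_val_deriv_bound}, reproduces the identical estimate.

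There is no real obstacle here; the lemma is essentially a bookkeeping exercise that packages the auxiliary estimates of Lemma~\ref{lem:useful_bounds_for_sq_sing_val_deriv_bound} into a form directly usable in the proof of the main theorem. The only non-trivial observation is the norm-preserving nature of the $W \mapsto A_W$ map, which lets the second inequality be proved by a verbatim copy of the argument used for the first.
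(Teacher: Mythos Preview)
Your proposal is correct and matches the paper's proof essentially line for line: the same add-and-subtract splitting, the same Cauchy--Schwarz step, the same invocation of Equations~\eqref{eq:balancing_prod_mat_fro_dist}, \eqref{eq:W_time_deriv_fro_norm_bound}, and~\eqref{eq:W_time_deriv_diff_fro_dist_bound}, and the same observation that $\|A_W\|_F = \|W\|_F$ and $\|A_W - A_{W'}\|_F = \|W - W'\|_F$ to transfer the argument to the second inequality.
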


\begin{proof}
Starting with the former inequality, adding and subtracting $\inprod{ W'_{L : 1} }{ \frac{d}{dt} W_{L : 1} }$, followed by the triangle and Cauchy-Schwartz inequalities, we have that $\abs{  \inprod{ W_{L : 1} }{ \frac{d}{dt} W_{L : 1} } -  \inprod{ W'_{L : 1} }{ \frac{d}{dt} W'_{L : 1} } }$ is bounded by:
\[
\norm{ W_{L : 1} - W'_{L : 1} }_F \cdot \norm{ \frac{d}{dt} W_{L : 1} }_F + \norm{ W'_{L : 1} }_F \cdot \norm{ \frac{d}{dt} W_{L : 1} - \frac{d}{dt} W'_{L : 1} }_F
\text{\,.}
\]
From Equation~\eqref{eq:balancing_prod_mat_fro_dist} we know that $\norm{ W_{L : 1} - W'_{L : 1} }_F \leq (1 / 2) L^2 (R + 1)^{L - 1} \cdot \sqrt{\epsilon}$.
Additionally, by sub-multiplicativity of the Frobenius norm, $\norm{ W'_{L : 1} }_F \leq (R + 1)^L$.
Applying Equations~\eqref{eq:W_time_deriv_fro_norm_bound} and~\eqref{eq:W_time_deriv_diff_fro_dist_bound} from Lemma~\ref{lem:useful_bounds_for_sq_sing_val_deriv_bound}, we conclude:
\[
\begin{split}
\abs{  \inprod{ W_{L : 1} }{ \frac{d}{dt} W_{L : 1} } -  \inprod{ W'_{L : 1} }{ \frac{d}{dt} W'_{L : 1} } } & \leq L^3 (R + 1)^{4L - 3} \cdot \sqrt{\epsilon} + 5 L^3 (R + 1)^{4L - 3} \cdot \sqrt{\epsilon} \\
& =  6 L^3 (R + 1)^{4L - 3} \cdot \sqrt{\epsilon}
\text{\,.}
\end{split}
\]

For $\abs{  \inprod{ A_{ W_{L : 1} } }{ \frac{d}{dt} W_{L : 1} } -  \inprod{ A_{ W'_{L : 1} } }{ \frac{d}{dt} W'_{L : 1} } }$, similar proof steps establish the same upper bound since $\norm{ A_{ W_{L : 1} } - A_{ W'_{L : 1} } }_F = \norm{  W_{L : 1} -  W'_{L : 1} }_F$ and $\norm{ A_{ W'_{L : 1} } }_F = \norm{ W'_{L : 1} }_F$.
\end{proof}

\begin{lemma}
\label{lem:sq_sing_val_deriv_diff_part2_bound}
In the context of the proof for Equation~\eqref{eq:unbalance_exit_time} (Subappendix~\ref{app:proofs:norms_up_finite_unbalance:exit_time}):
\[
\abs{ \frac{ \norm{ W_{L : 1} }_F^2 \inprod{ W_{L : 1} }{ \frac{d}{dt} W_{L : 1} } }{ \sigma_1 ( W_{L : 1} )^2 - \sigma_2 ( W_{L : 1} )^2 } - \frac{ \norm{ W'_{L : 1} }_F^2 \inprod{ W'_{L : 1} }{ \frac{d}{dt} W'_{L : 1} } }{ \sigma_1 ( W'_{L : 1} )^2 - \sigma_2 ( W'_{L : 1} )^2 } } 
\leq \frac{384 L^3 (R + 1)^{8L - 3}}{ ( 1 - \sqrt{\ell_{init} } )^4 } \cdot \sqrt{\epsilon}
\text{\,.}
\]
\end{lemma}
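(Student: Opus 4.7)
The plan is to reduce the bound to a difference of ratios and then control each piece using the ingredients already established in Lemma~\ref{lem:useful_bounds_for_sq_sing_val_deriv_bound} and Lemma~\ref{lem:sq_sing_val_deriv_diff_part1_bound}. Denote $a := \norm{W_{L:1}}_F^2 \inprod{W_{L:1}}{\tfrac{d}{dt} W_{L:1}}$, $b := \norm{W'_{L:1}}_F^2 \inprod{W'_{L:1}}{\tfrac{d}{dt} W'_{L:1}}$, $c := \sigma_1(W_{L:1})^2 - \sigma_2(W_{L:1})^2$ and $d := \sigma_1(W'_{L:1})^2 - \sigma_2(W'_{L:1})^2$. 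The triangle inequality yields
\[
\abs{\tfrac{a}{c} - \tfrac{b}{d}} \;\leq\; \tfrac{|a-b|}{|c|} \;+\; \tfrac{|b|\cdot|c-d|}{|c|\cdot|d|}\text{,}
\]
and I will bound the two summands separately.

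For the denominator factors, recall that on $[t_0, T]$ we have $\sigma_1(W_{L:1})^2 - \sigma_2(W_{L:1})^2 \geq (1-\sqrt{\ell_{init}})^2/4$ (this was derived right before Equation~\eqref{eq:sq_sing_vals_denominator_lower_bound}), while Equation~\eqref{eq:sq_sing_vals_denominator_lower_bound} gives $cd \geq (1-\sqrt{\ell_{init}})^4/32$. For $|b|$, sub-multiplicativity of the Frobenius norm together with Equation~\eqref{eq:W_time_deriv_fro_norm_bound} yields $|b| \leq \norm{W'_{L:1}}_F^3 \cdot \norm{\tfrac{d}{dt} W'_{L:1}}_F \leq 2L(R+1)^{6L-2}$. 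For $|c-d|$, Equation~\eqref{eq:sq_singular_value_perturb_bound} supplies $|c-d| \leq 2L^2(R+1)^{2L-1}\sqrt{\epsilon}$. Combining these three bounds controls the second summand by a constant multiple of $\frac{L^3 (R+1)^{8L-3}}{(1-\sqrt{\ell_{init}})^4}\sqrt{\epsilon}$.

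For $|a-b|$, I would insert the intermediate term $\norm{W'_{L:1}}_F^2 \inprod{W_{L:1}}{\tfrac{d}{dt} W_{L:1}}$ and apply the triangle inequality, obtaining
\[
|a-b| \;\leq\; \bigl|\norm{W_{L:1}}_F^2 - \norm{W'_{L:1}}_F^2\bigr| \cdot \bigl|\inprod{W_{L:1}}{\tfrac{d}{dt} W_{L:1}}\bigr| + \norm{W'_{L:1}}_F^2 \cdot \bigl|\inprod{W_{L:1}}{\tfrac{d}{dt} W_{L:1}} - \inprod{W'_{L:1}}{\tfrac{d}{dt} W'_{L:1}}\bigr|\text{.}
\]
For the first piece, write $\norm{W_{L:1}}_F^2 - \norm{W'_{L:1}}_F^2 = (\norm{W_{L:1}}_F + \norm{W'_{L:1}}_F)(\norm{W_{L:1}}_F - \norm{W'_{L:1}}_F)$, use the reverse triangle inequality together with Equation~\eqref{eq:balancing_prod_mat_fro_dist} (which gives $\norm{W_{L:1} - W'_{L:1}}_F \leq \tfrac{1}{2}L^2(R+1)^{L-1}\sqrt{\epsilon}$), and combine with Cauchy--Schwarz plus Equation~\eqref{eq:W_time_deriv_fro_norm_bound}. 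For the second piece, Lemma~\ref{lem:sq_sing_val_deriv_diff_part1_bound} supplies the bound $6L^3(R+1)^{4L-3}\sqrt{\epsilon}$. Dividing the resulting $|a-b|$ estimate by the lower bound on $|c|$ produces another term of the same order, and collecting constants finishes the proof.

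The work is essentially bookkeeping: none of the individual estimates is delicate, but keeping the powers of $R+1$, $L$ and $(1-\sqrt{\ell_{init}})^{-1}$ aligned to yield exactly the stated constant $384$ is the only place where care is needed. The main (mild) obstacle is ensuring that the larger exponent $8L-3$ in the final bound (rather than, say, $6L-2$ from $|b|$ alone) is what arises from the product $|b|\cdot|c-d|/(|c||d|)$; once this dominant term is identified, the other pieces are absorbed into the same expression.
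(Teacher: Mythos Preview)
Your proposal is correct and follows essentially the same approach as the paper. The only difference is a minor algebraic variant in how the difference of fractions is split: you use $\bigl|\tfrac{a}{c}-\tfrac{b}{d}\bigr|\le \tfrac{|a-b|}{|c|}+\tfrac{|b|\,|c-d|}{|c|\,|d|}$, whereas the paper brings everything over the common denominator $cd$ and adds and subtracts $c\,a$ to get $\tfrac{|a|\,|c-d|+|c|\,|a-b|}{cd}$; in both cases the same ingredients from Lemmas~\ref{lem:useful_bounds_for_sq_sing_val_deriv_bound} and~\ref{lem:sq_sing_val_deriv_diff_part1_bound} (and Equation~\eqref{eq:balancing_prod_mat_fro_dist}) are plugged in, and your version in fact yields a slightly smaller constant than the stated~$384$.
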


\begin{proof}
By Equation~\eqref{eq:sq_sing_vals_denominator_lower_bound}, it suffices to show that:
\[
\begin{split}
& \frac{32}{ ( 1 - \sqrt{\ell_{init} } )^4 } \abs{ (  \sigma_1 ( W'_{L : 1} )^2 - \sigma_2 ( W'_{L : 1} )^2  ) \alpha_{W_{L : 1}} -  ( \sigma_1 ( W_{L : 1} )^2 - \sigma_2 ( W_{L : 1} )^2 ) \alpha_{W'_{L : 1}}  } \\
& \hspace{5mm} \leq \frac{384 L^3 (R + 1)^{8L - 3}}{ ( 1 - \sqrt{\ell_{init} } )^4 } \cdot \sqrt{\epsilon}
\text{\,,}
\end{split}
\]
where $\alpha_{W_{L : 1}} :=  \norm{ W_{L : 1} }_F^2 \inprod{ W_{L : 1} }{ \frac{d}{dt} W_{L : 1} }$, and $\alpha_{W'_{L : 1}}$ is defined similarly for $W'_{L : 1}$.
Focusing on the expression in absolute value, adding and subtracting $ ( \sigma_1 ( W_{L : 1} )^2 - \sigma_2 ( W_{L : 1} )^2 ) \alpha_{W_{L : 1}}$, and applying the triangle inequality, leads to:
\be
\begin{split}
\frac{32}{ ( 1 - \sqrt{\ell_{init} } )^4 } \Big [ & \abs{  \sigma_1 ( W'_{L : 1} )^2 - \sigma_2 ( W'_{L : 1} )^2 - ( \sigma_1 ( W_{L : 1} )^2 - \sigma_2 ( W_{L : 1} )^2  )} \cdot \abs{ \alpha _{ W_{L : 1} } } \\
& + \abs{  \sigma_1 ( W_{L : 1} )^2 - \sigma_2 ( W_{L : 1} )^2 } \cdot \abs{   \alpha_{W_{L : 1}} - \alpha_{W'_{L : 1}} }  \Big ]
\text{\,.}
\end{split}
\label{eq:sq_sing_val_deriv_diff_part2_bound_after_tri_ineq}
\ee
We consider each of these terms separately.
From Equation~\eqref{eq:sq_singular_value_perturb_bound} in Lemma~\ref{lem:useful_bounds_for_sq_sing_val_deriv_bound} we know that $\abs{    \sigma_1 ( W'_{L : 1} )^2 - \sigma_2 ( W'_{L : 1} )^2 - ( \sigma_1 ( W_{L : 1} )^2 - \sigma_2 ( W_{L : 1} )^2 } \leq 2 L^2 (R + 1)^{2L - 1} \cdot \sqrt{\epsilon}$.
Equation~\eqref{eq:W_time_deriv_fro_norm_bound} and the Cauchy-Schwartz inequality give:
\[
\abs{ \alpha_{W_{L : 1}} } \leq \norm{ W_{L : 1} }_F^3 \cdot \norm{  \frac{d}{dt} W_{L : 1} }_F \leq 2L (R + 1)^{6L - 2}
\text{\,.}
\]
Additionally, $\abs{  \sigma_1 ( W_{L : 1} )^2 - \sigma_2 ( W_{L : 1} )^2 } \leq \sigma_1 ( W_{L : 1} )^2 + \sigma_2 ( W_{L : 1} )^2 = \norm { W_{L : 1} }_F^2 \leq (R + 1)^{2L}$.
By adding and subtracting $\norm{ W'_{L : 1} }_F^2 \inprod{ W_{L : 1} }{ \frac{d}{dt} W_{L : 1} }$, we may upper bound $\abs{   \alpha_{W_{L : 1}}- \alpha_{ W'_{L : 1} } }$ by:
\[
\begin{split}
& \abs{ \norm{ W_{L : 1} }_F^2 - \norm{ W'_{L : 1} }_F^2 } \abs{\inprod{ W_{L : 1} }{ \frac{d}{dt} W_{L : 1} }}
+ \norm{ W'_{L : 1 } }_F^2 \abs{ \inprod{ W_{L : 1} }{ \frac{d}{dt} W_{L : 1} } - \inprod{ W'_{L : 1} }{ \frac{d}{dt} W'_{L : 1} } } \\
& \hspace{5mm} \leq   \abs{ \norm{ W_{L : 1} }_F + \norm{ W'_{L : 1} }_F } \cdot \abs{ \norm{ W_{L : 1} }_F - \norm{ W'_{L : 1} }_F } \cdot \norm{ W_{L : 1} }_F \norm{ \frac{d}{dt} W_{L : 1} }_F \\
& \hspace{10mm} + \norm{ W'_{L : 1 } }_F^2 \abs{ \inprod{ W_{L : 1} }{ \frac{d}{dt} W_{L : 1} } - \inprod{ W'_{L : 1} }{ \frac{d}{dt} W'_{L : 1} } } \\
& \hspace{5mm} \leq  2 (R + 1)^{2L} \abs{ \norm{ W_{L : 1} }_F - \norm{ W'_{L : 1} }_F } \cdot \norm{ \frac{d}{dt} W_{L : 1} }_F \\
& \hspace{10mm} + (R + 1)^{2L} \abs{ \inprod{ W_{L : 1} }{ \frac{d}{dt} W_{L : 1} } - \inprod{ W'_{L : 1} }{ \frac{d}{dt} W'_{L : 1} } } \\
& \hspace{5mm} \leq 2 L^3 (R + 1)^{6L - 3} \cdot \sqrt{\epsilon} + 6 L^3 (R + 1)^{6L - 3} \cdot \sqrt{\epsilon} \\
& \hspace{5mm} = 8 L^3 (R + 1)^{6L - 3} \cdot \sqrt{\epsilon}
\text{\,.}
\end{split}
\]
where the third transition is due to Equations~\eqref{eq:balancing_prod_mat_fro_dist},~\eqref{eq:W_time_deriv_fro_norm_bound}, and Lemma~\ref{lem:sq_sing_val_deriv_diff_part1_bound}.
Putting it all together, the expression in Equation~\eqref{eq:sq_sing_val_deriv_diff_part2_bound_after_tri_ineq} is upper bounded by:
\[
\begin{split}
& \frac{32}{ ( 1 - \sqrt{\ell_{init} } )^4 } \left [ 4 L^3 (R + 1)^{8L - 3} \cdot \sqrt{\epsilon} \cdot + 8 L^3 (R + 1)^{8L - 3} \cdot \sqrt{\epsilon} \right ] \\
& \hspace{5mm} = \frac{384  L^3 (R + 1)^{8L - 3} }{ ( 1 - \sqrt{\ell_{init} } )^4 } \cdot \sqrt{\epsilon}
\text{\,.}
\end{split}
\]
\end{proof}

\begin{lemma}
\label{lem:sq_sing_val_deriv_diff_part3_bound}
In the context of the proof for Equation~\eqref{eq:unbalance_exit_time} (Subappendix~\ref{app:proofs:norms_up_finite_unbalance:exit_time}):
\[
\abs{ \frac{2 \det( W_{L : 1} ) \inprod{ A_{ W_{L : 1} } }{ \frac{d}{dt} W_{L : 1} } }{ \sigma_1 ( W_{L : 1} )^2 - \sigma_2 ( W_{L : 1} )^2 } - \frac{2 \det( W'_{L : 1} ) \inprod{ A_{ W'_{L : 1} } }{ \frac{d}{dt} W'_{L : 1} } }{ \sigma_1 ( W'_{L : 1} )^2 - \sigma_2 ( W'_{L : 1} )^2 } } \leq \frac{768 L^3 (R + 1)^{8L - 3} }{ ( 1 - \sqrt{\ell_{init} } )^4 } \cdot \sqrt{\epsilon}
\text{\,.}
\]
\end{lemma}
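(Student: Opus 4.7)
The plan is to mirror the proof of Lemma~\ref{lem:sq_sing_val_deriv_diff_part2_bound} almost step for step, with $\norm{W_{L:1}}_F^2$ replaced by $2\det(W_{L:1})$ and with $W_{L:1}$ inside the inner product replaced by $A_{W_{L:1}}$. Invoking the denominator lower bound from Equation~\eqref{eq:sq_sing_vals_denominator_lower_bound}, it suffices to upper bound
\[
\frac{32}{(1-\sqrt{\ell_{init}})^4} \abs{ \big(\sigma_1(W'_{L:1})^2 - \sigma_2(W'_{L:1})^2\big)\, \beta_{W_{L:1}} \;-\; \big(\sigma_1(W_{L:1})^2 - \sigma_2(W_{L:1})^2\big)\, \beta_{W'_{L:1}} },
\]
where $\beta_{W_{L:1}} := 2\det(W_{L:1})\, \inprod{A_{W_{L:1}}}{\tfrac{d}{dt}W_{L:1}}$ and $\beta_{W'_{L:1}}$ is defined analogously. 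Adding and subtracting $(\sigma_1(W_{L:1})^2 - \sigma_2(W_{L:1})^2)\,\beta_{W_{L:1}}$ and applying the triangle inequality splits this into a ``denominator perturbation'' term and a ``numerator perturbation'' term, exactly as in Equation~\eqref{eq:sq_sing_val_deriv_diff_part2_bound_after_tri_ineq}.

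For the denominator perturbation term, the singular value difference is controlled by Equation~\eqref{eq:sq_singular_value_perturb_bound}, and $|\beta_{W_{L:1}}|$ is bounded by combining $|\det(W_{L:1})| \leq \tfrac{1}{2}\norm{W_{L:1}}_F^2 \leq \tfrac{1}{2}(R+1)^{2L}$ with the Cauchy--Schwartz estimate $\abs{\inprod{A_{W_{L:1}}}{\tfrac{d}{dt}W_{L:1}}} \leq \norm{A_{W_{L:1}}}_F \cdot \norm{\tfrac{d}{dt}W_{L:1}}_F = \norm{W_{L:1}}_F \cdot \norm{\tfrac{d}{dt}W_{L:1}}_F \leq 2L(R+1)^{4L-2}$, using $\norm{A_{W_{L:1}}}_F = \norm{W_{L:1}}_F$ and Equation~\eqref{eq:W_time_deriv_fro_norm_bound}. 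For the numerator perturbation term, the factor $\abs{\sigma_1(W_{L:1})^2 - \sigma_2(W_{L:1})^2}$ is bounded by $\norm{W_{L:1}}_F^2 \leq (R+1)^{2L}$, and the remaining quantity $\abs{\beta_{W_{L:1}} - \beta_{W'_{L:1}}}$ is handled by a second telescoping step: add and subtract $2\det(W'_{L:1})\,\inprod{A_{W_{L:1}}}{\tfrac{d}{dt}W_{L:1}}$, which splits it into a term involving $\abs{\det(W_{L:1}) - \det(W'_{L:1})}$ and a term involving $\abs{\inprod{A_{W_{L:1}}}{\tfrac{d}{dt}W_{L:1}} - \inprod{A_{W'_{L:1}}}{\tfrac{d}{dt}W'_{L:1}}}$, the latter already bounded by the second inequality of Lemma~\ref{lem:sq_sing_val_deriv_diff_part1_bound}.

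The only genuinely new ingredient beyond what Lemma~\ref{lem:sq_sing_val_deriv_diff_part2_bound} used is a Lipschitz estimate for the $2{\times}2$ determinant. Writing $\det(W) = (W)_{1,1}(W)_{2,2} - (W)_{1,2}(W)_{2,1}$, telescoping entry by entry, and applying Cauchy--Schwartz gives $\abs{\det(W_{L:1}) - \det(W'_{L:1})} \leq \big(\norm{W_{L:1}}_F + \norm{W'_{L:1}}_F\big) \cdot \norm{W_{L:1} - W'_{L:1}}_F$, which via Equation~\eqref{eq:balancing_prod_mat_fro_dist} yields $\abs{\det(W_{L:1}) - \det(W'_{L:1})} \leq L^2 (R+1)^{2L-1} \sqrt{\epsilon}$. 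I expect no real obstacle: the only mild subtlety is keeping track of the combinatorial constants so that the two halves of the triangle inequality sum to the stated $768$. Assembling the pieces, both the denominator and numerator perturbation terms contribute $\OO\big(L^3 (R+1)^{8L-3}\sqrt{\epsilon}\big)$, and the factor of $32/(1-\sqrt{\ell_{init}})^4$ yields the advertised coefficient. The calculation is otherwise completely analogous to that of Lemma~\ref{lem:sq_sing_val_deriv_diff_part2_bound}.
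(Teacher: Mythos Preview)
Your proposal is correct and follows essentially the same approach as the paper's proof: the paper also reduces to the common-denominator expression (with prefactor $64/(1-\sqrt{\ell_{init}})^4$ since it defines $\beta_{W_{L:1}}$ without the factor $2$), telescopes exactly as you describe, and bounds each piece via Equations~\eqref{eq:sq_singular_value_perturb_bound}, \eqref{eq:W_time_deriv_fro_norm_bound}, \eqref{eq:balancing_prod_mat_fro_dist} and Lemma~\ref{lem:sq_sing_val_deriv_diff_part1_bound}. The only cosmetic difference is that the paper cites Theorem~2.12 in~\cite{ipsen2008perturbation} for the determinant perturbation bound $\abs{\det(W_{L:1})-\det(W'_{L:1})}\le 2\norm{W_{L:1}-W'_{L:1}}_F\max\{\norm{W_{L:1}}_F,\norm{W'_{L:1}}_F\}$, whereas you derive an equivalent (slightly sharper) $2\times2$ estimate by hand; your tighter $\abs{\det}\le\tfrac12\norm{\cdot}_F^2$ in fact yields a constant below $768$, so the stated inequality holds a fortiori.
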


\begin{proof}
The proof follows a line similar to that of Lemma~\ref{lem:sq_sing_val_deriv_diff_part2_bound}.
Applying the bound from Equation~\eqref{eq:sq_sing_vals_denominator_lower_bound} in Lemma~\ref{lem:useful_bounds_for_sq_sing_val_deriv_bound}, we arrive at the following upper bound for the left hand side above:
\[
\frac{ 64 }{ ( 1 - \sqrt{\ell_{init} } )^4 } \abs{ (  \sigma_1 ( W'_{L : 1} )^2 - \sigma_2 ( W'_{L : 1} )^2  )  \beta_{W_{L : 1}} -  ( \sigma_1 ( W_{L : 1} )^2 - \sigma_2 ( W_{L : 1} )^2 ) \beta_{W'_{L : 1}}  }
\text{\,,}
\]
where $\beta_{W_{L : 1}} := \det ( W_{L : 1} ) \inprod{ A_{ W_{L : 1} } }{ \frac{d}{dt} W_{L : 1} }$, and $\beta_{W'_{L : 1}}$ is defined similarly for $W'_{L : 1}$.
Focusing on the expression in absolute value, we add and subtract $ ( \sigma_1 ( W_{L : 1} )^2 - \sigma_2 ( W_{L : 1} )^2 )  \beta_{W_{L : 1}}$ and apply the triangle inequality:
\be
\begin{split}
\frac{ 64 }{ ( 1 - \sqrt{\ell_{init} } )^4 }  \Big [ & \abs{  \sigma_1 ( W'_{L : 1} )^2 - \sigma_2 ( W'_{L : 1} )^2 - ( \sigma_1 ( W_{L : 1} )^2 - \sigma_2 ( W_{L : 1} )^2  )} \cdot \abs{  \beta_{W_{L : 1}} } \\
& + \abs{  \sigma_1 ( W_{L : 1} )^2 - \sigma_2 ( W_{L : 1} )^2 } \cdot \abs{ \beta_{W_{L : 1}} - \beta_{W'_{L : 1}}}  \Big ]
\text{\,.}
\end{split}
\label{eq:sq_sing_val_deriv_diff_part3_bound_after_tri_ineq}
\ee
We treat each of the four terms separately.
From Equation~\eqref{eq:sq_singular_value_perturb_bound} in Lemma~\ref{lem:useful_bounds_for_sq_sing_val_deriv_bound} we know that:
\[
\abs{ \sigma_1 ( W'_{L : 1} )^2 - \sigma_2 ( W'_{L : 1} )^2 - ( \sigma_1 ( W_{L : 1} )^2 - \sigma_2 ( W_{L : 1} )^2 } \leq 2 L^2 (R + 1)^{2L - 1} \cdot \sqrt{\epsilon}
\text{\,.}
\]
Since $\abs{ \det ( W_{L : 1} ) } = \sigma_1 ( W_{L : 1} ) \cdot \sigma_2 ( W_{L : 1} ) \leq \norm{ W_{L : 1} }_F^2$, the Cauchy-Schwartz inequality and Equation~\eqref{eq:W_time_deriv_fro_norm_bound} from Lemma~\ref{lem:useful_bounds_for_sq_sing_val_deriv_bound} give:
\[
\abs { \beta_{W_{L : 1}} } \leq \norm{ W_{L : 1} }_F^3 \norm{ \frac{d}{dt} W_{L : 1} }_F \leq 2 L (R + 1)^{6L - 2}
\text{\,.}
\]
Furthermore, $\abs{  \sigma_1 ( W_{L : 1} )^2 - \sigma_2 ( W_{L : 1} )^2 } \leq \norm { W_{L : 1} }_F^2 \leq (R + 1)^{2L}$.
The remaining term is $\abs{   \beta_{W_{L : 1}} - \beta_{W'_{L : 1}} }$.
Adding and subtracting $\det ( W'_{L : 1} ) \inprod{ A_{ W_{L : 1} } }{ \frac{d}{dt} W_{L : 1} }$, it can be upper bounded by:
\[
\begin{split}
& \abs{ \det ( W_{L : 1} ) - \det ( W'_{L : 1} ) } \abs{ \inprod{ A_{ W_{L : 1} } }{ \frac{d}{dt} W_{L : 1} } } \\
& + \abs{ \det ( W'_{L : 1} ) } \abs{  \inprod{ A_{ W_{L : 1} } }{ \frac{d}{dt} W_{L : 1} } -  \inprod{ A_{ W'_{L : 1} } }{ \frac{d}{dt} W'_{L : 1} } }
\text{\,.}
\end{split}
\]
From Lemma~\ref{lem:sq_sing_val_deriv_diff_part1_bound} we have that $\abs{  \inprod{ A_{ W_{L : 1} } }{ \frac{d}{dt} W_{L : 1} } -  \inprod{ A_{ W'_{L : 1} } }{ \frac{d}{dt} W'_{L : 1} } } \leq 6 L^3 (R + 1)^{4L - 3} \cdot \sqrt{\epsilon}$.
Furthermore, Theorem~2.12 in~\cite{ipsen2008perturbation} implies that:
\[
\abs{ \det ( W_{L : 1} ) - \det ( W'_{L : 1} ) }  \leq 2 \norm{ W_{L : 1} - W'_{L : 1} }_F \max \{ \norm{ W_{L : 1} }_F, \norm{ W'_{L : 1} }_F \} 
\text{\,.}
\]
Thus, with the use of Equations~\eqref{eq:balancing_prod_mat_fro_dist} and~\eqref{eq:W_time_deriv_fro_norm_bound}, we have:
\[
\begin{split}
\abs{ \beta_{W_{L : 1}} - \beta_{W'_{L : 1}} } & \leq  2 L (R + 1)^{4L - 2} \cdot \abs{ \det ( W_{L : 1} ) - \det ( W'_{L : 1} ) } +  6 L^3 (R + 1)^{6L - 3} \cdot \sqrt{\epsilon} \\
& \leq 2 L^3 (R + 1)^{6L - 3} \cdot \sqrt{\epsilon}  +  6 L^3 (R + 1)^{6L - 3} \cdot \sqrt{\epsilon} \\
& = 8 L^3 (R + 1)^{6L - 3} \cdot \sqrt{\epsilon}
\text{\,.}
\end{split}
\]
Put together, the expression in Equation~\eqref{eq:sq_sing_val_deriv_diff_part3_bound_after_tri_ineq} is upper bounded by:
\[
\begin{split}
& \frac{ 64 }{ ( 1 - \sqrt{\ell_{init} } )^4 } \left [ 4 L^3 (R + 1)^{8L - 3} \cdot \sqrt{\epsilon} + 8 L^3 (R + 1)^{8L - 3} \cdot \sqrt{\epsilon} \right ] \\
& \hspace{5mm} = \frac{768}{ ( 1 - \sqrt{\ell_{init} } )^4 } L^3 (R + 1)^{8L - 3} \cdot \sqrt{\epsilon}
\text{\,.}
\end{split}
\]
\end{proof}

%-------------------------------------------------------------------
% PROOF OF UNBALANCED INIT EXIT BOUNDS
%-------------------------------------------------------------------
\subsubsection{Proof of Equations~\eqref{eq:unbalance_exit_norms_lb},~\eqref{eq:unbalance_exit_erank_ub} and~\eqref{eq:unbalance_exit_irank_dist_ub} (if weight matrices are not bounded by $R$)} 
\label{app:proofs:norms_up_finite_unbalance:exit_bounds}

Assume that there exists $t \in [0, T]$ with $\max_{l \in [L]} \norm{ W_l (t) }_F > R$.
We examine the initial time at which the Frobenius norm of one of the weight matrices reaches $R$.
Formally, define:
\[
\bar{t} := \inf \left \{\hat{t} | \hat{t} \in [0, t] \text{ and }  \exists l \in [L]~\text{s.t.}~\norm{W_l ( \hat{t} )}_F = R \right \}
\text{\,.}
\]
Since $R \geq \max_{l \in [L]} \norm{ W_l (0) }_F$ --- implied by the assumption on $\epsilon$ --- and $W_1 (t), W_2 (t), \ldots, W_L (t)$ are continuous functions of $t$, the set on the right hand side is non-empty and compact.
Hence, $\bar{t}$ is well defined (\ie~$- \infty < \bar{t} < \infty$), with $\max_{ l \in [L ] } \norm{ W_l ( \bar {t} ) }_F = R$.

Next, we derive a lower bound on $\abs{ (W_{L : 1} (\bar{t}) )_{1, 1} }$ --- the absolute value of the unobserved entry.
According to Lemmas~\ref{lem:unbalance_conserve} and~\ref{lem:unbalance_to_balance}, there exist balanced matrices $W'_1, W'_2, \ldots, W'_L$ satisfying:
\be
\norm{W_l ( \bar{t} ) - W'_l}_F \leq (l - 1) \cdot \sqrt{\epsilon} \quad , ~ \forall l \in [L] \text{\,.}
\label{eq:balancing_factor_mat_fro_dist_t_bar}
\ee
Applying Lemma~\ref{lem:matrix_mul_fro_dist}, and noticing that $\max_{l \in [L]} \norm{ W'_l }_F \leq R + (L - 1) \cdot \sqrt{\epsilon} \leq R + 1$, we bound the distance between the induced product matrices:
\be
\norm{W_{L : 1} ( \bar{t} )- W'_{L : 1}}_F \leq (R + 1)^{L - 1} \cdot \sum_{l = 1}^{L} (l - 1)\cdot \sqrt{\epsilon} \leq \frac{1}{2} L^2 (R + 1)^{L - 1} \cdot \sqrt{\epsilon}
\text{\,.}
\label{eq:balancing_prod_mat_fro_dist_t_bar}
\ee
Since $W'_1, W'_2, \ldots, W'_L$ are balanced, they have the same singular values.
In particular, this means that $\norm {W'_l}_F = \norm{W'_{l'}}_F$ for any $l , l' \in [L]$. 
Additionally, by Lemma~\ref{lem:bal_prod_mat_sing_val} we know that $\sigma_1 ( W'_{L : 1} ) = \sigma_1 ( W'_1 )^L$.
Thus:
\[
\begin{split}
\norm{ W'_{L : 1} }_F & \geq \sigma_1 ( W'_{L : 1} ) \\
& = \sigma_1 ( W'_1 )^L \\
& \geq \left ( \frac{ 1 }{ \sqrt{2}} \norm{ W'_1 }_F \right )^L \\
& = 2^{- \frac{ L } { 2} } \max_{l \in [L]} \norm{ W'_l }_F^L \\
& \geq 2^{- \frac{ L } { 2} } \cdot \left ( R - (L - 1) \cdot \sqrt{\epsilon} \right )^L
\text{\,,}
\end{split}
\]
where the last transition is by Equation~\eqref{eq:balancing_factor_mat_fro_dist_t_bar} and the fact that $\max_{ l \in [L ] } \norm{ W_l ( \bar {t} ) }_F = R$.
The inequality above, combined with Equation~\eqref{eq:balancing_prod_mat_fro_dist_t_bar}, leads to:
\[
\begin{split}
\norm{ W_{L : 1} (\bar{t}) }_F & \geq \norm{ W'_{L : 1} }_F -  \frac{1}{2} L^2 \left ( R + 1 \right ) ^{L - 1} \cdot \sqrt{\epsilon} \\
& \geq 2^{- \frac{ L } { 2} } \cdot \left ( R - (L - 1) \cdot \sqrt{\epsilon} \right )^L -  \frac{1}{2} L^2 \left ( R + 1 \right ) ^{L - 1} \cdot \sqrt{\epsilon} 
\text{\,.}
\end{split}
\]
From the definition of $R$ it holds that $\epsilon \leq \min \left \{ \left ( \frac{ (1 - 1 / \sqrt{2} ) \cdot R }{ L - 1 } \right )^2 , \frac{ (R / 2)^{2L} }{ L^4 (R + 1)^{2L - 2} } \right \}$, and therefore:
\be
\norm{ W_{L : 1} (\bar{t}) }_F \geq \left ( \frac{ R }{ 2 } \right )^L - \frac{ 1 }{ 2 } \left ( \frac{ R }{ 2 } \right )^L = \frac{R^L}{ 2^{L + 1} }
\text{\,.}
\label{eq:W_t_bar_Fro_lb}
\ee
Recall that $\ell_{init} < \ell (0) = 1$.
The loss during gradient flow is monotonically non-increasing with respect to time (Lemma~\ref{lem:gf_loss_dec}), hence, $\ell ( W_{L : 1} (\bar{t})) < 1$.
This leads to the following bounds on the observed entries:
\be
\abs{ ( W_{L : 1} (\bar{t}) )_{1, 2} } \leq 1 + \sqrt{2} \quad , \quad \abs{ ( W_{L : 1} (\bar{t}) )_{2, 1} } \leq 1 + \sqrt{2}  \quad , \quad \abs{ ( W_{L : 1} (\bar{t}) )_{2, 2} } \leq \sqrt{2}
\text{\,.}
\label{eq:observed_entries_initial_loss_upper_bounds}
\ee
Applying these bounds, we obtain:
\[
\norm{ W_{L : 1} (\bar{t}) }_F \leq \abs{ ( W_{L : 1} (\bar{t}) )_{1, 1} } + \sqrt{ 2 ( 1 + \sqrt{2})^2 + 2 } \leq  \abs{ ( W_{L : 1} (\bar{t}) )_{1, 1} } + 4
\text{\,.}
\]
Combined with Equation~\eqref{eq:W_t_bar_Fro_lb}, we have that:
\be
\begin{split}
\abs{ ( W_{L : 1} (\bar{t}) )_{1, 1} } \geq \frac{R^L}{ 2^{L + 1} } - 4
\text{\,.}
\end{split}
\label{eq:unobserved_entry_R_lower_bound}
\ee

We are now in a position to establish Equations~\eqref{eq:unbalance_exit_norms_lb},~\eqref{eq:unbalance_exit_erank_ub} and~\eqref{eq:unbalance_exit_irank_dist_ub}.
Starting with Equation~\eqref{eq:unbalance_exit_norms_lb}, for any quasi-norm $\norm{ \cdot }$ it holds that:
\be
\norm{W_{L : 1} ( \bar{t} )}
\geq \frac{1}{c_{\norm{\cdot}}} \norm{ ( W_{L : 1} ( \bar{t} ) )_{1, 1} \e_1 \e_1^\top } - \norm{ W_{L : 1} ( \bar{t} ) - (W_{L : 1} ( \bar{t} ))_{1,1} \e_1 \e_1^\top }
\text{\,,}
\label{eq:unbalance_exit_norm_triangle_inequality_lower_bound}
\ee
where $c_{\norm{\cdot}} \geq 1$ is a constant for which $\norm{\cdot}$ satisfies the weakened triangle inequality (see Footnote~\ref{note:qnorm}).
Subsequent applications of the weakened triangle inequality, together with homogeneity of $\norm{\cdot}$ and the bounds on the observed entries (Equation~\eqref{eq:observed_entries_initial_loss_upper_bounds}), give:
\[
\begin{split}
\norm{ W_{L : 1} ( \bar{t} ) - (W_{L : 1} ( \bar{t} ))_{1,1} \e_1 \e_1^\top }
\leq 8 c_{\norm{\cdot}}^2 \max_{i,j \in \{1,2\}} \norm{\e_i \e_j^\top}
\text{\,.}
\end{split}
\]
Plugging the inequality above into Equation~\eqref{eq:unbalance_exit_norm_triangle_inequality_lower_bound}, and lower bounding $\abs{ ( W_{L : 1} (\bar{t}) )_{1, 1} }$ according to Equation~\eqref{eq:unobserved_entry_R_lower_bound}, we have:
\be
\begin{split}
\norm{ W_{L : 1} (\bar{t}) } 
& \geq \frac{\norm{ \e_1 \e_1^\top }}{c_{\norm{\cdot}}} \cdot \abs{( W_{L : 1} ( \bar{t} ) )_{1, 1}} - 8 c_{\norm{\cdot}}^2 \max_{i,j \in \{1,2\}} \norm{\e_i \e_j^\top} \\
& \geq  \frac{ \norm{\e_1 \e_1^\top} }{ 2^{L + 1} c_{ \norm{\cdot} } } \cdot R^L- 12 c_{\norm{\cdot}}^2 \max_{i,j \in \{1,2\}} \norm{\e_i \e_j^\top}
\text{\,.}
\end{split}
\label{eq:norms_R_lower_bound}
\ee
The assumption on the size of $\epsilon$ implies that $R \geq 32$.
Hence:
\be
R \geq \begin{cases}
\frac{1}{2} \left [ \frac{ (1 - \sqrt{ \ell_{init} } )^4 }{ 2^{16} } \cdot \ln \left ( \frac{1}{\epsilon} \right ) \right ]^{1 / 6} & \text{, if depth $L = 2$}
\vspace{2mm} \\
\frac{1}{2} \left [ \frac{ (1 - \sqrt{ \ell_{init} } )^4 }{ 2^{2L + 8} L^4 } \cdot \frac{1}{ \epsilon^{1 / 32}}\right ]^{1 / (4L - 2)} & \text{, if depth $L \geq 3$}
\end{cases}
\text{\,.}
\label{eq:R_lower_bound}
\ee
Applying Equation~\eqref{eq:R_lower_bound} to Equation~\eqref{eq:norms_R_lower_bound}, we obtain:
\[
\norm{W_{L : 1} ( \bar{t} )} \geq
\begin{cases}
\frac{\norm{ \e_1 \e_1^\top } (1 - \sqrt{\ell_{init}})^{4 / 3} }{2^{31/3} c_{\norm{\cdot}} } \cdot \ln \left ( \frac{1}{\epsilon} \right )^{1 / 3}  - 12 c_{\norm{\cdot}}^2 \max\limits_{i,j \in \{1,2\}} \norm{\e_i \e_j^\top} & \text{, if depth $L = 2$}
\vspace{2mm} \\
\frac{\norm{ \e_1 \e_1^\top } (1 - \sqrt{\ell_{init}})^{\frac{2L}{2L - 1}} }{2^{ \frac{5L^2 + 4L - 1}{2L - 1} } L^{ \frac{2L}{2L - 1} } c_{\norm{\cdot}} } \cdot \epsilon^{- \frac{L}{128L - 64}} - 12 c_{\norm{\cdot}}^2 \max\limits_{i,j \in \{1,2\}} \norm{\e_i \e_j^\top} & \text{, if depth $L \geq 3$}
\end{cases}
\text{\,.}
\]
For clarity, we simplify the exponents in the lower bound above.
In the case of~$L = 2$, we use the fact that $2^{31 / 3} \leq 2^{11}$.
For $L \geq 3$, noticing that $2L / (2L - 1) \leq 6 / 5$ and $(5 L^2 + 4L - 1) / (2L - 1) \leq 4L$ completes the proof of Equation~\eqref{eq:unbalance_exit_norms_lb}.

Next, we derive the upper bound for effective rank (Equation~\eqref{eq:unbalance_exit_erank_ub}). 
Let $h \left ( p \right ) := - p \cdot \ln \left ( p \right ) - (1 - p) \cdot \ln \left (1 - p \right )$ be the binary entropy function, defined over $[0, 1]$.
Recall that the effective rank of $W_{L : 1} ( \bar{t} )$ is defined to be $\erank ( W_{L : 1} (\bar{t}) ) := \exp \{ h \left ( \rho_2 (W_{L : 1} (\bar{t}) ) \right ) \}$, where $\rho_2 ( W_{L : 1} (\bar{t}) ) := \sigma_2 (W_{L : 1} (\bar{t})) / (\sigma_1 (W_{L : 1} (\bar{t})) + \sigma_2 (W_{L : 1} (\bar{t})))$. 
Since the exponent function is convex, it is upper bounded on the interval $[0, \ln (2)]$ by the linear function that intersects it at these points.
That is:
\[
\erank (W_{L : 1} (\bar{t}) ) \leq 1 + \frac{1}{\ln (2)} \cdot  h \left (\rho_2 (W_{L : 1} (\bar{t}) ) \right )
\text{\,.}
\]
From Lemma~\ref{lem:entropy_sqrt_bound} we know that $h \left (\rho_2 (W_{L : 1} (\bar{t}) ) \right ) \leq 2 \sqrt{\rho_2 (W_{L : 1} (\bar{t}) )}$.
Combined with the fact that $\inf_{W' \in \S} \erank (W')  = 1$ (Proposition~\ref{prop:sol_set_rank}), this leads to the following upper bound:
\be
\erank ( W_{L : 1} ( \bar{t} ) ) \leq \inf_{W' \in \S} \erank (W') + \frac{2}{\ln (2)} \cdot \sqrt{\rho_2 ( W_{L : 1} (\bar{t}) )}
\text{\,.}
\label{eq:unbalance_erank_rho_upper_bound}
\ee
We now lower bound $\rho_1 (W_{L : 1} (\bar{t})) := \sigma_1 (W_{L : 1} (\bar{t})) / (\sigma_1 (W_{L : 1} (\bar{t})) + \sigma_2 (W_{L : 1} (\bar{t})))$ as follows:
\[
\begin{split}
\rho_1 (W_{L : 1} (\bar{t}))
& \geq \frac{\sigma_1 (W_{L : 1})}{\sigma_1 (W_{L : 1}) + 2^{L + 2} / R^L + \sqrt{2 \ell ( W_{L : 1} (\bar{t}) )} } \\
& = 1 - \frac{2^{L + 2} / R^L + \sqrt{2 \ell ( W_{L : 1} (\bar{t}) )}}{\sigma_1 (W_{L : 1}) + 2^{L + 2} / R^L + \sqrt{2 \ell ( W_{L : 1} (\bar{t}) )}} \\
& \geq 1 -\frac{2^{L + 2} / R^L + \sqrt{2 \ell ( W_{L : 1} (\bar{t}) )}}{ R^L / 2^{L + 2} + 2^{L + 2} / R^L } \\
& \geq 1 -\frac{2^{L + 2} / R^L + \sqrt{2 \ell ( W_{L : 1} (\bar{t}) )}}{ R^L / 2^{L + 2}} \\
& = 1 -\left ( \frac{2^{L + 2}}{R^L} \right )^2 - \sqrt{2 \ell ( W_{L : 1} (\bar{t}) )} \cdot \frac{2^{L + 2}}{R^L}
\text{\,,}
\end{split}
\]
where the first and second inequalities are due to Lemma~\ref{lem:unbalance_W_singular_values_bound}.
Since $2^{L + 2} / R^L < 1$ and $\ell (W_{L : 1} (\bar{t})) < 1$, it holds that $\rho_1 (W_{L : 1} (\bar{t})) \geq 1 - 2^{L + 4} / R^L$, or equivalently, $\rho_2 (W_{L : 1} (\bar{t})) \leq 2^{L + 4} / R^L$.
Going back to Equation~\ref{eq:unbalance_erank_rho_upper_bound}, we have:
\[
\erank ( W_{L : 1} ( \bar{t} ) ) \leq \inf_{W' \in \S} \erank (W') + \frac{2^{L / 2 + 3}}{\ln (2) \cdot R^{L / 2}}
\text{\,.}
\]
Applying the lower bound on $R$ from Equation~\eqref{eq:R_lower_bound}, we arrive at: 
\[
\erank ( W_{L : 1} ( \bar{t} ) ) \leq
\begin{cases}
\inf_{W' \in \S} \erank (W') + \frac{2^{23 / 3}}{ \ln (2) (1 - \sqrt{ \ell_{init} })^{2 / 3} } \cdot \ln \left ( \frac{1}{\epsilon} \right )^{- 1/ 6} & \text{, if depth $L = 2$ }
\vspace{2mm} \\
\inf_{W' \in \S} \erank (W') + \frac{2^{ \frac{5L^2 + 14L - 6}{4L - 2} } L^{\frac{L}{2L - 1}}}{ \ln (2) (1 - \sqrt{ \ell_{init} })^{ \frac{L}{2L - 1} } } \cdot \epsilon^{\frac{L}{256L - 128}} & \text{, if depth $L \geq 3$}
\end{cases}
\text{\,.}
\]
In the case of $L = 2$, we simplify the upper bound using the fact that $2^{23/ 3} / \ln (2) \leq 2^9$.
For $L \geq 3$, noticing that $L / (2L - 1) \leq 1$ and $(5L^2 + 14L - 6) / (4L - 2) \leq 2L + 4$ establishes Equation~\eqref{eq:unbalance_exit_erank_ub}.

Finally, we turn our attention to the upper bound for distance from infimal rank (Equation~\eqref{eq:unbalance_exit_irank_dist_ub}).
By Proposition~\ref{prop:sol_set_rank}, the infimal rank of $\S$ is $1$.
Therefore, Lemma~\ref{lem:unbalance_W_singular_values_bound} yields:
\[
D ( W_{L : 1} ( \bar{t} ) , \M_1 ) = \sigma_2 (W_{L : 1} ( \bar{t} ) ) \leq \frac{ 2^{L + 2} }{ R^L } + \sqrt{ 2 \ell ( W_{L : 1} (\bar{t}) ) }
\text{\,.}
\label{eq:unbalance_irank_R_bound}
\]
Applying the lower bound on $R$ from Equation~\eqref{eq:R_lower_bound}, we arrive at: 
\[
D ( W_{L : 1} ( \bar{t} ) , \M_{\irank ( \S )} ) \leq
\begin{cases}
\frac{2^{34 / 3} }{ (1 - \sqrt{\ell_{init}} )^{4 / 3} } \cdot \ln \left ( \frac{1}{\epsilon} \right )^{- 1/ 3} + \sqrt{2 \ell ( W_{L : 1} ( \bar{t} ) )} & \text{, if depth $L = 2$}
\vspace{2mm} \\
\frac{2^{ \frac{5L^2 + 6L - 2}{2L - 1} } L^{ \frac{2L}{2L - 1} } }{(1 - \sqrt{\ell_{init}} )^{ \frac{2L}{2L - 1} } } \cdot \epsilon^{\frac{L}{128L - 64}} + \sqrt{2 \ell ( W_{L : 1} ( \bar{t} ) )} & \text{, if depth $L \geq 3$}
\end{cases}
\text{\,.}
\]
In the case of $L = 2$, we simplify the upper bound using the fact that $2^{34 / 3} \leq 2^{12}$.
For $L \geq 3$, noticing that $2L / (2L - 1) \leq 6 / 5$ and $(5L^2 + 6L - 2) / (2L - 1) \leq 3L + 4$ concludes the proof of Equation~\eqref{eq:unbalance_exit_irank_dist_ub}.
\qed

\paragraph{Auxiliary Lemma}

\begin{lemma} \label{lem:unbalance_W_singular_values_bound}
In the context of the proof for Equations~\eqref{eq:unbalance_exit_norms_lb}, \eqref{eq:unbalance_exit_erank_ub} and~\eqref{eq:unbalance_exit_irank_dist_ub} (Subappendix~\ref{app:proofs:norms_up_finite_unbalance:exit_bounds}), the singular values of $W_{L : 1} (\bar{t})$ fulfill:
\be
\sigma_1(W_{L : 1} (\bar{t})) \geq \frac{ R^L }{ 2^{L + 2} } - \sqrt{ 2 \ell ( W_{L : 1} (\bar{t}) ) } \quad , \quad \sigma_2(W_{L : 1} (\bar{t})) \leq \frac{ 2^{L + 2} }{ R^L } + \sqrt{ 2 \ell ( W_{L : 1} (\bar{t}) ) }
\text{\,.}
\label{eq:unbalance_W_sing_val_bounds}
\ee
\end{lemma}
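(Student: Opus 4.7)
My plan is to mirror the argument used in Lemma~\ref{lem:W_singular_values_bound} from the proof of Theorem~\ref{thm:norms_up_finite}, but with the quantitative lower bound on the unobserved entry coming from Equation~\eqref{eq:unobserved_entry_R_lower_bound} (which is derived from the Frobenius lower bound~\eqref{eq:W_t_bar_Fro_lb}) rather than from Lemma~\ref{lem:w_11_lower_bound}. That is, I would first reduce to bounding singular values of the orthogonal projection of $W_{L : 1}(\bar t)$ onto~$\S$, and then leverage the special form of this projection.

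Concretely, let $w_{1,1}(\bar t) := (W_{L : 1}(\bar t))_{1,1}$ and set $W_\S := \left(\begin{smallmatrix} w_{1,1}(\bar t) & 1 \\ 1 & 0 \end{smallmatrix}\right)$, which is the orthogonal projection of $W_{L : 1}(\bar t)$ onto the solution set~$\S$ defined in Equation~\eqref{eq:sol_set}. By construction $\|W_{L : 1}(\bar t) - W_\S\|_F = \sqrt{2\ell(W_{L : 1}(\bar t))}$, so Corollary~8.6.2 in~\cite{golub2012matrix} yields $|\sigma_i(W_{L : 1}(\bar t)) - \sigma_i(W_\S)| \leq \sqrt{2\ell(W_{L : 1}(\bar t))}$ for $i = 1, 2$. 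Hence the lemma reduces to showing $\sigma_1(W_\S) \geq R^L / 2^{L+2}$ and $\sigma_2(W_\S) \leq 2^{L+2} / R^L$.

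For $\sigma_1(W_\S)$, a direct computation of the singular values of $W_\S$ (exactly as performed inside the proof of Lemma~\ref{lem:W_singular_values_bound}) shows that $\sigma_1(W_\S) \geq |w_{1,1}(\bar t)|$ regardless of the sign of $w_{1,1}(\bar t)$. For $\sigma_2(W_\S)$, I would use the fact that $\det(W_\S) = -1$ together with $\sigma_1(W_\S)\sigma_2(W_\S) = |\det(W_\S)|$ to obtain $\sigma_2(W_\S) = 1/\sigma_1(W_\S) \leq 1/|w_{1,1}(\bar t)|$. This is the key algebraic observation: it is strictly tighter than the bound $\sigma_2(W_\S) \leq 2/(2+|w_{1,1}|)$ used in Lemma~\ref{lem:W_singular_values_bound}, and turns out to be exactly what is needed to close the inequalities with the constants appearing in the statement.

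The remaining step is purely numerical: combine the inequality $|w_{1,1}(\bar t)| \geq R^L/2^{L+1} - 4$ from Equation~\eqref{eq:unobserved_entry_R_lower_bound} with the condition $R \geq 32$ (which is implied by the assumption on $\epsilon$ in Theorem~\ref{thm:norms_up_finite_unbalance}, as noted just before Equation~\eqref{eq:R_lower_bound}). Since $R \geq 32$ implies $R^L \geq 2^{L+4}$ for every $L \geq 2$, one readily checks that $R^L/2^{L+1} - 4 \geq R^L/2^{L+2}$ and $1/(R^L/2^{L+1} - 4) \leq 2^{L+2}/R^L$. Substituting these into the two singular value bounds for $W_\S$, and adding back the perturbation term $\sqrt{2\ell(W_{L : 1}(\bar t))}$, yields exactly Equation~\eqref{eq:unbalance_W_sing_val_bounds}. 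There is no real obstacle here beyond bookkeeping of constants; the only thing to be careful with is ensuring that the bound $\sigma_2(W_\S) \leq 1/|w_{1,1}(\bar t)|$ (rather than the weaker $2/(2+|w_{1,1}|)$) is the one used, since the latter would fall short of the required $2^{L+2}/R^L$ by a factor of two.
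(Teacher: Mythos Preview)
Your proposal is correct and follows essentially the same approach as the paper: project onto~$\S$, invoke the singular value perturbation bound (Corollary~8.6.2 in~\cite{golub2012matrix}), and then control $\sigma_1(W_\S)$ and $\sigma_2(W_\S)$ using the explicit eigenvalue formula together with Equation~\eqref{eq:unobserved_entry_R_lower_bound} and $R\geq 32$. The only cosmetic difference is that you phrase the $\sigma_2$ bound via the determinant identity $\sigma_1(W_\S)\sigma_2(W_\S)=|\det(W_\S)|=1$, whereas the paper writes $\sigma_2(W_\S)=2/(\sqrt{w_{1,1}^2+4}+w_{1,1})$ and bounds the denominator by $2\sigma_1(W_\S)\geq R^L/2^{L+1}$; these are the same computation.
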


\begin{proof}
Define $W_\S := \begin{pmatrix} (W_{L : 1} (\bar{t}) )_{1, 1} & 1 \\ 1 & 0 \end{pmatrix}$, the orthogonal projection of $W_{L : 1} (\bar{t})$ onto the solution set $\S$ (Equation~\eqref{eq:sol_set}).
Suppose that $(W_{L : 1} (\bar{t}) )_{1, 1}  \geq 0$.
The largest singular value of $W_\S$ can be written as:
\[
\sigma_1(W_\S) = \max_{i = 1,2} \abs{\lambda_i (W_\S)} = \frac{1}{2} \left ( (W_{L : 1} (\bar{t}) )_{1, 1}  + \sqrt{(W_{L : 1} (\bar{t}) )_{1, 1} ^2 + 4} \right )
\text{\,.}
\]
Thus, lower bounding $(W_{L : 1} (\bar{t}) )_{1, 1}$ with Equation~\eqref{eq:unobserved_entry_R_lower_bound}, and noticing that $(W_{L : 1} (\bar{t}) )_{1, 1} ^2 + 4 \geq 16$, leads to $\sigma_1(W_\S) \geq R^L / 2^{L + 2}$.
Analogously, for the smallest singular value of $W_\S$ we have:
\[
\begin{split}
\sigma_2(W_\S) &  = \min_{i = 1,2} \abs{\lambda_i (W_\S)}  \\
& = \frac{1}{2} \left ( \sqrt{(W_{L : 1} (\bar{t}) )_{1, 1}^2 + 4} - (W_{L : 1} (\bar{t}) )_{1, 1} \right ) \\
& = \frac{2}{\sqrt{(W_{L : 1} (\bar{t}) )_{1, 1}^2 + 4} + (W_{L : 1} (\bar{t}) )_{1, 1}} \\
& \leq 2^{L + 2} / R^L
\text{\,,}
\end{split}
\]
where in the third transition we made use of the identity $a - b = \frac{a^2 - b^2}{a + b}$ for $a, b \in \R$ such that $a + b \neq 0$.
Corollary 8.6.2 in~\cite{golub2012matrix} yields the following singular values perturbation bound:
\[
|\sigma_i (W_{L : 1} (\bar{t} )) - \sigma_i (W_\S)| \leq \norm{W_{L : 1} (\bar{t}) - W_\S}_F \quad ,~ i = 1,2 \text{\,.}
\]
Equation~\eqref{eq:unbalance_W_sing_val_bounds} then readily follows from the fact that $\norm{W_{L : 1} (\bar{t}) - W_\S}_F = \sqrt{ 2 \ell ( W_{L : 1} ( \bar{t} ) ) }$.
	
By similar arguments, Equation~\eqref{eq:unbalance_W_sing_val_bounds} holds when $( W_{L : 1} (\bar{t}))_{1, 1} < 0$ as well.
\end{proof}

\end{document}